\def\eqref#1{equation~\ref{#1}}
\def\1{\bm{1}}
\DeclareMathAlphabet{\mathsfit}{\encodingdefault}{\sfdefault}{m}{sl}
\SetMathAlphabet{\mathsfit}{bold}{\encodingdefault}{\sfdefault}{bx}{n}
\newcommand{\Var}{\mathrm{Var}}
\DeclareMathOperator*{\argmax}{arg\,max}
\title{EUBRL: Epistemic Uncertainty Directed Bayesian Reinforcement Learning}
\author{
  Jianfei Ma, Wee Sun Lee \\
  National University of Singapore \\
  \texttt{\{ma-j, leews\}@comp.nus.edu.sg}
}
\theoremstyle{plain}
\newtheorem{theorem}{Theorem}
\newtheorem{proposition}{Proposition}
\newtheorem{lemma}{Lemma}
\newtheorem{corollary}{Corollary}
\theoremstyle{definition}
\newtheorem{definition}{Definition}
\theoremstyle{remark}
\newtheorem{remark}{Remark}
\newcommand{\pufs}{P_{U}^{k, \star}(s)}
\newcommand{\puis}{P_{U}^{t, \star}(s)}
\newcommand{\puf}{P_{U}^{k}(s)}
\newcommand{\pui}{P_{U}^{t}(s)}
\newcommand{\pu}{P_{U}}
\newcommand{\eu}{\mathcal{E}}
\newcommand{\Vt}{\widetilde{V}^{k}}
\newcommand{\Vk}{V^{k}}
\newcommand{\Vo}{V^{\star}}
\newcommand{\VT}{V^{\pi_{k}}}
\newcommand{\Ru}{r_{\text{EUBRL}}}
\newcommand{\ER}{\mathfrak{R}}
\newcommand{\Rm}{R_{\text{max}}}
\newcommand{\Vm}{V^{\uparrow}_{\gamma}}
\newcommand{\Vmsq}{V^{\uparrow}_{\gamma}{}^{2}}
\newcommand{\VmH}{V^{\uparrow}_{H}}
\newcommand{\VmHsq}{V^{\uparrow}_{H}{}^{2}}
\newcommand{\Em}{\mathcal{E}_{\text{max}}}
\newcommand{\sa}{\mathcal{F}}
\newcommand{\ji}{J^{t}_{\gamma}(s_{t})}
\newcommand{\fa}{\lfloor a \rfloor}
\newcommand{\bw}{\mathbf{w}}
\newcommand{\belw}{b(\mathbf{w})}
\begin{document}

\maketitle

\begin{abstract}
  At the boundary between the known and the unknown, an agent inevitably confronts the dilemma of whether to explore or to exploit. Epistemic uncertainty reflects such boundaries, representing systematic uncertainty due to limited knowledge. In this paper, we propose a Bayesian reinforcement learning (RL) algorithm, \texttt{EUBRL}, which leverages epistemic guidance to achieve principled exploration. This guidance adaptively reduces per-step regret arising from estimation errors. We establish nearly minimax-optimal regret and sample complexity guarantees for a class of sufficiently expressive priors in infinite-horizon discounted MDPs. Empirically, we evaluate \texttt{EUBRL} on tasks characterized by sparse rewards, long horizons, and stochasticity. Results demonstrate that \texttt{EUBRL} achieves superior sample efficiency, scalability, and consistency.
\end{abstract}

\section{Introduction}
In a completely unknown environment, what compels an agent to seek new knowledge? This drive is captured by the concept of exploration, which lies at the heart of reinforcement learning, from $\epsilon$-greedy to Boltzmann exploration \citep{Sutton1998}. Yet, these heuristics often fall short in more challenging environments, particularly those with sparse rewards, long horizons, or stochasticity. Epistemic uncertainty \citep{der2009aleatory} characterizes the degree of unknownness, providing a principled basis for exploration. However, it remains unclear how to most effectively leverage this uncertainty to guide learning.

Bayesian RL \citep{duff2002optimal} provides a framework for modeling a world of uncertainty. An agent seeks to maximize cumulative rewards based on its current belief, interact with the environment, and update that belief—without knowing the true dynamics and rewards. From the agent's perspective, the world is epistemically uncertain. It must balance exploration and exploitation to find a near-optimal solution. By placing a prior over both transitions and rewards, epistemic uncertainty arises from limited data: the less familiar the agent is with a region of the environment, the more it is incentivized to explore it. Nonetheless, higher uncertainty also raises the risk of unreliable estimates. A common approach is to add the uncertainty as a ``bonus'' directly to the reward, a strategy known as \emph{optimism in the face of uncertainty} (\citet{kolter2009near, sorg2012variance}). However, even small errors in the reward can propagate into an inaccurate value function, potentially resulting in unnecessary exploration and slower convergence.

When measuring the efficiency of an algorithm's exploration, metrics such as regret \citep{lai1985asymptotically, auer2008near}—the cumulative difference from the optimal value function—or sample complexity \citep{kakade2003sample}—the number of steps that are not $\epsilon$-optimal—are commonly used. An algorithm is said to be minimax-optimal \citep{lattimore2012pac, dann2015sample} if its bounds match the corresponding lower bound up to logarithmic factors. While previous works based on optimism \citep{kakade2003sample, auer2008near, strehl2008analysis, kolter2009near} or sampling \citep{strens2000bayesian, osband2013more} have been shown to achieve strong theoretical guarantees, their use of uncertainty quantification and empirical evaluation of exploration capabilities remain limited, leaving room for improvement in practical problems, particularly those requiring sustained and efficient exploration.

In this paper, we propose \texttt{EUBRL}, an \textbf{E}pistemic \textbf{U}ncertainty directed \textbf{B}ayesian \textbf{RL} algorithm for principled exploration. We use probabilistic inference to model epistemic uncertainty as part of the agent's objective. This approach guides the agent to explore regions with high epistemic uncertainty while mitigating the impact of unreliable reward estimates. Our contributions are both theoretical and empirical:
\begin{itemize}
\item We prove that \texttt{EUBRL} is nearly minimax-optimal in both regret and sample complexity for infinite-horizon discounted MDPs, with epistemic uncertainty adaptively reducing the per-step regret.
\item We instantiate prior-dependent bounds and demonstrate their applications using conjugate priors.
\item We demonstrate that \texttt{EUBRL} excels across diverse tasks with sparse rewards, long horizons, and stochasticity, achieving superior sample efficiency, scalability, and consistency.
\end{itemize}
To the best of our knowledge, our result is the first to achieve nearly minimax-optimal sample complexity in infinite-horizon discounted MDPs, without assuming the existence of a generative model \citep{gheshlaghi2013minimax}.

\section{Preliminary}
An infinite-horizon discounted Markov Decision Process (MDP) is defined by a tuple $\mathcal{M} = (\mathcal{S}, \mathcal{A}, P, r, \gamma)$, where $\mathcal{S}$ and $\mathcal{A}$ are the state and action spaces, both of finite cardinality, denoted by $S$ and $A$, respectively, $P$ the transition kernel $P(\cdot | s, a)$, $r$ the expected reward function, and $\gamma \in [0, 1)$ the discount factor. We assume the source distribution of rewards has bounded support in $[0, \Rm]$. A policy $\pi$ is a mapping from states to actions, whose performance is measured by the expected return $V^{\pi}(s) = \mathbb{E}\left[\sum\limits_{l = 0}^{\infty}\gamma^{l} r(s_{t + l}, a_{t + l}) | s_{t} = s, \pi\right]$. The goal is to find the optimal policy $\pi^{\star}(s) = \argmax_{\pi} V^{\pi}(s),\ \forall s \in \mathcal{S}$, whose value function is $V^{\star}(s)$. We denote the maximum value function as $V^{\uparrow}_{\gamma} \coloneq \frac{\Rm}{1 - \gamma}$ and, whenever applicable, $V^{\uparrow}_{H} \coloneq H\Rm$ for its finite-horizon counterpart.
\subsection{Bayesian RL}
We consider the Bayes-adaptive MDP (BAMDP) \citep{duff2002optimal} to model the agent's learning process. Given a prior $b_{0}$, the uncertainty over both the transitions and rewards—or equivalently, possible MDPs—is explicitly modeled. A policy is Bayes-optimal if it maximizes expected return in the belief-augmented state space $(s, b) \in \mathcal{S} \times \mathcal{B}$, where $b$ is a belief over MDPs. Formally, it solves the Bellman optimality equation under the posterior predictive transition model $P_b$ and posterior predictive mean reward $r_{b}$ of the corresponding BAMDP. However, this solution requires full Bayesian planning \citep{poupart2006analytic, kolter2009near, sorg2012variance}, which is computationally expensive and typically intractable because the belief-augmented state space can be too large to enumerate, and the belief must be recalculated every time a new state is encountered. Consequently, agents generally must approximate Bayes optimality. One simple yet effective alternative is the mean MDP \citep{kolter2009near, sorg2012variance}, which fixes the belief during planning. This is essentially equivalent to an MDP $(\mathcal{S}, \mathcal{A}, P_{b}, r_{b}, \gamma)$ given a belief $b$. When indexed by time, $b_{t}$ refers to the posterior given all data up to time $t$. By solving the corresponding mean MDP, we obtain a policy $\pi_t$ derived from the subjective value function $V^t$ and its objective evaluation in the underlying MDP, $V^{\pi_t}$. Our goal is to find the optimal policy $\pi^{\star}$ by repeatedly solving the mean MDP during interaction, alternating between posterior learning and policy optimization.
\subsection{Metrics for Exploration}
We define per-step regret as $\bm{\Delta}_{t} \coloneq V^{\star}(s_{t}) - V^{\pi_{t}}(s_{t})$. Regret and sample complexity are defined from different angles:
\begin{align*}
  \text{\textbf{Regret}}\quad\quad \sum\limits_{t=1}^{T} \bm{\Delta}_{t}\: ,\quad \quad \quad \quad \quad \text{\textbf{Sample Complexity}}\quad\quad \sum\limits_{t = 1}^{\infty} \mathbf{1}(\bm{\Delta}_{t} > \epsilon).
\end{align*}
Low regret does not imply low sample complexity, and vice versa. Regret, which is more cost-oriented, focuses on how much you lose while learning, whereas sample complexity cares about learning efficiency, i.e., the number of samples needed to learn properly.

A lower bound is best achievable for regret, $\widetilde{\Omega}\left(\frac{\sqrt{SAT}}{(1 - \gamma)^{1.5}}\right)$ \citep{he2021nearly}, and for sample complexity, $\widetilde{\Omega}\left(\left(\frac{SA}{\epsilon^{2}(1 - \gamma)^{3}}\right)\log{\frac{1}{\delta}}\right)$ \citep{strehl2009reinforcement, lattimore2012pac}. When an algorithm's upper bound matches these lower bounds up to logarithmic factors, it is considered minimax-optimal; if it holds only in the asymptotic regime of large $T$ or small $\epsilon$, it is considered nearly minimax-optimal.

\section{Methodology}
\subsection{Epistemic Uncertainty}
\label{sec:method:EU}
Learning an imperfect model is of epistemic nature, where the uncertainty arises from a lack of knowledge and is, in principle, reducible by observing more data. In general, epistemic uncertainty captures the degree of disagreement in the belief—a distribution over model parameters $\bw$, e.g. the transition probability vector or the reward location and scale. For example, for transitions, we have:
\begin{align*}
  \mathcal{E}_{T}(s, a) & = f \circ g(P_{b}(s' | s, a)) - \mathbb{E}_{\bw \sim \belw}\left[f \circ g(P(s' | s, a, \bw))\right],
\end{align*}
for some functions $f$ and $g$ that take a scalar or a distribution as input. Intuitively, it reflects the likelihoods' deviation from the ``average''. When $f(x) = -x^{2}, g(p) = \mathbb{E}_{p(x)}[x]$, it corresponds to the variance $\operatorname{Var}_{\bw \sim b}\left(\mathbb{E}[s'| s, a, \bw]\right)$ \citep{kendall2017uncertainties}. When $f(p) = \mathcal{H}(p), g(p) = p$, it corresponds to mutual information $\text{MI}(s, a) = \mathcal{H}\left(P_b(s' | s, a)\right) - \mathbb{E}_{\belw}\left[\mathcal{H}\left(P(s' | s, a, \bw)\right)\right]$ \citep{hullermeier2021aleatoric}. A similar argument holds for rewards $\mathcal{E}_{R}(s, a)$ by substituting $s'$ with $r$.

We adopt a generalized formulation of epistemic uncertainty to integrate both sources:
\begin{equation*}
  \mathcal{E}_{b}(s, a) \coloneq h(\mathcal{E}_{T}(s, a), \mathcal{E}_{R}(s, a)).
\end{equation*}
In this paper, we consider $h(x, y) = \eta (\sqrt{x} + \sqrt{y})$, where $\eta$ is a scaling factor.

\subsection{Probabilistic Inference and Epistemic Guidance}
Traditionally, RL aims to maximize cumulative reward. A pivotal question is how to account for epistemic uncertainty in this objective to balance exploration and exploitation. One common approach is optimism-based methods, modifying rewards with an additive bonus $\tilde{r} = r_{b} + \eta r_{\text{bonus}}$. However, this can be misleading when $r_{b}$ is uncertain. In this regard, we utilize probabilistic inference to model epistemic uncertainty directly in the objective, disentangling exploration and exploitation and making it more resilient to unreliable reward estimates (see discussion in Appendix~\ref{sec:CAI}).

Probabilistic inference has a rich history in decision-making \citep{todorov2008general, toussaint2009robot, levine2018reinforcement}. It has been shown that standard RL can be formulated as an inference problem by introducing a binary ``optimality'' random variable $\mathcal{O}_{t}$:
\begin{equation*}
  \max_{\pi} \mathbb{E}_{P(\tau)}\left[\log{\prod_{t=0}^{\infty}P\left(\mathcal{O}_{t} = 1 | s_{t}, a_{t}\right)}\right]
\end{equation*}
with an exponential transformation $P(\mathcal{O}_{t} = 1 | s_{t}, a_{t}) \propto \exp{\left(r(s_{t}, a_{t})\right)}$ and $\tau$ denoting a trajectory.

We introduce the notion of \emph{probability of uncertainty}, representing the degree of uncertainty, governed by a binary ``uncertainty'' variable $U_{t}$. Marginalizing over this variable, we obtain a lower bound on per-step likelihood:
\begin{align*}
  \log P\left(\mathcal{O}_{t} = 1 | s_{t}, a_{t}\right) & = \log \mathbb{E}_{U_{t}}\left[P\left(\mathcal{O}_{t} = 1 | s_{t}, a_{t}, U_{t}\right) | s_{t}, a_{t}\right] \\
                                                        & \geq \mathbb{E}_{U_{t}}\left[\log P\left(\mathcal{O}_{t} = 1 | s_{t}, a_{t}, U_{t}\right) | s_{t}, a_{t}\right].
\end{align*}
Note that since $U_{t}$ is binary, if we adopt the same exponential transformation, which intensifies the higher uncertainty, we obtain the \emph{epistemically guided reward}:
\begin{equation*}
  r_{b}^{\text{EUBRL}}(s, a) \coloneq \left(1 - P(U = 1 | s, a)\right) r_{b}(s, a) + P(U = 1 | s, a) \mathcal{E}_{b}(s, a).
\end{equation*}
Intuitively, when uncertain, \texttt{EUBRL} focuses more on epistemic uncertainty, as an intrinsic reward, encouraging exploration; when confident, it is more committed to exploiting what has been learned. We call this kind of behavior \emph{epistemic guidance}. The probability of uncertainty $P(U = 1 | s, a)$ naturally disentangles the two ends, being more indifferent to reward estimates in the early stage and becoming more committed as evidence accumulates. Although its definition can vary, $P(U = 1 \mid s, a)$ must reflect epistemic uncertainty. For simplicity, we choose $P(U = 1 \mid s, a) = \frac{\mathcal{E}_{b}(s, a)}{\Em}$, where $\Em$ is typically determined by the prior, and adopt the shorthand $P_{U}(s, a)$ henceforth.

\subsection{Algorithm}
The full Algorithm~\ref{alg:EUBRL} is shown below. It alternates between posterior updating and policy learning. The belief update is in closed form due to conjugacy. Moreover, both epistemic uncertainty and posterior predictives can be expressed in closed form. Once a belief is updated, we derive the posterior predictive transition model $P_{b}$ and posterior predictive mean reward $r_{b}$, from which we construct an MDP $\mathcal{M} = (\mathcal{S}, \mathcal{A}, P_{b}, r_{b}^{\texttt{EUBRL}}, \gamma)$, where $r_{b}^{\texttt{EUBRL}}$ is the epistemically guided reward. The MDP is solved using value iteration \citep{Sutton1998}. For large-scale settings, approximate methods such as tree search may be required \citep{guez2012efficient}.

\begin{algorithm}[H]
  \caption{EUBRL}
  \label{alg:EUBRL}
  {\small\begin{algorithmic}
    \STATE \textbf{Input:} prior $b_{0}$, scaling factor $\eta$, discount factor $\gamma$ or horizon $H$
    \STATE $s_{0} \sim P(s_{0})$
    \STATE $\pi_{0} \gets \texttt{Solve}\ \mathcal{M} = (\mathcal{S}, \mathcal{A}, P_{b_{0}}, r_{b_{0}}^{\texttt{EUBRL}}, \gamma)$
    \FOR{$t \gets 0$ to $T - 1$}
    {
      \STATE Act: $a_{t} = \pi_{t}(s_{t})$
      \STATE Interact: $s_{t + 1}, r_{t} \sim P(s_{t + 1}, r_{t} | s_{t}, a_{t})$
      \STATE Update belief: $b_{t + 1} \gets \texttt{Belief Update}(s_{t + 1}, r_{t})$
      \IF{time to reset}
      \STATE $s_{t + 1} \gets s \sim P(s_{0})$
      \ENDIF
      \IF{time to update policy}
      \STATE $\pi_{t + 1} \gets \texttt{Solve}\ \mathcal{M} = (\mathcal{S}, \mathcal{A}, P_{b_{t + 1}}, r_{b_{t + 1}}^{\texttt{EUBRL}}, \gamma)$
      \ENDIF
    }
    \ENDFOR
 \end{algorithmic}
}\end{algorithm}

Notably, our algorithm is a general recipe that depends on the combination of reset and policy update, and generalizes to both infinite-horizon discounted MDPs and finite-horizon episodic MDPs. For finite-horizon episodic MDPs, the policy is updated and the episode is reset every $H$ steps. For infinite-horizon discounted MDPs, the policy is updated at every step and there is no reset.

Unlike prior optimism-based approaches, our method features simplicity, avoiding intricate designs like knowness \citep{kakade2003sample, strehl2008analysis, sorg2012variance} and tailored bonuses \citep{azar2017minimax, dann2019policy}, and can, in principle, work with any Bayesian model. Compared to Bayesian RL \citep{kolter2009near, sorg2012variance}, the key difference is our reward formulation.

\section{Theoretical Analysis}
In this section, we aim to answer two key questions: \textbf{(1)} What is the role of epistemic guidance, and \textbf{(2)} How efficient is the exploration for \texttt{EUBRL}. Theoretically, an algorithm is considered efficient in exploration if it achieves sublinear regret or polynomial sample complexity, the latter being known as PAC-MDP \citep{kakade2003sample, strehl2008analysis}. Many algorithms have been shown to be efficient in exploration. In particular, \citep{he2021nearly} has shown that achieving nearly minimax-optimality for regret is possible in infinite-horizon discounted MDPs. However, it is not clear whether this holds for sample complexity. We show that \texttt{EUBRL} achieves both nearly minimax-optimal regret and sample complexity, providing insight into how epistemic guidance adaptively reduces per-step regret. Our analysis builds on the concept of quasi-optimism \citep{DBLP:conf/iclr/LeeO25}, which established minimax-optimality in finite-horizon episodic MDPs—yet its applicability to infinite-horizon MDPs remains unexplored. Unlike finite-horizon episodic MDPs, which feature clear separation into episodes and allow backward induction over horizons, infinite-horizon MDPs are more involved due to the coupling of trajectories and the stationarity of value functions.

We commence our analysis from a frequentist perspective, with $\mathcal{E}_{b}(s, a) = \frac{1}{\sqrt{N^{t}(s, a)}}$, where $N^{t}(s, a)$ denotes the number of visits to $(s, a)$ prior to the $t$-th step. Both the transition and reward are estimated using maximum likelihood estimators, corresponding to the empirical means $\hat{P}$ and $\hat{r}$. We then extend and instantiate this framework to the Bayesian setting, deriving prior-dependent bounds for a specific class of priors and examining their applications to commonly used priors\footnote{All results presented here remain valid for finite-horizon episodic MDPs.}.

\subsection{Regret Decomposition}
The per-step regret, a central quantity in both regret and sample complexity, can be decomposed as follows:
\begin{align*}
  V^{\star}(s) - V^{\pi_{t}}(s) = \underbrace{V^{\star}(s) - \widetilde{V}^{t}(s)}_{\text{Quasi-optimism}} + \underbrace{\widetilde{V}^{t}(s) - V^{t}(s)}_{\text{Complexity}} + \underbrace{V^{t}(s) - V^{\pi_{t}}}_{\text{Accuracy}},
\end{align*}
Each term is defined by its purpose and consequence: \textbf{Quasi-optimism} is a weaker form of optimism than typically assumed in theoretical works, allowing more relaxed requirements on algorithmic components. \textbf{Complexity} stems from introducing $\widetilde{V}^{t}(s)$, an auxiliary value function that ensures quasi-optimism and adapts the previous analysis to our framework. \textbf{Accuracy} reflects the extent to which the agent's internal model diverges from the true environment, serving as a key indicator of how effectively an agent explores the environment and builds its model.

We aim to bound these terms individually. To do so, we define an auxiliary sequence $\{\lambda_{t}\}_{t=1}^{\infty}$ where $\lambda_{t} \in (0, 1], \forall t \in \mathbb{N}$. These values, derived from Freedman's inequality \citep{freedman1975tail} and refined by \citet{DBLP:conf/iclr/LeeO25}, are used to bound quasi-optimism and accuracy. Furthermore, to bound the accuracy term, we require $V^{t}(s) = \mathcal{O}(\Vm)$ to ensure the agent's subjective value function remains bounded. Without loss of generality, we set the positive multiplicative constant $C = 1$. In addition, for notational simplicity, we denote $\Phi_{t} \coloneq \Rm \lambda_{t}$.

Consequently, by invoking Corollaries~\ref{cor:infinite-horizon:quasi-optimism-bound}--\ref{cor:infinite-horizon:discounted:accuracy-final-bound} and Lemma~\ref{lem:infinite-horizon:auxiliary-to-approximate}, we bound the terms with respect to the epistemic uncertainty $\mathcal{E}_{b}$, the maximum value function $\Vm$, and the auxiliary sequence $\{\lambda_{t}\}_{t=1}^{\infty}$, combining them to yield:
\begin{theorem}[Bound of Per-step Regret]
  \label{thm:infinite-horizon:discounted:per-step-regret}
  For infinite-horizon discounted MDPs, with probability at least $1 - \delta$, it holds that for all $s \in \mathcal{S}, t \in \mathbb{N}$,
  \begin{equation*}
    V^{\star}(s) - V^{\pi_{t}}(s) \leq \left(\frac{9}{2} - \ER^{t}(s) \right) \lambda_{t} \Vm + 2 J^{t}_{\gamma}(s) + \mathcal{O}\left(\Phi_{t}\left(1 + \frac{\Phi_{t}}{\Vm}\right)\right),
  \end{equation*}
  where we define the following as \textbf{Epistemic Resistance}
  \begin{equation*}
    \ER^{t}(s) \coloneq 2 P_{U}^{t}\left(s, \pi_{t}(s)\right) + \frac{9}{7} P_{U}^{t}\left(s, \pi^{\star}(s)\right).
  \end{equation*}
\end{theorem}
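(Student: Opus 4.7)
The plan is to combine the three-term regret decomposition displayed earlier with the three pointwise bounds established in Corollaries~\ref{cor:infinite-horizon:quasi-optimism-bound}--\ref{cor:infinite-horizon:discounted:accuracy-final-bound} and Lemma~\ref{lem:infinite-horizon:auxiliary-to-approximate}, and then collect constants and the two $P_{U}$ contributions so that the epistemic resistance $\ER^{t}(s)$ emerges.

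First I would rewrite
\[
V^{\star}(s) - V^{\pi_{t}}(s) \;=\; \bigl(V^{\star}(s) - \widetilde{V}^{t}(s)\bigr) + \bigl(\widetilde{V}^{t}(s) - V^{t}(s)\bigr) + \bigl(V^{t}(s) - V^{\pi_{t}}(s)\bigr)
\]
and bound each summand in turn. For the first, the quasi-optimism corollary is expected to contribute a term of shape $(c_{1} - 2 P_{U}^{t}(s, \pi_{t}(s)))\lambda_{t}\Vm$ up to an $\mathcal{O}(\Phi_{t})$ slack; the negative $P_{U}$ piece at $\pi_{t}(s)$ is precisely the effect of the epistemically guided reward substituting the uncertain reward by $\mathcal{E}_{b}$ along the agent's own greedy action. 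For the second, Lemma~\ref{lem:infinite-horizon:auxiliary-to-approximate} supplies a constant-times-$\lambda_{t}\Vm$ bound encoding how close the auxiliary value $\widetilde{V}^{t}$ sits to the mean-MDP value $V^{t}$. For the third, the accuracy corollary contributes $(c_{3} - \tfrac{9}{7} P_{U}^{t}(s, \pi^{\star}(s)))\lambda_{t}\Vm + 2 J^{t}_{\gamma}(s) + \mathcal{O}(\Phi_{t}(1 + \Phi_{t}/\Vm))$; the $\pi^{\star}$-evaluated $P_{U}$ arises because the accuracy step compares the subjective rollout of $\pi_{t}$ against the optimal-action trace at each visited state.

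Summing the three bounds, I would then collect constants so that $c_{1} + c_{2} + c_{3} = \tfrac{9}{2}$ and the two $P_{U}$ contributions combine into $\ER^{t}(s) = 2 P_{U}^{t}(s, \pi_{t}(s)) + \tfrac{9}{7} P_{U}^{t}(s, \pi^{\star}(s))$, subtracted from $\tfrac{9}{2}\lambda_{t}\Vm$. Absorbing the residual $\mathcal{O}(\Phi_{t})$ and $\mathcal{O}(\Phi_{t}^{2}/\Vm)$ slacks into a single $\mathcal{O}(\Phi_{t}(1 + \Phi_{t}/\Vm))$ term, and taking a union bound over the failure events of the two high-probability corollaries so that a single $1-\delta$ guarantee holds uniformly in $s$ and $t$, would produce the stated inequality.

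The main obstacle I anticipate is the accuracy step in the infinite-horizon setting. Finite-horizon analyses stratify by timestep and induct backwards over a fixed horizon, a tool that is no longer available once $V^{t}$ and $V^{\pi_{t}}$ are stationary and trajectories are coupled across time. I would handle this by iterating the mean-MDP Bellman operator against the true one and controlling the residual through a geometric series in $\gamma$ whose tail sum is captured by $J^{t}_{\gamma}(s)$, using the hypothesis $V^{t}(s) = \mathcal{O}(\Vm)$ to keep each per-iterate coupling uniformly bounded. Once that is in place, the remaining combination is purely algebraic.
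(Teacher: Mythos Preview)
Your overall scaffold—apply the three cited bounds to the three-term decomposition and sum—is exactly what the paper does, and once those pieces are in hand the theorem is indeed pure bookkeeping. However, you have the $P_{U}$ attributions swapped between the quasi-optimism and accuracy terms, and the intuitive justifications you give are accordingly backwards.

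Corollary~\ref{cor:infinite-horizon:quasi-optimism-bound} reads $V^{\star}(s)-\widetilde{V}^{t}(s)\le\bigl(\tfrac{3}{2}-P_{U}^{t,\star}(s)\bigr)\lambda_{t}\Vm$, with no $\Phi_{t}$ slack: the argument lower-bounds $\widetilde{V}^{t}(s)$ by evaluating at the \emph{optimal} action $\pi^{\star}(s)$ (using that $\tilde a$ is greedy for $\widetilde{V}^{t}$), so the $P_{U}$ that appears is at $\pi^{\star}(s)$ with coefficient $1$, not at $\pi_{t}(s)$ with coefficient $2$. Corollary~\ref{cor:infinite-horizon:discounted:accuracy-final-bound} reads $V^{t}(s)-V^{\pi_{t}}(s)\le\bigl(3-2P_{U}^{t}(s,\pi_{t}(s))-\tfrac{2}{7}P_{U}^{t,\star}(s)\bigr)\lambda_{t}\Vm+2J^{t}_{\gamma}(s)+\mathcal{O}\bigl(\Phi_{t}^{2}/\Vm+\lambda_{t}\Phi_{t}\bigr)$: both $V^{t}$ and $V^{\pi_{t}}$ roll out $\pi_{t}$, so the dominant $P_{U}$ contribution lives at $\pi_{t}(s)$, while the extra $\tfrac{2}{7}P_{U}^{t,\star}$ is imported through the auxiliary quantity $S_{t}$, which carries the quasi-optimism margin. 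Lemma~\ref{lem:infinite-horizon:auxiliary-to-approximate} contributes only $\Phi_{t}=\Rm\lambda_{t}$, absorbed entirely into the residual—so in your notation $c_{2}=0$, and the constants are $c_{1}+c_{3}=\tfrac{3}{2}+3=\tfrac{9}{2}$ with the $\pi^{\star}$ coefficients combining as $1+\tfrac{2}{7}=\tfrac{9}{7}$.

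The final arithmetic therefore still lands on the stated bound, so the plan succeeds once the bookkeeping is corrected; but the mechanism you describe for where each $P_{U}$ originates is inverted.
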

Here, $J^{t}_{\gamma}(s)$, a Bellman-like function involving error terms, is bounded in Lemmas~\ref{lem:disc-bounding-sum-U}--\ref{lem:full-bound-sum-U} by addressing the challenge of trajectory coupling through a simple observation that grouping terms by time step creates a martingale difference sequence \citep{durrett2019probability}.

Intuitively, \emph{epistemic resistance} adaptively reduces the per-step regret based on the unfamiliarity of the actions chosen by the current policy and the optimal policy. The greater the uncertainty of these actions, the lower the per-step regret, which highlights the critical role of epistemic uncertainty. In fact, the reduction of total regret is even more pronounced, as indicated by the following bound.
\begin{lemma}[Lower Bound of Epistemic Resistance]
  \label{lem:lower-bound-of-epistemic-resistance}
  Given a uniform $\lambda_{t} = \lambda, \forall t \in \mathbb{N}$, it holds that
  \begin{equation*}
    \sum\limits_{t=1}^{T}\ER^{t}(s_{t}) \lambda_{t} \Vm \geq  \frac{23 \Rm}{7 (1 - \gamma)}  \left(\frac{2}{\Em} \left( \sqrt{T} - 1\right) + 1 \right)\lambda,
  \end{equation*}
  for any $T \in \mathbb{N}$.
\end{lemma}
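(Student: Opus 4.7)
The plan is to expand $\ER^{t}(s_t)$ using its definition and substitute the frequentist identity $P_U^{t}(s,a) = \frac{1}{\Em\sqrt{N^{t}(s,a)}}$. Since $\lambda_{t} \equiv \lambda$ and $\Vm = \Rm/(1-\gamma)$ are constants with respect to $t$, they factor out from both sides and the target reduces to
\[
\sum_{t=1}^{T} \ER^{t}(s_{t}) \;\geq\; \frac{23}{7}\!\left(\frac{2}{\Em}\bigl(\sqrt{T}-1\bigr) + 1\right).
\]
The right-hand side naturally splits into a leading $\sqrt{T}$ term and an additive constant, and I would produce each from a distinct argument on the two summands of $\ER^{t}$.

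For the on-policy sum $2\sum_{t} P_U^{t}(s_{t},\pi_{t}(s_{t}))$, I would group the summation by the visited pair $(s,a)$. Over the $n_{s,a}$ visits, the successive contributions are $1/\sqrt{1},\ldots,1/\sqrt{n_{s,a}}$, so the total is $f(n_{s,a})$ where $f(n) \coloneq \sum_{k=1}^{n} 1/\sqrt{k}$. The function $f$ is concave with $f(0)=0$ and therefore sub-additive, so
\[
\sum_{(s,a)} f(n_{s,a}) \;\geq\; f\!\left(\sum_{(s,a)} n_{s,a}\right) \;=\; f(T) \;\geq\; 2\bigl(\sqrt{T+1}-1\bigr)
\]
by integral comparison. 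Dividing by $\Em$ handles the $\sqrt{T}$ growth for this sum.

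For the optimal-policy sum $\tfrac{9}{7}\sum_{t} P_U^{t}(s_{t}, \ps(s_{t}))$, the optimal action $\ps(s_{t})$ need not be the one chosen by the agent, so I would group by the visited state $s$ instead. A per-state analysis shows that $\sum_{t:\,s_t=s} 1/\sqrt{N^{t}(s,\ps(s))}$ is minimized precisely when the agent always picks $\ps(s)$ at $s$ (any deviation keeps $N^{t}(s,\ps(s))$ smaller and hence each summand larger). That minimizer again produces $f(m_s)$ per state, so the same sub-additivity argument yields $\sum_{t} 1/\sqrt{N^{t}(s_{t},\ps(s_{t}))} \geq f(T) \geq 2(\sqrt{T+1}-1)$, contributing the remaining portion of the $\tfrac{46}{7\Em}$ coefficient once multiplied by $\tfrac{9}{7\Em}$.

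Combining both pieces and using $2 + \tfrac{9}{7} = \tfrac{23}{7}$ matches the leading $\sqrt{T}$ factor in the target, while the additive $\tfrac{23}{7}$ is supplied by the $t=1$ base case: by the definition of $\Em$ as the maximal epistemic uncertainty, $P_U^{1}(s_{1}, \cdot) = 1$ for any action at the first step, so $\ER^{1}(s_{1}) = 2 + \tfrac{9}{7} = \tfrac{23}{7}$ is available as a free additive constant. Multiplying through by $\lambda \Vm = \lambda\Rm/(1-\gamma)$ delivers the claim. I expect the main obstacle to be the arithmetic bookkeeping at this combination step: the integral bound $2(\sqrt{T+1}-1)$ needs to be matched with the target form $\tfrac{2}{\Em}(\sqrt{T}-1)+1$, which requires carefully reserving the first-visit mass to supply the additive constant without degrading the $\sqrt{T}$ coefficient, and a small-$T$ sanity check to verify that the combined inequality remains tight across the full range of $T$.
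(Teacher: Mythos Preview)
Your proposal is correct in principle but takes a substantially more elaborate route than the paper. The paper's proof hinges on a single trivial observation: for \emph{any} action $a$, one has $N^{t}(s_{t}, a) \leq t-1$, simply because $N^{t}$ counts visits prior to step $t$ and there are only $t-1$ such steps. This pointwise bound applies uniformly to both the on-policy action $\pi_{t}(s_{t})$ and the optimal action $\pi^{\star}(s_{t})$, and immediately yields
\[
\sum_{t=1}^{T} P_{U}^{t}(s_{t}, a) \;\geq\; 1 + \frac{1}{\Em}\sum_{t=2}^{T}\frac{1}{\sqrt{t-1}} \;\geq\; 1 + \frac{2}{\Em}\bigl(\sqrt{T}-1\bigr),
\]
after which collecting the coefficients $2 + \tfrac{9}{7} = \tfrac{23}{7}$ and multiplying by $\lambda\Vm$ finishes in one line. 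Your sub-additivity argument for the on-policy sum and your per-state monotonicity argument for the optimal-policy sum both arrive at the same lower bound, but neither piece of structure is actually needed: the crude global estimate $N^{t} \leq t-1$ already delivers exactly $f(T-1)$ without any grouping, and it works identically for both sums, which is why the paper can simply say ``this also holds for $P_{U}^{t,\star}$'' without a separate argument. As a minor aside, your claim that the successive on-policy contributions at a pair $(s,a)$ are $1/\sqrt{1},\ldots,1/\sqrt{n_{s,a}}$ is off by one (the first visit has $N^{t}=0$ and contributes $P_{U}=1$, followed by $1/\sqrt{1},\ldots,1/\sqrt{n_{s,a}-1}$ scaled by $1/\Em$); since $\Em \geq 1$ this only helps, so your inequality survives, but it is worth tracking if you pursue this route.
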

That is, the regret bound of our method must be no worse than that without epistemic guidance.

\subsection{Frequentist Bounds}
\begin{theorem}
  \label{thm:infinite-horizon:regret}
  For infinite-horizon discounted MDPs, for any fixed $T \in \mathbb{N}$, with probability at least $1 - \delta$, it holds that
  \begin{equation*}
    \text{Regret}(T) \leq \widetilde{\mathcal{O}}\left(\frac{\sqrt{SAT}}{(1 - \gamma)^{1.5}} + \frac{S^{2}A}{(1 - \gamma)^{2}}\right).
  \end{equation*}
\end{theorem}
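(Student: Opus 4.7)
The plan is to sum the per-step regret bound from Theorem~\ref{thm:infinite-horizon:discounted:per-step-regret} along the realized trajectory $(s_{1},\dots,s_{T})$ and control each of the three resulting families of terms separately: the quasi-optimism/accuracy summand $(\tfrac{9}{2}-\ER^{t}(s_{t}))\lambda_{t}\Vm$, the Bellman-like error sum $2\sum_{t}\ji$, and the lower-order remainder $\sum_{t}\Phi_{t}(1+\Phi_{t}/\Vm)$. The frequentist identification $\mathcal{E}_{b}(s,a)=1/\sqrt{N^{t}(s,a)}$ makes all three sums explicit functions of visit counts, to which standard counting and martingale tools apply.

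For the first family I would bound $\sum_{t}\lambda_{t}\Vm$ by a pigeonhole-plus-Cauchy--Schwarz argument: since $\lambda_{t}$ inherits the $1/\sqrt{N^{t}(s_{t},a_{t})}$ structure of $\mathcal{E}_{b}$ up to logarithmic factors, $\sum_{t}1/\sqrt{N^{t}(s_{t},a_{t})}\le 2\sum_{s,a}\sqrt{N^{T}(s,a)}\le 2\sqrt{SAT}$. Multiplied by $\Vm=\Rm/(1-\gamma)$ and an extra $1/\sqrt{1-\gamma}$ factor picked up from the Bernstein/variance step hidden inside $\lambda_{t}$ (via Corollaries~\ref{cor:infinite-horizon:quasi-optimism-bound}--\ref{cor:infinite-horizon:discounted:accuracy-final-bound}), this yields the dominant $\widetilde{\mathcal{O}}(\sqrt{SAT}/(1-\gamma)^{1.5})$ contribution. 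The epistemic-resistance correction $-\sum_{t}\ER^{t}(s_{t})\lambda_{t}\Vm$ is non-negative by Lemma~\ref{lem:lower-bound-of-epistemic-resistance} and only tightens the leading constant. For the Bellman-like term I would invoke the structural bounds on $\ji$ from Lemmas~\ref{lem:disc-bounding-sum-U}--\ref{lem:full-bound-sum-U}, which expand $\ji$ as a $\gamma$-discounted future sum of $\mathcal{E}_{b}$-type errors; the same pigeonhole delivers $\widetilde{\mathcal{O}}(\sqrt{SAT}/(1-\gamma)^{1.5})$ after absorbing the geometric factor $1/(1-\gamma)$. Finally, $\sum_{t}\Phi_{t}(1+\Phi_{t}/\Vm)$ supplies the burn-in $S^{2}A/(1-\gamma)^{2}$ term: the squared factor $\Phi_{t}^{2}$ scales like $1/N^{t}(s_{t},a_{t})$ and sums harmonically to $SA\log T$, with one extra $S$ arising from propagating per-state fluctuations through the Bellman operator, and the overall $1/(1-\gamma)^{2}$ from $\Vm$ together with the harmonic sum.

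The main obstacle is the infinite-horizon coupling that makes $\sum_{t}\ji$ non-trivial: unlike the episodic analysis of \citet{DBLP:conf/iclr/LeeO25}, where backward induction separates episodes cleanly, $V^{t}$ is stationary and $\ji$ reaches arbitrarily far into the future. Following the hint given in the excerpt, I would resolve this by regrouping the double sum $\sum_{t}\mathbb{E}\bigl[\sum_{\ell\ge 0}\gamma^{\ell}\,\mathrm{err}(s_{t+\ell},a_{t+\ell})\mid s_{t},\pi_{t}\bigr]$ by the index $u=t+\ell$, writing each conditional expectation as its realized sample plus a martingale-difference remainder, and controlling the latter via Freedman's inequality with variance proxy $\Vm^{2}$. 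The remaining on-policy sums then collapse to the already-bounded $\sum_{u}1/\sqrt{N^{u}(s_{u},a_{u})}$ form, with the geometric factor $1/(1-\gamma)$ pulled outside. A union bound over $t\le T$ (only the realized $s_{t}$ matter, not all of $\mathcal{S}$) absorbs the concentration failure probability into the $\widetilde{\mathcal{O}}$, completing the two-term bound stated in the theorem.
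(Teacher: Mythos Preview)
Your high-level decomposition into three families matches the paper, and your description of the regrouping-by-time-index martingale argument for $\sum_{t}\ji$ is essentially what Lemmas~\ref{lem:disc-bounding-sum-U}--\ref{lem:full-bound-sum-U} do. However, two concrete gaps would prevent the argument from going through as written.

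First, you misidentify the role of $\lambda_{t}$. In the paper $\{\lambda_{t}\}$ is a \emph{free} auxiliary sequence coming from Freedman's inequality; it does not ``inherit the $1/\sqrt{N^{t}(s_{t},a_{t})}$ structure'' of $\mathcal{E}_{b}$. The regret proof chooses $\lambda_{t}\equiv\lambda=\min\{1,\,3\sqrt{SA\ell_{1}\ell_{2,T}/(T(1-\gamma))}\}$, a single constant depending only on $T$. Consequently $\sum_{t}\lambda_{t}\Vm=\lambda T\Vm$, and the balance between this term and the $1/\lambda$ factor hidden in $\mathcal{Y}^{(T)}$ (inside the $J^{t}_{\gamma}$ bound) is what produces $\sqrt{SAT}/(1-\gamma)^{1.5}$. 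Your pigeonhole bound $\sum_{t}1/\sqrt{N^{t}(s_{t},a_{t})}\le 2\sqrt{SAT}$ is correct but applied to a quantity that is not $\lambda_{t}$; if you tried to make $\lambda_{t}$ state-action dependent you would need the separate analysis of Section ``State-action Dependent $\lambda_{t}(s,a)$'', which is more delicate.

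Second, you attribute the burn-in $S^{2}A/(1-\gamma)^{2}$ to the remainder $\sum_{t}\Phi_{t}(1+\Phi_{t}/\Vm)$. With the constant choice of $\lambda$, $\Phi_{t}=\Rm\lambda$ is also constant, so $\sum_{t}\Phi_{t}=\mathcal{O}(\lambda T)$ merges into the leading term and $\sum_{t}\Phi_{t}^{2}/\Vm=\widetilde{\mathcal{O}}(\Rm SA)$ is lower order. The $S^{2}A/(1-\gamma)^{2}$ term actually comes from the $J^{t}_{\gamma}$ family: Lemma~\ref{lem:full-bound-sum-U} gives $\sum_{t}\ji\le \tfrac{2\mathcal{Y}^{(T)}}{1-\gamma}SA\,\ell_{2,T}+\cdots$, and the $30\Vm S\ell_{3,T}$ component of $\mathcal{Y}^{(T)}$ times the $SA/(1-\gamma)$ prefactor yields $\Vm S^{2}A/(1-\gamma)=\Rm S^{2}A/(1-\gamma)^{2}$. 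The underlying sum there is \emph{harmonic} ($\sum 1/N^{t}$), not square-root, which is why it contributes to the additive burn-in rather than the leading $\sqrt{T}$ term.
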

Note that when $T \geq \frac{S^{3}A}{1 - \gamma}$, the regret matches the lower bound, implying nearly minimax-optimality. This result improves the state-of-the-art frequentist bound from \citep{he2021nearly}.
\begin{theorem}
  \label{thm:infinite-horizon:sample-complexity}
  Let $\epsilon \in (0, \Vm]$, $\delta \in (0, 1]$, and $\mathcal{M} = (\mathcal{S}, \mathcal{A}, P, r, \gamma)$ be any MDP. There exists an input $\eta = \Em \Upsilon + \Rm \sqrt{m}$, such that if \texttt{EUBRL} is executed on MDP $\mathcal{M}$, with probability at least $1 - \delta$, $V^{\pi_{t}}(s_{t}) \geq V^{\star}(s_{t}) - \epsilon$ is true for all but $\widetilde{\mathcal{O}}\left(\left(\frac{SA}{\epsilon^{2}(1 - \gamma)^{3}} + \frac{S^{2}A}{\epsilon (1 - \gamma)^{2}}\right)\log{\frac{1}{\delta}}\right)$ steps.
\end{theorem}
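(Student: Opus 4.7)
The plan is to convert Theorem~\ref{thm:infinite-horizon:discounted:per-step-regret}'s per-step regret bound into a count of $\epsilon$-bad steps by a pigeonhole argument on state-action visitations, rather than by inverting the cumulative regret bound of Theorem~\ref{thm:infinite-horizon:regret}. The starting observation is that if $V^{\star}(s_t) - V^{\pi_t}(s_t) > \epsilon$, then Theorem~\ref{thm:infinite-horizon:discounted:per-step-regret} forces at least one of the three summands on the right-hand side to be of order $\epsilon/3$: either the quasi-optimism correction $(9/2 - \ER^{t}(s_t))\lambda_t \Vm$, the Bellman-like term $2J^{t}_{\gamma}(s_t)$, or the lower-order $\mathcal{O}(\Phi_t(1+\Phi_t/\Vm))$ correction. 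I count $\epsilon$-bad steps in each of these three cases separately and then take a union bound.

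In the frequentist specialization $\mathcal{E}_b(s,a) = 1/\sqrt{N^{t}(s,a)}$ considered in Section~4, the choice $\eta = \Em \Upsilon + \Rm\sqrt{m}$ calibrates $\lambda_t$ and $J^{t}_{\gamma}$ so that, up to logarithmic factors, each is controlled by the epistemic uncertainty $1/\sqrt{N^{t}(s_t,a)}$ at the visited pair and by a discounted sum of such terms along the trajectory from $s_t$. For each $(s,a) \in \mathcal{S}\times\mathcal{A}$, the quantity $1/\sqrt{N^{t}(s,a)}$ exceeds a threshold $c>0$ for at most $\mathcal{O}(1/c^{2})$ visits. Choosing $c = \Theta(\epsilon/\Vm) = \Theta(\epsilon(1-\gamma)/\Rm)$ and summing over the $SA$ state-action pairs, the number of steps where the quasi-optimism correction alone exceeds $\epsilon/3$ is at most $\widetilde{\mathcal{O}}(SA\Vm^{2}/\epsilon^{2})$; combined with an extra $1/(1-\gamma)$ factor coming from the Bellman propagation of errors inside $J^{t}_{\gamma}$, this yields the leading $\widetilde{\mathcal{O}}(SA/(\epsilon^{2}(1-\gamma)^{3}))$ term.

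For the Bellman-like term $J^{t}_{\gamma}$, I follow Lemmas~\ref{lem:disc-bounding-sum-U}--\ref{lem:full-bound-sum-U}: since $J^{t}_{\gamma}(s_t)$ is a discounted sum of future error terms, a naive union bound over trajectories is too loose, so I group the summands by their absolute time index $t+l$ to obtain a martingale difference sequence in the filtration generated by $\{(s_u,a_u)\}_{u\geq 1}$. Freedman's inequality then provides the required high-probability cumulative bound, after which the same per-state-action pigeonhole counts the number of steps with $J^{t}_{\gamma}(s_t) > \epsilon/3$. The lower-order summand $\Phi_t(1+\Phi_t/\Vm)$ contributes the $\widetilde{\mathcal{O}}(S^{2}A/(\epsilon(1-\gamma)^{2}))$ additive term, obtained by dividing the corresponding $S^{2}A/(1-\gamma)^{2}$ piece from Theorem~\ref{thm:infinite-horizon:regret}'s analysis by $\epsilon$. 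A final union bound over the three cases, with failure probability split evenly, gives the stated guarantee with probability at least $1-\delta$.

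I expect the main obstacle to be that Theorem~\ref{thm:infinite-horizon:regret} cannot be inverted directly: a sublinear $\widetilde{\mathcal{O}}(\sqrt{T})$ regret bound does not, on its own, imply a finite sample complexity, since $N_T \epsilon \leq \mathrm{Regret}(T)$ still allows $N_T$ to diverge as $T\to\infty$. Working with the per-step bound of Theorem~\ref{thm:infinite-horizon:discounted:per-step-regret} is essential, as is exploiting monotonicity of the epistemic uncertainty in the visit count at each $(s,a)$. The calibration $\eta = \Em\Upsilon + \Rm\sqrt{m}$ is what guarantees that once a pair has been visited sufficiently many times, its remaining contribution to the per-step regret falls below $\epsilon$, closing the counting argument and giving the claimed $\widetilde{\mathcal{O}}\bigl((SA/(\epsilon^{2}(1-\gamma)^{3}) + S^{2}A/(\epsilon(1-\gamma)^{2}))\log(1/\delta)\bigr)$ sample complexity.
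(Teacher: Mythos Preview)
Your plan has a real gap in the treatment of the first summand $L_{1,t} \coloneq (\tfrac{9}{2}-\ER^{t}(s_t))\lambda_t\Vm$. You claim that the choice of $\eta$ ``calibrates $\lambda_t$'' so that $L_{1,t}$ is controlled by $1/\sqrt{N^{t}(s_t,a)}$, and you then run a per-pair pigeonhole on visit counts. This does not work: $\ER^{t}(s) = 2P_U^{t}(s,\pi_t(s)) + \tfrac{9}{7}P_U^{t,\star}(s) \le \tfrac{23}{7}$, so $\tfrac{9}{2}-\ER^{t}(s) \ge \tfrac{17}{14}$ for all $t$, and hence $L_{1,t} \ge \tfrac{17}{14}\lambda_t\Vm$ regardless of how many times any pair has been visited. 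The epistemic resistance only shaves a constant off the coefficient; it does not drive $L_{1,t}$ to zero. The dependence goes the other way around: $\lambda_t$ is a free analysis parameter and $\eta$ is defined in terms of it via $\Upsilon$ and $m$. The paper's move is to \emph{set} $\lambda_t = \epsilon/(18\Vm)$ (constant in $t$), which forces $L_{1,t}\le\epsilon/4$ and $L_{3,t}\le\epsilon/4$ deterministically, so an $\epsilon$-bad step can occur only when $J^{t}_{\gamma}(s_t) > \epsilon/4$. You also misattribute the $S^{2}A/(\epsilon(1-\gamma)^{2})$ term to $\Phi_t$; in fact $\Phi_t = \Rm\lambda_t = \mathcal{O}(\epsilon(1-\gamma))$ under this choice and is absorbed into $\epsilon/4$. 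The $S^{2}A$ piece comes from the $30\Vm S\ell_{3,t}/N^{t}$ component of $\beta^{t}$ inside $J^{t}_{\gamma}$.

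Once everything is reduced to counting $\Gamma_T \coloneq \{t:J^{t}_{\gamma}(s_t)>\epsilon/4\}$, the paper's argument is also not quite the ``per-state-action pigeonhole'' you sketch. The issue is that $J^{t}_{\gamma}(s_t)$ is a discounted sum of $\beta^{t}$ along a \emph{future} trajectory, so bad steps need not correspond to low visit counts at $(s_t,a_t)$ itself. The paper instead \emph{redefines} $n^{t}, N^{t}, \nu_t$ to count only visits occurring inside $\Gamma_T$, reruns the martingale bound of Lemmas~\ref{lem:disc-bounding-sum-U}--\ref{lem:full-bound-sum-U} under events $\mathbf{A}^{\gamma}_{7}\cap\mathbf{A}^{\gamma}_{8}$ to obtain $\sum_{t\in\Gamma_T}J^{t}_{\gamma}(s_t)\le$ a function of $|\Gamma_T|$, combines this with $|\Gamma_T|\cdot\tfrac{\epsilon}{4}\le\sum_{t\in\Gamma_T}J^{t}_{\gamma}(s_t)$ to get a self-referential inequality $|\Gamma_T|\le W(|\Gamma_T|)$, and then exhibits an explicit $T_0$ with $W(T_0)<T_0$, concluding $|\Gamma_T|<T_0$ for all $T$ since $|\Gamma_T|$ increments by at most one. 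Your proposal gestures at the martingale step but is missing both the restriction to $\Gamma_T$ in the visit counts and this fixed-point closure, which are what actually yield a finite bound independent of $T$.
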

Here, $\Upsilon$ is a function of $(S, A, \delta, \lambda, \Vm)$, and $m$ a critical point where the complexity term is sufficiently bounded (see Table~\ref{tab:log_terms}). Note that when $\epsilon \in \left[0, \frac{1}{S(1 - \gamma)}\right]$, the sample complexity matches the lower bound, implying nearly minimax-optimality. This result, to the best of our knowledge, is the first online algorithm to achieve such a bound without assuming a generative model \citep{gheshlaghi2013minimax}.

\subsection{From Frequentist to Bayesian}
In this section, we instantiate prior-dependent bounds and demonstrate their applications using conjugate priors, building upon the frequentist results. To bridge the gap between the frequentist and Bayesian settings, we formalize key properties of priors that ensure expressivity while facilitating the analysis of regret and sample complexity.

Due to space limitations, we only outline the conceptual ideas here and defer the details to Definitions~\ref{def:decomposable}--\ref{def:bounded}. A prior is \emph{decomposable} if the difference between the posterior predictive and the ground truth can be decomposed into a frequentist bound and a prior bias; a prior is \emph{weakly informative} if the posterior predictive is close to the empirical mean. If the prior is \emph{uniform}, the prior bias admits a universal constant, and if \emph{bounded}, the prior predictive mean of the reward is bounded.
\begin{definition}
  \label{def:class-of-priors}
  Let $\bm{\mathfrak{C}}$ be defined by the class of \emph{decomposable} or \emph{weakly informative} priors whose rate of epistemic uncertainty is $\Theta{\left(\frac{1}{\sqrt{n}}\right)}$.
\end{definition}
This class can be quite expressive, as it can be either correlated or independent over state-actions, including hierarchical priors \citep{neal2012bayesian}.

\begin{theorem}
  \label{thm:bayesian:general-result}
  Let $\mathcal{M} = (\mathcal{S}, \mathcal{A}, P, r, \gamma)$ be any MDP. For any prior $b_{0} \in \bm{\mathfrak{C}}$, there exists an instance of \texttt{EUBRL} such that, when executed on $\mathcal{M}$, it achieves, with probability at least $1 - \delta$, a prior-dependent bound on regret, or alternatively, on sample complexity, depending on the choice of $\eta$. If, furthermore, $b$ is assumed to be uniform and bounded, these bounds are nearly minimax-optimal.
\end{theorem}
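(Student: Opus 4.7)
The plan is to lift the frequentist bounds of Theorems~\ref{thm:infinite-horizon:regret} and \ref{thm:infinite-horizon:sample-complexity} to the Bayesian setting by exploiting the structural properties that define $\bm{\mathfrak{C}}$. The frequentist analysis controls three quantities: the deviation $|\hat{P}-P|$ and $|\hat{r}-r|$, the epistemic surrogate $\mathcal{E}_{b} = \Theta(1/\sqrt{N^{t}(s,a)})$, and the regret decomposition built on top of them. In the Bayesian setting the agent plans with $(P_{b}, r_{b})$ rather than $(\hat{P}, \hat{r})$, so the Bayesian bound follows once the deviation $|P_{b}-P|$, $|r_{b}-r|$ can be absorbed into $\mathcal{E}_{b}$ at essentially the same rate. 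The decomposable/weakly informative conditions are tailored to achieve exactly this.

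First, I would rewrite the per-step regret bound of Theorem~\ref{thm:infinite-horizon:discounted:per-step-regret} with every occurrence of the empirical model replaced by its posterior-predictive counterpart, tracking the residual. If $b_{0}$ is decomposable, the residual splits as a frequentist deviation already controlled by the previous analysis plus a prior-bias term whose magnitude is governed by the prior hyperparameters; if $b_{0}$ is weakly informative, $P_{b}$ lies within $o(\mathcal{E}_{b})$ of $\hat{P}$, so the frequentist bound transfers with a vanishing perturbation. Because $\mathcal{E}_{b} = \Theta(1/\sqrt{n})$ matches the frequentist concentration rate, the quasi-optimism step of Corollary~\ref{cor:infinite-horizon:quasi-optimism-bound} and the accuracy step of Corollary~\ref{cor:infinite-horizon:discounted:accuracy-final-bound} both go through, provided $\eta$ is inflated by a prior-dependent constant that multiplies $\mathcal{E}_{b}$ so it genuinely dominates the combined frequentist-plus-bias deviation with high probability.

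Next, I would instantiate $\eta$ according to the bound of interest. For regret I choose $\eta$ so that $\mathcal{E}_{b}\Vm\lambda_{t}$ dominates the deviation terms uniformly in $T$, reproducing the argument behind Theorem~\ref{thm:infinite-horizon:regret}; for sample complexity I choose $\eta = \Em\Upsilon + \Rm\sqrt{m}$ as in Theorem~\ref{thm:infinite-horizon:sample-complexity}, with $\Upsilon$ now absorbing the additional prior-bias constant from $\bm{\mathfrak{C}}$. The epistemic resistance lower bound of Lemma~\ref{lem:lower-bound-of-epistemic-resistance} continues to hold verbatim since it is driven by the $1/\sqrt{n}$ rate, which is part of the definition of $\bm{\mathfrak{C}}$. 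This yields prior-dependent versions of both Theorems~\ref{thm:infinite-horizon:regret} and \ref{thm:infinite-horizon:sample-complexity}, with the prior bias entering only through multiplicative constants and additive lower-order terms.

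For the nearly minimax-optimal claim, I would specialize to uniform and bounded priors. Uniformity reduces the prior-bias term to a universal constant independent of $(s,a)$, so its contribution to the regret and sample-complexity bounds is $O(SA)$ times logarithmic factors, which is absorbed into the $\widetilde{\mathcal{O}}$ lower-order terms already present in Theorems~\ref{thm:infinite-horizon:regret} and \ref{thm:infinite-horizon:sample-complexity}. Boundedness of the prior predictive reward keeps $\Em$ and the prior rewards at $\mathcal{O}(\Rm)$, so $\Vm = \Rm/(1-\gamma)$ still governs all value functions. The leading rates $\frac{\sqrt{SAT}}{(1-\gamma)^{1.5}}$ and $\frac{SA}{\epsilon^{2}(1-\gamma)^{3}}\log(1/\delta)$ therefore survive unchanged for large $T$ or small $\epsilon$, giving nearly minimax-optimality. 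The main obstacle will be the bookkeeping in the first step: showing that the prior bias, once injected into the quasi-optimism inequality, still leaves $V^{\star}-\widetilde{V}^{t}$ non-positive up to the $\lambda_{t}\Vm$ slack, which requires a careful choice of the multiplicative constant inside $\eta$ that is uniform over the entire class $\bm{\mathfrak{C}}$ rather than tuned to a specific prior.
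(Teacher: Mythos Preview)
Your proposal is correct and follows essentially the same approach as the paper: decompose the Bayesian deviation $(P_{b}-P)$ and $(r_{b}-r)$ into the frequentist part plus a prior-bias term of order $\mathcal{O}(S/n)$ (transitions) and $\mathcal{O}(1/n)$ (rewards), absorb these extra terms into the existing quantities $\Upsilon^{t}$, $\beta^{t}$, $\mathcal{Y}^{t}$, and pick $\eta$ so that quasi-optimism (Lemma~\ref{lem:adv-Vt-to-complexity}) continues to hold, after which the regret and sample-complexity arguments run verbatim with prior-dependent constants $\Lambda_{T}(\bm{\theta}),\Lambda_{R}(\bm{\theta})$ that collapse to universal constants under uniformity and boundedness. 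One minor over-worry in your last paragraph: the constant inside $\eta$ need not be uniform over all of $\bm{\mathfrak{C}}$, since the theorem only asserts the existence of an instance of \texttt{EUBRL} \emph{per} prior $b_{0}$, so $\eta$ is allowed to depend on the specific prior's hyperparameters.
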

The significance of this result is that, depending on the priors, we can achieve even tighter bounds. In addition, it can be nearly minimax-optimal despite dependence on the prior. We demonstrate its applications with the two most commonly used priors: Dirichlet for transitions and Normal or Normal-Gamma for rewards.

\begin{corollary}
  \label{cor:dir-with-NN}
  Let $b_{0}$ denote the joint distribution consisting of a Dirichlet prior $\mathrm{Dir}(\alpha \mathbf{1}_{S \times 1})$ on the transition probability vector and a Normal prior $\mathcal{N}(\mu_{0}, \frac{1}{\tau_{0}})$ on the mean reward with known precision $\tau$ for all $(s, a) \in \mathcal{S} \times \mathcal{A}$. Then $b_{0} \in \bm{\mathfrak{C}}$ and is uniform and bounded, and hence achieves nearly minimax-optimality when used with \texttt{EUBRL}.
\end{corollary}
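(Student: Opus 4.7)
The plan is to verify that the specified joint prior satisfies the three hypotheses invoked by Theorem~\ref{thm:bayesian:general-result}: membership in $\bm{\mathfrak{C}}$ (decomposability or weak-informativeness with epistemic-uncertainty rate $\Theta(1/\sqrt{n})$), uniformity, and boundedness. Once these are established, nearly minimax-optimal regret and sample complexity follow directly from the general Bayesian result; no new algorithmic or geometric machinery is required.

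First I would exploit conjugacy to write the posteriors in closed form. After $n = N^{t}(s,a)$ visits with observed next-state counts $N(s,a,\cdot)$, the Dirichlet posterior is $\dir{\alpha \mathbf{1} + N(s,a,\cdot)}$, whose predictive mean admits the convex decomposition
\[
P_{b}(s' \mid s, a) \;=\; \frac{n}{S\alpha + n}\,\hat{P}(s' \mid s, a) \;+\; \frac{S\alpha}{S\alpha + n}\cdot\frac{1}{S}.
\]
Subtracting the true $P(s' \mid s, a)$ expresses the error as a \emph{frequentist term} $\hat{P}-P$ (already controlled by the frequentist analysis) plus a \emph{prior-bias term} of magnitude $\mathcal{O}(1/n)$ depending only on $(\alpha, S)$. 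This is exactly the form required by Definition~\ref{def:decomposable}. The Normal posterior on the mean reward yields the analogous decomposition with predictive mean $(\tau_{0}\mu_{0} + \tau n \hat{r})/(\tau_{0} + \tau n)$ and prior bias again $\mathcal{O}(1/n)$, so the joint $b_{0}$ is decomposable.

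Next I would verify the $\Theta(1/\sqrt{n})$ rate of $\mathcal{E}_{b}$. For the Dirichlet, the coordinate-wise posterior variance is $\Theta(1/n)$, so the square-root aggregation entering $\mathcal{E}_{T}$ scales as $\Theta(1/\sqrt{n})$; for the Normal, the posterior variance of the mean equals $1/(\tau_{0} + \tau n) = \Theta(1/n)$, giving the same $\Theta(1/\sqrt{n})$ rate for $\mathcal{E}_{R}$. Combining through $h(x,y) = \eta(\sqrt{x} + \sqrt{y})$ preserves the rate, placing $b_{0} \in \bm{\mathfrak{C}}$. Uniformity is then immediate because the hyperparameters $(\alpha, \mu_{0}, \tau_{0}, \tau)$ do not vary with $(s,a)$, so the prior-bias constants are universal across state-actions; boundedness follows because the prior predictive mean of the reward equals the finite constant $\mu_{0}$. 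With all three hypotheses verified, Theorem~\ref{thm:bayesian:general-result} delivers the claimed nearly minimax-optimal guarantees.

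The main obstacle I anticipate is the careful calibration of constants in the decomposition: the $\mathcal{O}(1/n)$ prior-bias terms must be dominated by the $\Theta(1/\sqrt{n})$ epistemic-uncertainty scale so that their cumulative contribution is absorbed into the minimax rate rather than showing up as a leading-order term, and this domination must hold \emph{simultaneously} for the transition and reward components after they are combined through $h$ and propagated through the per-step regret bound of Theorem~\ref{thm:infinite-horizon:discounted:per-step-regret}. Beyond this bookkeeping the argument is essentially mechanical, since the conjugate closed forms and the moments of the Dirichlet and Normal posteriors are classical.
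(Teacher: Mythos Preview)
Your plan matches the paper's proof almost exactly: verify decomposability via the conjugate closed forms (the paper's Lemmas on the Dirichlet and Normal posterior means), check the $\Theta(1/\sqrt{n})$ rate of $\mathcal{E}_b$, read off uniformity from the state-action-independent hyperparameters (the paper gives $C_g = \alpha$, $C_s = \tau_0/\tau$), note $|r_{b_0}| = |\mu_0|$ for boundedness, and then invoke Theorem~\ref{thm:bayesian:general-result}.

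One technical point to correct: the Dirichlet posterior variance $\mathcal{E}_T(s,a)$ is \emph{not} $\Theta(1/n)$ in general. The paper shows only $\mathcal{E}_T = \mathcal{O}(1/n)$ with lower bound $\Omega(1/n^2)$, the latter attained when the empirical transitions are (near-)deterministic, so $\sqrt{\mathcal{E}_T}$ can be as small as $\Theta(1/n)$. This does not break your argument, because the Normal component genuinely gives $\mathcal{E}_R = 1/(\tau_0 + \tau n) = \Theta(1/n)$ and hence $\sqrt{\mathcal{E}_R} = \Theta(1/\sqrt{n})$; since $h(x,y) = \eta(\sqrt{x}+\sqrt{y})$ is a sum, one component at the right rate suffices (the paper makes this explicit in Remark~\ref{rmk:on-rate-EU}). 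Just be sure to phrase the Dirichlet contribution as an \emph{upper} bound $\mathcal{O}(1/\sqrt{n})$ and let the reward term carry the matching lower bound. Your anticipated ``calibration of constants'' concern is already absorbed by Theorem~\ref{thm:bayesian:general-result}: the prior-bias terms enter only the non-leading $S^2A$ term through $\Lambda_T(\bm{\theta})$ and $\Lambda_R(\bm{\theta})$, which reduce to universal constants under uniformity and boundedness.
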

To the best of our knowledge, this is the first nearly minimax-optimality result in the Bayesian setting.
Nevertheless, we also find that \texttt{EUBRL} can fail in certain special cases.
\begin{proposition}
  \label{prop:dir-and-normal-gamma}
  For a Normal-Gamma prior $\mathcal{NG}(\mu_{0}, \lambda_{0}, \alpha_{0}, \beta_{0})$, there exists a parameterization and an MDP such that $\exists t \in \mathbb{N}$ for which quasi-optimism does not hold.
\end{proposition}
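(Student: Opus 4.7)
The plan is to construct an explicit counterexample exploiting a structural property of the Normal-Gamma prior. The posterior predictive distribution of the reward under $\mathcal{NG}(\mu_0, \lambda_0, \alpha_0, \beta_0)$ is a Student-$t$ whose scale at $t = 0$ equals $\sqrt{\beta_0(\lambda_0 + 1)/(\alpha_0 \lambda_0)}$, and this scale can be driven to zero by sending $\alpha_0 \to \infty$ or $\beta_0 \to 0$ while $\mu_0$ and $\lambda_0$ are held fixed. Unlike the Normal prior with known precision, the reward uncertainty $\mathcal{E}_{R}(s,a)$ under Normal-Gamma is therefore not bounded below by a prior-free constant and is fully decoupled from the (possibly biased) predictive mean $\mu_0$. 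This uncouples the optimism bonus from the bias that it is meant to compensate for, which is precisely what quasi-optimism requires.

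First, I would specialize to a single-state, two-action MDP with true mean rewards $r(a_1) = R > 0$ and $r(a_2) = 0$, so that $V^\star(s) = R/(1 - \gamma)$, and place an independent $\mathcal{NG}(0, \lambda_0, \alpha_0, \beta_0)$ prior on each reward. For any prescribed $\varepsilon > 0$, I choose $\alpha_0$ large and $\beta_0$ small so that, at $t = 0$, both $r_{b_0}(s,a) = \mu_0 = 0$ and $\mathcal{E}_{b_0}(s,a) \le \varepsilon$. The epistemically guided reward then satisfies $r_{b_0}^{\texttt{EUBRL}}(s, a) \leq \varepsilon$ for both actions, so the mean-MDP value obeys $V^0(s) \leq \varepsilon/(1 - \gamma)$. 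I would then track through the definition of $\widetilde{V}^0$ in the quasi-optimism construction (Corollary~\ref{cor:infinite-horizon:quasi-optimism-bound}), which inflates $V^0$ only by terms controlled by $\mathcal{E}_{b_0}$ and $\Phi_0$, to conclude $\widetilde{V}^0(s) \leq C_\gamma \varepsilon / (1 - \gamma)$ for some constant $C_\gamma$ depending only on $\gamma$ and universal algorithm constants.

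Finally, since $R$ is a property of the true environment and does not enter the agent's belief at $t = 0$, I can pick $R > C_\gamma \varepsilon$ to force $V^\star(s) - \widetilde{V}^0(s) > 0$, a direct violation of quasi-optimism. The main obstacle in the formal write-up will be auditing how every additive correction inside $\widetilde{V}^0$ scales with the prior-controlled quantities, to confirm that no implicit term is forced to grow with the true reward $R$. Because $R$ enters only through the true MDP and not through $b_0$, this reduces to careful bookkeeping: the agent's side of the construction is entirely controlled by $\varepsilon$ while $V^\star$ is fixed by $R$, and the two can be separated, which closes the argument.
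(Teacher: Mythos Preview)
Your construction targets the wrong failure mode. The mechanism you exploit—sending $\alpha_0 \to \infty$ and $\beta_0 \to 0$ so that the prior epistemic uncertainty collapses while $\mu_0 = 0$ stays biased—is not specific to Normal-Gamma. The Normal prior $\mathcal{N}(\mu_0, 1/\tau_0)$ with known precision behaves identically under $\tau_0 \to \infty$, yet Corollary~\ref{cor:dir-with-NN} proves that Normal priors lie in $\bm{\mathfrak{C}}$ and achieve nearly minimax-optimality. Your argument would therefore equally ``prove'' that quasi-optimism fails for the Normal prior, contradicting that corollary. The gap is in your bookkeeping promise: you assert $\widetilde{V}^0(s) \le C_\gamma \varepsilon/(1-\gamma)$ with $C_\gamma$ depending only on $\gamma$ and universal constants, but the scaling factor $\eta$ is an algorithm input that Theorem~\ref{thm:bayesian:general-result} tunes to the prior's rate constants. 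With the correct $\eta$, the bonus $\eta\,\mathcal{E}_{b_0}$ at $t=0$ does not vanish as you shrink the raw prior uncertainty, because $\eta$ grows to compensate—this is exactly why Normal priors work. What you have effectively re-derived is the paper's separate Prior Misspecification theorem (fixed $\eta = 1$, confidently wrong prior), not this proposition. A secondary slip: quasi-optimism is the bound $V^{\star}(s) - \widetilde{V}^{t}(s) \le \lambda_t\bigl(\tfrac{3}{2} - P_U^{t,\star}(s)\bigr) V^{\uparrow}_{\gamma}$, so merely forcing the gap to be positive is not yet a violation.

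The paper's argument isolates a structural feature of the Normal-Gamma posterior that your construction never touches. By Definition~\ref{def:EU:normal-gamma}, the updated $\beta$ carries the sample variance $\hat{\sigma}^2$. In a deterministic-reward MDP $\hat{\sigma}^2 = 0$, so $\beta$ stays $O(1)$ and $\mathcal{E}_R(s,a) = \beta/(\lambda(\alpha-1)) = \Omega(1/n^2)$ rather than the $\Theta(1/n)$ needed for membership in $\bm{\mathfrak{C}}$ (Lemma~\ref{lem:EU:R:Normal-Gamma}). The bonus then decays like $\Theta(\eta/n^2)$ and cannot dominate the $\Upsilon/n$ term underpinning Lemma~\ref{lem:adv-Vt-to-complexity} for large $n$, \emph{regardless} of how $\eta$ is chosen. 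This is a rate-of-decay obstruction, independent of the prior's initial confidence, and it provably cannot arise for the Normal prior, whose $\mathcal{E}_R = 1/(\tau_0 + \tau n) = \Theta(1/n)$ uniformly in the data.
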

Intuitively, since the epistemic uncertainty of the Normal-Gamma depends on the sample variance, when the environment is deterministic or nearly deterministic, this term can be zero, leading to a degenerate rate of epistemic uncertainty that violates the requirement of quasi-optimism. Nonetheless, this issue can be alleviated by using sufficiently small prior parameters to control prior bias.

When the prior is misspecified such that the initial epistemic uncertainty is very low, the method may also encounter difficulties and could fail to converge.
\begin{theorem}[Prior Misspecification]
  Let $\eta = 1$. There exists an MDP $\mathcal{M}$, a prior $b_{0}$, an accuracy level $\epsilon_{0} > 0$, and a confidence level $\delta_{0} \in (0, 1]$ such that, with probability greater than $1 - \delta_{0}$,
  \begin{equation*}
    V^{\pi_{t}}(s_{t}) < V^{\star}(s_{t}) - \epsilon_{0}
  \end{equation*}
  will hold for an unbounded number of time steps.
\end{theorem}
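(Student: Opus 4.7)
The plan is to construct an explicit counterexample showing that a sharply misspecified prior can trap \texttt{EUBRL} into a suboptimal action indefinitely. I take $\mathcal{M}$ to be a single-state self-loop MDP with two actions $a_{1}, a_{2}$ and deterministic rewards $r(s, a_{1}) = r_{1}$, $r(s, a_{2}) = r_{2}$, where $0 < r_{1} < r_{2} \leq \Rm$. Then $V^{\star}(s) = r_{2}/(1 - \gamma)$ and any policy that always plays $a_{1}$ suffers per-step regret exactly $(r_{2} - r_{1})/(1 - \gamma)$; I set $\epsilon_{0}$ strictly smaller than this value so that the target inequality is strict.

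For the prior I place independent Normal priors $\mathcal{N}(\mu_{0, a}, 1/\tau_{0, a})$ on the mean reward of each action, with $\mu_{0, a_{1}} = r_{1}$ (well specified) and $\mu_{0, a_{2}} = \mu_{0}$ for some $\mu_{0} < r_{1}$ (misspecified), both with very large precisions $\tau_{0, a}$. Since transitions are trivially known (only one successor state), $\mathcal{E}_{T} = 0$, and only reward uncertainty contributes to $\mathcal{E}_{b}$, which for a Normal prior scales as $\Theta(1/\sqrt{\tau_{0, a}})$ and can be made arbitrarily small. With $\eta = 1$, the probability of uncertainty $P_{U}(s, a) = \mathcal{E}_{b}(s, a)/\Em$ is also arbitrarily small, so $r_{b_{0}}^{\text{EUBRL}}(s, a) \approx r_{b_{0}}(s, a)$. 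For sufficiently large $\tau_{0, a_{2}}$ this yields $r_{b_{0}}^{\text{EUBRL}}(s, a_{2}) \approx \mu_{0} < r_{1} \approx r_{b_{0}}^{\text{EUBRL}}(s, a_{1})$, so value iteration on the subjective MDP returns $\pi_{0}(s) = a_{1}$.

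The rest is an induction on $t$. Since $\pi_{t}(s) = a_{1}$ is the only action ever played, no observation of $a_{2}$ is ever made, hence the $a_{2}$-marginal of $b_{t}$ equals that of $b_{0}$ for every $t$, giving $r_{b_{t}}^{\text{EUBRL}}(s, a_{2}) \approx \mu_{0}$ throughout. Meanwhile repeated observations of the deterministic reward $r_{1}$ only tighten the posterior on $a_{1}$ around $r_{1}$, so $r_{b_{t}}^{\text{EUBRL}}(s, a_{1}) \to r_{1} > \mu_{0}$, and $\pi_{t}(s) = a_{1}$ persists. Therefore the per-step regret is constantly $(r_{2} - r_{1})/(1 - \gamma) > \epsilon_{0}$ for every $t \in \mathbb{N}$, and since the construction is deterministic conditional on the prior, this holds with probability $1$, so any $\delta_{0} \in (0, 1]$ satisfies the statement.

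The main obstacle is the careful calibration of prior parameters so that the strict inequality $r_{b_{t}}^{\text{EUBRL}}(s, a_{1}) > r_{b_{t}}^{\text{EUBRL}}(s, a_{2})$ holds uniformly in $t$, not just at $t = 0$. The subtle point is that $\mathcal{E}_{b_{t}}(s, a_{1})$ shrinks as data on $a_{1}$ accumulates, which in principle could move $r_{b_{t}}^{\text{EUBRL}}(s, a_{1})$; one verifies that the posterior predictive mean for $a_{1}$ stays pinned near $r_{1}$ because the prior mean and all observed values agree, so the positive gap $r_{1} - \mu_{0}$ is preserved throughout the induction. Tie-breaking by the planner is never invoked because the inequalities are strict by construction, and the same template extends to any $\eta$ that does not scale with $\sqrt{\tau_{0, a_{2}}}$.
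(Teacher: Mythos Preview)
Your proposal has a genuine gap at the normalization of $P_U$. You claim that by taking large prior precisions $\tau_{0,a}$ the probability of uncertainty $P_U(s,a) = \mathcal{E}_b(s,a)/\Em$ can be made arbitrarily small, but $\Em$ is \emph{not} a fixed absolute constant: in the paper it is the maximum epistemic uncertainty induced by the prior itself (``$\Em$ is typically determined by the prior'' and ``fully determined by the priors''). Hence $\Em$ shrinks at exactly the same rate as the $\mathcal{E}_{b_0}(s,a_i)$, and at $t=0$ the arm of largest uncertainty has $P_U = 1$ no matter how large $\tau_{0,a}$ is. If the two precisions coincide, both arms satisfy $P_U(s,a_i)=1$ and $r_{b_0}^{\text{EUBRL}}(s,a_i)=\Em$; the prior means $r_1$ and $\mu_0$ never enter, so the inequality $r_1>\mu_0$ does \emph{not} give $\pi_0(s)=a_1$. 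The base case must be handled by a tie-break, which selects $a_1$ only with probability $1/2$, not $1$.

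The inductive step inherits the same problem. For $t\ge 1$ the unvisited arm $a_2$ still has $P_U^t(s,a_2)=1$, so $r_{b_t}^{\text{EUBRL}}(s,a_2)=\Em$, and the relevant comparison is not $r_{b_t}(s,a_1)\approx r_1$ versus $\mu_0$, but rather $(1-P_U^t(s,a_1))\,r_1 + P_U^t(s,a_1)\,\mathcal{E}_{b_t}(s,a_1)$ versus $\Em$. That inequality factors as $(\mathcal{E}_{b_t}(s,a_1)-\Em)(\mathcal{E}_{b_t}(s,a_1)-(r_1-\Em))>0$, which holds iff $\mathcal{E}_{b_t}(s,a_1)<r_1-\Em$; this is true once $\tau_0$ is large enough that $r_1>2\Em$, but it does not follow from your ``$P_U$ is small'' heuristic. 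The paper's own proof avoids this pitfall by taking a different construction: a Bernoulli two-armed bandit with symmetric $\mathrm{Beta}(\alpha,\alpha)$ priors, accepting the initial tie, and then using Hoeffding's inequality to show that the empirical mean of the first-pulled (suboptimal) arm keeps its EUBRL reward above $\Em$ with high probability as $\alpha$ grows. Your deterministic-reward route can be made to work and would ultimately be more elementary (no concentration needed), but the argument as written misidentifies the mechanism and the claimed probability-$1$ conclusion is incorrect.
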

We construct a two-armed bandit with a misspecified prior such that the prior is confidently wrong and produces low epistemic uncertainty, leading to repeated commitment to the suboptimal arm with high probability.

In other words, this counterexample highlights the vital importance of the scaling factor $\eta$ and the priors in enabling efficient exploration.

\section{Experiments}
\label{sec:experiments}

\begin{figure}[t]
  \label{fig-tab:1}
  \begin{minipage}{0.45\textwidth}
    \centering
\begin{table}[H]
\centering
\caption{Results on \textbf{Chain} environment. The average return and standard error are computed across 500 random seeds, with each run consisting of 1000 steps.}
\begin{tabular}{lcc}
\toprule
  \textbf{Algorithm} & \textbf{Average Return} & \textbf{SE} \\
  \midrule
  PSRL & $3158$ & $31$ \\
  RMAX & $3090$ & $36$ \\
  BEETLE & $1754$ & - \\
  BOSS & $3003$ & - \\
  Mean-MDP & $3078$ & $49$ \\
  BEB & $3430$ & - \\
  MBIE-EB & $3462$ & - \\
  VBRB & $3465$ & $20 \sim 50$ \\
  \midrule
  EUBRL & $\bm{3473}$ & $\bm{16}$ \\
  \bottomrule
\end{tabular}
\label{tab:exp:chain}
\end{table}
  \end{minipage}
  \hspace{0.5cm}
  \begin{minipage}{0.5\textwidth}
    \centering
\renewcommand{\arraystretch}{1.2}
\begin{table}[H]
\centering
\caption{Summary of tasks. For Loop, we denote $L$ as the number of loops and $L_k$ as the $k$-th loop; for DeepSea, $N$ is the side length; for LazyChain, $N$ is the balanced length. ``D'' stands for deterministic and ``S'' stochastic.}
\begin{small}
  \begin{sc}
    \scalebox{0.9}{\begin{tabular}{lcc p{3cm} l}
\toprule
  \textbf{Task} & $S$ & $A$ & $r$ & \textbf{Type} \\
  \midrule
  Chain       & 5 & 2 & $10\, \mathbf{1}_{(s'=5)}$ & S \\
  Loop       & $4 L + 1$ & 2 & $2\, \mathbf{1}_{\left(s'=1 \text{ and } L_{1}\right)} + \mathbf{1}_{\left(s'=1 \text{ and } L_{k \neq 1}\right)}$ & D \\
  DeepSea      & $N \times N$ & 2 &$\mathbf{1}_{(s'=(N,N))} - \mathbf{1}_{(a=\text{right})}\frac{0.01}{N}$ & D \\
  DeepSea      &  &  &$\mathcal{N}(1, 1)\mathbf{1}_{(s'=(N,N))} + \mathcal{N}(0, 1)\mathbf{1}_{(s'=(N,1))} - \mathbf{1}_{(a=\text{right})}\frac{0.01}{N}$& S \\
  \midrule
  LazyChain & $2 N + 1$ & 3 & $(2 N - 1)\,\mathbf{1}_{(s'=\text{right})} + (N - 1)\,\mathbf{1}_{(s'=\text{left})} + 0\,\mathbf{1}_{(a=\text{do nothing})} - 1\,\mathbf{1}_{(\text{otherwise})}$ & S, D \\
  \bottomrule
    \end{tabular}}
      \end{sc}
  \end{small}
\label{tab:env-details}
\end{table}
  \end{minipage}
\end{figure}

In this section, we aim to measure the exploration capabilities of \texttt{EUBRL} on tasks with sparse rewards, long horizons, and stochasticity. We focus on sample efficiency, scalability, and consistency, as reflected by metrics such as the number of steps or episodes required to fully solve a task, scalability with respect to problem size, and success rate. We find that \texttt{EUBRL} generally matches or outperforms previous principled algorithms, with the advantage increasing as problem size grows. We compare \texttt{EUBRL} with both frequentist and Bayesian methods. Our benchmarks include well-known standard tasks in the Bayesian literature, Chain and Loop \citep{strens2000bayesian}—the former highly stochastic, the latter deterministic and emphasizing state-space structure—as well as more complex environments: we study DeepSea \citep{osband2019deep, osband2019behaviour} and design LazyChain, both featuring sparse rewards, long horizons, and deterministic and stochastic variants. A concise summary is provided in Table~\ref{tab:env-details}, with detailed descriptions available in Appendix~\ref{sec:environments}.

\paragraph{Baselines}
Frequentist algorithms based on optimism include RMAX \citep{brafman2002r}, which assigns unknown state-action pairs the maximum possible reward, and MBIE-EB \citep{strehl2008analysis}, which uses Hoeffding's inequality to derive a reward bonus $r^{t}_{\text{bonus}} = \frac{1}{\sqrt{n^{t}(s, a)}}$, where $n^{t}(s, a)$ is the number of visits up to and including the $t$-th step. Bayesian methods are flexible in incorporating prior knowledge. Sampling-based methods include PSRL \citep{strens2000bayesian, osband2013more}, which acts optimally with respect to a model sampled from the belief, and BOSS, which samples multiple models and solves a merged MDP. Optimism-based Bayesian methods include BEB \citep{kolter2009near}, which is based on the mean-MDP with an additive bonus $r^{t}_{\text{bonus}} = \frac{1}{1 + n^{t}(s, a) + \mathbf{1}^{\top}\bm{\alpha}}$, where $\bm{\alpha}$ are the prior parameters of the Dirichlet distribution; however, it assumes the reward function is known, and VBRB \citep{sorg2012variance}, which is based on the variance in the belief over both reward and transition. VBRB is similar to ours but, being tailored only to variance, does not include epistemic guidance. Moreover, classic Bayesian methods are worth comparing: BEETLE \citep{poupart2006analytic} provides an analytic solution to BAMDP, where the Bayes-optimal policy implicitly trades off exploration and exploitation, and Mean-MDP \citep{poupart2006analytic, kolter2009near, sorg2012variance} approximates BAMDP without any reward bonus.

\begin{figure}[t]
  \label{fig-tab:2}
  \begin{minipage}{0.45\textwidth}
    \centering
    \begin{table}[H]
      \centering
      \caption{Results on \textbf{Loop} environment of 2 Loops. The average return and standard error are computed across 500 random seeds, with each run consisting of 1000 steps. }
      \begin{tabular}{lcc}
        \toprule
        \textbf{Algorithm} & \textbf{Average Return} & \textbf{SE} \\
        \midrule
        PSRL & $377$ & $1$ \\
        RMAX & $394$ & $0$ \\
        Mean-MDP & $233$ & $3.4$ \\
        BEB & $386$ & $0$ \\
        \midrule
        EUBRL & $\bm{395}$ & $0.04$ \\
        \bottomrule
      \end{tabular}
      \label{tab:exp:loop}
    \end{table}
  \end{minipage}
  \hspace{0.5cm}
  \begin{minipage}{0.5\textwidth}
    \centering
    \includegraphics[width=0.9\textwidth]{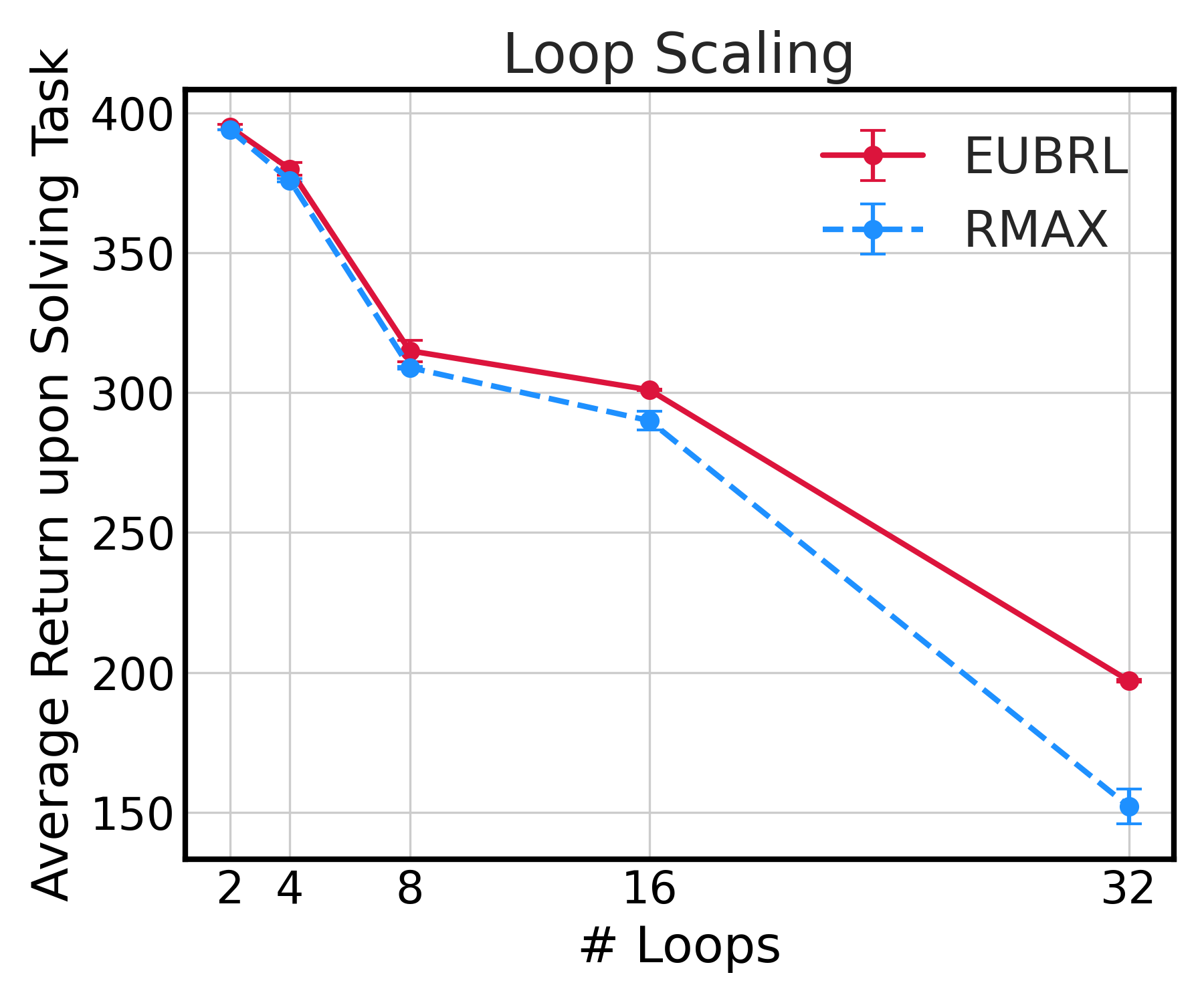}
    \caption{Scaling of number of loops, leading to more sparsity and structural difficulty. Averaged over 500 random seeds.}
    \label{fig:loop-scaling}
  \end{minipage}
\end{figure}

\paragraph{Results}
As shown in Table~\ref{tab:exp:chain} and \ref{tab:exp:loop}, in Chain and Loop, \texttt{EUBRL} not only outperforms all relevant baselines but also exhibits low variability. Notably, Mean-MDP consistently performs subpar, highlighting the importance of a reward bonus for sustained and efficient exploration. Furthermore, we evaluated \texttt{EUBRL} against RMAX—whose inductive bias favors deterministic environments—on Loop by increasing the number of loops, which leads to more sparsity in the state space; surprisingly, even with a perfect prior—so that RMAX knows the transitions and rewards after experiencing them—it scales less favorably than \texttt{EUBRL}. This suggests that the priors in Bayesian methods may have a smoothing effect, enabling more scalable performance in sparse environments.

Another standard benchmark is DeepSea, a hard-exploration problem where a dithering strategy may require an exponentially large amount of data, and the success probability decays exponentially as the problem size increases \citep{osband2019deep}. As depicted in Figure~\ref{fig:deepsea}, for the deterministic variant, most methods are able to solve the task. Surprisingly, PSRL (or Thompson sampling in the bandit setting)—despite being an effective sampling strategy for exploration—does not scale well as the problem size increases, likely because their sampling is excessively frequent, causing unnecessary exploration and fluctuations near convergence. Additionally, BEB, a Bayesian method, also based on the mean MDP, does not leverage any posterior information in the reward bonus, making it less flexible across different environments and resulting in slower convergence. On the other hand, the stochastic variant is a harder problem, with stochastic rewards, additional competing sources, and randomized transitions. We consider two priors for \texttt{EUBRL}: one more conservative and the other more exploratory, denoted as \texttt{EUBRL+}. We find that our method is more sample-efficient, requiring fewer steps to solve the task, and more scalable and consistent. Notably, \texttt{EUBRL+} perfectly solves the task without failure—a result not observed in previous works.

\begin{figure}[t]
\begin{center}
\includegraphics[width=0.8\textwidth]{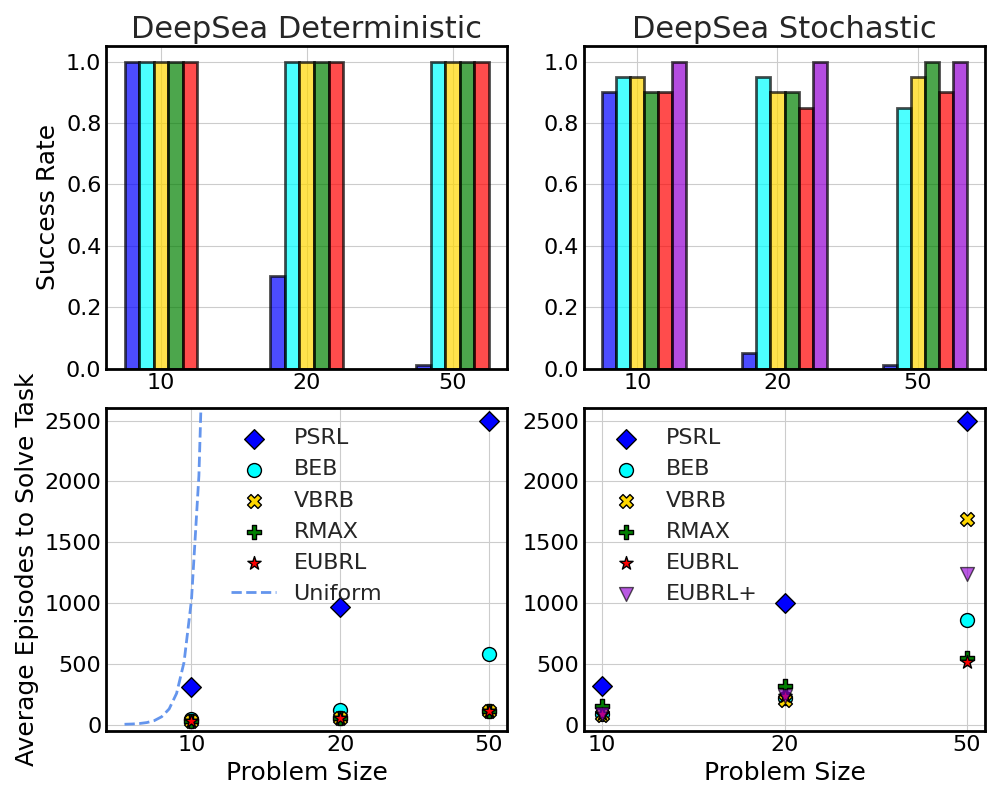}
\caption{Success rate and average episodes to solve task, reported for both deterministic and stochastic variants over different problem sizes ($S = N \times N$). Averaged over 20 random seeds.}
\label{fig:deepsea}
\end{center}
\end{figure}

Lastly, we introduce a new environment called LazyChain, which involves long horizons, sparse rewards, and myopia. The only positive rewards are at the two ends, with the left end being suboptimal. Starting from the middle of the chain, the agent can move at a per-step cost or choose to \texttt{do nothing}, incurring no cost but making it impossible to obtain higher rewards. Even upon reaching the left end, the agent receives a positive immediate reward, yet the cumulative reward remains zero, hindering effective credit assignment. To succeed, the agent must sufficiently explore the chain to reach both ends and overcome the myopia. Results in Figure~\ref{fig:lazychain} show that \texttt{EUBRL} consistently outperforms other methods, exhibiting better sample efficiency and scalability, even under heavy noise injection in the transitions. A comparison with DeepSea is provided in Remark~\ref{rmk:compare-lazy-deepsea}.

\paragraph{Prior Selection}
We discuss the selection and incorporation of priors. We use independent Dirichlet \citep{DBLP:conf/uai/DeardenFA99} and Normal-Gamma priors for transitions and rewards. Although Proposition~\ref{prop:dir-and-normal-gamma} suggests that Normal-Gamma may be degenerate, we find that it adapts more smoothly to changes. Since we have diverse stochastic environments, the sample variance can help inform epistemic uncertainty. In contrast, Normal-Normal assumes the precision $\tau$ (the reciprocal of variance) is fixed, entirely disregarding variability.

Moreover, in practice—for example, in navigation tasks where per-step transitions are similar across different states—it is beneficial to use a \emph{tied} prior, maintaining a single global Dirichlet prior that is aggregated and shared among all states. As shown in Figure~\ref{fig:lazychain}, \texttt{EUBRL (Tied Prior)} indeed reduces the number of samples required for convergence and achieves a higher overall success rate.

From Section~\ref{sec:method:EU}, we know that the definition of epistemic uncertainty is not unique. Beyond variance, one information-theoretic measure is mutual information, which quantifies the reduction in uncertainty after collecting additional evidence. We find that mutual information is more exploratory than variance. As shown in Figure~\ref{fig:lazychain}, \texttt{EUBRL (MI)}, although taking slightly more steps, achieves the highest overall success rate.

\section{Related Works}
\begin{figure}[t]
\begin{center}
\includegraphics[width=0.8\textwidth]{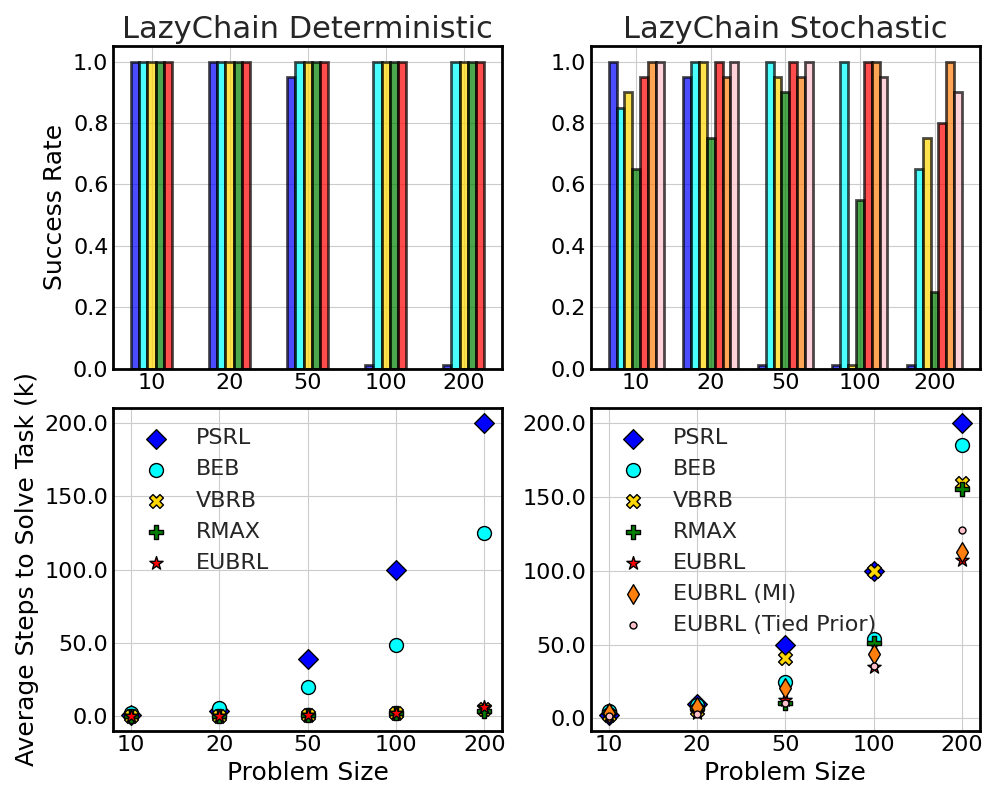}
\caption{Success rate and average steps to solve task, reported for both deterministic and stochastic variants over different problem sizes ($S = 2 N + 1$). Averaged over 20 random seeds.}
\label{fig:lazychain}
\end{center}
\end{figure}

\paragraph{Bayesian RL} Bayesian RL maintains a posterior over uncertain quantities and uses this uncertainty to guide policy selection. From bandits \citep{thompson1933likelihood, kaufmann2012bayesian} to MDPs \citep{DBLP:conf/uai/DeardenFA99, strens2000bayesian, kolter2009near}, this idea enables effective exploration strategies that are otherwise impossible with simple dithering. BAMDP \citep{duff2002optimal} formally represents uncertainty over MDPs by augmenting the state with beliefs, allowing derivation of a Bayes-optimal policy, though it is generally intractable. Approximate methods include mean-MDP \citep{poupart2006analytic}, sparse sampling \citep{wang2005bayesian}, and approximate inference \citep{DBLP:conf/icml/WangWHL12}. Despite being Bayesian, most of these works make limited use of uncertainty quantification, without fully leveraging the posterior. VBRB \citep{sorg2012variance} employs variance similar to ours; however, it is motivated by Chebyshev's inequality and lacks epistemic guidance.

\paragraph{Provably Efficient RL} The idea of knownness \citep{kakade2003sample}, combined with Hoeffding's inequality, underlies the PAC-MDP \citep{strehl2008analysis, strehl2009reinforcement} and PAC-BAMDP \citep{kolter2009near, DBLP:conf/icml/Araya-LopezBT12} guarantees, though these bounds are loose compared to our frequentist results. \citet{he2021nearly} shows that nearly minimax-optimal regret is achievable in infinite-horizon discounted MDPs, but whether similar sample complexity guarantees hold remains unclear. Although several works achieve nearly minimax-optimal regret \citep{azar2017minimax} or sample complexity \citep{dann2015sample, dann2019policy} in the finite-horizon setting using refined concentration bounds \citep{DBLP:conf/iclr/LeeO25}, the infinite-horizon setting is generally more challenging due to trajectory coupling and value function stationarity.

\paragraph{Uncertainty Quantification} Epistemic uncertainty—arising from knowledge gaps—has deep roots in cognition: it elicits curiosity \citep{kidd2015psychology} and enhances memory for surprising information \citep{kang2009wick}. Mathematically, this manifests as surprise or disagreement in one's belief, captured by mutual information \citep{hullermeier2021aleatoric} or variance \citep{kendall2017uncertainties}. Despite this rich foundation, formal study of epistemic uncertainty in Bayesian RL remains limited. As an intrinsic motivation emerging naturally from Bayesian inference, epistemic uncertainty offers a versatile, principled approach to learning—yet a critical open question remains: how to capture it across multiple hierarchies, minimizing the need of hand-crafted rewards.

\section{Conclusion}
In this paper, we introduce \texttt{EUBRL}, a Bayesian RL algorithm that leverages epistemically guided rewards for principled exploration. The epistemic guidance naturally disentangles exploration and exploitation and adaptively reduces per-step regret. Theoretically, we prove that \texttt{EUBRL} achieves nearly minimax-optimal regret and sample complexity for a class of sufficiently expressive priors, with concrete instantiations for the two most commonly used priors. Empirical results demonstrate the strong exploration capabilities of \texttt{EUBRL} on tasks with sparse rewards, long horizons, and stochasticity, achieving superior sample efficiency, scalability, and consistency. Scalable epistemic uncertainty estimation and efficient Bayesian planning with function approximation remain open and promising directions for future research. See discussion in Appendix~\ref{sec:fun-approx}.

\section*{Acknowledgments}
This research is supported by the Ministry of Digital Development and Information (MDDI) under the Singapore Global AI Visiting Professorship Program (Award No. AIVP-2024-002).

\bibliography{iclr2026_conference}
\bibliographystyle{iclr2026_conference}

\newpage
\appendix

\begin{figure}[t]
\begin{center}
\includegraphics[width=0.55\textwidth]{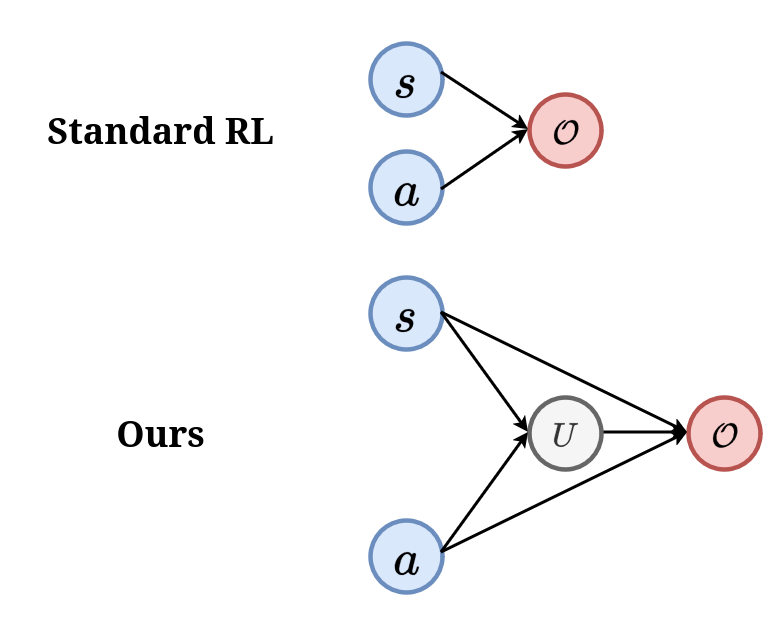}
\caption{Comparison between standard RL and our formulation as represented by probabilistic graphical models (PGMs). We introduce the variable of ``uncertainty'' $U$, which partitions the optimality $\mathcal{O}$ into distinct cases: one is when certain, the other is when uncertain.}
\label{fig:PGM}
\end{center}
\end{figure}

\section{Control as Inference}
\label{sec:CAI}
In this section, we expand the discussion on the motivation for using probabilistic inference to incorporate epistemic uncertainty into the agent's learning objective. In decision-making problems, an agent seeks to accumulate knowledge through interactions with the environment. Standard RL achieves this by maximizing accumulated rewards; however, this approach often becomes overly exploitative of known rewards and ignorant of unknown potential rewards. Consequently, the learning process is not truly \emph{active}, as it is driven primarily by observed rewards rather than by the uncertainty surrounding them.

By contrast, UCB-based bonuses introduce a proxy for ``uncertainty,'' such as the inverse of visit counts, to promote more active exploration. However, these methods often fail to distinguish whether a reward estimate is reliable or not when the uncertainty is high. This limitation motivates us to disentangle exploration from exploitation via epistemically guided rewards, which are anchored in probabilistic inference.

Specifically, probabilistic graphical models provide a principled framework to condition on additional hidden variables (e.g., $U$). As illustrated in Figure~\ref{fig:PGM}, this conditioning allows us to partition optimality ($\mathcal{O}$) into two cases (or more, depending on modeling choices): one in which the agent is uncertain, corresponding to epistemic uncertainty, and one in which it is certain, corresponding to reward estimates. This mechanism thereby focuses more on exploration when epistemic uncertainty is high, while shifting to exploitation when the agent is confident in its existing knowledge.

\section{Model Specification}
\subsection{Hierarchical Reward Model}
In our setting, $r(s, a)$ denotes the expected reward, which is a deterministic function. The source distribution can be modeled as either $P(r | s, a)$ or $P(r | s, a, s')$. Notably, the former is independent of the outcome following the action, making it less expressive and potentially misrepresenting the underlying generative process. For example, in many scenarios, the reward is meaningful only when feedback is received, which depends on the next state $s'$. In such cases, the reward distribution $P(r | s, a)$ effectively becomes a mixture distribution:
\begin{equation*}
  P(r | s, a) = \mathbb{E}_{P(s' | s, a)}[P(r | s, a, s')].
\end{equation*}
If we were to use simple distributions e.g. Normal distribution to model $P(r | s, a)$, then it will underrepresent the more complex true mixture distribution. On the other hand, if $P(r | s, a)$ can be sufficiently represented by a class of distributions, its feedback-dependent counterpart $P(r | s, a, s')$ must be representable with similar complexity. Moreover, learning $P(r | s, a, s')$ and the transition model $P(s' | s, a)$ is disentangled, allowing computation to be appropriately allocated. In contrast, learning $P(r | s, a)$ implicitly learns $P(s' | s, a)$, which makes learning more difficult and prevents reuse of existing knowledge. Therefore, we adopt modeling $P(r | s, a, s')$ in our implementation and aggregate epistemic uncertainty in reward as follows:
\begin{equation*}
  \mathcal{E}_{R}(s, a) \coloneq \mathbb{E}_{P_{b}(s' | s, a)}\left[\mathcal{E}_{R}(s, a, s')\right].
\end{equation*}

\subsection{Epistemic Uncertainty}
Variance-based epistemic uncertainty involves evaluating the expectation of either the transition model or the reward model. However, unlike real-valued distributions, it is meaningless to take an expectation over a categorical distribution, since the numeric representation does not correspond to the actual categories. To address this, we encode the state using one-hot encoding when calculating epistemic uncertainty for the Dirichlet distribution. The exact formulations are provided in Section~\ref{sec:epist-uncert}.

The maximum epistemic uncertainty $\Em$, as required by the epistemically guided reward, is fully determined by the priors and therefore does not introduce any additional degrees of freedom. Although epistemic uncertainty is generally non-increasing, for certain priors it may not be strictly so—for example, the Normal-Gamma prior for the reward, which incorporates the sample variance. Therefore, it is safer to track the maximum epistemic uncertainty throughout learning, ensuring that $P_{U}$ remains well-defined.

For mutual information-based epistemic uncertainty, it is worth noting that a closed-form solution exists for the Dirichlet distribution. By leveraging the known moments of the Dirichlet, we obtain:
\begin{align*}
  \text{MI}_{b}(s, a) & = E_{\belw} [D_{\text{KL}}\left(P(s' | s, a, \bw) \| P_{b}(s' | s, a)\right)] \\
                      & = \sum\limits_{i}\frac{\alpha_{i}}{\alpha_{0}}\left[\psi(\alpha_{i} + 1) - \psi(\alpha_{0} + 1) - \log \frac{\alpha_{i}}{\alpha_{0}}\right],
\end{align*}
where $\psi$ is the digamma function.

\subsection{Discussion on Function Approximation}
\label{sec:fun-approx}
Although our work is not intended for immediate deployment with deep function approximators, we believe that the conceptual idea of epistemically guided reward could inspire future research. The main barriers we foresee are the efficiency and quality of both epistemic uncertainty estimation and Bayesian planning. Furthermore, we discuss how our theoretical results could be extended beyond the current setting.

\paragraph{Epistemic Uncertainty Estimation} Existing approximate Bayesian methods can be leveraged for this purpose, e.g., deep ensembles \citep{lakshminarayanan2017simple}, Bayes by Backprop \citep{blundell2015weight}, MC dropout \citep{gal2016dropout}, Bayesian hypernets \citep{krueger2017bayesian, Dwaracherla2020Hypermodels}, or Langevin Monte Carlo \citep{welling2011bayesian, ishfaq2024provable}. However, these methods typically require multiple models or samples, which can significantly hinder computational efficiency, particularly when integrated with Bayesian planning. Meanwhile, some efforts \citep{fan2021model, sasso2023posterior} have been made to scale PSRL to continuous state and action spaces using Bayesian linear regression, offering a lighter-weight alternative. When epistemic uncertainty is quantified via mutual information, active learning \citep{gal2017deep} and Bayesian experimental design \citep{rainforth2024modern} provide tractable estimators. In particular, \cite{sukhija2023optimistic} model the dynamics with Gaussian processes \citep{williams2006gaussian}, deriving a tractable upper bound on mutual information, which demonstrates strong zero-shot capability on novel tasks. Nevertheless, a key open question remains: can we construct a well-calibrated epistemic uncertainty estimator that does not rely heavily on sampling?

\paragraph{Bayesian Planning} One can leverage sparse and smart sampling strategies, such as employing lazy sampling \citep{guez2012efficient} or reusing a set of pre-sampled models \citep{DBLP:conf/icml/WangWHL12, lu2017ensemble}. Additionally, trajectories can be simulated in latent space using Monte Carlo estimates, similar to Dreamer \citep{DBLP:conf/iclr/HafnerLB020}, while policy optimization can be performed via reparameterized policy gradients \citep{DBLP:conf/nips/HeessWSLET15}. Despite these advances, computational efficiency remains suboptimal, and the accuracy of the solution is unclear.

\paragraph{Approximation Error} When approximation is involved, a natural question arises: do the theoretical results in the paper still hold? We argue that our theoretical results remain meaningful in the approximate setting. For example, when an exact MDP solver is not available, we may need to resort to an approximate one, whose solution we denote by $\hat{\pi}_t$. The per-step regret can then be re-expressed as follows:
\begin{align*}
  V^{\star}(s) - V^{\hat{\pi}_t}(s) = V^{\star}(s) - V^{\pi_t}(s) + \underbrace{V^{\pi_t}(s) - V^{\hat{\pi}_t}(s)}_{\text{Approximation Error}}.
\end{align*}
As shown, the per-step regret can be decomposed into two components: the first part corresponds to the results established in the paper, while the second part captures the quality of the approximate solver. If a reasonably good approximation is available, we can derive similar regret and sample complexity bounds, albeit with an additional term reflecting the approximation error.

This property is appealing because, given the same solver, our method will always outperform alternative approaches. It also implies that EUBRL can be integrated with existing solvers, such as tree search-based or rollout-based methods, as discussed previously.

Overall, scalable epistemic uncertainty estimation and efficient Bayesian planning with function approximation remain open and promising directions for future research, providing a fundamental basis for enabling active exploration in increasingly complex environments. Moreover, a more comprehensive theoretical analysis of the approximate setting is another direction worth pursuing.

\section{Experimental Setup}
\label{sec:experimental-setup}
\subsection{Environments}
\label{sec:environments}
Our benchmarks include standard tasks from the Bayesian literature, Chain and Loop, originally introduced by \citep{strens2000bayesian} as testbeds for smart exploration. These tasks are challenging due to the presence of multiple suboptimal policies and the fact that the optimal policy produces rewards that are distant. The two tasks differ in their characteristics: in Chain, transitions are highly stochastic and actions may not always have the intended effect, whereas Loop, although deterministic, has a state-space structure that makes exploration difficult.

In addition to these, we evaluate more complex and larger-scale tasks. In particular, DeepSea \citep{osband2019deep}, as implemented in Bsuite \citep{DBLP:conf/iclr/OsbandDHASSMLSS20}, requires deep exploration as a core capability for RL agents. It has been shown that dithering strategies such as $\epsilon$-greedy or Boltzmann exploration fail to achieve deep exploration and may require exponentially many episodes to learn anything meaningful. In contrast, optimistic or randomized strategies can solve the task in the optimal number of episodes \citep{osband2019deep}. DeepSea has two variants: one deterministic, where both transitions and rewards are predictable, and one stochastic, which introduces noise to transitions and rewards and includes competing reward sources, making it more challenging. Notably, no algorithm has been shown to consistently succeed across different problem sizes in the stochastic setting.

Furthermore, we introduce a new environment called LazyChain, where effective credit assignment is bottlenecked by exploration, and efficient exploration is hindered by myopia. In this environment, seemingly promising immediate rewards may not provide meaningful feedback for learning the value function. LazyChain also has deterministic and stochastic variants. Details of all these environments are provided below.

\paragraph{Chain} \citep{strens2000bayesian} is a five-state problem with two abstract actions, $\{\text{left}, \text{right}\}$. Each action has a probability of ``slipping'', which causes it to produce the opposite effect. The optimal behavior is to always choose the action right; however, if the other action is chosen, the agent will be reset to the leftmost state.

\paragraph{Loop} \citep{strens2000bayesian} was originally proposed as a two-loop problem jointly connected at a single start state. It is a deterministic environment consisting of nine states and two actions, $\{a, b\}$. Repeatedly taking action $a$ causes traversal of the right loop, yielding a reward of 1 when the agent returns to the start state, while repeatedly taking action $b$ traverses the left loop, yielding a reward of 2. If a single step is missed in the left loop, the agent is immediately sent back to the start state, while all other actions taken within the right loop continue the traversal. This problem makes exploration difficult due to the structure of state-space and sparse rewards.

We generalize this environment to more than two loops, with only one loop yielding the optimal reward. The size of the action space expands to match the number of loops, with each action corresponding to entering a specific loop. Similarly, any incorrect action taken within the optimal loop causes a reset to the start state, while the other loops continue their traversal.

\paragraph{Deep Sea} \citep{osband2019deep, osband2019behaviour} consists of $S = N \times N$ states and two actions, $\{\text{left}, \text{right}\}$, which move the agent diagonally and terminate exactly after $N$ steps per episode (see Figure~\ref{fig:illustrative:deepsea}). In the deterministic variant, there is only one positive reward at the bottom-right cell, representing a treasure. However, there is a per-step cost of $r = -\frac{0.01}{N}$ to discourage the agent from moving in that direction, while no cost is incurred when moving the other way. In the stochastic variant, a ``bad'' transition is generated with probability $\frac{1}{N}$ when moving toward the treasure, introducing a high degree of uncertainty from which the agent may not recover. Moreover, additive noise $\mathcal{N}(0, 1)$ is applied to the rewards at both the treasure cell and the bottom-left cell.

\begin{figure}[t]
\begin{center}
\includegraphics[width=0.7\textwidth]{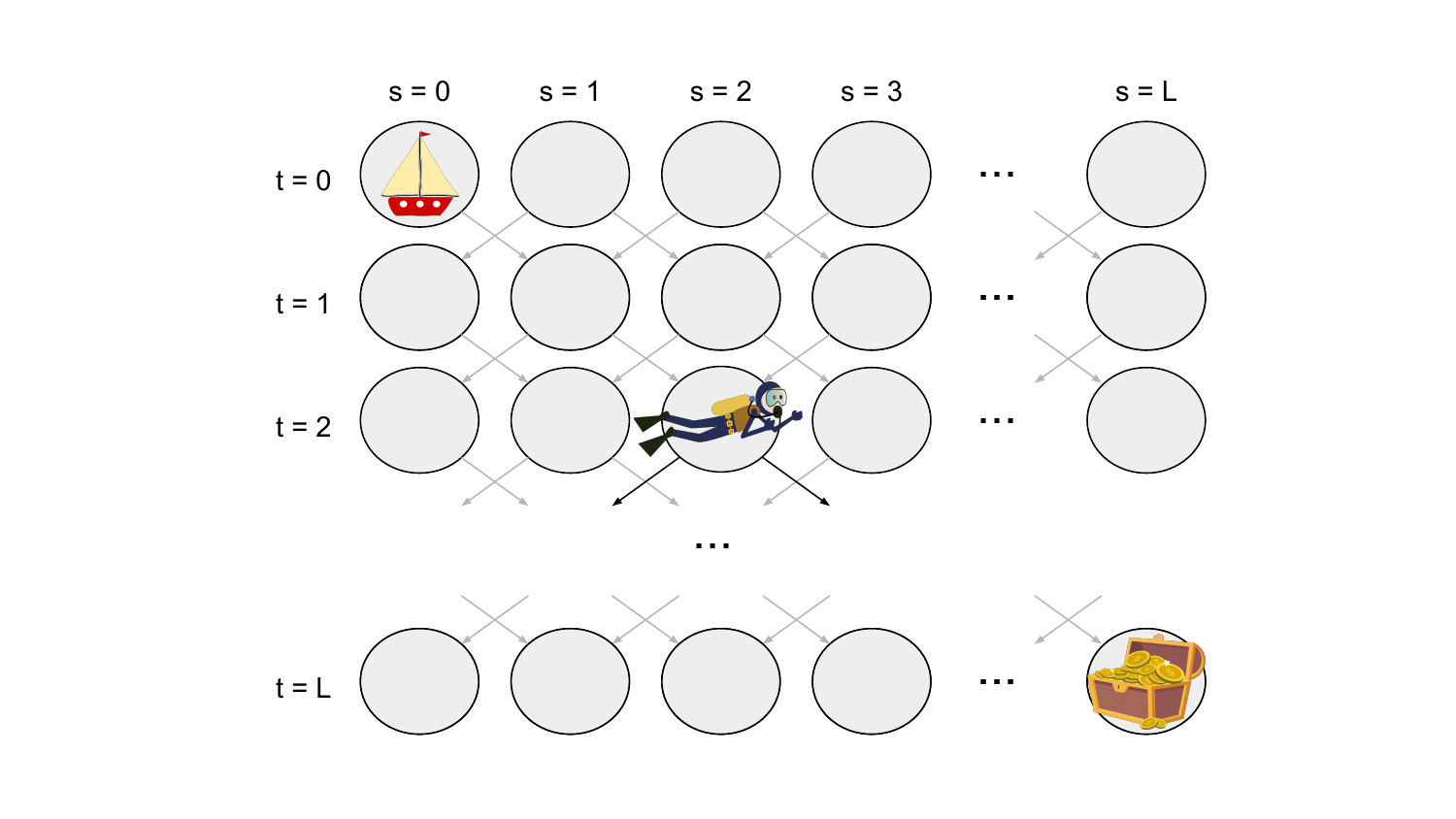}
\caption{Visualization of DeepSea.}
\label{fig:illustrative:deepsea}
\end{center}
\end{figure}

\paragraph{Lazy Chain} is a balanced chain, with the initial state in the middle and the two halves of equal length (see Figure~\ref{fig:illustrative:lazy}). The only positive rewards are at the two ends; however, the left end is suboptimal. The per-step cost for moving along the chain is $r = -1$. To test the exploration capability of an algorithm, we introduce another action, \texttt{do nothing}, which leaves the agent in the current state with no cost incurred. Notably, although the left end gives a positive reward, accounting for the cost to reach it, the cumulative reward will be zero. This makes credit assignment even harder, as it fails to distinguish between ``worth nothing because nothing happened'' and ``worth nothing because a lot of bad things and one good thing happened.'' Without proper exploration, an agent may either converge confidently on the suboptimal path or be unable to receive any positive rewards, eventually leading to a myopic solution—remaining in the same state. There are two variants of this environment: the deterministic version, in which transitions are fully predictable, and the stochastic version, in which actions may be flipped at each time step with probability $p=0.2$. In addition, the agent is reset to the middle of the chain whenever either end is reached.

\begin{remark}
  \label{rmk:compare-lazy-deepsea}
  Notably, unlike DeepSea, where the probability of error decays and is limited to the ``right'' action with no adverse effect, LazyChain maintains a constant error probability, affects all movement actions, and produces opposite effects, making larger problem sizes increasingly challenging, potentially exponentially so. Additionally, DeepSea is episodic, terminating exactly in $N$ steps, whereas LazyChain may take an arbitrarily long time to explore the chain, even indefinitely, if one keeps choosing \texttt{do nothing}. Moreover, LazyChain has more suboptimal solutions, since DeepSea is isomorphic to the right half of the chain of LazyChain, where any action that does not head toward the treasure is considered a failure.
\end{remark}

\begin{figure}[t]
\begin{center}
\includegraphics[width=0.8\textwidth]{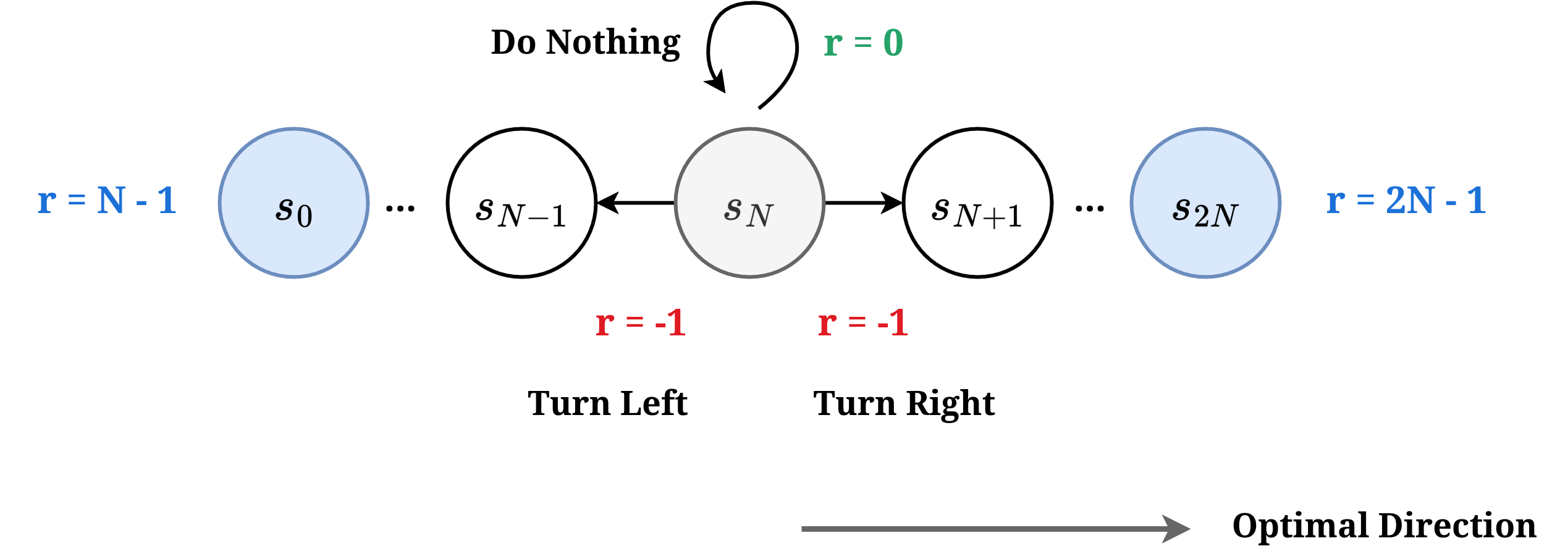}
\caption{Visualization of LazyChain.}
\label{fig:illustrative:lazy}
\end{center}
\end{figure}

\subsection{Metrics}
The metrics we choose reflect the key aspects of interest: sample efficiency, scalability, and consistency. These are measured by the number of steps or episodes required to solve a task, scalability with respect to problem size, and consistency in terms of success rate. Additionally, we report the average return whenever applicable.

\paragraph{Average Return} The cumulative return up to a given time, averaged across all random seeds.
\paragraph{Average Steps / Episodes to Solve Task} The number of steps or episodes required to solve the task, averaged across all random seeds.
\paragraph{Success Rate} The proportion of successful runs among all runs.

An algorithm is considered successful only if it matches the optimal policy exactly for consecutive episodes, which makes this a stricter condition for task completion. Meanwhile, an algorithm is halted if it succeeds in solving the task, fails to solve the task completely, or exceeds the maximum allowable steps $T_{\text{max}}$. For instance, Chain and Loop are stopped exactly at the specified number of environment steps. In contrast, for DeepSea, we use a limit of $T_{\text{max}} = 50 \cdot N^{2}$, where $N$ is the side length, and for LazyChain, we use $T_{\text{max}} = 1000 \cdot N$, where $N$ is a balanced length. This choice both facilitates computational efficiency and allows evaluation of exploration under constraints. In other words, this threshold can be viewed as a penalty for any algorithm that fails the task, inflating the metric for the average steps or episodes required to solve the task.

In sparse reward environments, we expect an efficient algorithm to achieve faster convergence, as indicated by the average number of steps or episodes required to solve the task, and higher consistency, as reflected by the success rate. However, in practice, algorithms may exhibit a trade-off between convergence and consistency.

\subsection{Hyperparameters}
To ensure fairness, all hyperparameters are tuned via line search for best performance for each method. Moreover, we ensure identical priors and modeling choices across Bayesian methods. Scaling factors are adjusted per algorithm, as they are algorithm-dependent.

We model rewards using Normal-Gamma model $\mathcal{NG}(\mu_{0}, \lambda_{0}, \alpha_{0}, \beta_{0})$. We set $\mu_{0} = 0$ and $\alpha_{0} = 2$. Furthermore, we impose $\lambda_{0} = \beta_{0}$, resulting in a single tunable parameter. This configuration leads to an initial epistemic uncertainty $\mathcal{E}_{R}(s, a, s') = 1, \forall (s, a, s') \in \mathcal{S} \times \mathcal{A} \times \mathcal{S}$. In addition, we model transitions using Dirichlet-Multinomial model. The Dirichlet prior is parameterized by a single parameter $\alpha$, yielding to a uniform prior $\mathrm{Dir}(\mathbf{1}^{\top} \alpha)$. A higher value of $\alpha$ indicates a stronger prior belief, whereas a lower value makes the prior less informative. On top of that, we have a tunable scaling factor $\eta$. For LazyChain, since the maximum reward scales with the state space size, the scaling factor $\eta$ has to be adaptive.

For Bayesian algorithms like EUBRL, VBRB, PSRL, and BEB, we perform a sweep over Dirichlet parameter $\alpha \in \{1.0, \num{1e-1}, \num{1e-2}, \num{1e-3}, \num{1e-4}, \num{1e-5}, \num{1e-6}, \num{1e-8}\}$ for transitions, and $\beta_{0} \in \{1.0, \num{5e-1}, \num{1e-1}, \num{5e-2}, \num{1e-2}, \num{1e-3}, \num{1e-4}\}$ for rewards. All algorithms use the same prior. The scaling factor is tuned individually for each algorithm, in proportion to the maximum rewards or the state space size. We find that VBRB and BEB perform better when the scaling factor is smaller, whereas EUBRL benefits from a slightly larger value. It is worth noting that PSRL does not require a scaling factor; it promotes exploration by repeatedly sampling from the belief. However, this sampling mechanism can lead to numerical instability when the Dirichlet parameter is too small (e.g. below the threshold \num{1e-3}). For fairness, we therefore clip the Dirichlet parameter at this threshold, which only affects state–action pairs for which no observations have been made. Moreover, since BEB does not utilize the posterior to compute the reward bonus, it is unnecessary to adopt the reward modeling discussed in the previous section; we follow the same practice as in the original paper.

For the frequentist algorithm RMAX, the maximum reward is assumed to be known to the algorithm, and we tune the knowness parameter $m \in \{1, 3, 5, 10, 20\}$. We find that a small knowness parameter is generally beneficial in deterministic environment, but not well suited for stochastic environments, which require sustained exploration. A moderate value of the knowness parameter performs best in stochastic environment; otherwise, the algorithm spends excessive time exploring, which scales as $\mathcal{O}(mSA)$.

The discount factor is kept the same for all algorithms. However, its value may differ across tasks, depending on the task horizon. For smaller-scale tasks like Chain and Loop, we choose $\gamma = 0.95$, which trades off accuracy and computational efficiency. For long-horizon tasks, we choose $\gamma = 0.99$ for DeepSea, while $\gamma = 0.999$ for LazyChain, since LazyChain may require more time to explore due to the inaction and stronger stochasticity.

\subsection{Code Accessibility}
The code supporting this work is available at \url{https://github.com/MagiFeeney/EUBRL}, licensed under GPLv3 to ensure the software and its derivatives remain free and open-source. Our code is based on \url{https://github.com/dustinvtran/bayesrl}, which we substantially revised, extended, and optimized while aligning the baselines with the original implementation. We appreciate the efforts of the authors in releasing their code.

\section{Notations and Logarithmic Terms}
In this section, we summarize the notation and logarithmic terms used exclusively for the analysis of both finite- and infinite-horizon settings. To begin with, we denote $PV(s, a) \coloneq \mathbb{E}_{P(s' | s, a)}[V(s')]$ for any distribution $P$ and function $V$.

\renewcommand{\arraystretch}{1.4} 
\begin{table*}[htbp]
\centering
\caption{Summary of logarithmic terms and additional notations used in the analysis, with shorthand notation. Each term is specialized for finite- and infinite-horizon MDPs, where symbol $\square$ takes either episodes or steps as input.}
\begin{tabular}{lcc}
\toprule
\textbf{Shorthand} & \textbf{Finite-horizon episodic MDPs} & \textbf{Infinite-horizon discounted MDPs} \\
\midrule
  $\ell_{1}$ & $\log{\left(\frac{24HSA}{\delta}\right)}$ & $\log{\left(\frac{24SA}{\delta}\right)}$ \\
  $\ell_{2, \square}$ & $\log\left(1 + \frac{kH}{SA}\right)$ & $\log\left(1 + \frac{t}{SA}\right)$ \\
  $\ell_{3, \square}$ & $\log{\left(\frac{12SA(1 + \log{kH})}{\delta}\right)}$ & $\log{\left(\frac{12SA(1 + \log{t})}{\delta}\right)}$ \\
  $\ell_{3, \square}(s, a)$ & $\log{\left(\frac{12SA(1 + \log{N^{k}(s, a)})}{\delta}\right)}$ & $\log{\left(\frac{12SA(1 + \log{N^{t}(s, a)})}{\delta}\right)}$ \\
  $\ell_{4, \square}$ & $\log{\frac{12H}{\delta}}$ & $\log{\frac{12t}{\delta}}$ \\
  $\ell_{5, \epsilon}$ & $\log{(1  + 280 B(\epsilon) H)}$ & $\log{(1  + 140 B(\epsilon))}$ \\
  $\ell_{6, \epsilon}$ & $\log{\log{\frac{\VmH e}{\epsilon}}}$ & $\log{\log{\frac{\Vm e}{\epsilon (1 - \gamma)}}}$ \\
  $B(\epsilon)$ & $\frac{\Rm^{2} H^{2}\ell_{1}}{\epsilon^{2}} + \frac{\Rm H S (2 \ell_{1} + \ell_{6, \epsilon})}{\epsilon}$ & $\frac{\Rm^{2}\ell_{1}}{\epsilon^{2}(1 - \gamma)^{3}} + \frac{\Rm S (2 \ell_{1} + \ell_{6, \epsilon})}{\epsilon (1 - \gamma)^{2}}$ \\
\midrule
  $m_{\square}$ & $\frac{\VmHsq}{\Rm^{2}\lambda_{k}^{2}}$ & $\frac{\Vmsq}{\Rm^{2}\lambda_{t}^{2}}$ \\
  $m$ & $\frac{\VmHsq}{\Rm^{2}\lambda}$ & $\frac{\Vmsq}{\Rm^{2}\lambda}$ \\
  $\Upsilon^{\square}$ & $\frac{7 \VmH \ell_{1}}{\lambda_{k}}$ & $\frac{7 \Vm \ell_{1}}{\lambda}$ \\
  $\Upsilon$ & $\frac{7 \VmH \ell_{1}}{\lambda}$ & $\frac{7 \Vm \ell_{1}}{\lambda}$ \\
  $\eta^{\square}$ & $\Em \Upsilon^{k} + \Rm \sqrt{m_{k}}$ & $\Em \Upsilon^{t} + \Rm \sqrt{m_{t}}$ \\
  $\eta$ & $\Em \Upsilon + \Rm \sqrt{m}$ & $\Em \Upsilon + \Rm \sqrt{m}$ \\
\bottomrule
\end{tabular}
\label{tab:log_terms}
\end{table*}

\subsection{Finite-horizon Episodic MDPs}
Whenever we refer to $k$ or $h$, they denote the episode and a particular step of that episode, respectively. We define $\Delta_{h}(V)(s, a) \coloneq V_{h}(s) - PV_{h + 1}(s, a)$. Furthermore, we define $N^{k}(s, a)$ as the number of visits to $(s, a)$ before the $k$-th episode, and $n^{k}_{h}(s, a)$ as the number of visits up to and including the $h$-th step of the $k$-th episode. It is useful to define stopping time $\nu_{k}$ as follows:
\begin{equation*}
  \nu^{k} \coloneq \left\{
    \begin{array}{ll}
      \min \{h \in [H]: n_{h}^{k}(s_{h}^{k}, a_{h}^{k}) > 2 N^{k}(s_{h}^{k}, a_{h}^{k})\}, & \mbox{if $h$ exists}.\\
      H + 1, & \mbox{otherwise}.
    \end{array}
  \right.
\end{equation*}
Intuitively, the stopping time is the first time step within an episode at which the number of visits has more than doubled compared to before the episode.

We define the error terms $\beta^{k}(s, a)$ associated with $\frac{1}{N^{k}(s, a)}$:
\begin{equation*}
  \beta^{k}(s, a) \coloneq \pu^{k}(s, a)\eta^{k} \mathcal{E}^{k}(s, a) + \frac{1}{N^{k}(s, a)} \left(\frac{(4 - \puf)\VmH\ell_{1}}{\lambda_{k}} + 30\VmH S\ell_{3, k}(s, a)\right).
\end{equation*}

Based on this, we define a Bellman-like function $J^{k}_{h}(s)$, which uses $\beta^{k}(s, a)$ as rewards while following the latest policy $\pi^{k}_{h}$ and the true transition $P$:
\begin{align*}
  J^{k}_{H + 1}(s) & \coloneq 0 \\
  J^{k}_{h}(s) & \coloneq \min\left\{\beta^{k}(s, \pi^{k}_{h}(s)) + PJ^{k}_{h + 1}(s, \pi^{k}_{h}(s)), \VmH\right\}\ \text{for}\ h \in [H].
\end{align*}

\subsection{Infinite-horizon discounted MDPs}
Whenever we refer to $t$, it denotes the time step, which is the same as the environment step. Analogously, we have $\Delta_{\gamma}(V)(s, a) \coloneq V(s) - \gamma PV(s, a)$. In addition, we define $N^{t}(s, a)$ as the number of visits to $(s, a)$ prior to the $t$-th step\footnote{Although at first sight this definition differs from the standard visit count $n^{t}(s, a)$, they are essentially equivalent up to a one-step shift.}, and $n^{t}(s, a)$ as the number of visits up to and including the $t$-th step.

The stopping time $\nu_{t}$ is defined as follows:
\begin{equation*}
  \nu_{t} \coloneq \left\{
    \begin{array}{ll}
      \min \{\tau \in [t, T]: n^{\tau}(s_{\tau}, a_{\tau}) > 2 N^{t}(s_{\tau}, a_{\tau})\}, & \mbox{if $\tau$ exists}.\\
      T + 1, & \mbox{otherwise}.
    \end{array}
  \right.
\end{equation*}
The main difference from the finite-horizon setting is that, for every time step $t$, we look ahead to determine a stopping time $\nu_{t}$, rather than relying on a single stopping time that applies to an entire episode.

Similar to the finite-horizon setting, we define the error terms $\beta^{t}(s, a)$ associated with $\frac{1}{N^{t}(s, a)}$:
\begin{equation*}
  \beta^{t}(s, a) \coloneq \pu^{t}(s, a)\eta^{t} \mathcal{E}^{t}(s, a) + \frac{1}{N^{t}(s, a)} \left(\frac{(4 - \pui)\Vm \ell_{1}}{\lambda_{t}} + 30\VmH S\ell_{3, t}(s, a)\right).
\end{equation*}

And $J^{t}_{\gamma}(s)$ uses $\beta^{t}(s, a)$ as rewards while following the latest policy $\pi^{t}$ and the true transition $P$, with discounting:
\begin{align*}
  J^{t}_{\gamma}(s) \coloneq \min\left\{\beta^{t}(s, \pi^{t}(s)) + \gamma PJ^{t}_{\gamma}(s, \pi^{t}(s)), \Vm\right\}.
\end{align*}

\section{High Probability Events}
In this section, we outline high probability events that are basis of the analysis henceforth. Let $\{\lambda_{k}\}_{k=1}^{\infty}$ be a sequence of real numbers with $\lambda_{k} \in (0, 1], \forall k \in \mathbb{N}$ for finite-horizon episodic MDPs. Analogously, we have $\{\lambda_{t}\}_{t=1}^{\infty}$ for infinite-horizon discounted MDPs. They arise from Freedman's inequality \citep{freedman1975tail}, and has been enhanced recently by \citep{DBLP:conf/iclr/LeeO25}.

\subsection{Regret Analysis}
\subsubsection{Finite-horizon Episodic MDPs}
\begin{align*}
  \mathbf{A}_{1} & \coloneq \left\{\left| (\hat{P}^k - P) V_{h+1}^{\star}(s, a) \right| \leq \frac{\lambda_k}{4 \VmH}  \Var(V_{h+1}^{\star})(s, a) + \frac{3 \VmH \ell_{1}}{\lambda_k N^k(s, a)}, \forall (s, a) \in \mathcal{S} \times \mathcal{A}, h \in [H], k \in \mathbb{N}\right\} \\
  \mathbf{A}_{2} & \coloneq \left\{( P - \hat{P}^k ) (V_{h+1}^{\star})^2(s, a) \leq \frac{1}{2} \Var(V_{h+1}^{\star})(s, a) + \frac{6 \VmHsq \ell_{1}}{N^k(s, a)}, \forall (s, a) \in \mathcal{S} \times \mathcal{A}, h \in [H], k \in \mathbb{N}\right\} \\
  \mathbf{A}_{3} & \coloneq \left\{\left| \hat{P}^k(s' \mid s, a) - P(s' \mid s, a) \right| \leq 2 \sqrt{\frac{2 P(s' \mid s, a)\ell_{3, k}(s, a)}{N^k(s, a)} } + \frac{2 \ell_{3, k}(s, a)}{3 N^k(s, a)}, \forall (s, a) \in \mathcal{S} \times \mathcal{A}, s' \in \mathcal{S}, k \in \mathbb{N}\right\} \\
  \mathbf{A}_{4} & \coloneq \left\{\left| \hat{r}^k(s, a) - r(s, a) \right| \leq \lambda_k r(s, a) + \frac{ \Rm \ell_{1}}{\lambda_k N^k(s, a)}, \forall (s, a) \in \mathcal{S} \times \mathcal{A}, k \in \mathbb{N}\right\} \\
  \mathbf{A}_{5} & \coloneq \left\{\sum_{k=1}^K \sum_{h=1}^{\nu^k - 1} \left( PJ_{h+1}^k(s_h^k, a_h^k) - J_{h+1}^k(s_{h+1}^k) \right) \leq \frac{1}{4 \VmH} \sum_{k=1}^K \sum_{h=1}^{\nu^k - 1} \Var(J_{h+1}^k)(s_h^k, a_h^k) + 3 \VmH \log \frac{6}{\delta}, \forall K \in \mathbb{N}\right\} \\
  \mathbf{A}_{6} & \coloneq \left\{\sum_{k=1}^K \sum_{h=1}^{\nu^k - 1} \left( P(J_{h+1}^k)^2(s_h^k, a_h^k) - (J_{h+1}^k)^2(s_{h+1}^k) \right) \leq \frac{1}{2}\sum_{k=1}^K \sum_{h=1}^{\nu^k - 1}  \Var(J_{h+1}^k)(s_h^k, a_h^k) + 6 \VmHsq \log \frac{6}{\delta}, \forall K \in \mathbb{N}\right\}.
\end{align*}
\subsubsection{Infinite-horizon Discounted MDPs}
\begin{align*}
  \mathbf{A}^{\gamma}_{1} & \coloneq \left\{\left| (\hat{P}^t - P) V^{\star}(s, a) \right| \leq \frac{\lambda_t}{4 \Vm}  \Var(V^{\star})(s, a) + \frac{3 \Vm \ell_{1}}{\lambda_t N^t(s, a)}, \forall (s, a) \in \mathcal{S} \times \mathcal{A}, t \in \mathbb{N}\right\} \\
  \mathbf{A}^{\gamma}_{2} & \coloneq \left\{( P - \hat{P}^t ) (V^{\star})^2(s, a) \leq \frac{1}{2} \Var(V^{\star})(s, a) + \frac{6 \Vmsq \ell_{1}}{N^t(s, a)}, \forall (s, a) \in \mathcal{S} \times \mathcal{A}, t \in \mathbb{N}\right\} \\
  \mathbf{A}^{\gamma}_{3} & \coloneq \left\{\left| \hat{P}^t(s' \mid s, a) - P(s' \mid s, a) \right| \leq 2 \sqrt{\frac{2 P(s' \mid s, a)\ell_{3, k}(s, a)}{N^t(s, a)} } + \frac{2 \ell_{3, k}(s, a)}{3 N^t(s, a)}, \forall (s, a) \in \mathcal{S} \times \mathcal{A}, s' \in \mathcal{S}, t \in \mathbb{N}\right\} \\
  \mathbf{A}^{\gamma}_{4} & \coloneq \left\{\left| \hat{r}^t(s, a) - r(s, a) \right| \leq \lambda_t r(s, a) + \frac{ \Rm \ell_{1}}{\lambda_t N^t(s, a)}, \forall (s, a) \in \mathcal{S} \times \mathcal{A}, t \in \mathbb{N}\right\} \\
  \mathbf{A}^{\gamma}_{5} & \coloneq \left\{\sum\limits_{t = 1}^{T} \sum\limits_{l=0}^{\nu_{t} - 1}\gamma^{l + 1} \left(P J^{t}(s_{t + l}, a_{t + l}) - J^{t}(s_{t + l + 1})\right) \leq \frac{(1 - \gamma)}{8 \Vm}\sum\limits_{t = 1}^{T} \Var\left(Y^{t}(s_{t + 1})\right)(s_{t}, a_{t}) + \frac{6 \Vm}{1 - \gamma} \log{\frac{6}{\delta}}, \forall T \in \mathbb{N}\right\} \\
  \mathbf{A}^{\gamma}_{6} & \coloneq \left\{\sum\limits_{t = 1}^{T} \left(P\left(Y^{t}(s_{t + 1})\right)^{2}(s_{t}, a_{t})  - (Y^{t}(s_{t + 1}))^{2}\right) \leq \frac{1}{4}\sum\limits_{t = 1}^{T}\Var\left(Y^{t}(s_{t + 1})\right)(s_{t}, a_{t}) + \frac{12 \Vmsq}{(1 - \gamma)^{2}}\log{\frac{6}{\delta}}, \forall T \in \mathbb{N}\right\}.
\end{align*}
For the definition of $Y^{t}(s_{t + 1})$, please refer to the proof of Lemma~\ref{lem:disc-bounding-sum-U}.
\subsection{Sample Complexity}
To analyze sample complexity, we consider modifying the last two events using an indicator function that only accounts for a subset of episodes or time steps deemed ``bad''. Since the resulting bound is almost identical, except that these ``bad'' indices replace the full summation, we denote such events as $\mathbf{A}_{7}, \mathbf{A}_{8}$ for finite-horizon episodic MDPs, and $\mathbf{A}^{\gamma}_{7}, \mathbf{A}^{\gamma}_{8}$ for infinite-horizon discounted MDPs.
\subsection{Putting All Together}
Each undesirable event is assigned probability at most $\frac{\delta}{6}$. By the union bound, the probability of their intersection is at least $1 - \delta$. Therefore, we have the following events spanning different results:
\begin{align*}
  \bm{\mathcal{A}} & \coloneq \cap_{i=1}^{6}\mathbf{A}_{i} \\
  \bm{\mathcal{B}} & \coloneq \cap_{i=1}^{6}\mathbf{A}^{\gamma}_{i} \\
  \bm{\mathcal{C}} & \coloneq \left(\cap_{i=1}^{4}\mathbf{A}_{i}\right) \cap \left(\mathbf{A}_{7} \cap \mathbf{A}_{8}\right) \\
  \bm{\mathcal{D}} & \coloneq \left(\cap_{i=1}^{4}\mathbf{A}^{\gamma}_{i}\right) \cap \left(\mathbf{A}^{\gamma}_{7} \cap \mathbf{A}^{\gamma}_{8}\right).
\end{align*}

\section{Proofs for Finite-horizon Episodic MDPs}
Our proof starts with finite-horizon episodic MDPs, which are simple to illustrate and play a vital role in bridging to the infinite-horizon case.
\subsection{Preliminary Constructions}
Since our formulation decays more aggressively than $\frac{1}{N^{k}}$, we need to introduce an auxiliary value function $\widetilde{V}^{k}$ that behaves the same as the original before a critical point $m$, however, after which the error should be manageable. That is, it is the value function of the MDP $(\mathcal{S}, \mathcal{A}, \hat{P}^{k}, \tilde{r}^{k}, H)$, where only the reward is different compared to $V^{k}$ of $(\mathcal{S}, \mathcal{A}, \hat{P}^{k}, \Ru^{k}, H)$. The modified reward is defined as $\tilde{r}^{k}=(1 - \pu^{k}) \hat{r}^{k} + b^{k}$, where the bonus term $b^{k}$ is defined as:
\begin{equation*}
    b^{k}=\left\{
    \begin{array}{ll}
      \pu^{k} \eta^{k} \eu^{k}, & \mbox{if $N^{k} < m$}.\\
      \pu^{k} \Rm + \frac{\Upsilon^{k}}{N^{k}}, & \mbox{otherwise}.
    \end{array}
  \right.
\end{equation*}
Here $\eta^{k} = \Em \Upsilon^{k} + \Rm \sqrt{m_{k}}$, for which more details can be found in Lemma~\ref{lem:adv-Vt-to-complexity}.

The reward is increased to a degree that decays at least as fast as $\frac{1}{N^{k}}$, ensuring an advantage over the complexity arising from the reciprocal of visits. Although this advantage holds for an arbitrary $m$, we need to control the error between the two value functions thereafter. For this reason, we set $m_{k} = \frac{\VmHsq}{\Rm^{2}\lambda_{k}^{2}}$, which yields a sufficiently small error.

\subsection{Quasi-Optimism with Epistemic Resistance}
\begin{lemma}
  \label{lem:v-star-to-auxiliary}
  For finite-horizon episodic MDPs, under high-probability event $\mathbf{A}_{1} \cap \mathbf{A}_{2}$, it holds that for all $s \in \mathcal{S}, h \in [H + 1], k \in \mathbb{N}$,
  \begin{equation*}
    \label{eq:quasi-optimism-with-epistemic-uncertainty-guidance}
    \widetilde{V}_{h}^{k}(s) + \left(\frac{3}{2} - \pufs\right) \lambda_{k} H \geq V_{h}^{\star}(s)
  \end{equation*}
\end{lemma}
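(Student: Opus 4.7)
My plan is to proceed by backward induction on $h$, descending from $h = H+1$ to $h = 1$. The base case is immediate: $\widetilde{V}^k_{H+1} \equiv V^\star_{H+1} \equiv 0$ and $(3/2 - \pufs)\lambda_k H \geq 0$ for any $\pufs \in [0,1]$.

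For the inductive step, fix $s \in \mathcal{S}$ and set $a = \pi^\star(s)$. Since $\widetilde{V}^k_h$ is the optimal value function of the MDP $(\mathcal{S}, \mathcal{A}, \hat{P}^k, \tilde{r}^k, H)$, we have the pointwise lower bound $\widetilde{V}^k_h(s) \geq \tilde{r}^k(s,a) + \hat{P}^k \widetilde{V}^k_{h+1}(s,a)$. Combining with the Bellman equation for $V^\star_h$ yields
\begin{align*}
V^\star_h(s) - \widetilde{V}^k_h(s) \leq {}& \bigl[r - \tilde{r}^k\bigr](s, a) + \bigl[(P - \hat{P}^k) V^\star_{h+1}\bigr](s, a) \\
& + \hat{P}^k \bigl[V^\star_{h+1} - \widetilde{V}^k_{h+1}\bigr](s, a).
\end{align*}
The induction hypothesis, propagated through $\hat{P}^k$, controls the last term by $(3/2)\lambda_k H - \lambda_k H \, \hat{P}^k P_U^{k,\star}(s,a)$. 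Event $\mathbf{A}_1$ bounds the transition-gap term by $\tfrac{\lambda_k}{4\VmH}\Var(V^\star_{h+1})(s,a) + \tfrac{3\VmH\ell_1}{\lambda_k N^k(s,a)}$, and event $\mathbf{A}_2$ is then applied via the standard total-variance trick to replace the variance by an empirical proxy plus a $1/N^k$ correction.

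For the reward gap $r - \tilde{r}^k = (r - \hat{r}^k) + P_U^k(s,a)\hat{r}^k - b^k(s,a)$, I would split on whether $N^k(s,a) < m_k$ or $N^k(s,a) \geq m_k$. In the large-count regime, $b^k = P_U^k \Rm + \Upsilon^k/N^k$ directly cancels $P_U^k \hat{r}^k$ since $\hat{r}^k \leq \Rm$, and the $\Upsilon^k/N^k$ piece matches the $\tfrac{3\VmH\ell_1}{\lambda_k N^k(s,a)}$ residual from $\mathbf{A}_1$. In the small-count regime, the more aggressive bonus $b^k = P_U^k \eta^k \mathcal{E}^k$ with $\eta^k = \Em \Upsilon^k + \Rm\sqrt{m_k}$ is calibrated so that the two summands of $\eta^k$ handle the transition and reward residuals respectively, with $m_k = \VmHsq/(\Rm^2\lambda_k^2)$ chosen to make the cross-over between the two regimes seamless. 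Grouping the $P_U^k(s,a)$-dependent terms and using $P_U^k(s,a) = \pufs$ (since $a = \pi^\star(s)$) yields the required slack $(3/2 - \pufs)\lambda_k H$.

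The main obstacle is reconciling the $-\pufs \lambda_k H$ slack required at the current state with the $-\lambda_k H \, \hat{P}^k P_U^{k,\star}(s,a)$ term that arises from propagating the inductive hypothesis through $\hat{P}^k$---these are evaluated at different states and are a priori unrelated. The locally injected bonus $P_U^k \eta^k \mathcal{E}^k$ in $\tilde{r}^k$ is precisely what supplies the missing current-state slack, and showing that $\eta^k = \Em \Upsilon^k + \Rm\sqrt{m_k}$ simultaneously handles both count regimes while swallowing the Bernstein residuals from $\mathbf{A}_1$ and $\mathbf{A}_2$ is the technical heart of the argument.
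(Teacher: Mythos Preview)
Your backward-induction skeleton is right, but the induction hypothesis you carry is too weak, and this is the genuine gap. The paper does \emph{not} induct on the stated inequality; it proves the strictly stronger quadratic bound
\[
V^{\star}_{h}(s) - \widetilde{V}_{h}^{k}(s) \;\leq\; \lambda_{k}\Bigl((2 - \pufs)\,V^{\star}_{h}(s) - \tfrac{1}{2\VmH}(V^{\star}_{h})^{2}(s)\Bigr),
\]
and only at the very end observes that $x \mapsto (2-\pufs)x - x^{2}/(2\VmH)$ on $[0,\VmH]$ is bounded by $(\tfrac{3}{2}-\pufs)\VmH$. The quadratic term is not decoration: it is the sink for the Bernstein variance residual $\tfrac{\lambda_{k}}{4\VmH}\Var(V^{\star}_{h+1})(s,a)$ coming from $\mathbf{A}_{1}$. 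With your linear hypothesis there is nowhere for this term to go; the ``standard total-variance trick'' you allude to requires exactly the $-\tfrac{1}{2\VmH}(V^{\star}_{h+1})^{2}$ piece to telescope against the variance (this is the content of the variance-decomposition lemma the paper invokes, together with $\mathbf{A}_{2}$ to swap $\hat{P}^{k}(V^{\star}_{h+1})^{2}$ for $P(V^{\star}_{h+1})^{2}$). If instead you crudely bound $\Var(V^{\star}_{h+1}) \le \VmHsq$, the residual becomes $\tfrac{\lambda_{k}\VmH}{4}$ per step and after $H$ steps you are off by a factor of $H$.

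Your diagnosis of the $\pufs$ mismatch is also off. The bonus $b^{k}$ is calibrated (Lemma~\ref{lem:adv-Vt-to-complexity}) so that $b^{k}(s,a) - \pufs\Rm \ge \Upsilon^{k}/N^{k}(s,a)$; it absorbs $\pufs\Rm$ together with the accumulated $1/N^{k}$ residuals, not a term of size $\pufs\lambda_{k}\VmH$. The factor $(2-\pufs)$ multiplying $V^{\star}_{h}(s)$ in the strengthened hypothesis is what actually produces the $-\pufs$ in the final bound, and it arises from the decomposition $r = (1-\pufs)r + \pufs r$ interacting with the reward concentration and the quadratic recursion---not from the bonus. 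In short: keep your Bellman expansion, but replace the induction hypothesis by the quadratic one and route the variance through the $-\tfrac{1}{2\VmH}(V^{\star})^{2}$ term.
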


\begin{proof}
Since we want to bound the error between $V^{\star}(s)$ and $V^{k}(s)$ for any $s \in \mathcal{S}$. The auxiliary function is served as a bridge to achieve that. Let us decompose the error $V^{\star}(s) - V^{k}(s)$ as follows:
\begin{equation*}
  V^{\star}(s) - V^{k}(s) = \underbrace{V^{\star}(s) - \widetilde{V}^{k}(s)}_{\text{Quasi-optimism}} + \underbrace{\widetilde{V}^{k}(s) - V^{k}(s)}_{\text{Complexity}}.
\end{equation*}
The complexity can be bounded by Lemma \ref{lem:auxiliary-to-approximate}, and its proof will be given later. We now focus on the other part.

The proof follows the procedure of Lemma 2 in \citep{DBLP:conf/iclr/LeeO25}, with modifications to fit our formulation. The epistemic uncertainty guidance allows us to establish a refined induction hypothesis, thereby tightening the bound in proportion to the degree of uncertainty.

To simplify notations, we write $\pufs \coloneq P_{U}^{k}(s, \pi^{\star}(s))$ and $\puf \coloneq P_{U}^{k}(s, \pi^{k}(s))$. Furthermore, let $a^{\star} \coloneq \pi^{\star}(s)$, $a \coloneq \pi^{k}(s)$, and $\tilde{a} \coloneq \tilde{\pi}^{k}(s)$ denote the actions under the optimal policies corresponding to $\Vo$, $\Vk$, and $\Vt$, respectively.

We prove by backward induction on $h$:
\begin{equation*}
  V^{\star}_{h}(s) - \widetilde{V}_{h}^{k}(s) \leq \lambda_{k} \left(\left(2 - \pufs\right)V^{\star}_{h}(s) - \frac{1}{2\VmH}(V^{\star}_{h})^{2}(s)\right).
\end{equation*}

For the base case $h = H + 1$, both sides are $0$, therefore the inequality holds. Assume it holds for $h + 1$, we will show it holds for $h$. If $\Vt_{h} = \VmH$, then left-hand side will be no positive, therefore the inequality trivially holds. Suppose $\Vt_{h} < \VmH$, by definition we have
\begin{align*}
  \Vt_{h}(s) = \tilde{r}^{k}(s, \tilde{a}) + \hat{P}^{k} \Vt_{h + 1}(s, \tilde{a}).
\end{align*}

With this, we obtain:
\begin{align*}
  \Vo_{h}(s) - \Vt_{h}(s) & = \left(r(s, a^{\star}) + P \Vo_{h + 1}(s, a^{\star})\right) - \left(\tilde{r}^{k}(s, \tilde{a}) + \hat{P}^{k} \Vt_{h + 1}(s, \tilde{a})\right) \\
                          & \stackrel{\mathbf{(a)}}{\leq} \left(r(s, a^{\star}) + P \Vo_{h + 1}(s, a^{\star})\right) - \left(\tilde{r}^{k}(s, a^{\star}) + \hat{P}^{k} \Vt_{h + 1}(s, a^{\star})\right) \\
                          & = r(s, a^{\star}) - \tilde{r}^{k}(s, a^{\star}) + \left(P \Vo_{h + 1}(s, a^{\star}) - \hat{P}^{k} \Vt_{h + 1}(s, a^{\star})\right) \\
                          & = r(s, a^{\star}) - \left((1 - \pufs) \hat{r}^k(s, a^{\star}) + b^{k}(s, a^{\star})\right) + \left(P \Vo_{h + 1}(s, a^{\star}) - \hat{P}^{k} \Vt_{h + 1}(s, a^{\star})\right) \\
                          & \stackrel{\mathbf{(b)}}{=} (1 - \pufs)\left(r(s, a^{\star}) - \hat{r}^k(s, a^{\star}) \right) + \left(\pufs r(s, a^{\star}) - b^{k}(s, a^{\star})\right) \\
                          & \phantom{= }\ \, + \left(P \Vo_{h + 1}(s, a^{\star}) - \hat{P}^{k} \Vt_{h + 1}(s, a^{\star})\right),
\end{align*}
where $(\mathbf{a})$ is due to the optimality of $\tilde{a}$ and $(\mathbf{b})$ by noting $r(s, a^{\star}) = \left((1 - \pufs) + \pufs\right) r(s, a^{\star})$.

Since $r \leq \Rm$, we have:
\begin{equation*}
  \pufs r(s, a^{\star}) - b^{k}(s, a^{\star}) \leq \pufs \Rm - b^{k}(s, a^{\star}).
\end{equation*}
At this point, we note that the intermediate steps are identical to those in \citep{DBLP:conf/iclr/LeeO25}; therefore, we omit them here and state the resulting expression. Denote $\Upsilon^{k} \coloneq \frac{7 \VmH \ell_{1, k}}{\lambda_{k}}$, we obtain:
\begin{align}
  \Vo_{h}(s) - \Vt_{h}(s) & \leq - (b^{k}(s, a^{\star}) - \pufs \Rm) + \frac{(7 - \pufs) \VmH \ell_{1, k}}{\lambda_{k}N^{k}(s, a^{\star})} \notag\\
                          & \phantom{= }\ \, + \lambda_{k}(2 - \pufs)(r(s, a^{\star}) + P\Vo_{h + 1}(s, a^{\star})) - \frac{\lambda_{k}}{2\VmH}(\Vo_{h})^{2}(s) \notag\\
                          & = - (b^{k}(s, a^{\star}) - \pufs \Rm) + \frac{(7 - \pufs) \VmH \ell_{1, k}}{\lambda_{k}N^{k}(s, a^{\star})} + \lambda_{k}\left(\left(2 - \pufs\right) \Vo_{h}(s) - \frac{1}{2\VmH}(\Vo_{h})^{2}(s)\right) \notag\\
                          & \leq - (b^{k}(s, a^{\star}) - \pufs \Rm) + \frac{\Upsilon^{k}}{N^{k}(s, a^{\star})} + \lambda_{k}\left(\left(2 - \pufs\right) \Vo_{h}(s) - \frac{1}{2\VmH}(\Vo_{h})^{2}(s)\right) \notag\\
                          & \stackrel{\mathbf{(a)}}{\leq} \lambda_{k}\left(\left(2 - \pufs\right) \Vo_{h}(s) - \frac{1}{2\VmH}(\Vo_{h})^{2}(s)\right) \label{eq:qo-last-line}
\end{align}
where $\mathbf{(a)}$ is due to the fact of Lemma \ref{lem:adv-Vt-to-complexity}. Moreover, note that for $s \in \mathcal{S}$, we have $1 \leq 2 - \pufs \leq 2$, therefore the function $f(x) = \left(2 - \pufs\right) x - \frac{1}{2\VmH}x^{2}, x \in [0, \VmH]$ is bounded by $\left(\frac{3}{2} - \pufs \right) \VmH$. Substituting this for Eq. \ref{eq:qo-last-line} completes the proof.
\end{proof}

\subsection{Boundedness of Complexity}
\begin{lemma}
  \label{lem:auxiliary-to-approximate}
  For all $s \in \mathcal{S}, h \in [H + 1], k \in \mathbb{N}$, it holds that
  \begin{equation*}
    \widetilde{V}_{h}^{k}(s) - V_{h}^{k}(s) \leq \Rm \lambda_{k} \coloneq \Phi_{k}.
  \end{equation*}
\end{lemma}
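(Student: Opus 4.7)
The plan is to bound $\Vt_{h}(s) - \Vk_{h}(s)$ by a performance-difference style argument. Both value functions correspond to MDPs with the same transition kernel $\hat{P}^{k}$, so if $\tilde{\pi}^{k}$ denotes the greedy policy of $\Vt$, I can use the sub-optimality inequality $\Vk_{h}(s) \geq \Ru^{k}(s, \tilde{\pi}^{k}_{h}(s)) + \hat{P}^{k}\Vk_{h+1}(s, \tilde{\pi}^{k}_{h}(s))$ against the Bellman equality for $\Vt$. Unrolling the resulting recursion backward yields
\begin{equation*}
  \Vt_{h}(s) - \Vk_{h}(s) \leq \mathbb{E}_{\tilde{\pi}^{k}, \hat{P}^{k}}\left[\sum\limits_{h' = h}^{H}\left(\tilde{r}^{k} - \Ru^{k}\right)\left(s_{h'}, \tilde{\pi}^{k}_{h'}(s_{h'})\right)\,\Big|\,s_{h} = s\right],
\end{equation*}
so the task reduces to bounding the per-step reward gap along the trajectory.

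By construction, $\tilde{r}^{k} - \Ru^{k} = b^{k} - \pu^{k}\eta^{k}\eu^{k}$ vanishes on state-action pairs with $N^{k}(s, a) < m$. On pairs with $N^{k}(s, a) \geq m$, I would substitute $\eu^{k} = 1/\sqrt{N^{k}}$, $\pu^{k} = \eu^{k}/\Em$, and $\eta^{k} = \Em \Upsilon^{k} + \Rm \sqrt{m_{k}}$. The $\Em \Upsilon^{k}$ component of $\eta^{k}$ contributes exactly $\Upsilon^{k}/N^{k}$ to $\pu^{k}\eta^{k}\eu^{k}$, cancelling the matching additive term in $b^{k}$ and leaving the clean expression
\begin{equation*}
  \tilde{r}^{k}(s, a) - \Ru^{k}(s, a) = \frac{\Rm}{\Em \sqrt{N^{k}(s, a)}}\left(1 - \sqrt{\frac{m_{k}}{N^{k}(s, a)}}\right) \leq \frac{\Rm}{\Em \sqrt{m_{k}}} = \frac{\Rm \lambda_{k}}{\Em H},
\end{equation*}
where I used $m_{k} = \VmHsq/(\Rm^{2}\lambda_{k}^{2})$ and $\VmH = H\Rm$ in the last equality.

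Summing this uniform per-step bound over the at most $H$ remaining steps and invoking $\Em \geq 1$ — which is forced because $\eu^{k}(s, a) \leq 1$ must satisfy $\pu^{k} = \eu^{k}/\Em \leq 1$ — yields $\Vt_{h}(s) - \Vk_{h}(s) \leq \Rm \lambda_{k}/\Em \leq \Rm \lambda_{k} = \Phi_{k}$, as claimed. The hard part is the algebraic cancellation that produces the clean per-step formula above: the calibration $\eta^{k} = \Em \Upsilon^{k} + \Rm \sqrt{m_{k}}$ is precisely what kills the $\Upsilon^{k}/N^{k}$ correction in $b^{k}$, while the $\Rm \sqrt{m_{k}}$ piece together with $m_{k} = \VmHsq/(\Rm^{2} \lambda_{k}^{2})$ calibrates the residual to the $\Phi_{k}/H$ scale needed for the $H$-step sum to telescope to $\Phi_{k}$. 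Once that identity is verified, the remainder is routine bookkeeping.
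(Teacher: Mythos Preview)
Your proposal is correct and follows essentially the same route as the paper: both arguments reduce to showing the per-step reward gap $\tilde{r}^{k} - \Ru^{k}$ is zero when $N^{k} < m_{k}$ and at most $\tfrac{\Rm}{\Em}\big(\tfrac{1}{\sqrt{N^{k}}} - \tfrac{\sqrt{m_{k}}}{N^{k}}\big) \leq \tfrac{\Rm \lambda_{k}}{H}$ otherwise, then propagate this over $H$ steps. The paper packages the propagation step as an appeal to the Simulation Lemma and the reward bound as a separate helper lemma (Lemma~\ref{lem:bound-r}), whereas you write out the equivalent one-sided performance-difference recursion explicitly; the algebraic cancellation you highlight is identical to the paper's computation.
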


We first introduce the the following elementary lemma:
\begin{lemma}
  \label{lem:bound-r}
  For any $n > \max\{m, \frac{1}{\epsilon^{2}}\}$, we have
  \begin{equation*}
    \label{eq:proof-lemma-refl}
    \frac{1}{\sqrt{n}} - \frac{\sqrt{m}}{n} = \frac{\sqrt{n} - \sqrt{m}}{n} < \epsilon.
  \end{equation*}
\end{lemma}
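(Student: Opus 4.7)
The plan is to handle this in two short moves: first verify the equality, then bound the right-hand expression. The equality $\frac{1}{\sqrt{n}} - \frac{\sqrt{m}}{n} = \frac{\sqrt{n} - \sqrt{m}}{n}$ is immediate by writing $\frac{1}{\sqrt{n}} = \frac{\sqrt{n}}{n}$ and combining the two fractions over the common denominator $n$.

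For the strict inequality, I would exploit that $m \geq 0$ (implicit, since $\sqrt{m}$ appears in the algorithm's construction with $m_{k} = \VmHsq/(\Rm^{2}\lambda_{k}^{2}) > 0$), which gives $\sqrt{n} - \sqrt{m} \leq \sqrt{n}$ and hence
\begin{equation*}
\frac{\sqrt{n} - \sqrt{m}}{n} \;\leq\; \frac{\sqrt{n}}{n} \;=\; \frac{1}{\sqrt{n}}.
\end{equation*}
Then applying the second hypothesis $n > 1/\epsilon^{2}$, which implies $\sqrt{n} > 1/\epsilon$ (both sides positive), yields $\frac{1}{\sqrt{n}} < \epsilon$, and chaining the two bounds gives the result.

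The hypothesis $n > m$ is, notably, only needed to ensure the numerator $\sqrt{n} - \sqrt{m}$ is nonnegative so that the intermediate expression is meaningful as a positive upper bound, while the condition $n > 1/\epsilon^{2}$ is the operative one that produces the $\epsilon$ bound. There is no real obstacle here: the lemma is elementary and serves as a bookkeeping tool for later arguments (presumably controlling the residual term $\frac{1}{\sqrt{n}} - \frac{\sqrt{m}}{n}$ that appears when comparing the bonus structure across the threshold $m$ in the construction of $\widetilde{V}^{k}$). I would keep the proof to two or three lines, making the dependence on each hypothesis explicit so that the lemma can be invoked cleanly when later bounding $\widetilde{V}^{k}(s) - V^{k}(s)$ in Lemma~\ref{lem:auxiliary-to-approximate}.
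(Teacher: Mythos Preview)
Your proposal is correct and follows essentially the same approach as the paper: both arguments drop the nonnegative term $\sqrt{m}/n$ and then use $n > 1/\epsilon^{2}$ to conclude $1/\sqrt{n} < \epsilon$. Your write-up is in fact a bit more explicit about the algebraic equality and the role of each hypothesis, but the mathematical content is identical.
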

\begin{proof}
  Since $n > m$, $\frac{1}{\sqrt{n}} - \frac{\sqrt{m}}{n} \geq 0$. To require $\epsilon$-accuracy, it needs $\frac{1}{\sqrt{n}} < \frac{\sqrt{m}}{n} + \epsilon$. If we have $\frac{1}{\sqrt{n}} < \epsilon \Leftrightarrow n > \frac{1}{\epsilon^{2}}$, then the result is desired, which is because:
  \begin{equation*}
    \frac{1}{\sqrt{n}} < \epsilon < \frac{\sqrt{m}}{n} + \epsilon
  \end{equation*}
\end{proof}

\begin{proof}[Proof of Lemma \ref{lem:auxiliary-to-approximate}]
  The following will bound the complexity term $\widetilde{V}^{k}(s) - V^{k}(s)$. Since the two terms differ only in rewards, we first bound the difference in rewards $\Delta_{r}^{k} = | \tilde{r}^{k} - r_{\text{EUBRL}}^{k} |$.

  Without loss of generality, we bound the reward for finite-horizon episodic MDPs. We set $\epsilon_{k} = \frac{\Rm\lambda_{k}}{\VmH}$, thereby $m_{k} = \frac{\VmHsq}{\Rm^{2}\lambda_{k}^{2}}$.

  If $N^{k} < m_{k}$, the reward $\tilde{r}^{k}$ of $\Vt$ is the same as $r_{\text{EUBRL}}^{k}$, therefore $\Delta_{r}^{k} = 0$; otherwise, we have
  \begin{align*}
   \Delta_{r}^{k} & = \left| \left(\pu^{k} \Rm + \frac{\Upsilon^{k}}{N^{k}}\right) - \left(\pu^{k} \eta^{k} \eu^{k} \right)\right| \\
              & = \left| \left(\pu^{k} \Rm + \frac{\Upsilon^{k}}{N^{k}}\right) - \left(\left(\Upsilon^{k} + \frac{\Rm}{\Em} \sqrt{m_{k}} \right) \frac{1}{N^{k}} \right)\right| \\
              & = \left| \pu^{k} \Rm - \frac{\frac{\Rm}{\Em} \sqrt{m_{k}}}{N^{k}}\right| \\
              & = \left| \frac{1}{\sqrt{N^{k}}} \frac{\Rm}{\Em} - \frac{\frac{\Rm}{\Em} \sqrt{m_{k}}}{N^{k}}\right| \\
              & = \left| \frac{\Rm}{\Em} \left(\frac{1}{\sqrt{N^{k}}} - \frac{\sqrt{m_{k}}}{N^{k}}\right)\right| \\
              & = \frac{\Rm}{\Em} \left|\frac{\sqrt{N^{k}} - \sqrt{m_{k}}}{N^{k}}\right| \\
              & = \frac{\Rm}{\Em} \left(\frac{\sqrt{N^{k}} - \sqrt{m_{k}}}{N^{k}}\right) \\
              & \leq \frac{\Rm}{\Em} \frac{\Rm \lambda_{k}}{\Vm} \\
              & = \frac{\Rm}{\Em} \frac{\lambda_{k}}{H}\\
              & \leq \Rm \frac{\lambda_{k}}{H},
  \end{align*}
  where the second to last is because of Lemma~\ref{lem:bound-r}, and the last is because of the assumption that $\Em \geq 1$.

  By Simulation Lemma \citep{kearns2002near}, we know that the value functions differ at most $\Rm \lambda_{t}$.

  For infinite-horizon discounted MDPs, the proof is similar, except that we need to replace the time index with $t$ and the maximum value function with $\Vm$.
\end{proof}

\subsection{Boundedness of Accuracy}
\begin{lemma}
  \label{lem:approximate-to-actual}
  For finite-horizon episodic MDPs, under high-probability event $\cap_{i=1}^{4}\mathbf{A}_{i}$, it holds that for all $s \in \mathcal{S}, h \in [H + 1], k \in \mathbb{N}$,
  \begin{align*}
    V_{h}^{k}(s) - V_{h}^{\pi^{k}}(s) & \leq \left(3 - 2 \puf - \frac{2}{7} \pufs\right) \lambda_{k} H + 2 J^{k}(s) + \mathcal{O}\left(\frac{\Phi_{k}^{2}}{\VmH} + \lambda_{k}\Phi_{k}\right).
  \end{align*}
\end{lemma}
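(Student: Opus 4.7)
My plan is to prove the bound by backward induction on $h$, working throughout under the event $\bigcap_{i=1}^{4}\mathbf{A}_{i}$, with the base case $h = H+1$ trivial since both value functions vanish. For the inductive step, with $a = \pi^{k}(s)$, I would apply the Bellman equations to both $V^{k}$ and $V^{\pi^{k}}$ to obtain
\begin{equation*}
V^{k}_{h}(s) - V^{\pi^{k}}_{h}(s) = \bigl(\Ru^{k}(s,a) - r(s,a)\bigr) + (\hat{P}^{k} - P)V^{k}_{h+1}(s,a) + P\bigl(V^{k}_{h+1} - V^{\pi^{k}}_{h+1}\bigr)(s,a).
\end{equation*}
The last summand feeds the inductive hypothesis; iterating this recursion under $(\pi^{k}, P)$ is exactly what assembles the per-step errors into the Bellman-like object $J^{k}$ by its very definition.

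\textbf{Per-step error decomposition.} For the reward difference, event $\mathbf{A}_{4}$ together with the definition $\Ru^{k} = (1-\puf)\hat r^{k} + \puf\eta^{k}\eu^{k}$ gives
\begin{equation*}
\Ru^{k}(s,a) - r(s,a) \leq (1-\puf)\lambda_{k} r(s,a) - \puf r(s,a) + \frac{(1-\puf)\Rm \ell_{1}}{\lambda_{k} N^{k}(s,a)} + \puf \eta^{k}\eu^{k}(s,a),
\end{equation*}
which isolates a ``$\beta^{k}$-shaped'' error from a pointwise $\lambda_{k}\Rm$ surplus carrying the correct $\puf$ dependence. For the transition-value error I would split $(\hat P^{k}-P)V^{k}_{h+1} = (\hat P^{k}-P)V^{\star}_{h+1} + (\hat P^{k}-P)(V^{k}_{h+1} - V^{\star}_{h+1})$. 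The first summand is controlled by the tight Bernstein-type event $\mathbf{A}_{1}$: its $3\VmH\ell_{1}/(\lambda_{k} N^{k})$ piece combines with $(1-\puf)\Rm\ell_{1}/(\lambda_{k} N^{k})$ from the reward side to produce the $(4-\puf)\VmH\ell_{1}/(\lambda_{k} N^{k})$ coefficient in $\beta^{k}$, while its variance term is absorbed by AM-GM into the $\mathcal{O}(\Phi_{k}^{2}/\VmH)$ residue. The second summand is bounded via $\mathbf{A}_{3}$ combined with Cauchy--Schwarz (producing the $30\VmH S\ell_{3,k}/N^{k}$ term in $\beta^{k}$) together with the uniform gap $\|V^{k}_{h+1} - V^{\star}_{h+1}\|_{\infty} \leq (\tfrac{3}{2} - \pufs)\lambda_{k} H + \Phi_{k}$, obtained by chaining Lemma~\ref{lem:v-star-to-auxiliary} with Lemma~\ref{lem:auxiliary-to-approximate}; this is precisely where $\pufs$ enters the accuracy bound.

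\textbf{Aggregation and the main obstacle.} Unrolling the induction, the $\puf\eta^{k}\eu^{k}$ plus $1/N^{k}$ pieces aggregate under $(\pi^{k}, P)$ into a recursion matching the definition of $J^{k}$, with the saturating cap $\min\{\cdot, \VmH\}$ justifying termination at $J^{k}(s)$; a second copy of the same $\beta^{k}$ budget flows in through the uniform bound on $V^{k}_{h+1} - V^{\star}_{h+1}$, accounting for the factor $2$ in $2 J^{k}(s)$. The pointwise $\lambda_{k}\Rm$ excess summed over $H$ steps, together with the transition-side quasi-optimism drift (carrying the $\pufs$ dependence) and the reward-side $\puf$ pieces, produces the linear coefficient $(3 - 2\puf - \tfrac{2}{7}\pufs)\lambda_{k} H$, while cross-multiplications of $(1+\lambda_{k})$ factors and the AM-GM remainders yield the residue $\mathcal{O}(\Phi_{k}^{2}/\VmH + \lambda_{k}\Phi_{k})$. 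The main obstacle is exactly this constant tracking: the AM-GM weight applied to the variance term from $\mathbf{A}_{1}$, the choice of $\eta^{k}$ dictated by Lemma~\ref{lem:auxiliary-to-approximate}, and the reward-side $\puf$ dependence must be coordinated so that the imported $(\tfrac{3}{2} - \pufs)$ drift and the Bernstein split close up to the precise coefficient $(3 - 2\puf - \tfrac{2}{7}\pufs)\lambda_{k} H$, ensuring that when this accuracy bound is composed with quasi-optimism and complexity the per-step regret closes with $\ER^{k}(s)$ as intended.
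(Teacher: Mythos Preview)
Your proposal has a genuine gap in the treatment of the variance term coming from event $\mathbf{A}_{1}$. You write that ``its variance term is absorbed by AM-GM into the $\mathcal{O}(\Phi_{k}^{2}/\VmH)$ residue,'' but this cannot be right: the contribution $\tfrac{\lambda_k}{4\VmH}\Var(V^{\star}_{h+1})(s,a)$ is of order $\lambda_k\VmH$ per step in the worst case, and unrolling over $H$ steps would produce a term of order $\lambda_k H\,\VmH = \lambda_k H^2 \Rm$, a full factor of $H$ larger than the target $\lambda_k H$. No AM-GM manipulation reduces it to $\Phi_k^2/\VmH = \Rm\lambda_k^2/H$. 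The paper does \emph{not} absorb this variance; it telescopes it. The proof introduces the potential
\[
D_h^k(s)=\lambda_k\Bigl((3-2\puf)V^\star_h(s)-\tfrac{1}{2\VmH}(V^\star_h)^2(s)\Bigr)+\tfrac{1}{7\VmH}\Bigl(S_k^2-(\widehat V^k_h(s)+S_k)^2\Bigr),
\]
and the one-step bound (Lemma~\ref{lem:diff-accuracy}) is $\Delta_h(V^k-V^{\pi^k})\le \Delta_h(D^k)+2\beta^k$. The quadratic pieces $-\tfrac{1}{2\VmH}(V^\star_h)^2$ and $-\tfrac{1}{7\VmH}(\widehat V^k_h+S_k)^2$ are precisely what allow the $\Var(V^\star_{h+1})$ and $\Var(V^k_{h+1}-V^\star_{h+1})$ contributions to be rewritten via the variance-decomposition identity (the finite-horizon analogue of Lemma~\ref{lem:infinite-horizon:var-decomposition}) as telescoping differences $\Delta_h(\,\cdot\,)^2$, so that after the backward induction one is left with the \emph{single} boundary value $D^k_h(s)$ rather than an $H$-fold sum of variances.

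This is also where the $-\tfrac{2}{7}\pufs$ coefficient actually originates, which your write-up leaves unexplained: it comes from expanding $\tfrac{1}{7\VmH}S_k^2$ with $S_k=(\tfrac{3}{2}-\pufs)\lambda_k\VmH+\Phi_k$ after the telescoping has collapsed $D^k_h$ to a pointwise quantity. Your alternative route---importing $\pufs$ through the uniform bound $\|V^k_{h+1}-V^\star_{h+1}\|_\infty\le(\tfrac{3}{2}-\pufs)\lambda_kH+\Phi_k$ applied to the $\mathbf{A}_3$ piece---would place the $\pufs$ factor in front of an $S/N^k$ term (and hence inside $J^k$), not in the leading $\lambda_k H$ coefficient. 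To make the argument go through you need the quadratic potential $D^k_h$ and the one-step inequality of Lemma~\ref{lem:diff-accuracy}; the rest of your outline (the Bellman unrolling, the reward split producing $(1-\puf)\Rm\ell_1/(\lambda_kN^k)$, and the aggregation into $J^k$) is compatible with the paper once this missing device is inserted.
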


It is convenient to define the following quantities for the analysis.
\begin{definition}
  Let $D_{h}^{k}(s)$ be defined by
  \begin{equation*}
    D_{h}^{k}(s) \coloneq \lambda_{k}\left((3 - 2 \puf) \Vo_{h}(s) - \frac{1}{2\VmH}(\Vo_{h})^{2}(s)\right) + \frac{1}{7\VmH}\left(\left(S_{k}\right)^{2} - \left(\widehat{V}^{k}_{h}(s) + S_{k}\right)^{2}\right),
  \end{equation*}
  where $\beta^{k}(s, a)$:
  \begin{align*}
    \beta^{k}(s, a) & = \pu^{k}(s, a)\eta^{k} \mathcal{E}^{k}(s, a) + \beta_{1}^{k}(s, a) + (1 - \puf)\frac{\VmH \ell_{1, k}}{\lambda_{k} N^{k}(s, a)} \\
    \widehat{V}^{k}_{h}(s) & \coloneq \Vk_{h}(s) - \Vo_{h}(s) \\
    S_{k} & \coloneq \left(\frac{3}{2} - \pufs \right)\lambda_{k} \VmH + \Phi_{k},
  \end{align*}
  in which
  \begin{equation*}
    \beta_{1}^{k}(s, a) \coloneq \frac{1}{N^{k}(s, a)} \left(\frac{3\VmH\ell_{1, k}}{\lambda_{k}} + 30\VmH S\ell_{3, k}(s, a)\right).
  \end{equation*}
\end{definition}

\begin{proof}[Proof of Lemma~\ref{lem:approximate-to-actual}]
  The key to bound the accuracy term $\Vk_{h}(s) - \VT_{h}(s)$ is to decompose it into differences:
  \begin{equation*}
    \Vk_{h}(s) - \VT_{h}(s) = \underbrace{\Delta_{h}\left(\Vk - \VT\right)(s, a)}_{I_{1}} + P\left(\Vk_{h + 1} - \VT_{h + 1}\right)(s, a).
  \end{equation*}
  By Lemma \ref{lem:diff-accuracy}, we know that
  \begin{equation*}
    I_{1} \leq \left(\Delta_{h}(D^{k})(s, a) + 2\beta^{k}(s, a)\right).
  \end{equation*}
  Combining this with backward induction on $h$, we obtain
  \begin{equation*}
    \Vk_{h}(s) - \VT_{h}(s) = D_{h}^{k}(s) + 2 J_{h}^{k}(s).
  \end{equation*}
  The final step is to bound $D_{h}^{k}(s)$. Denote
  \begin{align*}
    I_{2} & \coloneq (3 - 2 \puf) \Vo_{h}(s) - \frac{1}{2\VmH}(\Vo_{h})^{2}(s) \\
    I_{3} & \coloneq \left(S_{k}^{2} - \left(\widehat{V}^{k}_{h}(s) + S_{k}\right)^{2}\right),
  \end{align*}
  we have $D_{h}^{k}(s) = \lambda_{k} I_{2} + \frac{1}{7 \VmH} I_{3}$.

  We now bound $I_{2}$ and $I_{3}$ individually.
  \paragraph{Bounding $I_{2}$}
  \begin{equation}
    \label{eq:bounding-i_1}
    I_{2} \leq \left(\frac{5}{2}  -  2 \puf \right) \VmH
  \end{equation}
  \paragraph{Bounding $I_{3}$}
  \begin{align}
    \label{eq:bounding-i_2}
    I_{3} & = - \widehat{V}^{k}_{h}(s)^{2} - 2 S_{k} \widehat{V}^{k}_{h}(s) \leq S_{k}^{2},\ \widehat{V}^{k}_{h}(s) \in [-\VmH, \VmH]\\
    S_{k} & = \left(\frac{3}{2} - \pufs\right) \lambda_{k} \VmH + \Phi_{k} \\
    S_{k}^{2} & \leq \left(\frac{3}{2} - \pufs\right)^{2} \lambda_{k}^{2} \VmHsq + \Phi_{k}^{2} + \left(3 - 2\pufs\right)\lambda_{k}\VmH \Phi_{k}
  \end{align}
  Therefore,
  \begin{align}
    D_{h}^{k}(s) & = \lambda_{k} I_{2} + \frac{1}{7 \VmH} I_{3} \\
                 & \leq \left(\frac{5}{2}  -  2 \puf \right) \lambda_{k} \VmH + \frac{1}{7}\left(\frac{3}{2} - \pufs\right)^{2} \lambda_{k}^{2} \VmH + \mathcal{O}\left(\frac{\Phi_{k}^{2}}{\VmH}\right) + \mathcal{O}(\lambda_{k}\Phi_{k}) \\
                 & \leq \left(3 - 2 \puf - \frac{2}{7} \pufs \right) \lambda_{k} \VmH + \mathcal{O}\left(\frac{\Phi_{k}^{2}}{\VmH} + \lambda_{k}\Phi_{k}\right),
  \end{align}
  which completes the proof.
\end{proof}

\subsection{Boundedness of $J^{k}_{1}$}
\begin{lemma}
  \label{lem:finite-horizon:bound-sum-U}
  For finite-horizon episodic MDPs, under high-probability event $\mathbf{A}_{5} \cap \mathbf{A}_{6}$, it holds that
  \begin{equation*}
    \sum\limits_{k = 1}^{K} J_{1}^{k}(s_{1}^{k}) \leq 2 \sum\limits_{k = 1}^{K}\sum\limits_{h=1}^{\nu^{k} - 1}\beta^{k}(s_{h}^{k}, a_{h}^{k}) + 6 \VmH S A \log{\frac{12 H}{\delta}},
  \end{equation*}
  for all $K \in \mathbb{N}$.
\end{lemma}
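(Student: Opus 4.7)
My approach would unroll $J^k_1(s^k_1)$ along the realized trajectory up to the stopping time $\nu^k$, control the resulting martingale-difference remainder by a self-bounding variance argument using the Freedman-type events $\mathbf{A}_{5}$ and $\mathbf{A}_{6}$, and bound the residual ``stopping tail'' $T\coloneq\sum_{k=1}^{K}J^k_{\nu^k}(s^k_{\nu^k})$ by a pigeonhole/doubling argument over the $SA$ state--action pairs.

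\textbf{Telescoping.} Since $J^k_h(s)\le\beta^k(s,\pi^k_h(s))+PJ^k_{h+1}(s,\pi^k_h(s))$ by definition, inserting $\pm J^k_{h+1}(s^k_{h+1})$ at each Bellman step and iterating from $h=1$ up to $\nu^k-1$ gives
\[
J^k_1(s^k_1)\le\sum_{h=1}^{\nu^k-1}\beta^k(s^k_h,a^k_h)+\sum_{h=1}^{\nu^k-1}\bigl[PJ^k_{h+1}(s^k_h,a^k_h)-J^k_{h+1}(s^k_{h+1})\bigr]+J^k_{\nu^k}(s^k_{\nu^k}).
\]
Writing $B\coloneq\sum_{k=1}^{K}\sum_{h=1}^{\nu^k-1}\beta^k(s^k_h,a^k_h)$ and letting $M$ denote the cumulative martingale remainder, the lemma reduces to bounding $M$ and $T$.

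\textbf{Self-bounding via $\mathbf{A}_{5}$ and $\mathbf{A}_{6}$.} Event $\mathbf{A}_{5}$ yields $M\le\tfrac{1}{4\VmH}\sum_{k,h}\Var(J^k_{h+1})(s^k_h,a^k_h)+3\VmH\log(6/\delta)$. Applying $\Var\le P(\cdot)^2$, then event $\mathbf{A}_{6}$, and finally the pointwise bound $(J^k_{h+1})^2\le\VmH\cdot J^k_{h+1}$ (since $J\le\VmH$), the variance sum is bounded by $2\VmH\sum_k\sum_{h=1}^{\nu^k-1}J^k_{h+1}(s^k_{h+1})+12\,\VmHsq\log(6/\delta)$. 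Reindexing $\sum_k\sum_{h=1}^{\nu^k-1}J^k_{h+1}(s^k_{h+1})=\sum_k\sum_{h=2}^{\nu^k}J^k_h(s^k_h)$ produces a shifted copy of $\Sigma\coloneq\sum_k J^k_1(s^k_1)$; I would invoke the telescoping identity above at the shifted starting index and use a second application of $\mathbf{A}_{5}$ to absorb the fresh martingale remainder, so that this reindexed sum reduces back to $\Sigma$ up to an additive $O(\VmH SA\log(H/\delta))$ term. Collecting constants yields $M\le\tfrac12\Sigma+O\!\left(\VmH SA\log(H/\delta)\right)$.

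\textbf{Tail and assembly.} The event $\nu^k\le H$ requires some $(s,a)$ to satisfy $n^k_{\nu^k}(s,a)>2N^k(s,a)$ inside episode $k$, which forces $N^k(s,a)<H$. Each $(s,a)$ pair can undergo such a within-episode doubling only $O(\log H)$ times before its pre-episode count permanently exceeds $H$, so a union bound over the $SA$ pairs gives $T\le O(\VmH SA\log(H/\delta))$. Putting the three bounds together yields $\Sigma\le B+\tfrac12\Sigma+O(\VmH SA\log(12H/\delta))$, which rearranges to $\Sigma\le 2B+6\VmH SA\log(12H/\delta)$ after absorbing numerical constants. The main obstacle will be making the self-bounding closure tight: a naive unrolling of the shifted sum $\sum_{h=2}^{\nu^k}J^k_h(s^k_h)$ iterates the Bellman telescoping $\Theta(H)$ times and inflates the bound by a spurious factor of $H$. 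The key is that the ``inner'' unrollings share the same $\beta^k$ contributions and a common Freedman variance budget, so after algebraic cancellation the inner martingale sums collapse and no $H$ factor survives, leaving exactly the factor of $2$ in front of $B$ that the statement requires.
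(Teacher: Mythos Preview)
Your telescoping decomposition and the doubling argument for the tail $T$ are both correct and match the paper's approach. The gap is in the self-bounding closure for the martingale remainder $M$.

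After applying $\mathbf{A}_5$ and $\mathbf{A}_6$ with $(J^k_{h+1})^2\le\VmH\cdot J^k_{h+1}$, you arrive at
\[
M\le \tfrac12\sum_{k}\sum_{h=2}^{\nu^k}J^k_h(s^k_h)+O\bigl(\VmH\log\tfrac{1}{\delta}\bigr),
\]
and then claim this shifted sum reduces to $\Sigma=\sum_k J^k_1(s^k_1)$ up to lower-order terms. This is false: the shifted sum has up to $H-1$ terms per episode. Take $\beta^k\equiv c$ with $c$ small enough that no clipping occurs; then $J^k_h(s)\approx (H+1-h)c$, so $J^k_1\approx Hc$ while $\sum_{h=2}^{H}J^k_h\approx \tfrac12 H^2 c$. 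The ratio is $\Theta(H)$, and no amount of ``sharing $\beta^k$ contributions'' or re-applying $\mathbf{A}_5$ at shifted indices removes it---each inner unrolling regenerates its own $\beta$ sum with an extra multiplicative weight (the inner $\beta$ contributions become $\sum_{h'}(h'-1)\beta^k(s^k_{h'})$, not $\sum_{h'}\beta^k(s^k_{h'})$).

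The missing ingredient is that the Bellman inequality $PJ^k_{h+1}(s^k_h,a^k_h)\ge J^k_h(s^k_h)-\beta^k(s^k_h,a^k_h)$ lets you telescope the \emph{squares} directly. From it one gets $(PJ^k_{h+1})^2\ge (J^k_h)^2-2\VmH\beta^k$, hence
\[
(J^k_{h+1}(s^k_{h+1}))^2-(PJ^k_{h+1})^2\;\le\;(J^k_{h+1}(s^k_{h+1}))^2-(J^k_h(s^k_h))^2+2\VmH\beta^k(s^k_h,a^k_h),
\]
which telescopes over $h$ within each episode to $(J^k_{\nu^k})^2-(J^k_1)^2+2\VmH\sum_h\beta^k\le \VmH J^k_{\nu^k}+2\VmH\sum_h\beta^k$. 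Combined with $\mathbf{A}_6$ this yields $L\le 4\VmH B+2\VmH T+12\VmHsq\log(6/\delta)$, so by $\mathbf{A}_5$ one gets $M\le B+\tfrac12 T+6\VmH\log(6/\delta)$. Now $\Sigma\le B+M+T\le 2B+\tfrac32 T+6\VmH\log(6/\delta)$, and the tail bound $T\le\VmH SA\log_2(2H)$ finishes the lemma with the stated constant $6$. The point is that $L$ is bounded in terms of $B$, not $\Sigma$, and this is exactly where the factor $2$ in front of $B$ comes from.
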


\begin{lemma}
  \label{lem:finite-horizon:final-bound-sum-U}
  For finite-horizon episodic MDPs, under high-probability event $\mathbf{A}_{5} \cap \mathbf{A}_{6}$, denote $\mathcal{Y}^{(K)} \coloneq \frac{12 \VmH \ell_{1, K}}{\lambda_{K}} + 30 \VmH S\ell_{3, K}$, it holds that
  \begin{equation*}
    \sum\limits_{k = 1}^{K} J_{1}^{k}(s_{1}^{k}) \leq 4 \mathcal{Y}^{(K)} SA \log\left(1 + \frac{KH}{SA}\right) + 6 \VmH S A \log{\frac{12 H}{\delta}},
  \end{equation*}
  for all $K \in \mathbb{N}$.
\end{lemma}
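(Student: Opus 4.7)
The plan is to apply Lemma~\ref{lem:finite-horizon:bound-sum-U} directly---so that the additive $6\VmH SA\log(12H/\delta)$ passes through untouched---and reduce the remaining work to bounding $\sum_{k=1}^{K}\sum_{h=1}^{\nu^{k}-1}\beta^{k}(s^{k}_{h}, a^{k}_{h})$ by $2\mathcal{Y}^{(K)}SA\log(1+KH/(SA))$. This will be done in two stages: first, collapse every component of $\beta^{k}$ into the common envelope $\mathcal{Y}^{(K)}/N^{k}(s,a)$; second, run a telescoping pigeonhole argument on visit counts in which the stopping time $\nu^{k}$ controls the number of within-episode visits to each $(s,a)$ by $N^{k}(s,a)$ itself.

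For the envelope, the frequentist instantiation gives $\mathcal{E}^{k}(s,a)=1/\sqrt{N^{k}(s,a)}$ and $P_{U}^{k}(s,a)=\mathcal{E}^{k}(s,a)/\Em$, so $P_{U}^{k}\eta^{k}\mathcal{E}^{k}=\eta^{k}/(\Em N^{k}(s,a))$. Substituting $\eta^{k}=\Em\Upsilon^{k}+\Rm\sqrt{m_{k}}$ with $\Upsilon^{k}=7\VmH\ell_{1}/\lambda_{k}$ and $\sqrt{m_{k}}=\VmH/(\Rm\lambda_{k})$, and absorbing $\VmH/\Em$ via $\Em\geq 1$, this guided term is dominated by $8\VmH\ell_{1}/(\lambda_{k}N^{k}(s,a))$. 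Combining with the two explicit $1/N^{k}$ pieces of $\beta^{k}$ (and bounding $4-P_{U}^{k}\leq 4$) yields $\beta^{k}(s,a)\leq\mathcal{Y}^{(k)}/N^{k}(s,a)\leq\mathcal{Y}^{(K)}/N^{k}(s,a)$, where the last inequality uses that $\ell_{1,K}/\lambda_{K}$ and $\ell_{3,K}$ are nondecreasing in $K$.

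For the pigeonhole, fix $(s,a)$ and let $V^{k}(s,a)$ be the number of visits to $(s,a)$ in episode $k$ at steps $h<\nu^{k}$; the definition of $\nu^{k}$ gives $V^{k}(s,a)\leq N^{k}(s,a)$, and the first-ever visit to any pair is automatically excluded (else $n^{k}_{h}=1>0=2N^{k}$, forcing $\nu^{k}\leq h$), which guarantees $N^{k}(s,a)\geq 1$ wherever the quotient $1/N^{k}$ appears. Using $\log(1+x)\geq x/2$ on $[0,1]$ together with $N^{k+1}(s,a)\geq N^{k}(s,a)+V^{k}(s,a)$,
\begin{equation*}
\frac{V^{k}(s,a)}{2N^{k}(s,a)} \leq \log\frac{N^{k+1}(s,a)}{N^{k}(s,a)},
\end{equation*}
which telescopes over $k$ to $\sum_{k=1}^{K}V^{k}(s,a)/N^{k}(s,a)\leq 2\log N^{K+1}(s,a)\leq 2\log(1+N^{K+1}(s,a))$. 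Summing over $(s,a)$ and applying Jensen's inequality to the concave map $x\mapsto\log(1+x)$ under the constraint $\sum_{(s,a)}N^{K+1}(s,a)=KH$ yields $\sum_{k=1}^{K}\sum_{h=1}^{\nu^{k}-1}1/N^{k}(s^{k}_{h},a^{k}_{h})\leq 2SA\log(1+KH/(SA))$. Chaining this with the envelope and the factor of $2$ from Lemma~\ref{lem:finite-horizon:bound-sum-U} produces the advertised constant $4$.

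The main obstacle is the envelope accounting in the first stage: the three ingredients of $\beta^{k}$ (the guided bonus $P_{U}^{k}\eta^{k}\mathcal{E}^{k}$, the $\ell_{1}/\lambda_{k}$ term, and the $S\ell_{3,k}$ term) must collapse into a single $1/N^{k}$ envelope with the exact constants encoded in $\mathcal{Y}^{(K)}=12\VmH\ell_{1,K}/\lambda_{K}+30\VmH S\ell_{3,K}$, which in turn pins down the specific choice $\eta^{k}=\Em\Upsilon^{k}+\Rm\sqrt{m_{k}}$ made by the algorithm. The stopping-time telescoping is otherwise standard; the one additional subtlety is verifying that the first-visit case is ruled out of the $h<\nu^{k}$ region so that $1/N^{k}$ is always well defined.
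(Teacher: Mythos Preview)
Your proposal is correct and follows essentially the same route the paper takes (the paper states this finite-horizon lemma without proof and relies on the infinite-horizon analog, Lemma~\ref{lem:full-bound-sum-U}). The one minor technical difference is in the visit-count step: the paper converts $1/N^{k}\leq 2/n^{k}_{h}$ via the stopping-time inequality and then bounds the harmonic sum $\sum_{n\geq 2}1/n$, whereas you bound $V^{k}(s,a)/N^{k}(s,a)$ directly by a telescoping $\log$-ratio; both yield the same $2SA\log(1+KH/(SA))$ and the same constant $4$ after the factor of $2$ from Lemma~\ref{lem:finite-horizon:bound-sum-U}.
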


\subsection{Lower Bound of Epistemic Resistance}
\begin{lemma}[Lower Bound of Epistemic Resistance]
  Given a uniform $\lambda_{k} = \lambda, \forall k \in \mathbb{N}$, it holds that
  \begin{equation*}
    \sum\limits_{k=1}^{K}\ER^{k}(s_{1}^{k}) \lambda_{k} \VmH \geq  \frac{23 \Rm}{7}  \left(\frac{2}{\Em} \left( \sqrt{HK} - \sqrt{H}\right) + H\right)\lambda,
  \end{equation*}
  for any $K \in \mathbb{N}$.
\end{lemma}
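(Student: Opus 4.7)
The plan is to substitute the frequentist form of $\pu^{k}$ into the definition of epistemic resistance, use the crude bound that no $(s,a)$ pair can be visited more than $H$ times per episode, and then apply a standard integral comparison to the resulting reciprocal-square-root sum.

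First, I would unpack the definition. Under the frequentist regime $\mathcal{E}_{b}(s,a)=1/\sqrt{N^{k}(s,a)}$ and $\pu^{k}(s,a)=\mathcal{E}_{b}(s,a)/\Em$ (capped at $1$ when $N^{k}(s,a)=0$), so
\[
  \ER^{k}(s_{1}^{k}) \;=\; \frac{1}{\Em}\!\left(\frac{2}{\sqrt{N^{k}(s_{1}^{k},\pi^{k}(s_{1}^{k}))}}+\frac{9/7}{\sqrt{N^{k}(s_{1}^{k},\pi^{\star}(s_{1}^{k}))}}\right).
\]
For $k=1$ both counts are zero, so $\ER^{1}(s_{1}^{1})=2+9/7=23/7$; this isolated contribution will eventually produce the additive $H$ term on the right-hand side after the $\lambda\VmH$ multiplication.

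Next, the key deterministic fact is that each prior episode contributes at most $H$ visits to any $(s,a)$, so $N^{k}(s,a)\le (k-1)H$ for every state-action pair. Substituting this upper bound into the denominators yields the uniform per-episode lower bound
\[
  \ER^{k}(s_{1}^{k}) \;\ge\; \frac{23}{7\Em}\cdot\frac{1}{\sqrt{(k-1)H}}\quad\text{for}\ k\ge 2.
\]
Summing over $k$ and using the elementary comparison $\sum_{j=1}^{K-1} j^{-1/2}\ge \int_{1}^{K} x^{-1/2}\,dx = 2(\sqrt{K}-1)$ (which holds because $x\mapsto x^{-1/2}$ is decreasing), then multiplying through by $\lambda\VmH=\lambda H\Rm$, and simplifying the prefactor $H\cdot 2(\sqrt{K}-1)/(\Em\sqrt{H}) = 2(\sqrt{HK}-\sqrt{H})/\Em$, gives the claimed bound directly.

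The only real subtlety lies in how the $N^{k}=0$ case is handled so that $\pu\in[0,1]$ is respected: one must either cap $\mathcal{E}_{b}$ at $\Em$ or interpret $1/\sqrt{0}$ as $+\infty$ and then project into $[0,1]$. This boundary case is precisely what produces the additive $H$ term that distinguishes this lemma from a purely asymptotic $\Theta(\sqrt{HK}/\Em)$ growth; once it is pinned down, the remaining steps are routine pigeonhole-style bookkeeping and I do not anticipate a deeper obstacle.
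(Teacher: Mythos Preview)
Your proposal is correct and follows essentially the same route as the paper: isolate the $k=1$ term where $\pu^{k}=1$, bound $N^{k}(s,a)\le (k-1)H$ for $k\ge 2$, apply the integral comparison $\sum_{j=1}^{K-1} j^{-1/2}\ge 2(\sqrt{K}-1)$, observe that the same bound holds for both $\pu^{k}(s_{1}^{k},\pi^{k}(s_{1}^{k}))$ and $\pu^{k}(s_{1}^{k},\pi^{\star}(s_{1}^{k}))$, and multiply through by $\tfrac{23}{7}\lambda\VmH$. The only cosmetic difference is that the paper bounds each $\pu$-sum separately and then combines them with the coefficient $\tfrac{23}{7}$, whereas you combine the coefficients first; the arithmetic is identical.
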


\begin{proof}
  \begin{align*}
    \sum\limits_{k = 1}^{K} \pu^{k}(s_{1}^{k}, a_{1}^{k}) & = 1 + \frac{1}{\Em} \sum\limits_{k = 2}^{K} \frac{1}{\sqrt{N^{k}(s_{1}^{k}, a_{1}^{k})}} \\
                                                        & \geq 1 + \frac{1}{\Em} \sum\limits_{k = 2}^{K} \frac{1}{\sqrt{(k - 1) H}} \\
                                                        & = 1 + \frac{1}{\Em \sqrt{H}} \sum\limits_{k = 1}^{K - 1} \frac{1}{\sqrt{k}} \\
                                                        & \geq 1 + \frac{1}{\Em \sqrt{H}} \int_{k}^{k + 1} \frac{1}{\sqrt{x}}\ dx \\
                                                        & = 1 + \frac{1}{\Em \sqrt{H}} \left(2\sqrt{K} - 2\right),
  \end{align*}
  Note, this also holds for $\pu^{k, \star}(s_{1}^{k}, \pi_{1}^{\star}(s_{1}^{k}))$. Therefore, multiplying with $\frac{23}{7}\lambda \VmH$ completes the proof.
\end{proof}

\subsection{Regret Analysis}
Combining the results of Lemmas~\ref{lem:v-star-to-auxiliary}--\ref{lem:approximate-to-actual}, we obtain the per-step regret:
\begin{theorem}
  \label{thm:finite-horizon:per-step-regret}
  Under high-probability event $\bm{\mathcal{A}}$, it holds that for all $s \in \mathcal{S}, h \in [H + 1], k \in \mathbb{N}$,
  \begin{align*}
    V^{\star}(s) - V^{\pi^{k}}(s) \leq \left(\frac{9}{2} - \ER^{k}(s) \right) \lambda_{k} \VmH + 2 J^{k}(s) + \mathcal{O}\left(\Phi_{k}\left(1 + \frac{\Phi_{k}}{\VmH}\right)\right),
  \end{align*}
  where we define the following as \textbf{Epistemic Resistance}
  \begin{equation*}
    \ER^{k}(s) \coloneq 2 \puf + \frac{9}{7} \pufs.
  \end{equation*}
\end{theorem}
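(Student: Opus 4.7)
The plan is to combine the three bounds already established as Lemmas~\ref{lem:v-star-to-auxiliary}, \ref{lem:auxiliary-to-approximate}, and \ref{lem:approximate-to-actual} via the decomposition introduced at the opening of Section 4.1:
\[
\Vo_{h}(s) - V^{\pi^{k}}_{h}(s) \;=\; \bigl(\Vo_{h}(s) - \Vt_{h}(s)\bigr) \;+\; \bigl(\Vt_{h}(s) - \Vk_{h}(s)\bigr) \;+\; \bigl(\Vk_{h}(s) - V^{\pi^{k}}_{h}(s)\bigr).
\]
The event $\bm{\mathcal{A}} = \cap_{i=1}^{6}\mathbf{A}_{i}$ contains the events required by each of the three lemmas (quasi-optimism needs $\mathbf{A}_{1}\cap\mathbf{A}_{2}$, complexity is deterministic, and accuracy needs $\cap_{i=1}^{4}\mathbf{A}_{i}$), so all three inequalities hold simultaneously and no extra union bound is needed. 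The proof then reduces to pointwise addition and collecting coefficients.

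Concretely, Lemma~\ref{lem:v-star-to-auxiliary} contributes $\bigl(\tfrac{3}{2} - \pufs\bigr)\lambda_{k}\VmH$, Lemma~\ref{lem:auxiliary-to-approximate} contributes $\Phi_{k}$, and Lemma~\ref{lem:approximate-to-actual} contributes $\bigl(3 - 2\puf - \tfrac{2}{7}\pufs\bigr)\lambda_{k}\VmH + 2J^{k}(s) + \mathcal{O}\bigl(\Phi_{k}^{2}/\VmH + \lambda_{k}\Phi_{k}\bigr)$. Collecting coefficients of $\lambda_{k}\VmH$, the constant term is $\tfrac{3}{2} + 3 = \tfrac{9}{2}$, the coefficient of $\puf$ is $-2$, and the coefficient of $\pufs$ is $-1 - \tfrac{2}{7} = -\tfrac{9}{7}$. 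Recognizing $\ER^{k}(s) = 2\puf + \tfrac{9}{7}\pufs$, the linear part collapses exactly to $\bigl(\tfrac{9}{2} - \ER^{k}(s)\bigr)\lambda_{k}\VmH$, which is the first term of the theorem, and the $2J^{k}(s)$ term passes through unchanged.

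The residual pieces $\Phi_{k}$, $\lambda_{k}\Phi_{k}$, and $\Phi_{k}^{2}/\VmH$ can then be absorbed into $\mathcal{O}\bigl(\Phi_{k}(1 + \Phi_{k}/\VmH)\bigr)$, using $\lambda_{k}\in(0,1]$ so that $\lambda_{k}\Phi_{k}\leq\Phi_{k}$. There is no substantive obstacle at this stage, since the heavy lifting — the backward induction in the quasi-optimism lemma, the $\tfrac{1}{\sqrt{n}}$-to-$\tfrac{1}{n}$ comparison based on Lemma~\ref{lem:bound-r} in the complexity lemma, and the variance-coupling argument through $D^{k}_{h}$ in the accuracy lemma — has already been absorbed into the three cited results. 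The only delicate point is arithmetic bookkeeping: verifying that the coefficients of $\puf$ and $\pufs$ from the accuracy and quasi-optimism bounds add up to precisely the form $\ER^{k}(s)$ advertised in the theorem, which is what motivates the specific constants $2$ and $\tfrac{9}{7}$ in the definition of epistemic resistance.
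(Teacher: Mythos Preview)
Your proposal is correct and matches the paper's approach exactly: the paper states the theorem as a direct consequence of ``Combining the results of Lemmas~\ref{lem:v-star-to-auxiliary}--\ref{lem:approximate-to-actual}'', and your write-up carries out precisely that combination, including the arithmetic verification that $\tfrac{3}{2}+3=\tfrac{9}{2}$ and $-1-\tfrac{2}{7}=-\tfrac{9}{7}$ which the paper leaves implicit.
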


\begin{theorem}
  \label{thm:finite-horizon:regret}
  For finite-horizon episodic MDPs, for any fixed $K \in \mathbb{N}$, with probability at least $1 - \delta$, it holds that
  \begin{equation*}
    \text{Regret}(K) \leq \widetilde{\mathcal{O}}(H\sqrt{SAK} + HS^{2}A).
  \end{equation*}
\end{theorem}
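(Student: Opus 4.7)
The plan is to sum the per-step regret bound of Theorem~\ref{thm:finite-horizon:per-step-regret} across episodes $k = 1, \ldots, K$, control each of the three resulting cumulative terms via the lemmas already established, and then optimize over the free auxiliary sequence $\{\lambda_k\}$. Concretely, I fix $\lambda_k = \lambda$ uniformly so that the epistemic-resistance lower bound (valid precisely in this regime) can be invoked.

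Summing the per-step regret gives
\begin{equation*}
  \text{Regret}(K) \leq \sum_{k=1}^{K}\Bigl(\tfrac{9}{2} - \ER^{k}(s_{1}^{k})\Bigr)\lambda \VmH + 2\sum_{k=1}^{K} J_{1}^{k}(s_{1}^{k}) + \sum_{k=1}^{K}\mathcal{O}\!\Bigl(\Phi_{k}\bigl(1+\Phi_{k}/\VmH\bigr)\Bigr).
\end{equation*}
The first sum is trivially at most $(9/2)K\lambda \VmH$, and the lower bound on epistemic resistance only sharpens this; the leading order is $\widetilde{\mathcal{O}}(K\lambda \VmH)$. The second sum is controlled by Lemma~\ref{lem:finite-horizon:final-bound-sum-U}, which yields $\sum_k J_{1}^{k}(s_{1}^{k}) = \widetilde{\mathcal{O}}(\VmH SA/\lambda + \VmH S^{2}A)$ after absorbing the logarithmic factors inside $\mathcal{Y}^{(K)}$ into $\widetilde{\mathcal{O}}(\cdot)$. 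The third sum, with $\Phi_{k}=\Rm \lambda$, contributes at most $\mathcal{O}(K\Rm \lambda)$, which is strictly dominated by the leading term $K\lambda \VmH$ and can be absorbed.

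Putting these together, the regret is bounded by $\widetilde{\mathcal{O}}\!\bigl(K\lambda \VmH + \VmH SA/\lambda + \VmH S^{2}A\bigr)$. I balance the two $\lambda$-dependent terms by choosing $\lambda = \sqrt{SA/K}$, which lies in $(0,1]$ precisely when $K \geq SA$; in this regime $K\lambda \VmH = \VmH SA/\lambda = \VmH\sqrt{SAK}$, giving the bound $\widetilde{\mathcal{O}}(\VmH\sqrt{SAK} + \VmH S^{2}A) = \widetilde{\mathcal{O}}(H\sqrt{SAK} + HS^{2}A)$ with $\VmH = H\Rm$ and $\Rm = \mathcal{O}(1)$. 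For the small-horizon regime $K < SA$, a trivial bound $\text{Regret}(K) \leq K\VmH \leq \VmH\sqrt{SAK}$ applies and is already of the claimed order. The high-probability conclusion follows because every ingredient holds simultaneously on the event $\bm{\mathcal{A}}$, which has probability at least $1-\delta$.

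The main obstacle is not conceptual but lies in careful bookkeeping: verifying that the case split $K \gtrless SA$ respects $\lambda \in (0,1]$, that the logarithmic terms hidden in $\mathcal{Y}^{(K)}$ (through $\ell_{1}, \ell_{3,k}$ and the $\log(1+KH/SA)$ factor) are indeed swallowed by the $\widetilde{\mathcal{O}}$ notation without introducing extra polynomial dependence on $H,S,A$, and that the additive $\Phi_{k}^{2}/\VmH$ remainder from quasi-optimism truly telescopes to a lower-order contribution. Everything else is a straightforward combination of the previously established per-step regret bound, the cumulative-$J_{1}^{k}$ bound, and scalar optimization over $\lambda$.
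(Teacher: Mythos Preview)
Your proposal is correct and follows essentially the same route as the paper: sum the per-step regret bound, control $\sum_k J_1^k$ via Lemma~\ref{lem:finite-horizon:final-bound-sum-U}, absorb the $\Phi_k$ remainders, and optimize a uniform $\lambda$. The only cosmetic differences are that the paper folds logarithmic factors directly into its choice $\lambda_k = \min\{1, 4\sqrt{SA\ell_1\ell_{2,K}/K}\}$ (so the clipping handles small $K$ automatically), whereas you take $\lambda = \sqrt{SA/K}$ and dispatch the regime $K < SA$ by the trivial bound $\text{Regret}(K) \le K\VmH \le \VmH\sqrt{SAK}$; both yield the same $\widetilde{\mathcal{O}}$ conclusion.
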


\begin{proof}
  From Theorem~\ref{thm:finite-horizon:per-step-regret}, we have:
  \begin{equation*}
    \text{Regret}(K) \leq \frac{9\VmH}{2} \sum\limits_{k=1}^{K}\lambda_{k} - \VmH \sum\limits_{k=1}^{K}\ER^{k}(s_{1}^{k})\lambda_{k} + 2 \sum\limits_{k=1}^{K} J^{k}(s_{1}^{k}) + \sum\limits_{k=1}^{K}\mathcal{O}\left(\Phi_{k}\left(1 + \frac{\Phi_{k}}{\VmH}\right)\right).
  \end{equation*}

  Choose $\lambda_{k} = \min\{1, 4\sqrt{\frac{SA \ell_{1}\ell_{2, K}}{K}}\},\ \forall k \in [K]$ and denote $\Psi(K) \coloneq \frac{2 \sum\limits_{k=1}^{K}\ER^{k}(s)}{9K}$, we have
  \begin{align*}
    \frac{9\VmH}{2} \sum\limits_{k=1}^{K}\lambda_{k} - \VmH \sum\limits_{k=1}^{K}\ER^{k}(s_{1}^{k})\lambda_{k} & = \frac{9\VmH}{2}\left(1 - \frac{2 \sum\limits_{k=1}^{K}\ER^{k}(s_{1}^{k})}{9K}\right) K \min\{1, 4\sqrt{\frac{SA \ell_{1}\ell_{2, K}}{K}}\} \\
                                                                                                         & \leq 18 \VmH (1 - \Psi(K)) K \sqrt{\frac{SA \ell_{1}\ell_{2, K}}{K}} \\
                                                                                                         & = 18 \VmH (1 - \Psi(K)) \sqrt{SAK \ell_{1}\ell_{2, K}}.
  \end{align*}
  From Lemma~\ref{lem:finite-horizon:final-bound-sum-U}, we know that
  \begin{align*}
    2 \sum\limits_{k=1}^{K} J^{k}(s_{1}^{k}) & \leq 8 \mathcal{Y}^{(K)} SA \log\left(1 + \frac{KH}{SA}\right) + 12 \VmH S A \log{\frac{12 H}{\delta}} \\
                                             & \leq \frac{96 \VmH SA \ell_{1, K} \ell_{2, K}}{\lambda_{K}} + 240 \VmH S^{2}A\ell_{2, K}\ell_{3, K} + 12 \VmH S A \log{\frac{12 H}{\delta}}. \\
                                             & \leq 96 \VmH SA \ell_{1, K} \ell_{2, K}\max\left\{1, \frac{1}{4}\sqrt{\frac{K}{SA\ell_{1}\ell_{2, K}}}\right\} + 240 \VmH S^{2}A\ell_{2, K}\ell_{3, K} + 12 \VmH S A \log{\frac{12 H}{\delta}}. \\
                                             & \leq 96 \VmH SA \ell_{1, K} \ell_{2, K} + 24 \VmH \sqrt{SAK\ell_{1}\ell_{2, K}} + 240 \VmH S^{2}A\ell_{2, K}\ell_{3, K} + 12 \VmH S A \log{\frac{12 H}{\delta}}. \\
                                             & \leq 24 \VmH \sqrt{SAK\ell_{1}\ell_{2, K}} + 336 \VmH S^{2}A\ell'_{1, K}(1 + \ell_{2, K}),
  \end{align*}
  where we denote $\ell'_{1, K} \coloneq \log{\frac{24HSA(1 + \log{KH})}{\delta}}$ as an upper bound of both $\ell_{1, K}$ and $\ell_{3, K}$, and merge into the non-leading term.

  Combining these two together, we get:
  \begin{align*}
    \text{Regret}(K) & \leq \left(42 - 18\Psi(K)\right) \VmH \sqrt{SAK\ell_{1}\ell_{2, K}} + 336 \VmH S^{2}A\ell'_{1, K}(1 + \ell_{2, K}) + \sum\limits_{k=1}^{K}\mathcal{O}\left(\Phi_{k}\left(1 + \frac{\Phi_{k}}{\VmH}\right)\right) \\
                     & \leq \left(42 - 18\Psi(K)\right) \Rm H \sqrt{SAK\ell_{1}\ell_{2, K}} + 336 \Rm H S^{2}A\ell'_{1, K}(1 + \ell_{2, K}) + \sum\limits_{k=1}^{K}\mathcal{O}\left(\Phi_{k}\left(1 + \frac{\Phi_{k}}{\VmH}\right)\right),
  \end{align*}
  where only the last part left to resolve.

  Given $\Phi_{k} = \Rm \lambda_{k}$, we have one additional source of $\mathcal{O}(\lambda K)$, which will be merged into the leading term. In addition, note that
  \begin{align*}
    \sum\limits_{k=1}^{K}\mathcal{O}\left(\frac{\Phi_{k}^{2}}{\VmH}\right) & = \sum\limits_{k=1}^{K}\widetilde{\mathcal{O}}\left(\frac{\Rm^{2}SA}{K \VmH}\right) \\
                                                                           & = \widetilde{\mathcal{O}}\left(\frac{\Rm^{2}SA}{\VmH}\right) \\
                                                                           & \leq \widetilde{\mathcal{O}}(\Rm SA),
  \end{align*}
  which only increases the non-leading term by some constants. So overall, we have:
  \begin{equation*}
    \text{Regret}(K) = \widetilde{\mathcal{O}}\left(H \sqrt{SAK} + H S^{2}A\right).
  \end{equation*}
\end{proof}

\subsection{Sample Complexity}
\label{sec:finite-horizon:sample-complexity}

\begin{theorem}
  \label{thm:finite-horizon:sample-complexity}
  For finite-horizon episodic MDPs, with probability at least $1 - \delta$, the sample complexity is bounded by
  \begin{equation*}
    \widetilde{\mathcal{O}}\left(\left(\frac{H^{2}SA}{\epsilon^{2}} + \frac{HS^{2}A}{\epsilon}\right)\log{\frac{1}{\delta}}\right).
  \end{equation*}
\end{theorem}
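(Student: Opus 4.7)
The plan is to convert the per-step (i.e.\ per-episode) regret bound in Theorem~\ref{thm:finite-horizon:per-step-regret} into a PAC-style bound via a pigeonhole argument, mirroring the infinite-horizon strategy used to obtain Theorem~\ref{thm:infinite-horizon:sample-complexity}. Call an episode $k$ \emph{bad} if $V^{\star}(s_{1}^{k}) - V^{\pi^{k}}(s_{1}^{k}) > \epsilon$, let $\mathcal{K}_{\text{bad}}$ be the set of such episodes, and write $N_{\text{bad}} = |\mathcal{K}_{\text{bad}}|$. Bounding $N_{\text{bad}}$ directly yields the stated rate, since each bad episode contributes exactly once to the sample-complexity counter.

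The first step is to choose the auxiliary sequence so that the nonvanishing terms on the right-hand side of Theorem~\ref{thm:finite-horizon:per-step-regret} do not already exceed $\epsilon$. I take a uniform $\lambda_{k}\equiv\lambda$ with $\lambda = \Theta(\epsilon/\VmH)$, tuning the constant so that $(9/2)\lambda\VmH \leq \epsilon/4$ and $\mathcal{O}(\Phi_{k}(1+\Phi_{k}/\VmH)) = \mathcal{O}(\Rm\lambda) \leq \epsilon/4$; the choice $\eta = \Em\Upsilon + \Rm\sqrt{m}$ from Table~\ref{tab:log_terms} aligns with this $\lambda$, keeping the auxiliary reward $\tilde r^{k}$ well-defined and preserving Lemmas~\ref{lem:v-star-to-auxiliary}--\ref{lem:approximate-to-actual}. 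Since $\ER^{k}(s_{1}^{k}) \geq 0$, each bad episode then forces
\begin{equation*}
  2 J_{1}^{k}(s_{1}^{k}) \;\geq\; \epsilon - (9/2)\lambda\VmH - \mathcal{O}\!\left(\Phi_{k}(1+\Phi_{k}/\VmH)\right) \;\geq\; \epsilon/2,
\end{equation*}
so that summing over $\mathcal{K}_{\text{bad}}$ gives $\sum_{k\in\mathcal{K}_{\text{bad}}} J_{1}^{k}(s_{1}^{k}) \geq (\epsilon/4)\,N_{\text{bad}}$.

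The second step is to upper bound the same sum. I rerun the arguments of Lemma~\ref{lem:finite-horizon:bound-sum-U} and Lemma~\ref{lem:finite-horizon:final-bound-sum-U}, but invoke the indicator-restricted concentration events $\mathbf{A}_{7}\cap\mathbf{A}_{8}$ in place of $\mathbf{A}_{5}\cap\mathbf{A}_{6}$ so that the martingale tail bounds are already limited to $k\in\mathcal{K}_{\text{bad}}$. This yields a bound of order $\mathcal{Y}^{(K_{\text{bad}})} SA\,\ell_{2,K_{\text{bad}}} + \VmH SA\,\ell_{4}$ with $\mathcal{Y}^{(K)} = \tfrac{12\VmH\ell_{1}}{\lambda} + 30\VmH S\,\ell_{3}$. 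Combining both inequalities and plugging in $\lambda = \Theta(\epsilon/\VmH)$ gives
\begin{equation*}
  \tfrac{\epsilon}{4} N_{\text{bad}} \;\leq\; \widetilde{\mathcal{O}}\!\left(\tfrac{\VmHsq SA}{\epsilon} + \VmH S^{2} A\right),\qquad\text{hence}\qquad N_{\text{bad}} \;=\; \widetilde{\mathcal{O}}\!\left(\tfrac{H^{2}SA}{\epsilon^{2}} + \tfrac{HS^{2}A}{\epsilon}\right),
\end{equation*}
with the $\log(1/\delta)$ factor entering through $\ell_{1}, \ell_{3}, \ell_{4}$, which is exactly the claimed bound.

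The main obstacle is that a naive restriction of the sum to $\mathcal{K}_{\text{bad}}$ after the fact does not work: the pigeonhole step that converts $\sum_{k,h} 1/N^{k}(s_{h}^{k},a_{h}^{k})$ into $SA\,\ell_{2}$ depends on the entire visit-count trajectory, so good episodes still inflate counts that bad episodes rely on. The fix is exactly why the events $\mathbf{A}_{7},\mathbf{A}_{8}$ are introduced at the concentration stage: they retain the per-episode variance control while freeing the outer index to range over any bad-episode-indexed subset, after which the telescoping and visit-count pigeonhole arguments transfer verbatim. Once this is set up, the remaining work is a self-bounding step (since $N_{\text{bad}}$ enters the right-hand side only through $\ell_{2,K_{\text{bad}}}$) and routine bookkeeping against Table~\ref{tab:log_terms}.
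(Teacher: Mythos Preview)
Your proposal is correct and follows essentially the same approach as the paper. The only difference is cosmetic: the paper defines the bad set as $\Gamma_K = \{k: J_1^k(s_1^k) > \epsilon/4\}$ directly in terms of $J^k$ (so the lower bound $\sum_{k\in\Gamma_K} J_1^k > (\epsilon/4)|\Gamma_K|$ is immediate), whereas you define it via the regret criterion and then deduce $J_1^k > \epsilon/4$; since your $\mathcal{K}_{\text{bad}} \subseteq \Gamma_K$, both routes give the same bound. The paper also spells out your ``self-bounding step'' as an explicit fixed-point argument---defining $W(\cdot)$, exhibiting $K_0$ with $W(K_0)<K_0$, and concluding $|\Gamma_K|<K_0$ for all $K$ by discreteness of $|\Gamma_K|$---and it overloads the visit counts $N^k, n_h^k, \nu^k$ to count only episodes in $\Gamma_K$, which is the precise mechanism behind your remark that the pigeonhole ``transfers verbatim'' once the indicator-restricted events $\mathbf{A}_7,\mathbf{A}_8$ are in place.
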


For finite-horizon episodic MDPs, the sample complexity of an algorithm is defined as the number of non-$\epsilon$-optimal episodes taken over the course of learning \citep{dann2015sample, dann2017unifying}. If this sample complexity can be bounded by a polynomial function $f(S, A, \frac{1}{\epsilon}, \frac{1}{\delta}, H)$, then the algorithm is PAC-MDP.

The proof is analogous to that of the infinite-horizon case in Appendix \ref{sec:infinite-horizon:sample-complexity}; therefore, we only provide a sketch.

From Theorem~\ref{thm:finite-horizon:per-step-regret}, we know that the per-step regret can be bounded as follows:

\begin{align*}
  V^{\star}(s_{1}^{k}) - V^{k}(s_{1}^{k}) & \leq \left(\frac{9}{2} - \ER^{k}(s_{1}^{k}) \right) \lambda_{k} \VmH + 2 J^{k}(s_{1}^{k}) + \Phi_{k} \left(1 + \left(3 - 2\pu^{k, \star}(s_{1}^{k})\right)\lambda_{k} + \frac{\Phi_{k}}{\VmH}\right) \\
                                          & \leq \underbrace{\left(\frac{9}{2} - \ER^{k}(s_{1}^{k}) \right) \lambda_{k} \VmH}_{\coloneq L_{1, k}} + \underbrace{2 J^{k}(s_{1}^{k})}_{\coloneq L_{2, k}} + \underbrace{\Phi_{k} \left(4 + \frac{\Phi_{k}}{\VmH}\right)}_{\coloneq L_{3, k}}
\end{align*}
We choose $\lambda_{k} = \frac{\epsilon}{18\VmH}$, so that we have $L_{1, k} \leq \frac{\epsilon}{4}$ and $L_{3, k} \leq \frac{\epsilon}{4}$. So the remaining step is to prove that majority of episodes satisfy $J^{k}(s_{1}^{k}) \leq \frac{\epsilon}{4}$, which implies $L_{2, k} \leq \frac{\epsilon}{2}$.

The following notations are to connect the number of non-optimal episodes with $J^{k}(s_{1}^{k})$.

Let the set of non-optimal episodes within $K$ total episodes be defined as $\Gamma_{K} \coloneq \{k \in [K]: J^{k}(s_{1}^{k}) > \frac{\epsilon}{4}\}$, and its cardinality $|\Gamma_{K}|$. We overload the definition of visits that occur only in $\Gamma_{K}$.
\begin{align*}
  n_{h}^{k}(s, a) & \coloneq \sum\limits_{\kappa \in \Gamma_{k}}\sum\limits_{\tau = 1}^{H}\mathbf{1}((s_{\tau}^{\kappa}, a_{\tau}^{\kappa}) = (s, a), (\kappa < k \text{ or } \tau \leq h)) \\
  N^{k}(s, a) & \coloneq \sum\limits_{\kappa \in \Gamma_{k - 1}}\sum\limits_{\tau = 1}^{H}\mathbf{1}((s_{\tau}^{\kappa}, a_{\tau}^{\kappa}) = (s, a)) \\
  \nu^{k} & \coloneq \left\{
             \begin{array}{ll}
               \min \{h \in [H]: n_{h}^{k}(s_{h}^{k}, a_{h}^{k}) > 2 N^{k}(s_{h}^{k}, a_{h}^{k})\}, & \mbox{if $h$ exists}.\\
               H + 1, & \mbox{otherwise}.
             \end{array}
             \right.
\end{align*}

Akin to Lemma~\ref{lem:infinite-horizon:bound-cardinality-non-optim}, we can bound $|\Gamma_{K}|$ using the fact that $J^{k}(s_{1}^{k}) > \frac{\epsilon}{4}$.
\begin{definition}
  Let $W(K)$ be defined by
  \begin{equation*}
    W(K) \coloneq \frac{3456 \Rm^{2} H^{2} SA \ell_{1, K} \ell_{2, K}}{\epsilon^{2}} + \frac{480 \Rm H S^{2}A \ell_{2, K} \ell_{3, K}}{\epsilon} + \frac{24 \Rm H S A \ell_{1, K}}{\epsilon}.
  \end{equation*}
\end{definition}

\begin{lemma}
  \label{lem:finite-horizon:bound-cardinality-non-optim}
  For finite-horizon episodic MDPs, under high-probability event $\bm{\mathbf{C}}$, it holds that
  \begin{equation*}
    |\Gamma_{K}| \leq W(|\Gamma_{K}|),
  \end{equation*}
  for all $K \in \mathbb{N}$.
\end{lemma}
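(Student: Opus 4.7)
The plan is to sandwich the quantity $\sum_{k\in\Gamma_{K}}J_{1}^{k}(s_{1}^{k})$ between a trivial lower bound that scales with $|\Gamma_{K}|$ and an upper bound from the restricted analog of Lemma~\ref{lem:finite-horizon:final-bound-sum-U}, and then solve the resulting inequality for $|\Gamma_{K}|$. By definition of $\Gamma_{K}$ and the choice $\lambda_{k} = \epsilon/(18\VmH)$ already made in Appendix~\ref{sec:finite-horizon:sample-complexity}, each $k\in\Gamma_{K}$ contributes more than $\epsilon/4$ to the sum, so
\begin{equation*}
  \tfrac{\epsilon}{4}\,|\Gamma_{K}| \;<\; \sum_{k\in\Gamma_{K}} J_{1}^{k}(s_{1}^{k}).
\end{equation*}

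For the upper bound, I would re-run the proof of Lemmas~\ref{lem:finite-horizon:bound-sum-U}--\ref{lem:finite-horizon:final-bound-sum-U}, replacing every summation over $[K]$ by a summation over $\Gamma_{K}$ and using the overloaded $n_{h}^{k}$, $N^{k}$, $\nu^{k}$ restricted to $\Gamma_{K}$. The two concentration steps that were handled by $\mathbf{A}_{5},\mathbf{A}_{6}$ in the regret argument are replaced by their indicator-weighted counterparts $\mathbf{A}_{7},\mathbf{A}_{8}$ from event $\bm{\mathcal{C}}$, which is precisely why those modified events are introduced. This yields
\begin{equation*}
  \sum_{k\in\Gamma_{K}} J_{1}^{k}(s_{1}^{k})
  \;\le\; 4\,\mathcal{Y}^{(|\Gamma_{K}|)} SA\log\!\left(1+\tfrac{|\Gamma_{K}|H}{SA}\right) + 6\,\VmH SA\log\tfrac{12H}{\delta},
\end{equation*}
where the doubling property of the stopping time $\nu^{k}$ converts the sum $\sum_{k\in\Gamma_{K}}\sum_{h<\nu^{k}} 1/N^{k}(s_{h}^{k},a_{h}^{k})$ into the usual pigeonhole term $SA\log(1+|\Gamma_{K}|H/(SA))$, and the same treatment absorbs the $P_{U}^{k}\eta^{k}\mathcal{E}^{k}$ contribution to $\beta^{k}$ since, with $\mathcal{E}^{k}=1/\sqrt{N^{k}}$ and $P_{U}^{k}=\mathcal{E}^{k}/\Em$, the product collapses to a $1/N^{k}$ term with coefficient controlled by $\eta^{k}/\Em=\Upsilon^{k}+\VmH/(\Rm\lambda_{k})$.

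Combining the two bounds and substituting $\lambda_{k}=\epsilon/(18\VmH)$ inside $\mathcal{Y}^{(|\Gamma_{K}|)}=12\VmH\ell_{1,|\Gamma_{K}|}/\lambda_{|\Gamma_{K}|}+30\VmH S\ell_{3,|\Gamma_{K}|}$ produces the three leading terms of $W$: a $(\VmH)^{2}SA/\epsilon^{2}=\Rm^{2}H^{2}SA/\epsilon^{2}$ piece from $\mathcal{Y}$'s first summand divided by $\epsilon$, an $\Rm H S^{2}A/\epsilon$ piece from the $S\ell_{3}$ term, and an $\Rm H SA/\epsilon$ piece from the $\log(12H/\delta)$ remainder. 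Dividing through by $\epsilon/4$ gives $|\Gamma_{K}|\le W(|\Gamma_{K}|)$ after absorbing universal constants into $3456,480,24$.

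The main obstacle is the faithful restriction of Lemmas~\ref{lem:finite-horizon:bound-sum-U}--\ref{lem:finite-horizon:final-bound-sum-U} to the bad set $\Gamma_{K}$: once the visit counts are redefined to skip good episodes, the Bellman recursion unrolling $J_{1}^{k}$ into $\sum_{h<\nu^{k}}\beta^{k}(s_{h}^{k},a_{h}^{k})$ requires the truncated-then-completed martingale trick of Lemma~\ref{lem:finite-horizon:bound-sum-U} to remain valid under the indicator weighting, which is exactly what $\mathbf{A}_{7}$, $\mathbf{A}_{8}$ were designed to supply. The rest is the mechanical pigeonhole/log-sum manipulation and the substitution of $\lambda_{k}$, both of which mirror the regret proof of Theorem~\ref{thm:finite-horizon:regret}.
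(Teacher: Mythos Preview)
Your proposal is correct and follows essentially the same route as the paper: the paper's proof (stated only for the infinite-horizon analog, Lemma~\ref{lem:infinite-horizon:bound-cardinality-non-optim}, and declared ``analogous'' for the finite-horizon case) likewise lower-bounds $\sum_{k\in\Gamma_{K}}J_{1}^{k}(s_{1}^{k})$ by $\tfrac{\epsilon}{4}|\Gamma_{K}|$, upper-bounds it by the $\Gamma_{K}$-restricted version of Lemma~\ref{lem:finite-horizon:final-bound-sum-U} under $\mathbf{A}_{7}\cap\mathbf{A}_{8}$, and then substitutes $\lambda_{k}=\epsilon/(18\VmH)$ into $\mathcal{Y}^{(|\Gamma_{K}|)}$ to recover the constants $3456,480,24$. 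One minor slip: your expression $\eta^{k}/\Em=\Upsilon^{k}+\VmH/(\Rm\lambda_{k})$ should read $\Upsilon^{k}+\VmH/(\Em\lambda_{k})$ (since $\eta^{k}=\Em\Upsilon^{k}+\Rm\sqrt{m_{k}}$ and $\sqrt{m_{k}}=\VmH/(\Rm\lambda_{k})$), but with the standing assumption $\Em\ge 1$ this is harmless for the bound.
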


\begin{proposition}
  \label{prop:finite-horizon:sample-complexity}
  For finite-horizon episodic MDPs, let $K_{0}$ be defined as
  \begin{align*}
    K_{0} \coloneq \Biggl \lfloor \frac{6920\Rm^{2}H^{2}SA \ell_{1} \ell_{5, \epsilon}}{\epsilon^{2}} + \frac{480 \Rm H S^{2}A (2 \ell_{1} + \ell_{6, \epsilon}) \ell_{5, \epsilon}}{\epsilon} \Biggr \rfloor.
  \end{align*}
  Then the sample complexity of \texttt{EUBRL} is at most $K_{0}$ with probability at least $1 - \delta$.
\end{proposition}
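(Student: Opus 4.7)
The plan is to promote the implicit self-bound $|\Gamma_K| \leq W(|\Gamma_K|)$ from Lemma~\ref{lem:finite-horizon:bound-cardinality-non-optim} into an explicit, $K$-independent upper bound via a standard fixed-point argument. Since $W(n)$ depends on $n$ only through the slowly growing factors $\ell_{2,n} = \log(1 + nH/(SA))$ and $\ell_{3,n} = \log(12SA(1+\log nH)/\delta)$, while the left-hand side grows linearly, the map $n \mapsto n - W(n)$ is eventually strictly increasing (because $W'(n) = O(1/n)$) and has a unique largest zero of polynomial size in the problem parameters. The stated quantity $K_0$ is meant to be a clean closed-form upper bound for this zero, and the task reduces to verifying $W(K_0) \leq K_0$ and then invoking monotonicity.

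First I would assume the ansatz $|\Gamma_K| \leq K_0$ and substitute it into $W$. Under this ansatz, $\ell_{2,|\Gamma_K|} \leq \log(1 + K_0 H/(SA))$ and $\ell_{3,|\Gamma_K|} \leq \log(12SA/\delta) + \log(1 + \log K_0 H)$. Using the definition $B(\epsilon) = \Rm^{2} H^{2}\ell_{1}/\epsilon^{2} + \Rm H S(2\ell_{1} + \ell_{6,\epsilon})/\epsilon$ and the fact that $K_0/(SA)$ is of order $\Rm H \cdot B(\epsilon)$ up to the slowly varying factor $\ell_{5,\epsilon}$, both logs collapse into $\ell_{5,\epsilon} = \log(1 + 280 B(\epsilon) H)$ (for the first) and $2\ell_{1} + \ell_{6,\epsilon}$ (for the second), once one absorbs an extra additive $\log \ell_{5,\epsilon}$ into constant-factor slack. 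Plugging these bounds into $W(K_0)$ gives a sum of three terms: the leading term bounded by $\frac{6912 \Rm^{2} H^{2} SA \ell_{1} \ell_{5,\epsilon}}{\epsilon^{2}}$, the sub-leading term exactly $\frac{480 \Rm H S^{2} A (2\ell_{1} + \ell_{6,\epsilon}) \ell_{5,\epsilon}}{\epsilon}$, and the small additive piece $\frac{24 \Rm H S A \ell_{1}}{\epsilon}$ which is dominated by the leading term and absorbed into the enlarged constant $6920$. This yields $W(K_0) \leq K_0$. Monotonicity of $W$ then rules out $|\Gamma_K| > K_0$: such a value would satisfy $|\Gamma_K| \leq W(|\Gamma_K|)$ but also $W(|\Gamma_K|) < |\Gamma_K|$, a contradiction. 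Hence the number of non-$\epsilon$-optimal episodes is at most $K_0$, with probability $\geq 1 - \delta$ inherited from event $\bm{\mathcal{C}}$.

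The main obstacle is constant bookkeeping rather than any conceptual step. One must verify that the numerical coefficient $280$ appearing inside $\ell_{5,\epsilon}$ is chosen large enough so that $\log(1 + K_0 H/(SA)) \leq 2\ell_{5,\epsilon}$, since it is this factor of $2$ that inflates the original $3456$ from $W$ into $6912$ and leaves just enough headroom for the additive $\frac{24 \Rm H S A \ell_{1}}{\epsilon}$ piece to be absorbed into the final constant $6920$. A parallel check is needed for $\ell_{6,\epsilon} = \log\log(\VmH e/\epsilon)$: because $K_0$ is only polynomial in $\VmH/\epsilon$, $\log\log(K_0 H)$ is bounded by $\ell_{6,\epsilon}$ up to an additive constant that is absorbed into $\ell_{1}$. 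Once these two substitutions are pinned down, the verification $W(K_0) \leq K_0$ is mechanical, and it mirrors—up to the absence of the $(1-\gamma)$ factors—the analogous constant-tracking that Theorem~\ref{thm:infinite-horizon:sample-complexity} in the infinite-horizon setting must perform.
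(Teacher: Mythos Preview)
Your proposal is essentially correct and follows the same two-step strategy as the paper: (i) verify $W(K_0) \le K_0$ by bounding $\ell_{2,K_0}$ and $\ell_{3,K_0}$ in terms of $\ell_{5,\epsilon}$ and $2\ell_1+\ell_{6,\epsilon}$ respectively (the paper packages this as Lemma~\ref{lem:finite-horizon:bound-WK0}, whose proof is the finite-horizon analog of Lemma~\ref{lem:infinite-horizon:bound-WT0}), and (ii) combine with the self-bound $|\Gamma_K|\le W(|\Gamma_K|)$ from Lemma~\ref{lem:finite-horizon:bound-cardinality-non-optim}.

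The one place you differ from the paper is in how you close the argument in step~(ii). You invoke ``monotonicity of $W$'' to conclude $W(n)<n$ for all $n>K_0$; strictly speaking what you need is that $n\mapsto n-W(n)$ is increasing on $[K_0,\infty)$, not just eventually, and your phrasing ``monotonicity of $W$'' is the wrong object (monotonicity of $W$ alone gives nothing). The paper instead sidesteps this entirely: it proves the \emph{strict} inequality $W(K_0)<K_0$, which forces $|\Gamma_K|\neq K_0$ for every $K$, and then observes that $|\Gamma_K|$ is an integer sequence starting at $0$ with increments of at most $1$, so it can never skip over the forbidden value $K_0$. This discreteness argument is cleaner because it requires evaluating $W$ only at the single point $K_0$, with no global slope condition to check.

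Your monotonicity route also works once you note that $W(n+1)-W(n)=O(1/n)<1$ for $n$ of order $SA$ or larger (well below $K_0$), but the paper's version is the more economical one to write down.
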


Before proving this result, we need to bound the the other way around i.e. $W(K_{0}) < K_{0}$.
\begin{lemma}
  \label{lem:finite-horizon:bound-WK0}
  It holds that
  \begin{equation*}
    W(K_{0}) < K_{0}.
  \end{equation*}
\end{lemma}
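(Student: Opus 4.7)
The plan is to verify $W(K_0) < K_0$ by substituting the explicit definition of $K_0$ into $W(\cdot)$ and comparing term by term. Both quantities are dominated by two leading terms, of order $\epsilon^{-2}$ and $\epsilon^{-1}$, plus a lower-order residual in $W$. The strategy is to control the logarithmic factors $\ell_{1,K_0}, \ell_{2,K_0}, \ell_{3,K_0}$ evaluated at $K_0$, and then match coefficients against those hard-coded in $K_0$'s definition. The constants $3456, 480, 24$ in $W$ were clearly chosen with exactly this substitution in mind, so one expects the inequality to hold with a modest margin.

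First I would control the three logarithmic factors. Since $\ell_1$ has no $K$-dependence, $\ell_{1,K_0} = \ell_1$ trivially. For $\ell_{2,K_0} = \log(1 + K_0 H/(SA))$, I would divide the explicit bound of $K_0$ by $SA$, factor out $\ell_{5,\epsilon}$, and recognize the resulting expression as a constant multiple of $B(\epsilon) H$ from Table~\ref{tab:log_terms}. This gives $K_0 H/(SA) \leq c_1 \ell_{5,\epsilon} B(\epsilon) H$ for an explicit constant $c_1$, so that $\ell_{2,K_0} \leq \ell_{5,\epsilon} + \log(c_1 \ell_{5,\epsilon})$; the doubly-logarithmic tail is absorbed into a constant multiple of $\ell_{5,\epsilon}$ under the mild standing assumption $\ell_{5,\epsilon} \geq 1$. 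For $\ell_{3,K_0} = \log(12 SA (1 + \log K_0 H)/\delta)$, I would split the log into $\log(12 SA/\delta) \leq \ell_1$ and $\log(1 + \log K_0 H)$, then use the explicit bound on $K_0$ to recognize the latter as $O(\ell_{6,\epsilon})$ up to universal constants. The conclusion of this step is $\ell_{2,K_0} \leq C_2 \ell_{5,\epsilon}$ and $\ell_{3,K_0} \leq C_3 (\ell_1 + \ell_{6,\epsilon})$ for absolute constants $C_2, C_3$.

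Substituting these bounds into $W(K_0)$ yields a term-by-term comparison with $K_0$. The leading $\epsilon^{-2}$ contribution of $W(K_0)$ is at most $3456\,\Rm^2 H^2 SA \ell_1 \cdot C_2 \ell_{5,\epsilon}/\epsilon^2$, which must be dominated by the matching term $6920\,\Rm^2 H^2 SA \ell_1 \ell_{5,\epsilon}/\epsilon^2$ in $K_0$; the ratio $6920/3456 \approx 2$ is precisely the slack allocated to absorb $C_2$. The $\epsilon^{-1}$ contribution becomes $480\,\Rm H S^2 A \cdot C_2 \ell_{5,\epsilon} \cdot C_3 (\ell_1 + \ell_{6,\epsilon})/\epsilon$, which has to fit inside $480\,\Rm H S^2 A (2\ell_1 + \ell_{6,\epsilon}) \ell_{5,\epsilon}/\epsilon$; since the leading constant $480$ coincides, the inequality follows from the elementary bound $C_2 C_3(\ell_1 + \ell_{6,\epsilon}) \leq 2\ell_1 + \ell_{6,\epsilon}$ provided the implicit constants from Step~1 are tightened appropriately. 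The residual $24\,\Rm H S A \ell_1/\epsilon$ lacks the $S^2$ factor and is absorbed into the slack of the $\epsilon^{-1}$ term of $K_0$.

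The main obstacle is careful bookkeeping of multiplicative constants so that each part of $W(K_0)$ strictly undercuts the matching part of $K_0$. The delicate step is propagating the double-logarithmic corrections arising from $\ell_{2,K_0}$ and $\ell_{3,K_0}$ without inflating the leading coefficients beyond what the slack in $K_0$ can absorb, particularly in the $\epsilon^{-1}$ term, where the coefficient $480$ must be preserved exactly. I would work under the standing assumptions $\epsilon \in (0, V^{\uparrow}_{H}]$, $S, A \geq 2$, $H \geq 1$, and $\ell_1, \ell_{5,\epsilon}, \ell_{6,\epsilon} \geq 1$, which are mild and consistent with the setup, and which ensure that every log-log tail is at most a constant multiple of the corresponding log. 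Under these conditions, summing the three contributions gives $W(K_0) < K_0$ strictly, as claimed.
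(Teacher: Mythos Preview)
Your proposal is correct and matches the paper's approach: the paper defers this finite-horizon lemma to the detailed infinite-horizon argument (Lemma~\ref{lem:infinite-horizon:bound-WT0}), which proceeds exactly as you describe---bound $\ell_{2}$ at $K_0$ by a constant multiple of $\ell_{5,\epsilon}$, bound $\ell_{3}$ at $K_0$ by $2\ell_1 + \ell_{6,\epsilon}$, absorb the residual $\ell_1$-only term into the leading $\epsilon^{-2}$ term, and then compare coefficients. The one refinement worth noting is that the paper obtains the asymmetric bound $\ell_{3,K_0} \le 2\ell_1 + \ell_{6,\epsilon}$ directly (by explicitly bounding $\log K_0$ and then $\log(1+\log K_0)$), rather than a symmetric $C_3(\ell_1 + \ell_{6,\epsilon})$; this exact form is what makes the $\epsilon^{-1}$ comparison close, so your ``tighten appropriately'' caveat is resolved not by shrinking $C_2 C_3$ but by landing on $2\ell_1+\ell_{6,\epsilon}$ from the start.
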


\begin{proof}[Proof of Proposition~\ref{prop:finite-horizon:sample-complexity}]
  From Lemmas~\ref{lem:finite-horizon:bound-cardinality-non-optim} and \ref{lem:finite-horizon:bound-WK0}, we know that $|\Gamma_{K}| \leq W(|\Gamma_{K}|)$ and $W(K_{0}) < K_{0}$. It implies that $|\Gamma_{K}| \neq K_{0}$ for all $K \in \mathbb{N}$. Since $|\Gamma_{K}|$ increases by at most 1 starting from $|\Gamma_{0}| = 0$, that is, $|\Gamma_{K + 1}| \leq |\Gamma_{K}| + 1$ for all $K \in \mathbb{N}$, we conclude that $|\Gamma_{K}| < K_{0}$ for all $K \in \mathbb{N}$. Otherwise, there exists $K'$ such that $|\Gamma_{K'}| > K_{0}$. Assume $K'$ is the minimal such index. Then it follows that $|\Gamma_{K' - 1}| = K_{0}$, which leads to a contradiction.
\end{proof}

\section{Proofs for Infinite-horizon Discounted MDPs}
The difficulty in proving quasi-optimism and bounding accuracy is that we can no longer use backward induction on the horizon, since the value function is time-independent. To resolve this, we construct Bellman-like operators to bridge this gap.

\subsection{Quasi-Optimism with Epistemic Resistance}
\begin{lemma}
  \label{lem:infinite-horizon:quasi-optimism-functional}
  For infinite-horizon discounted MDPs, under high-probability event $\mathbf{A}_{1}^{\gamma} \cap \mathbf{A}_{2}^{\gamma}$, it holds that for all $s \in \mathcal{S}, t \in \mathbb{N}$,
  \begin{equation*}
    V^{\star}(s) - \widetilde{V}^{t}(s) \leq \lambda_{t} \left(\left(2 - \pu^{t, \star}(s)\right)V^{\star}(s) - \frac{1}{2 \Vm}(V^{\star})^{2}(s)\right).
  \end{equation*}
\end{lemma}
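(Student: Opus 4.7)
The plan is to mirror Lemma~\ref{lem:v-star-to-auxiliary} but replace the backward induction on the horizon by a truncated value-iteration argument, since the stationary infinite-horizon setting has no terminal step at which to anchor an induction.

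First I would unfold one Bellman step at the fixed points. Writing $a^{\star} = \pi^{\star}(s)$ and $\tilde{a} = \tilde{\pi}^{t}(s)$ and invoking optimality of $\tilde{a}$ for $\widetilde{V}^{t}$,
\begin{equation*}
V^{\star}(s) - \widetilde{V}^{t}(s) \leq \bigl(r(s, a^{\star}) - \tilde{r}^{t}(s, a^{\star})\bigr) + \gamma\bigl(P - \hat{P}^{t}\bigr)V^{\star}(s, a^{\star}) + \gamma \hat{P}^{t}\bigl(V^{\star} - \widetilde{V}^{t}\bigr)(s, a^{\star}).
\end{equation*}
I would split the reward gap as in the finite-horizon proof into $(1 - \pu^{t,\star}(s))(r - \hat{r}^{t})(s, a^{\star}) + (\pu^{t,\star}(s)\,r(s, a^{\star}) - b^{t}(s, a^{\star}))$, bound the first piece using $r \leq \Rm$, and absorb the second by the same bonus-design inequality that underlies the finite-horizon argument (the discounted analogue of Lemma~\ref{lem:adv-Vt-to-complexity}). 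The transition gap $(P - \hat{P}^{t})V^{\star}$ and its quadratic companion are then controlled by events $\mathbf{A}_{1}^{\gamma}$ and $\mathbf{A}_{2}^{\gamma}$, performing the same variance-absorption manipulation as before but with $\lambda_{t}/\Vm$ in place of $\lambda_{k}/\VmH$. What remains after this algebra is the recursive residual $\gamma \hat{P}^{t}(V^{\star} - \widetilde{V}^{t})(s, a^{\star})$.

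To close the recursion, I would introduce value-iteration iterates $V^{\star}_{(n)}$ and $\widetilde{V}^{t,(n)}$ initialized at $V^{\star}_{(0)} = \widetilde{V}^{t,(0)} \equiv 0$ and prove by forward induction on $n$ that
\begin{equation*}
V^{\star}_{(n)}(s) - \widetilde{V}^{t,(n)}(s) \leq \lambda_{t}\Bigl((2 - \pu^{t,\star}(s))V^{\star}_{(n)}(s) - \frac{1}{2\Vm}\bigl(V^{\star}_{(n)}\bigr)^{2}(s)\Bigr).
\end{equation*}
The base case $n = 0$ is immediate. The inductive step mimics the $h \mapsto h+1$ transition of Lemma~\ref{lem:v-star-to-auxiliary}: apply the one-step unfolding above with $V^{\star}_{(n+1)}, \widetilde{V}^{t,(n+1)}$ in place of $V^{\star}, \widetilde{V}^{t}$, invoke the inductive hypothesis on $\hat{P}^{t}(V^{\star}_{(n)} - \widetilde{V}^{t,(n)})(s, a^{\star})$, and collapse the remaining linear-in-$V^{\star}_{(n)}$ pieces via the Bellman identity $V^{\star}_{(n+1)}(s) = r(s, a^{\star}) + \gamma P V^{\star}_{(n)}(s, a^{\star})$. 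Passing to the limit $n \to \infty$ then yields the lemma: $V^{\star}_{(n)} \uparrow V^{\star}$ and $\widetilde{V}^{t,(n)} \uparrow \widetilde{V}^{t}$ by $\gamma$-contraction, and the right-hand side is continuous and monotone in $V^{\star}_{(n)}$ on $[0, \Vm]$ because its derivative $2 - \pu^{t,\star}(s) - V^{\star}_{(n)}(s)/\Vm$ is at least $1 - \pu^{t,\star}(s) \geq 0$ there.

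The main obstacle I foresee is a mismatch between the concentration events and the iterates: events $\mathbf{A}_{1}^{\gamma}, \mathbf{A}_{2}^{\gamma}$ concentrate deviations against $V^{\star}$, whereas the inductive step at iterate $n$ naturally produces expressions involving $V^{\star}_{(n)}$. I would resolve this by writing $(P - \hat{P}^{t})V^{\star}_{(n)} = (P - \hat{P}^{t})V^{\star} - (P - \hat{P}^{t})(V^{\star} - V^{\star}_{(n)})$ and noting that $\|V^{\star} - V^{\star}_{(n)}\|_{\infty} \leq \gamma^{n}\Vm$, so the extraneous contribution is $\mathcal{O}(\gamma^{n})$ and vanishes in the limit. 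Once this is handled, the algebraic bookkeeping—reward-bonus absorption, variance splitting, and the quadratic term giving $(V^{\star})^{2}/(2\Vm)$—carries over essentially verbatim from the finite-horizon proof, with $\VmH$ replaced by $\Vm = \Rm/(1-\gamma)$ and with a factor of $\gamma$ on every Bellman step.
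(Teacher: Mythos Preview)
Your truncated-iterate plan is viable in spirit but contains a gap around the action choice. The Bellman identity you invoke, $V^{\star}_{(n+1)}(s) = r(s, a^{\star}) + \gamma P V^{\star}_{(n)}(s, a^{\star})$ with $a^{\star} = \pi^{\star}(s)$, holds for \emph{policy-evaluation} iterates under the fixed policy $\pi^{\star}$, not for value-iteration iterates where the step-$(n{+}1)$ greedy action need not be $\pi^{\star}(s)$. If you genuinely run value iteration on $V^{\star}_{(n)}$, the unfolding must use that greedy action, and the resulting bound then carries $P_{U}^{t}(s, a^{\star}_{(n+1)})$ rather than $P_{U}^{t,\star}(s)$; these agree only in the limit and only under uniqueness of $\pi^{\star}$. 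Switching $V^{\star}_{(n)}$ to policy-evaluation iterates (keeping value iteration for $\widetilde{V}^{t,(n)}$) repairs this, but your concentration-event mismatch then still forces you to thread an $\mathcal{O}(\gamma^{n})$ slack through every inductive step and control its accumulation---doable, but delicate.

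The paper avoids both obstacles by working directly at the fixed point. It defines an affine monotone $\gamma$-contraction $\mathcal{T}_{1}$ that always plays $\pi^{\star}(s)$ and keeps the true $V^{\star}$ \emph{frozen} inside the $(P-\hat{P}^{t})V^{\star}$ term, then verifies (i) $\Delta V := V^{\star}-\widetilde{V}^{t} \leq \mathcal{T}_{1}\Delta V$, which is exactly your one-step unfolding, and (ii) $\mathcal{T}_{1}f \leq f$ for the target bound $f$, which is the finite-horizon inductive step applied once as a statewise inequality. Monotonicity plus Banach's fixed-point theorem give $\Delta V \leq \bar{V} \leq f$, where $\bar{V}$ is the unique fixed point of $\mathcal{T}_{1}$. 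Because $V^{\star}$ is never iterated, events $\mathbf{A}_{1}^{\gamma}, \mathbf{A}_{2}^{\gamma}$ apply verbatim with no $\gamma^{n}$ residue; because the action is $\pi^{\star}(s)$ by construction, the factor $P_{U}^{t,\star}(s)$ emerges exactly.
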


\begin{corollary}
  \label{cor:infinite-horizon:quasi-optimism-bound}
  For infinite-horizon discounted MDPs, under high-probability event $\mathbf{A}_{1}^{\gamma} \cap \mathbf{A}_{2}^{\gamma}$, it holds that for all $s \in \mathcal{S}, t \in \mathbb{N}$,
  \begin{equation*}
    V^{\star}(s) - \widetilde{V}^{t}(s) \leq \lambda_{t} \left(\frac{3}{2} - \pu^{t, \star}(s)\right)\Vm.
  \end{equation*}
\end{corollary}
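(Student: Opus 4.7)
The plan is to derive the corollary directly from Lemma~\ref{lem:infinite-horizon:quasi-optimism-functional} by bounding the quadratic expression in $V^{\star}(s)$ that appears on the right-hand side. Since $V^{\star}$ takes values in $[0, \Vm]$ (because rewards lie in $[0, \Rm]$ and $\Vm = \Rm/(1-\gamma)$ is the maximum attainable value), and $\pu^{t,\star}(s) \in [0,1]$ as a probability, the task reduces to maximizing the scalar function
\begin{equation*}
  f(x) \coloneq \left(2 - \pu^{t,\star}(s)\right) x - \frac{1}{2\Vm} x^{2}, \qquad x \in [0, \Vm],
\end{equation*}
and showing that $f(x) \leq \left(\tfrac{3}{2} - \pu^{t,\star}(s)\right) \Vm$ over this interval.

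First I would note that $f$ is a concave quadratic in $x$, with unconstrained maximizer $x^{\ast} = \Vm\left(2 - \pu^{t,\star}(s)\right)$. Since $\pu^{t,\star}(s) \in [0,1]$, the coefficient $2 - \pu^{t,\star}(s)$ lies in $[1,2]$, so $x^{\ast} \geq \Vm$. Consequently $f$ is monotonically increasing on $[0, \Vm]$, and its maximum over the admissible range is attained at the right endpoint $x = \Vm$. Substituting yields
\begin{equation*}
  f(\Vm) = \left(2 - \pu^{t,\star}(s)\right) \Vm - \frac{1}{2\Vm} \Vm^{2} = \left(\tfrac{3}{2} - \pu^{t,\star}(s)\right) \Vm,
\end{equation*}
which gives the desired pointwise bound on $f$.

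Combining this bound with Lemma~\ref{lem:infinite-horizon:quasi-optimism-functional} applied at $x = V^{\star}(s) \in [0,\Vm]$ then yields
\begin{equation*}
  V^{\star}(s) - \widetilde{V}^{t}(s) \leq \lambda_{t} f(V^{\star}(s)) \leq \lambda_{t}\left(\tfrac{3}{2} - \pu^{t,\star}(s)\right)\Vm,
\end{equation*}
which is the claim. There is no genuine obstacle: the substantive work — establishing the refined per-state inequality with the $\pu^{t,\star}(s)$ term and the $-\tfrac{1}{2\Vm}(V^{\star})^{2}$ correction — is already packaged in Lemma~\ref{lem:infinite-horizon:quasi-optimism-functional}; this corollary is a clean envelope bound, perfectly parallel to the final step of the finite-horizon argument in Lemma~\ref{lem:v-star-to-auxiliary} (where the same quadratic is bounded by $\left(\tfrac{3}{2} - \pufs\right)\VmH$). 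The only item worth double-checking is that $V^{\star}(s) \leq \Vm$ holds uniformly under our reward assumptions, which is immediate since $r \in [0,\Rm]$ gives $V^{\star}(s) \leq \sum_{l=0}^{\infty}\gamma^{l}\Rm = \Vm$.
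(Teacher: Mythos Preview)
Your proposal is correct and takes essentially the same approach as the paper: the corollary is obtained from Lemma~\ref{lem:infinite-horizon:quasi-optimism-functional} by bounding the concave quadratic $f(x) = (2 - \pu^{t,\star}(s))x - \tfrac{1}{2\Vm}x^2$ over $x \in [0,\Vm]$, exactly as in the final step of the finite-horizon proof of Lemma~\ref{lem:v-star-to-auxiliary}. Your additional observation that the unconstrained maximizer lies at $x^\ast = \Vm(2 - \pu^{t,\star}(s)) \geq \Vm$, so the constrained maximum is attained at the right endpoint, is a clean way to justify the bound the paper states without elaboration.
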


To prove Lemma~\ref{lem:infinite-horizon:quasi-optimism-functional}, we need to define a Bellman-like operator that is a contraction mapping and monotone.
\begin{definition}
  \label{def:quasi-optimism:cont-map}
  Let operator $\mathcal{T}_{1}$ be defined by
  \begin{align*}
    \left(\mathcal{T}_{1} V\right)(s) & \coloneq \left(\puis \Rm - b^{t}(s, \pi^{\star}(s))\right) + (1 - \puis)\left(r(s, \pi^{\star}(s)) - \hat{r}^{t}(s, \pi^{\star}(s)) \right) \\
                                  & \phantom{= }\ \, + \gamma \left(P - \hat{P}^{t}\right) \Vo(s, \pi^{\star}(s)) + \gamma \hat{P}^{t} V(s, \pi^{\star}(s)).
  \end{align*}
\end{definition}

\begin{lemma}
  \label{lem:infinite-horizon:quasi-optimism:contraction-and-monotone}
  $\mathcal{T}_{1}$ is a contraction mapping and monotone.
\end{lemma}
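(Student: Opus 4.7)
The plan is to verify both properties directly from the definition of $\mathcal{T}_1$, exploiting the fact that only the last summand depends on $V$ while all other terms are constants in $V$ (they depend on $s$, the empirical estimates at time $t$, the known reward model, and $V^\star$, but not on the input function $V$). In this sense $\mathcal{T}_1$ is an affine operator in $V$ whose linear part is $V \mapsto \gamma \hat{P}^t V(\cdot, \pi^\star(\cdot))$, and both desired properties then reduce to standard facts about the transition-expectation operator.

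First I would prove the contraction property. Given two bounded functions $V_1, V_2 \colon \mathcal{S} \to \mathbb{R}$, subtract the two expressions $(\mathcal{T}_1 V_1)(s)$ and $(\mathcal{T}_1 V_2)(s)$; all summands not involving $V$ cancel, leaving
\begin{equation*}
  (\mathcal{T}_1 V_1)(s) - (\mathcal{T}_1 V_2)(s) = \gamma \hat{P}^{t}(V_1 - V_2)(s, \pi^\star(s)).
\end{equation*}
Since $\hat{P}^{t}(\cdot \mid s, \pi^\star(s))$ is a probability distribution over $\mathcal{S}$, the right-hand side is bounded in absolute value by $\gamma \|V_1 - V_2\|_\infty$, and taking the sup over $s$ yields $\|\mathcal{T}_1 V_1 - \mathcal{T}_1 V_2\|_\infty \leq \gamma \|V_1 - V_2\|_\infty$. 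Because $\gamma \in [0,1)$, this is the required contraction in the sup norm.

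Next I would prove monotonicity. Suppose $V_1(s) \leq V_2(s)$ for all $s$. Again the $V$-independent terms cancel in the comparison, and we are left with
\begin{equation*}
  (\mathcal{T}_1 V_2)(s) - (\mathcal{T}_1 V_1)(s) = \gamma \hat{P}^{t}(V_2 - V_1)(s, \pi^\star(s)) \geq 0,
\end{equation*}
since $\gamma \geq 0$, $\hat{P}^{t}(\cdot \mid s, \pi^\star(s))$ is a non-negative measure, and $V_2 - V_1 \geq 0$ pointwise. Hence $\mathcal{T}_1 V_1 \leq \mathcal{T}_1 V_2$.

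There is no real obstacle here: the result is a routine verification because $\mathcal{T}_1$ has been deliberately constructed so that its $V$-dependent part is a discounted Bellman-expectation-style operator under the empirical kernel $\hat{P}^{t}$ and the fixed policy $\pi^\star$, with all bias-correction terms relegated to a $V$-independent offset. The only thing worth double-checking is that no hidden dependence on $V$ sneaks in through $b^{t}(s, \pi^\star(s))$ or the difference $r - \hat{r}^t$ — but these are defined purely in terms of the current belief/empirical estimates at step $t$ and are independent of the argument $V$, so the cancellation in both arguments above is valid.
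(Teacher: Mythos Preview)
Your proposal is correct and follows essentially the same approach as the paper's own proof: both isolate the $V$-independent offset so that the difference $(\mathcal{T}_1 V_1)(s) - (\mathcal{T}_1 V_2)(s)$ reduces to $\gamma \hat{P}^{t}(V_1 - V_2)(s, \pi^\star(s))$, from which contraction in the sup norm and monotonicity follow immediately since $\hat{P}^{t}(\cdot \mid s, \pi^\star(s))$ is a probability measure and $\gamma \in [0,1)$. Your explicit remark that $b^{t}$ and $r - \hat{r}^{t}$ carry no hidden dependence on $V$ is a useful sanity check that the paper leaves implicit.
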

\begin{proof}
  Denote
  \begin{align*}
    M(s) & \coloneq \left(\puis \Rm - b^{t}(s, \pi^{\star}(s))\right) + (1 - \puis)\left(r(s, \pi^{\star}(s)) - \hat{r}^{t}(s, \pi^{\star}(s)) \right) \\
         & \phantom{= }\ \,  + \gamma \left(P - \hat{P}^{t}\right) \Vo(s, \pi^{\star}(s))
  \end{align*}
  For any $U, V \in [0, \Vm]^{S}$, we have
  \begin{align*}
    \| \mathcal{T}_{1}U - \mathcal{T}_{1}V \|_{\infty} & = \sup_{s}\left|\left(M(s) + \gamma \hat{P}^{t} U(s, \pi^{\star}(s))\right) - \left(M(s) + \gamma \hat{P}^{t} V(s, \pi^{\star}(s))\right) \right| \\
                                                       & = \gamma \sup_{s}\left|\hat{P}^{t} \left(U - V\right)(s, \pi^{\star}(s)) \right| \\
                                                       & \leq \gamma \| U - V \|_{\infty}.
  \end{align*}
  Therefore, $\mathcal{T}_{1}$ is a contraction mapping under $\infty$-norm.

  On the other hand, given $U, V \in [0, \Vm]^{S}$ such that $U(s) \leq V(s), \forall s \in \mathcal{S}$, we have
  \begin{align*}
    (\mathcal{T}_{1}U - \mathcal{T}_{1}V)(s) & = \hat{P}^{t} \left(U - V\right)(s, \pi^{\star}(s)) \\
                                             & \leq 0.
  \end{align*}
  Thus, $\mathcal{T}_{1}$ is monotone as well.
\end{proof}

\begin{lemma}
  \label{lem:infinite-horizon:quasi-optimism:Tf-f}
  Denote $f(s) = \lambda_{t}\left(\left(2 - \puis\right) \Vo(s) - \frac{1}{2\Vm}(\Vo)^{2}(s)\right)$, under high-probability event $\mathbf{A}_{1}^{\gamma} \cap \mathbf{A}_{2}^{\gamma}$, it holds that
  \begin{equation*}
    \mathcal{T}_{1}f \leq f
  \end{equation*}
\end{lemma}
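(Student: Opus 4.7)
The plan is to verify the pointwise inequality $(\mathcal{T}_{1} f)(s) \leq f(s)$ for every $s \in \mathcal{S}$; combined with the monotonicity and contraction of $\mathcal{T}_{1}$ established in Lemma~\ref{lem:infinite-horizon:quasi-optimism:contraction-and-monotone}, iterating will then transport this inequality to the (unique) fixed point of $\mathcal{T}_{1}$ and deliver Lemma~\ref{lem:infinite-horizon:quasi-optimism-functional} via the Banach fixed-point principle. This operator-theoretic supersolution check plays the role of the backward induction used in the finite-horizon counterpart Lemma~\ref{lem:v-star-to-auxiliary}, which is no longer available here because infinite-horizon value functions are stationary.

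Writing $a^{\star} = \pi^{\star}(s)$, I would expand $(\mathcal{T}_{1} f)(s)$ according to Definition~\ref{def:quasi-optimism:cont-map} and bound each block. First, invoke event $\mathbf{A}_{4}^{\gamma}$ on the reward deviation to produce $(1 - \puis)\lambda_{t} r(s, a^{\star}) + (1 - \puis)\frac{\Rm \ell_{1}}{\lambda_{t} N^{t}(s, a^{\star})}$. Next, for $\gamma \hat{P}^{t} f(s, a^{\star})$ the inner factor $(2 - P_{U}^{t}(s', \pi^{\star}(s')))$ is evaluated at next states $s'$ rather than at $s$; relax it to the uniform upper bound $2$ to obtain $\gamma \hat{P}^{t} f(s, a^{\star}) \leq 2\gamma\lambda_{t}\hat{P}^{t}\Vo(s, a^{\star}) - \frac{\gamma\lambda_{t}}{2\Vm}\hat{P}^{t}(\Vo)^{2}(s, a^{\star})$. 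Then apply $\mathbf{A}_{1}^{\gamma}$ to convert $\hat{P}^{t}\Vo$ to $P\Vo$ and $\mathbf{A}_{2}^{\gamma}$ to lower-bound $\hat{P}^{t}(\Vo)^{2}$ in terms of $P(\Vo)^{2}$; the variance remainders these concentration inequalities release are calibrated to cancel against each other up to residual $\frac{\Vm \ell_{1}}{\lambda_{t} N^{t}(s, a^{\star})}$ constants. Finally, use the Bellman identity $\gamma P\Vo(s, a^{\star}) = \Vo(s) - r(s, a^{\star})$ together with $\gamma^{2}(P\Vo(s, a^{\star}))^{2} = (\Vo(s) - r(s, a^{\star}))^{2}$ to collapse the surviving $P\Vo$ piece into $\Vo(s)$ and the surviving $P(\Vo)^{2}$ piece into $(\Vo(s))^{2}/\gamma$, which is dominated (in absolute value, with the negative sign) by the target $-\frac{\lambda_{t}}{2\Vm}(\Vo(s))^{2}$ since $\gamma \leq 1$.

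The coefficient $(2 - \puis)$ on $\Vo(s)$ emerges from combining, via Bellman, the $(1 - \puis)\lambda_{t} r(s, a^{\star})$ produced by $\mathbf{A}_{4}^{\gamma}$, the implicit $\puis \, r(s, a^{\star})$ absorbed into $\puis \Rm - b^{t}(s, a^{\star})$ after bounding $r(s, a^{\star}) \leq \Rm$, and the $2\gamma\lambda_{t} P\Vo(s, a^{\star})$ released by the uniform relaxation of $f$. The residual $O(1/N^{t}(s, a^{\star}))$ terms generated by the concentration events are absorbed by the bonus $b^{t}$: when $N^{t}(s, a^{\star}) \geq m_{t}$, $\puis \Rm - b^{t}(s, a^{\star}) = -\Upsilon^{t}/N^{t}(s, a^{\star})$ with $\Upsilon^{t} = 7\Vm\ell_{1}/\lambda_{t}$ calibrated to dominate the accumulated constants from $\mathbf{A}_{1}^{\gamma}$--$\mathbf{A}_{4}^{\gamma}$; when $N^{t}(s, a^{\star}) < m_{t}$, the $\puis \eta^{t}\mathcal{E}^{t}(s, a^{\star})$ form of the bonus covers the gap through the $\eta^{t} = \Em \Upsilon^{t} + \Rm\sqrt{m_{t}}$ scaling, exactly as in the complexity-side argument of Lemma~\ref{lem:adv-Vt-to-complexity}.

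The main obstacle will be the variance bookkeeping: the $\Var(\Vo)(s, a^{\star})$ pieces surfaced by $\mathbf{A}_{1}^{\gamma}$ and $\mathbf{A}_{2}^{\gamma}$ must cancel against those arising from the uniform relaxation of $f$, with the surviving second-moment term lining up precisely with $-\frac{\lambda_{t}}{2\Vm}(\Vo(s))^{2}$ through the Bellman identity for $\Vo$. Because the discount factor appears explicitly and the $\gamma^{-1}$ slack coming out of $\gamma P\Vo = \Vo - r$ is tighter than the analogous horizon slack in Lemma~\ref{lem:v-star-to-auxiliary}, the constants in $\mathbf{A}_{1}^{\gamma}$--$\mathbf{A}_{2}^{\gamma}$ and in the calibrations of $\Upsilon^{t}$ and $\eta^{t}$ listed in Table~\ref{tab:log_terms} must fit exactly. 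A secondary subtlety is that relaxing $(2 - P_{U}^{t}(s', \pi^{\star}(s'))) \leq 2$ inside $\hat{P}^{t}f$ discards next-state epistemic information, but the $(2 - \puis)$ refinement at $s$ is nonetheless recovered via the Bellman-induced combination described above, preserving the epistemic-resistance sharpening exploited in the downstream regret bound.
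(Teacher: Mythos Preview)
Your overall route is the same as the paper's: verify the pointwise supersolution inequality $(\mathcal{T}_1 f)(s) \leq f(s)$ by replaying the induction step of Lemma~\ref{lem:v-star-to-auxiliary}, then push to the fixed point via monotonicity and contraction. The paper's own proof is a one-line pointer back to that lemma together with Lemma~\ref{lem:infinite-horizon:var-decomposition}, so you have correctly fleshed out the skeleton it leaves implicit.

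There is, however, a concrete gap in your second-moment step. You propose to use the squared Bellman identity $\gamma^2(P\Vo(s,a^\star))^2 = (\Vo(s)-r(s,a^\star))^2$ to ``collapse the surviving $P(\Vo)^2$ piece into $(\Vo(s))^2/\gamma$''. But after applying $\mathbf{A}_2^\gamma$ what survives is $P(\Vo)^2(s,a^\star) = \mathbb{E}_{s'}[(\Vo(s'))^2]$, not $(P\Vo(s,a^\star))^2 = (\mathbb{E}_{s'}\Vo(s'))^2$; the Bellman identity only controls the latter. The two differ by exactly $\Var(\Vo)(s,a^\star)$, and it is the $-\Var$ released by that decomposition that absorbs the \emph{positive} variance residuals coming out of $\mathbf{A}_1^\gamma$ and $\mathbf{A}_2^\gamma$. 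Those residuals carry the same sign and do not, as you wrote, ``cancel against each other''. The paper handles this cleanly by invoking Lemma~\ref{lem:infinite-horizon:var-decomposition}, the discounted variance decomposition, which in one stroke gives
\[
-\gamma P(\Vo)^2(s,a^\star) \;\le\; -(\Vo(s))^2 - \gamma\,\Var(\Vo)(s,a^\star) + (1+\gamma)\Vm\, r(s,a^\star),
\]
supplying both the negative variance needed for cancellation and the $r(s,a^\star)$ remainder that folds into the $(2-\puis)$ coefficient. You should invoke that lemma in place of the bare Bellman-squared substitution; once you do, your remaining bookkeeping goes through as you describe.

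One minor remark: you appeal to $\mathbf{A}_4^\gamma$ for the reward deviation, whereas the lemma's stated hypothesis is only $\mathbf{A}_1^\gamma \cap \mathbf{A}_2^\gamma$. The paper inherits the very same need from its finite-horizon template (the $(1-\pufs)(r-\hat r^k)$ term there is clearly handled via the reward-concentration event before deferring to \cite{DBLP:conf/iclr/LeeO25}), so this is arguably a looseness in the paper's event specification rather than an error on your part.
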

\begin{proof}
  This follows the same procedure as the proof of Lemma~\ref{lem:v-star-to-auxiliary}, except that we apply Lemma~\ref{lem:infinite-horizon:var-decomposition}, the boundedness of the discounted value function, and the inequalities stated in the events $\mathbf{A}_{1}^{\gamma}$ and $\mathbf{A}_{2}^{\gamma}$.
\end{proof}

Now we prove Lemma~\ref{lem:infinite-horizon:quasi-optimism-functional}.
\begin{proof}[Proof of Lemma~\ref{lem:infinite-horizon:quasi-optimism-functional}]
  Denote $\Delta V \coloneq V^{\star} - \widetilde{V}^{t}$.

  Since $\mathcal{T}_{1}$ is a contraction mapping, by the Banach fixed-point theorem, there exists a fixed point $\bar{V}$ such that $\bar{V} = \lim_{k \rightarrow \infty}(\mathcal{T}_{1})^{k} g$ from an arbitrary initial point $g$.

  Note, $\Delta V \leq \mathcal{T}_{1} \Delta V$. By monotonicity and contraction of $\mathcal{T}_{1}$ from Lemma~\ref{lem:infinite-horizon:quasi-optimism:contraction-and-monotone}, we have $\Delta V \leq \mathcal{T}_{1} \Delta V \leq \lim_{k \rightarrow \infty}(\mathcal{T}_{1})^{k} \Delta V = \bar{V}$. By Lemma~\ref{lem:infinite-horizon:quasi-optimism:Tf-f}, we have $\mathcal{T}_{1}f \leq f$, by monotonicity and contraction again, we have $\bar{V} = \lim_{k \rightarrow \infty}(\mathcal{T}_{1})^{k} f \leq \mathcal{T}_{1} f \leq f$. Combining two sides, we conclude that $\Delta V \leq f$, which completes the proof.
\end{proof}

\subsection{Boundedness of Complexity}
\begin{lemma}
  \label{lem:infinite-horizon:auxiliary-to-approximate}
  For all $s \in \mathcal{S}, t \in \mathbb{N}$, it holds that
  \begin{equation*}
    \widetilde{V}^{t}(s) - V^{t}(s) \leq \Rm \lambda_{t} \coloneq \Phi_{t}.
  \end{equation*}
\end{lemma}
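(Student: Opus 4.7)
The plan is to mirror the argument of Lemma~\ref{lem:auxiliary-to-approximate} with the episode horizon $H$ replaced by the discounted effective horizon $1/(1-\gamma)$, and the Simulation Lemma invoked in its discounted form. Observe that $\widetilde{V}^{t}$ and $V^{t}$ are value functions of two MDPs that share the same transition kernel $\hat{P}^{t}$ and differ only in their rewards, namely $\tilde{r}^{t}$ versus $r_{\text{EUBRL}}^{t}$. The discounted Simulation Lemma then gives
\begin{equation*}
  \bigl\|\widetilde{V}^{t} - V^{t}\bigr\|_{\infty} \;\leq\; \frac{\|\tilde{r}^{t} - r_{\text{EUBRL}}^{t}\|_{\infty}}{1-\gamma},
\end{equation*}
so it suffices to bound the per-step reward gap by $\Rm \lambda_{t}(1-\gamma)$.

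The per-step gap is then handled by cases on $N^{t}$. When $N^{t} < m_{t}$, the bonus $b^{t}$ matches the epistemic term $P_{U}^{t}\eta^{t}\mathcal{E}^{t}$ by definition, so $\tilde{r}^{t}$ coincides with $r_{\text{EUBRL}}^{t}$ and the gap vanishes. When $N^{t} \geq m_{t}$, the bonus switches to $P_{U}^{t}\Rm + \Upsilon^{t}/N^{t}$; expanding $\eta^{t} = \Em \Upsilon^{t} + \Rm\sqrt{m_{t}}$ and cancelling the $\Upsilon^{t}/N^{t}$ and $P_{U}^{t}\hat{r}^{t}$ contributions leaves exactly the residual
\begin{equation*}
  \Delta_{r}^{t} \;=\; \frac{\Rm}{\Em}\left(\frac{1}{\sqrt{N^{t}}} - \frac{\sqrt{m_{t}}}{N^{t}}\right),
\end{equation*}
which is the same quantity controlled in the finite-horizon proof.

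Next, I would apply Lemma~\ref{lem:bound-r} with $\epsilon_{t} = \Rm \lambda_{t}/\Vm = \lambda_{t}(1-\gamma)$, noting that this choice is consistent with $m_{t} = \Vm^{2}/(\Rm^{2}\lambda_{t}^{2}) = 1/\epsilon_{t}^{2}$ as tabulated in Table~\ref{tab:log_terms}. This yields $(\sqrt{N^{t}} - \sqrt{m_{t}})/N^{t} \leq \epsilon_{t}$, so $\Delta_{r}^{t} \leq (\Rm/\Em)\lambda_{t}(1-\gamma) \leq \Rm \lambda_{t}(1-\gamma)$, where the last step uses the standing assumption $\Em \geq 1$. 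Dividing by $1-\gamma$ via the Simulation Lemma delivers the advertised bound $\widetilde{V}^{t}(s) - V^{t}(s) \leq \Rm \lambda_{t} = \Phi_{t}$.

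No step is genuinely hard: the only subtlety is that the critical point $m_{t}$ must be chosen so that its $1/(1-\gamma)^{2}$ scaling exactly absorbs the $1/(1-\gamma)$ blow-up from the Simulation Lemma, which is already built into the definition of $m_{t}$. The whole argument is essentially the finite-horizon proof transported to the discounted setting, with $H \mapsto 1/(1-\gamma)$; I do not anticipate any obstacle beyond bookkeeping.
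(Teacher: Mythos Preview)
Your proposal is correct and follows essentially the same approach as the paper: bound the per–state-action reward gap $\Delta_r^t$ by cases on $N^t$ versus $m_t$, invoke Lemma~\ref{lem:bound-r} with $\epsilon_t = \Rm\lambda_t/\Vm = \lambda_t(1-\gamma)$, use $\Em \geq 1$, and then apply the (discounted) Simulation Lemma to convert the reward bound into a value-function bound. The paper's own proof of Lemma~\ref{lem:infinite-horizon:auxiliary-to-approximate} simply points back to Lemma~\ref{lem:auxiliary-to-approximate} with the substitutions $H \mapsto 1/(1-\gamma)$ and $\VmH \mapsto \Vm$, which is exactly what you have carried out.
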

\begin{proof}
  See the proof of Lemma~\ref{lem:auxiliary-to-approximate}.
\end{proof}

\subsection{Boundedness of Accuracy}
In this section, we bound the accuracy term. Although it is tempting to use the same logic as in the previous section, it is worth noting that the nuance in the definition of $J$ prevents this, as we no longer have an argument analogous to $\mathcal{T}_{1}f \leq f$. We state the main result in advance, followed by its proof.
\begin{lemma}
  \label{lem:infinite-horizon:discounted:accuracy}
  For infinite-horizon discounted MDPs, under high-probability event $\cap_{i=1}^{4}\mathbf{A}_{i}^{\gamma}$, it holds that for all $s \in \mathcal{S}, t \in \mathbb{N}$,
  \begin{equation*}
    V^{t}(s) - V^{\pi_{t}}(s) \leq D_{\gamma}^{t}(s) + 2 J_{\gamma}^{t}(s).
  \end{equation*}
\end{lemma}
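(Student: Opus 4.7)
The plan is to adapt the contraction-mapping and Banach fixed-point argument of Lemma~\ref{lem:infinite-horizon:quasi-optimism-functional} to the accuracy term $W^t(s) := V^t(s) - V^{\pi_t}(s)$, since the stationarity of the infinite-horizon value function rules out the backward induction on $h$ that closed the finite-horizon proof of Lemma~\ref{lem:approximate-to-actual}.

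First I would establish a one-step discounted analog of the per-step bound used in Lemma~\ref{lem:approximate-to-actual} (its Lemma~\ref{lem:diff-accuracy}): under $\cap_{i=1}^{4}\mathbf{A}_{i}^{\gamma}$,
\begin{equation*}
  W^t(s) \leq \Delta_{\gamma}(D^{t}_{\gamma})(s, \pi_{t}(s)) + 2\,\beta^{t}(s, \pi_{t}(s)) + \gamma\, P\,W^t(s, \pi_{t}(s)).
\end{equation*}
This follows by decomposing $W^t(s) = (r_{b}^{\texttt{EUBRL}} - r)(s, \pi_{t}(s)) + \gamma(\hat{P}^{t} V^t - P V^{\pi_t})(s, \pi_{t}(s))$, splitting the transition difference through $(\hat{P}^{t} - P) V^{\star}$ to invoke $\mathbf{A}_{1}^{\gamma}$, controlling $(\hat{P}^{t} - P)(V^t - V^{\star})$ via $\mathbf{A}_{3}^{\gamma}$, and using $\mathbf{A}_{4}^{\gamma}$ together with the structure of $r_{b}^{\texttt{EUBRL}}$ (the $(1 - P_{U}^{t})$ weighting on the empirical reward and the $P_{U}^{t} \eta^{t}\mathcal{E}^{t}$ bonus) for the reward discrepancy. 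The bookkeeping parallels the finite-horizon derivation, with each next-step term carrying the discount $\gamma$ rather than simply advancing the horizon index.

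Next I would define the Bellman-like operator
\begin{equation*}
  (\mathcal{T}_{2} U)(s) := \min\!\left\{\,\Delta_{\gamma}(D^{t}_{\gamma})(s, \pi_{t}(s)) + 2\,\beta^{t}(s, \pi_{t}(s)) + \gamma\, P\,U(s, \pi_{t}(s)),\ \Vm\,\right\}.
\end{equation*}
Because taking a min with a constant is $1$-Lipschitz, $\mathcal{T}_{2}$ is a $\gamma$-contraction under $\|\cdot\|_{\infty}$ and is monotone, by the same verification as in Lemma~\ref{lem:infinite-horizon:quasi-optimism:contraction-and-monotone}. Invoking the standing assumption $V^t(s) = \mathcal{O}(\Vm)$ (with the constant taken to be $1$) gives $W^t \leq \Vm$, which combined with the one-step bound yields $W^t \leq \mathcal{T}_{2} W^t$. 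By monotonicity and Banach's theorem, $W^t \leq \bar{W}$, where $\bar{W}$ is the unique fixed point of $\mathcal{T}_{2}$.

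Finally I would verify that $f(s) := D^{t}_{\gamma}(s) + 2 J^{t}_{\gamma}(s)$ satisfies $\mathcal{T}_{2} f \leq f$; then $\bar{W} = \lim_{k} \mathcal{T}_{2}^{k} f \leq f$ and the lemma follows. Expanding $D^{t}_{\gamma}$ via its Bellman identity $D^{t}_{\gamma}(s) = \Delta_{\gamma}(D^{t}_{\gamma})(s, \pi_{t}(s)) + \gamma P D^{t}_{\gamma}(s, \pi_{t}(s))$ reduces $(\mathcal{T}_{2} f)(s)$ to $\min\{D^{t}_{\gamma}(s) + 2[\beta^{t}(s, \pi_{t}(s)) + \gamma P J^{t}_{\gamma}(s, \pi_{t}(s))],\, \Vm\}$. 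A case split on the $\min$ defining $J^{t}_{\gamma}$ closes the argument: when the cap is inactive the bracket equals $2 J^{t}_{\gamma}(s)$ and $(\mathcal{T}_{2} f)(s) \leq f(s)$ directly, while when $J^{t}_{\gamma}(s) = \Vm$ one has $f(s) \geq 2\Vm$ whereas $(\mathcal{T}_{2} f)(s) \leq \Vm$ by the outer cap. I expect this last step to be the main obstacle: the two interlocking $\min$ caps—one in $\mathcal{T}_{2}$ and one in $J^{t}_{\gamma}$—require a careful case analysis that must cover all combinations of cap activation at the current and next states, and a secondary technical wrinkle is ensuring the $P_{U}^{t}$-weighted reward modification is absorbed cleanly into the definition of $\beta^{t}$ so that the constants match the finite-horizon ones exactly.
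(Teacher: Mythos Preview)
Your plan is correct and actually takes a cleaner route than the paper's. The paper defines $\mathcal{T}_{2}$ \emph{without} the outer $\min\{\cdot,\Vm\}$ cap (Definition~\ref{def:accuracy:cont-map}), so $\mathcal{T}_{2}$ is affine; it then cannot run the ``$\mathcal{T}_{2}f\le f$ hence $\bar W\le f$'' argument you propose (because without the cap, $\mathcal{T}_{2}f>f$ precisely at the states where the $J^{t}_{\gamma}$-cap is active). Instead it proves $\Delta V\le D+2J^{t}_{\gamma}$ by a direct contradiction: writing $E:=\Delta V-(D+2J^{t}_{\gamma})$, using affinity to get $E\le 2(g-f)+\gamma PE$ with $g:=\beta^{t}+\gamma PJ^{t}_{\gamma}$ and $f:=\min\{g,\Vm\}=J^{t}_{\gamma}$, and noting that at a state $s^{\star}$ maximizing $E$ with $g(s^{\star})<\Vm$ one obtains $(1-\gamma)E(s^{\star})\le 0$. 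The capped case is handled separately via the crude bound $\Delta V\le D+2\Vm$, which uses $D\ge -\tfrac{6}{7}\Vm$. Your approach---adding the outer cap, then verifying $\mathcal{T}_{2}f\le f$ state-by-state---avoids the affinity/contradiction detour entirely and keeps the proof structurally parallel to Lemma~\ref{lem:infinite-horizon:quasi-optimism-functional}, which is a genuine simplification.

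One small correction: in your capped branch you write ``$f(s)\ge 2\Vm$'', which would require $D^{t}_{\gamma}(s)\ge 0$, and that need not hold. What you actually need (and what suffices) is $f(s)\ge \Vm$, i.e.\ $D^{t}_{\gamma}(s)\ge -\Vm$; this follows from the same lower bound $D^{t}_{\gamma}\ge -\tfrac{6}{7}\Vm$ the paper invokes. With that fix your case split is complete, and the ``interlocking caps'' worry you flag dissolves: the outer cap on $\mathcal{T}_{2}$ is exactly what absorbs the overshoot when the $J^{t}_{\gamma}$-cap is active.
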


Before we dive into details, we define the following relevant quantities.
\begin{definition}
  Let $D_{\gamma}^{t}(s)$ be defined by
  \begin{equation*}
    D_{\gamma}^{t}(s) \coloneq \lambda_{t}\left(\left(3 - 2 \pui\right) \Vo(s) - \frac{1}{2\Vm}(\Vo)^{2}(s)\right) + \frac{1}{7\Vm}\left(\left(S_{t}\right)^{2} - \left(\widehat{V}^{t}(s) + S_{t}\right)^{2}\right),
  \end{equation*}
  where $\beta^{t}(s, a)$:
  \begin{align*}
    \beta^{t}(s, a) & = \pu^{t}(s, a)\eta^{t} \mathcal{E}^{t}(s, a) + \beta_{1}^{t}(s, a) + (1 - \pui)\frac{\Vm \ell_{1}}{\lambda_{t} N^{t}(s, a)} \\
    \widehat{V}^{t}(s) & \coloneq V^{t}(s) - \Vo(s) \\
    S_{t} & \coloneq \left(\frac{3}{2} - \puis \right)\lambda_{t} \Vm + \Phi_{t},
  \end{align*}
  in which
  \begin{equation*}
    \beta_{1}^{t}(s, a) \coloneq \frac{1}{N^{t}(s, a)} \left(\frac{3\Vm\ell_{1}}{\lambda_{t}} + 30\Vm S\ell_{3, t}(s, a)\right).
  \end{equation*}
\end{definition}

\begin{definition}
  \label{def:accuracy:cont-map}
  Let operator $\mathcal{T}_{2}$ be defined by
  \begin{align*}
    \left(\mathcal{T}_{2} V\right)(s) & \coloneq \Delta_{\gamma}(D_{\gamma}^{t})(s, \pi_{t}(s)) + 2\beta^{t}(s, \pi_{t}(s)) + \gamma P(V)(s, \pi_{t}(s)).
  \end{align*}
\end{definition}

\begin{definition}
  $\mathcal{T}$ is affine if, for any vector $V$ and $E$
  \begin{equation*}
    \mathcal{T}(V + E) = \mathcal{T}V + \gamma P E.
  \end{equation*}
\end{definition}

\begin{lemma}
  \label{lem:infinite-horizon:accuracy:contraction-and-monotone-and-affine}
  $\mathcal{T}_{2}$ is a contraction mapping, monotone, and affine.
\end{lemma}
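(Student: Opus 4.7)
The plan is to verify the three properties by isolating the only $V$-dependent term in $\mathcal{T}_{2}$. By Definition~\ref{def:accuracy:cont-map},
\begin{equation*}
  (\mathcal{T}_{2} V)(s) = \Delta_{\gamma}(D_{\gamma}^{t})(s, \pi_{t}(s)) + 2\beta^{t}(s, \pi_{t}(s)) + \gamma P V(s, \pi_{t}(s)),
\end{equation*}
where the first two summands depend only on $s$ (through the fixed policy $\pi_{t}$ and the fixed auxiliary functions), while the linear dependence on $V$ is confined to $\gamma P V(s, \pi_{t}(s))$. This observation will do the bulk of the work, so all three properties reduce to standard facts about the Bellman-style operator $V \mapsto \gamma P V(\cdot, \pi_{t}(\cdot))$.

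First I would handle affineness, which is the cleanest: for any $V$ and $E$, linearity of expectation with respect to the probability kernel $P$ gives $\gamma P(V+E)(s, \pi_{t}(s)) = \gamma P V(s, \pi_{t}(s)) + \gamma P E(s, \pi_{t}(s))$, and since the two $V$-independent terms in $\mathcal{T}_{2}(V + E)$ and $\mathcal{T}_{2} V$ coincide, we read off $\mathcal{T}_{2}(V + E) = \mathcal{T}_{2} V + \gamma P E$. Next, for monotonicity, suppose $U(s) \leq V(s)$ for all $s$. Because $P(\cdot \mid s, \pi_{t}(s))$ is a probability distribution and $\gamma \geq 0$, we get $\gamma P U(s, \pi_{t}(s)) \leq \gamma P V(s, \pi_{t}(s))$; the identical $V$-independent terms cancel in $(\mathcal{T}_{2} U)(s) - (\mathcal{T}_{2} V)(s)$, yielding $\mathcal{T}_{2} U \leq \mathcal{T}_{2} V$ pointwise.

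Finally, for contraction under $\|\cdot\|_{\infty}$, the $V$-independent terms again cancel in the difference, giving $(\mathcal{T}_{2} U - \mathcal{T}_{2} V)(s) = \gamma P(U - V)(s, \pi_{t}(s))$. Bounding $|P(U - V)(s, \pi_{t}(s))| \leq \|U - V\|_{\infty}$ via Jensen (or the fact that $P(\cdot \mid s, a)$ is a probability distribution summing to $1$) and taking the supremum over $s$ yields $\|\mathcal{T}_{2} U - \mathcal{T}_{2} V\|_{\infty} \leq \gamma \|U - V\|_{\infty}$, which is strict contraction since $\gamma \in [0, 1)$. Honestly, there is no real obstacle here: the entire argument mirrors the proof of Lemma~\ref{lem:infinite-horizon:quasi-optimism:contraction-and-monotone} for $\mathcal{T}_{1}$, and the only thing worth flagging is that one must carefully note that the auxiliary terms $\Delta_{\gamma}(D_{\gamma}^{t})$ and $2\beta^{t}$ are genuinely $V$-independent, so packaging them as a fixed offset $M(s)$ (as was done for $\mathcal{T}_{1}$) makes all three verifications immediate.
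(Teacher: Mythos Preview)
Your proposal is correct and takes essentially the same approach as the paper: the paper's proof defers contraction and monotonicity to the argument for $\mathcal{T}_{1}$ in Lemma~\ref{lem:infinite-horizon:quasi-optimism:contraction-and-monotone} (which you have simply spelled out) and verifies affineness by the same linearity-of-$P$ computation you give. The only difference is presentational---you are more explicit where the paper merely says ``similar to''---but the underlying argument is identical.
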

\begin{proof}
  The argument for contraction and monotonicity is similar to that of proof of Lemma~\ref{lem:infinite-horizon:quasi-optimism:contraction-and-monotone}. For the affine part, we observe:
  \begin{align*}
    \mathcal{T}_{2}(V + E) & = \Delta_{\gamma}(D_{\gamma}^{t}) + 2\beta^{t} + \gamma P(V + E) \\
                           & = \Delta_{\gamma}(D_{\gamma}^{t}) + 2\beta^{t} + \gamma PV + \gamma PE  \\
                           & = \mathcal{T}_{2}V + \gamma PE.
  \end{align*}
\end{proof}

\begin{lemma}
  \label{lem:infinite-horizon:diff-accuracy}
  Under high-probability event $\cap_{i=1}^{4}\mathbf{A}_{i}^{\gamma}$, it holds that
  \begin{equation*}
    \Delta_{\gamma}\left(V^{t} - V^{\pi_{t}}\right)(s, \pi_{t}(a))\leq \Delta_{\gamma}(D^{t}_{\gamma})(s, \pi_{t}(a)) + 2\beta^{t}(s, \pi_{t}(a)).
  \end{equation*}
\end{lemma}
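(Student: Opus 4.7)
The lemma is the infinite-horizon analogue of the single-step accuracy residual (Lemma~\ref{lem:diff-accuracy}) used in the finite-horizon argument, and the plan is to follow the same high-level structure: reduce to a one-step Bellman residual, split the transition term by inserting $V^{\star}$, and align the resulting bounds term by term with the expanded definitions of $D^{t}_{\gamma}$ and $\beta^{t}$. Fix $a = \pi_{t}(s)$. Because $\pi_{t}$ is greedy under the empirical MDP $(\mathcal{S}, \mathcal{A}, \hat{P}^{t}, r^{\texttt{EUBRL}}_{t}, \gamma)$, Bellman gives $V^{t}(s) = r^{\texttt{EUBRL}}_{t}(s, a) + \gamma \hat{P}^{t} V^{t}(s, a)$, while the true Bellman equation for $\pi_{t}$ gives $V^{\pi_{t}}(s) = r(s, a) + \gamma P V^{\pi_{t}}(s, a)$. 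Subtracting and rearranging yields the clean identity
\[
  \Delta_{\gamma}(V^{t} - V^{\pi_{t}})(s, a) = \bigl(r^{\texttt{EUBRL}}_{t}(s, a) - r(s, a)\bigr) + \gamma (\hat{P}^{t} - P) V^{t}(s, a),
\]
so it suffices to bound the right-hand side by $\Delta_{\gamma}(D^{t}_{\gamma})(s, a) + 2 \beta^{t}(s, a)$.

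For the reward residual, write $r^{\texttt{EUBRL}}_{t} = (1 - P_{U}^{t})\hat{r}^{t} + P_{U}^{t} \eta^{t} \mathcal{E}^{t}$ and apply event $\mathbf{A}_{4}^{\gamma}$ to obtain
\[
  r^{\texttt{EUBRL}}_{t}(s, a) - r(s, a) \leq P_{U}^{t}(s, a)\, \eta^{t} \mathcal{E}^{t}(s, a) + (1 - P_{U}^{t}(s, a)) \tfrac{\Rm \ell_{1}}{\lambda_{t} N^{t}(s, a)} + (1 - P_{U}^{t}(s, a))\, \lambda_{t}\, r(s, a).
\]
The first two summands on the right are already present in $\beta^{t}(s, a)$; the third is charged to $\Delta_{\gamma}(D^{t}_{\gamma})(s, a)$ by using $r(s, a) \leq \Delta_{\gamma}(V^{\star})(s, a)$ (which holds since $a = \pi_{t}(s)$ is suboptimal for $V^{\star}$), so that it fits inside the $\lambda_{t}(3 - 2 P_{U}^{t}) V^{\star}(s) - \gamma P[\lambda_{t}(3 - 2 P_{U}^{t}) V^{\star}](s, a)$ piece of $\Delta_{\gamma}(D^{t}_{\gamma})$.

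For the transition residual, split $V^{t} = V^{\star} + \widehat{V}^{t}$. The $V^{\star}$ part is controlled by $\mathbf{A}_{1}^{\gamma}$ together with the variance decomposition (Lemma~\ref{lem:infinite-horizon:var-decomposition}) and $\mathbf{A}_{2}^{\gamma}$, which convert $\Var(V^{\star})(s, a)$ into a $P(V^{\star})^{2}(s, a) - (PV^{\star}(s, a))^{2}$ expression whose quadratic-in-$V^{\star}$ part matches the $\tfrac{\lambda_{t} \gamma}{2\Vm} P(V^{\star})^{2}(s, a)$ contribution inside $\Delta_{\gamma}(D^{t}_{\gamma})$, with the remaining slack $\tfrac{3 \gamma \Vm \ell_{1}}{\lambda_{t} N^{t}(s, a)}$ folded into $\beta_{1}^{t}$. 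The $\widehat{V}^{t}$ part is controlled by the elementwise Bernstein inequality $\mathbf{A}_{3}^{\gamma}$ combined with Cauchy--Schwarz; an AM--GM step with weight $\tfrac{1}{7\Vm}$ then yields
\[
  \gamma (\hat{P}^{t} - P) \widehat{V}^{t}(s, a) \leq \tfrac{\gamma}{7\Vm} P (\widehat{V}^{t})^{2}(s, a) + \mathcal{O}\!\left(\tfrac{S \Vm \ell_{3, t}(s, a)}{N^{t}(s, a)}\right),
\]
where the leading quadratic term matches the corresponding contribution that $\Delta_{\gamma}(D^{t}_{\gamma})$ derives from the $-\tfrac{1}{7\Vm}(\widehat{V}^{t} + S_{t})^{2}$ factor of $D^{t}_{\gamma}$, and the $\mathcal{O}$ slack is absorbed into the $30 \Vm S \ell_{3, t}/N^{t}$ term of $\beta_{1}^{t}$.

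The hard part is the coefficient bookkeeping in that last step. The weight $\tfrac{1}{7\Vm}$ appearing in $D^{t}_{\gamma}$ is chosen precisely so that the AM--GM residue matches the $30 \Vm S \ell_{3, t}/N^{t}$ ingredient of $\beta_{1}^{t}$ after multiplication by the factor $2$, and the shift by $S_{t}$ in the square $(\widehat{V}^{t} + S_{t})^{2}$ is designed to absorb the linear-in-$\widehat{V}^{t}$ cross terms that arise both from the AM--GM step and from the interplay between the $V^{\star}$ bookkeeping and the quasi-optimism gap between $V^{t}$ and $V^{\star}$. Once these pairings are identified the remaining work is routine algebra, proceeding in direct parallel to the finite-horizon Lemma~\ref{lem:diff-accuracy}; the novelty is only that the backward-induction step used there is carried implicitly here by the discount factor $\gamma$ inside $\Delta_{\gamma}$.
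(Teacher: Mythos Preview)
Your proposal follows essentially the same route as the paper, which itself defers to the finite-horizon Lemma~\ref{lem:diff-accuracy} (and thence to Lemma~13 of \citet{DBLP:conf/iclr/LeeO25}): one-step Bellman residual, reward control via $\mathbf{A}_{4}^{\gamma}$, the split $V^{t} = V^{\star} + \widehat{V}^{t}$, events $\mathbf{A}_{1}^{\gamma}$--$\mathbf{A}_{2}^{\gamma}$ with Lemma~\ref{lem:infinite-horizon:var-decomposition} for the $V^{\star}$ piece, and $\mathbf{A}_{3}^{\gamma}$ for the $\widehat{V}^{t}$ piece. Two small points are worth flagging. First, the paper singles out the inequality $\widehat{V}^{t}(s) + S_{t} \geq 0$ (obtained by combining Corollary~\ref{cor:infinite-horizon:quasi-optimism-bound} with Lemma~\ref{lem:infinite-horizon:auxiliary-to-approximate}) as the one place where the argument departs from the cited reference; you allude to this only obliquely when you say $S_{t}$ ``absorbs cross terms,'' but it is precisely this non-negativity that lets the $(\widehat{V}^{t}+S_{t})^{2}$ bookkeeping close, so it should be stated explicitly. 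Second, your justification $r(s,a) \leq \Delta_{\gamma}(V^{\star})(s,a)$ is correct, but the reason is simply Bellman optimality ($V^{\star}(s) \geq r(s,a) + \gamma PV^{\star}(s,a)$ for every $a$), not suboptimality of $a = \pi_{t}(s)$.
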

\begin{proof}
  The proof is similar to that of Lemma~\ref{lem:diff-accuracy}, except that it uses $\widehat{V}^{t}(s) + S_{t} \geq 0$ from Corollary~\ref{cor:infinite-horizon:quasi-optimism-bound} and Lemma~\ref{lem:infinite-horizon:auxiliary-to-approximate}, an application of Lemma~\ref{lem:infinite-horizon:var-decomposition}, and the boundedness of the discounted value for variance decomposition, together with an adjustment of some constants under the event $\mathbf{A}_{3}^{\gamma}$.
\end{proof}

Now we prove Lemma~\ref{lem:infinite-horizon:discounted:accuracy}.
\begin{proof}[Proof of Lemma~\ref{lem:infinite-horizon:discounted:accuracy}]
  Denote $\Delta V \coloneq V^{t} - V^{\pi_{t}}$.

  Note, $\Delta V(s) = \Delta_{\gamma}\left(\Delta V\right)(s, \pi_{t}(s)) + \gamma P\left(\Delta V\right)(s, \pi_{t}(s))$. By Lemma~\ref{lem:infinite-horizon:diff-accuracy}, we have
  \begin{align*}
    \Delta V \leq \mathcal{T}_{2} \Delta V. \tag*{\textbf{Condition 1}}
  \end{align*}

  For brevity, we denote $g \coloneq \beta^{t} + \gamma P J^{t}$, $f \coloneq \min\{g, \Vm\}$, and $D \coloneq D_{\gamma}^{t}$. We observe
  \begin{align*}
    \mathcal{T}_{2} (D + 2 f) = D + 2 g. \tag*{\textbf{Condition 2}}
  \end{align*}
  Moreover, since $D \geq -\frac{6}{7}\Vm$ and $\Delta V \leq \Vm$, we have
  \begin{align*}
    \Delta V \leq D + 2 \Vm. \tag*{\textbf{Condition 3}}
  \end{align*}

  Now, we claim $\Delta V \leq D + 2 f$ is true. We consider two cases:
  \paragraph{Case 1: $g(s) \geq \Vm$}
  For any state $s$ where $g(s) \geq \Vm$, the function $f(s)$ is defined as $f(s) = \Vm$. The inequality we want to prove becomes $\Delta V(s) \leq D(s) + 2 \Vm$, which is true by \textbf{Condition 3}.

  \paragraph{Case 2: $g(s) < \Vm$}
  For states where $g(s) < \Vm$, the function $f(s)$ is now defined as $f(s) = g(s)$. We prove by contradiction. Assume there is at least one state $s$ where $g(s) < \Vm$ and the desired inequality is false.

  We define an ``error'' function $E \coloneq \Delta V - (D + 2f)$. By the assumption, the set of states $\Xi \coloneq \{s \in \mathcal{S} : E(s) > 0\}$ is non-empty. Let $E^{\star} \coloneq \sup_{s \in \Xi} E(s)$, then $E^{\star} > 0$.

  We start with \textbf{Condition 1}, that is, $\Delta V \leq T \Delta V$, and substitute $\Delta V = E + (D + 2f)$, we get
  \begin{equation*}
    \label{eq:EDF}
    E + (D + 2f) \leq \mathcal{T}_{2}(E + D + 2f)
  \end{equation*}

  By the affinity in Lemma~\ref{lem:infinite-horizon:accuracy:contraction-and-monotone-and-affine}, we can write $\mathcal{T}_2(E + D + 2f) = \mathcal{T}_2(D + 2f) + \gamma PE$. By \textbf{Condition 2}, we have $\mathcal{T}_2(D + 2f) = D + 2g$. Combining this with Equation~\ref{eq:EDF}, we obtain:
  \begin{equation*}
    E + (D + 2f) \leq (D + 2g) + \gamma PE.
  \end{equation*}

  Rearranging it, we get:
  \begin{equation*}
    E \leq 2 (g - f) + \gamma PE.
  \end{equation*}
  Now, let us consider a state $s^{\star}$ where the error is maximal, i.e. $E(s^{\star}) = E^{\star}$. It must hold that:
  \begin{align*}
    E(s^{\star}) & \leq 2 (g(s^{\star}) - f(s^{\star})) + \gamma (PE)(s^{\star}) \\
                 & \leq 2 (g(s^{\star}) - f(s^{\star})) + \gamma E(s^{\star}).
  \end{align*}
  Thus, we get
  \begin{equation*}
    (1 - \gamma)E(s^{\star}) \leq 2 (g(s^{\star}) - f(s^{\star})).
  \end{equation*}
  Since $g(s) = f(s)$ whenever $g(s) < \Vm$, the above equals to zero, implying $E(s^{\star}) \leq 0$. This leads to a contradiction. Therefore we conclude that $\Delta V \leq D + 2 f$.
\end{proof}

Bounding $D_{\gamma}^{t}(s)$ using steps similar to those in the proof of Lemma~\ref{lem:approximate-to-actual}, we obtain:
\begin{corollary}
  \label{cor:infinite-horizon:discounted:accuracy-final-bound}
  For infinite-horizon discounted MDPs, under high-probability event $\cap_{i=1}^{4}\mathbf{A}_{i}^{\gamma}$, it holds that for all $s \in \mathcal{S}, t \in \mathbb{N}$,
  \begin{equation*}
    V^{t}(s) - V^{\pi_{t}}(s) \leq \left(3 - 2 \pu^{t}(s) - \frac{2}{7} \pu^{t, \star}(s)\right) \lambda_{t} \Vm + 2 J_{\gamma}^{t}(s) + \mathcal{O}\left(\frac{\Phi_{t}^{2}}{\Vm} + \lambda_{t} \Phi_{t} \right).
  \end{equation*}
\end{corollary}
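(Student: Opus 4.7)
The proof proposal is to combine Lemma~\ref{lem:infinite-horizon:discounted:accuracy}, which already gives $V^{t}(s) - V^{\pi_{t}}(s) \leq D_{\gamma}^{t}(s) + 2 J_{\gamma}^{t}(s)$, with a direct algebraic bound on $D_{\gamma}^{t}(s)$. This parallels the last few lines of the proof of Lemma~\ref{lem:approximate-to-actual}, with $\VmH$ replaced by $\Vm$ and episodic indices $(k,h)$ replaced by the step index $t$. The key simplification is that $D_{\gamma}^{t}(s)$ is a purely pointwise quantity, so no Bellman/backward-induction argument is needed here — the work is entirely algebraic.

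I would decompose $D_{\gamma}^{t}(s) = \lambda_{t} I_{2} + \tfrac{1}{7\Vm} I_{3}$ where $I_{2} \coloneq (3 - 2\pui)\Vo(s) - \tfrac{1}{2\Vm}(\Vo)^{2}(s)$ and $I_{3} \coloneq S_{t}^{2} - (\widehat{V}^{t}(s) + S_{t})^{2}$. First I would bound $I_{2}$: on $[0,\Vm]$ the quadratic $x \mapsto (3-2\pui)x - \tfrac{1}{2\Vm}x^{2}$ with coefficient $3 - 2\pui \in [1,3]$ is maximized at the right endpoint $x=\Vm$, giving $I_{2} \leq (\tfrac{5}{2} - 2\pui)\Vm$. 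Next I would bound $I_{3}$ by completing the square: $I_{3} = -(\widehat{V}^{t}(s))^{2} - 2 S_{t}\widehat{V}^{t}(s) \leq S_{t}^{2}$. Expanding $S_{t}^{2} = (\tfrac{3}{2} - \puis)^{2} \lambda_{t}^{2}\Vm^{2} + 2(\tfrac{3}{2} - \puis)\lambda_{t}\Vm\Phi_{t} + \Phi_{t}^{2}$ and dividing by $7\Vm$ produces a leading term $\tfrac{1}{7}(\tfrac{3}{2} - \puis)^{2} \lambda_{t}^{2}\Vm$ together with cross terms of order $\lambda_{t}\Phi_{t}$ and $\Phi_{t}^{2}/\Vm$, which get absorbed into the stated $\mathcal{O}(\cdot)$ remainder.

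Combining, I would write
\begin{equation*}
  D_{\gamma}^{t}(s) \;\leq\; \left(\tfrac{5}{2} - 2\pui\right)\lambda_{t}\Vm + \tfrac{1}{7}\left(\tfrac{3}{2} - \puis\right)^{2}\lambda_{t}^{2}\Vm + \mathcal{O}\!\left(\tfrac{\Phi_{t}^{2}}{\Vm} + \lambda_{t}\Phi_{t}\right),
\end{equation*}
and then use $\lambda_{t} \in (0,1]$ so that $\lambda_{t}^{2} \leq \lambda_{t}$ to pull the second leading term onto the same scale as the first. What remains is the elementary inequality
\begin{equation*}
  \tfrac{5}{2} - 2\pui + \tfrac{1}{7}\left(\tfrac{3}{2} - \puis\right)^{2} \;\leq\; 3 - 2\pui - \tfrac{2}{7}\puis,
\end{equation*}
i.e.\ $(\tfrac{3}{2} - \puis)^{2} \leq \tfrac{7}{2} - 2\puis$, equivalently $\puis^{2} - \puis - \tfrac{5}{4} \leq 0$, which holds for all $\puis \in [0,1]$ since the left-hand side is maximized at the endpoints with value $-\tfrac{5}{4}$.

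The only genuine obstacle — and the main subtlety hidden in the finite-horizon template — is step (iv): verifying that the quadratic-in-$\puis$ slack is exactly tight enough to extract the $-\tfrac{2}{7}\puis$ savings while absorbing the extra $\lambda_{t}^{2}$ factor using $\lambda_{t} \leq 1$. Everything else reduces to substituting the infinite-horizon definition of $S_{t}$ and $\widehat{V}^{t}$, invoking Corollary~\ref{cor:infinite-horizon:quasi-optimism-bound} and Lemma~\ref{lem:infinite-horizon:auxiliary-to-approximate} to justify $\widehat{V}^{t}(s) + S_{t} \geq 0$ (used implicitly in the $I_{3}$ bound being achieved at the nonnegative boundary), and adding $2 J_{\gamma}^{t}(s)$ to obtain the claimed inequality.
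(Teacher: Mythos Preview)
Your proposal is correct and follows essentially the same route as the paper: invoke Lemma~\ref{lem:infinite-horizon:discounted:accuracy} and then bound $D_{\gamma}^{t}(s)$ by the same algebraic steps used in the proof of Lemma~\ref{lem:approximate-to-actual}, with the discounted quantities substituted for the episodic ones. One minor remark: the inequality $I_{3} = S_{t}^{2} - (\widehat{V}^{t}(s)+S_{t})^{2} \leq S_{t}^{2}$ holds unconditionally (a square is dropped), so the nonnegativity $\widehat{V}^{t}(s)+S_{t}\geq 0$ is not actually needed at this point---it is used upstream in Lemma~\ref{lem:infinite-horizon:diff-accuracy}, which feeds into Lemma~\ref{lem:infinite-horizon:discounted:accuracy}, not in the $D_{\gamma}^{t}$ bound itself.
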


\subsection{Per-step Regret}
\begin{theorem}[Restatement of Theorem~\ref{thm:infinite-horizon:discounted:per-step-regret}]
  \label{thm-restatement:infinite-horizon:discounted:per-step-regret}
  For infinite-horizon discounted MDPs, under high-probability event $\bm{\mathcal{B}}$, it holds that for all $s \in \mathcal{S}, t \in \mathbb{N}$,
  \begin{equation*}
    V^{\star}(s) - V^{\pi_{t}}(s) \leq \left(\frac{9}{2} - \ER^{t}(s) \right) \lambda_{t} \Vm + 2 J^{t}_{\gamma}(s) + \mathcal{O}\left(\Phi_{t}\left(1 + \frac{\Phi_{t}}{\Vm}\right)\right).
  \end{equation*}
\end{theorem}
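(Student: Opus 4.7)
The plan is to assemble the three term-by-term bounds that have already been established earlier in the excerpt. Since the per-step regret admits the decomposition
\begin{equation*}
V^{\star}(s) - V^{\pi_{t}}(s) = \bigl[V^{\star}(s) - \widetilde{V}^{t}(s)\bigr] + \bigl[\widetilde{V}^{t}(s) - V^{t}(s)\bigr] + \bigl[V^{t}(s) - V^{\pi_{t}}(s)\bigr],
\end{equation*}
and since the event $\bm{\mathcal{B}} = \cap_{i=1}^{6}\mathbf{A}^{\gamma}_{i}$ contains all the sub-events needed for each of the three building blocks, the theorem will follow by summing the three previously proved bounds and doing a small amount of coefficient bookkeeping.

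First I would apply Corollary~\ref{cor:infinite-horizon:quasi-optimism-bound}, which under $\mathbf{A}^{\gamma}_{1}\cap\mathbf{A}^{\gamma}_{2}$ gives a quasi-optimism bound of $\lambda_{t}\bigl(\tfrac{3}{2} - P_{U}^{t,\star}(s)\bigr)\Vm$. Next, the complexity term is controlled without any probabilistic event by Lemma~\ref{lem:infinite-horizon:auxiliary-to-approximate}, contributing exactly $\Phi_{t} = \Rm\lambda_{t}$. Finally, Corollary~\ref{cor:infinite-horizon:discounted:accuracy-final-bound} bounds the accuracy gap under $\cap_{i=1}^{4}\mathbf{A}^{\gamma}_{i}$ by
\begin{equation*}
\bigl(3 - 2 P_{U}^{t}(s) - \tfrac{2}{7} P_{U}^{t,\star}(s)\bigr)\lambda_{t}\Vm + 2 J^{t}_{\gamma}(s) + \mathcal{O}\!\left(\tfrac{\Phi_{t}^{2}}{\Vm} + \lambda_{t}\Phi_{t}\right).
\end{equation*}

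Summing these three pieces, the coefficient of $\lambda_{t}\Vm$ aggregates to $\tfrac{3}{2} - P_{U}^{t,\star}(s) + 3 - 2 P_{U}^{t}(s) - \tfrac{2}{7} P_{U}^{t,\star}(s) = \tfrac{9}{2} - 2 P_{U}^{t}(s) - \tfrac{9}{7} P_{U}^{t,\star}(s) = \tfrac{9}{2} - \ER^{t}(s)$, which matches the claim exactly. The remaining terms $\Phi_{t} + \mathcal{O}(\Phi_{t}^{2}/\Vm + \lambda_{t}\Phi_{t})$ collapse into $\mathcal{O}\bigl(\Phi_{t}(1 + \Phi_{t}/\Vm)\bigr)$ after noting that $\lambda_{t}\le 1$, so $\lambda_{t}\Phi_{t}\le\Phi_{t}$ can be absorbed into the leading $\Phi_{t}$ term.

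There is essentially no obstacle beyond verifying the coefficient arithmetic, since all the analytical heavy lifting, namely the contraction-mapping argument for quasi-optimism, the simulation-lemma-type bound for complexity, and the affine contraction argument plus $D^{t}_{\gamma}$-bounding for accuracy, has already been carried out in Lemmas~\ref{lem:infinite-horizon:quasi-optimism-functional}, \ref{lem:infinite-horizon:auxiliary-to-approximate}, and \ref{lem:infinite-horizon:discounted:accuracy} and their corresponding corollaries. The only care needed is to ensure the high-probability event used to combine the three bounds is $\bm{\mathcal{B}}$, which is covered by the union bound setup in the excerpt.
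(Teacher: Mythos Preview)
Your proposal is correct and follows exactly the paper's approach: the paper's proof is a one-line ``Combining Corollaries~\ref{cor:infinite-horizon:quasi-optimism-bound} and~\ref{cor:infinite-horizon:discounted:accuracy-final-bound} with Lemma~\ref{lem:infinite-horizon:auxiliary-to-approximate} gives the desired result,'' and you have faithfully unpacked that combination, including the coefficient arithmetic that yields $\tfrac{9}{2}-\ER^{t}(s)$ and the absorption of the lower-order $\Phi_{t}$ terms.
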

\begin{proof}
  Combining Corollaries \ref{cor:infinite-horizon:quasi-optimism-bound} and \ref{cor:infinite-horizon:discounted:accuracy-final-bound} with Lemma~\ref{lem:infinite-horizon:auxiliary-to-approximate} gives the desired result.
\end{proof}

\subsection{Boundedness of $J^{t}_{\gamma}$}
\label{sec:bound-J}
\begin{lemma}[\citep{DBLP:conf/iclr/LeeO25}]
  \label{lem:lemma-36-Lee-25}
  Let $C > 0$ be a constant and $\{ X_t \}_{t=1}^\infty$ be a martingale difference sequence with respect to a filtration $\{ \mathcal{F}_t \}_{t=0}^\infty$ with $X_t \le C$ almost surely for all $t \in \mathbb{N}$.
  Then, for any $\lambda \in (0, 1]$ and $\delta \in (0, 1]$, the following inequality holds for all $n \in \mathbb{N}$ with probability at least $1 - \delta$:
  \begin{equation*}
    \sum_{t=1}^n X_t \le \frac{3 \lambda}{4 C} \sum_{t=1}^n \mathbb{E} [ X_t^2 | \mathcal{F}_{t-1}] + \frac{C}{\lambda} \log \frac{1}{\delta}.
  \end{equation*}
\end{lemma}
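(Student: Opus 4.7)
The plan is to construct a non-negative exponential supermartingale and then apply Ville's maximal inequality to upgrade a single-$n$ Chernoff-style bound to one that is valid uniformly in $n$. This is the standard route to anytime Freedman-type inequalities, and since $\{X_t\}$ is a martingale difference sequence, $\mathbb{E}[X_t\mid\mathcal{F}_{t-1}]=0$, which yields a clean Bennett-style conditional MGF bound for each increment.

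I would first establish the key pointwise inequality: with $g(y)\coloneq(e^y-1-y)/y^2$ extended continuously at $0$ to $1/2$, a direct Taylor-series check shows that $g$ is monotone non-decreasing on $\mathbb{R}$, so whenever $x\le C$ and $\theta>0$ we have $e^{\theta x}-1-\theta x\le\theta^{2}x^{2}g(\theta C)$. Specialising this to $X_t$, taking conditional expectations, invoking the martingale difference property to kill the linear term, and using $1+u\le e^u$ yields
\begin{equation*}
  \mathbb{E}\!\left[e^{\theta X_t}\mid\mathcal{F}_{t-1}\right]\le\exp\!\left(\theta^{2}g(\theta C)\,\mathbb{E}[X_t^{2}\mid\mathcal{F}_{t-1}]\right).
\end{equation*}

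With this in hand, I would define
\begin{equation*}
  M_n\coloneq\exp\!\Bigl(\theta\sum_{t=1}^{n}X_t-\theta^{2}g(\theta C)\sum_{t=1}^{n}\mathbb{E}[X_t^{2}\mid\mathcal{F}_{t-1}]\Bigr),\qquad M_0\coloneq 1,
\end{equation*}
and verify in a one-line calculation that $\mathbb{E}[M_n\mid\mathcal{F}_{n-1}]\le M_{n-1}$, so $(M_n)$ is a non-negative supermartingale with $\mathbb{E}[M_0]=1$. Ville's inequality then gives $\mathbb{P}(\sup_{n\ge 0}M_n\ge 1/\delta)\le\delta$. On the complementary event, taking logarithms and rearranging produces, simultaneously for every $n\in\mathbb{N}$,
\begin{equation*}
  \sum_{t=1}^{n}X_t\le\theta g(\theta C)\sum_{t=1}^{n}\mathbb{E}[X_t^{2}\mid\mathcal{F}_{t-1}]+\frac{1}{\theta}\log\frac{1}{\delta}.
\end{equation*}
Setting $\theta=\lambda/C$ and using the tight numerical bound $g(\lambda)\le g(1)=e-2<3/4$ for $\lambda\in(0,1]$ then matches both targeted constants and recovers the stated inequality exactly.

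The main obstacle is the anytime (uniform-in-$n$) validity: a plain fixed-$n$ Markov argument on $M_n$ only covers one horizon at a time, while a naive union bound over $n\in\mathbb{N}$ would blow up the confidence budget. Ville's maximal inequality for non-negative supermartingales is precisely what upgrades the per-$n$ Chernoff step to a statement valid simultaneously over all $n$ with no extra factor in $\delta$; once this is in place, the remaining choices ($\theta=\lambda/C$ and the numerical bound $g(1)=e-2$) are essentially forced by matching the two required constants $3\lambda/(4C)$ and $C/\lambda$.
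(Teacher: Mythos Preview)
The paper does not prove this lemma; it is quoted verbatim from \cite{DBLP:conf/iclr/LeeO25} and used as a black-box tool in the analysis of $J^t_\gamma$. Your proposal is a correct and complete proof of the statement via the standard route: the Bennett-type conditional MGF bound using the monotone function $g(y)=(e^y-1-y)/y^2$, the exponential supermartingale construction, Ville's maximal inequality for the anytime guarantee, and the specialization $\theta=\lambda/C$ with $g(\lambda)\le g(1)=e-2<3/4$ to match the constants. There is nothing to compare against in this paper, but your argument is exactly how such Freedman-type inequalities are proved in the cited source and more broadly in the literature.
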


\begin{lemma}
  \label{lem:bounding-overshooting}
  For any time $T$, we have
  \begin{equation*}
    \sum\limits_{t = 1}^{T} \mathbf{1}(t + \nu_{t} \neq T + 1) \leq S A \log_{2}{2 T}.
  \end{equation*}
\end{lemma}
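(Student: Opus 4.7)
The plan is a doubling-potential argument. The indicator $\mathbf{1}(t+\nu_{t}\neq T+1)$ flags the time steps at which the stopping rule fires inside $[t,T]$: there exists $\tau\in[t,T]$ with $n^{\tau}(s_{\tau},a_{\tau})>2N^{t}(s_{\tau},a_{\tau})$, meaning the visit count to $(s_{\tau},a_{\tau})$ has strictly more than doubled between $t$ and $\tau$. The goal is to charge each such $t$ to a distinct ``(state--action, doubling-level)'' slot and observe that the total number of slots is at most $SA\log_{2}(2T)$.

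First I would fix $(s,a)\in\mathcal{S}\times\mathcal{A}$ and note that $t\mapsto N^{t}(s,a)$ is non-decreasing with $N^{t}(s,a)\leq T$, so $\lfloor\log_{2}(N^{t}(s,a)+1)\rfloor$ attains at most $\lceil\log_{2}(T+1)\rceil\leq\log_{2}(2T)$ distinct integer values across the trajectory; summing over pairs, the total number of ``level crossings''---the first index at which $\lfloor\log_{2}(N^{\cdot}(s,a)+1)\rfloor$ attains a fresh integer---is at most $SA\log_{2}(2T)$. Next I would send each flagged $t$ to the triple $\bigl(s_{\nu_{t}},a_{\nu_{t}},\lfloor\log_{2}(N^{\nu_{t}}(s_{\nu_{t}},a_{\nu_{t}})+1)\rfloor\bigr)$. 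The stopping inequality $n^{\nu_{t}}(s_{\nu_{t}},a_{\nu_{t}})>2N^{t}(s_{\nu_{t}},a_{\nu_{t}})$ forces the target level to be strictly larger than $\lfloor\log_{2}(N^{t}(s_{\nu_{t}},a_{\nu_{t}})+1)\rfloor$, so an honest new level crossing at that pair must actually occur inside $[t,\nu_{t}]$.

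The main obstacle is verifying that this map is injective: ruling out that two distinct flagged indices $t_{1}<t_{2}$ land on the same $(s,a,k)$ slot. Monotonicity of $N^{\cdot}(s,a)$ handles the easy regime $t_{2}>\nu_{t_{1}}$, since the doubling triggered at $t_{1}$ forces $N^{t_{2}}(s,a)>2^{k}$, contradicting level $k$. For the nested regime $t_{2}\leq\nu_{t_{1}}$ I would switch to an explicit epoch chain $t_{1}^{\ast}=1$, $t_{i+1}^{\ast}=\nu_{t_{i}^{\ast}}+1$, and use the elementary observation that $\nu_{t}\geq\nu_{t_{i}^{\ast}}$ whenever $t\in[t_{i}^{\ast},\nu_{t_{i}^{\ast}}]$ (since a larger $N^{t}$ only raises the doubling threshold, so no earlier $\tau$ can fire); this lets me argue that indices interior to an epoch either share the same stopping event as the epoch start (and are absorbed by the ``no new crossing yet'' case, contributing only through the same slot) or trigger a strictly later crossing, so in every case the number of flagged $t$'s is bounded above by the number of distinct (pair, level) crossings, yielding the $SA\log_{2}(2T)$ bound.
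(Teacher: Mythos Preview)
Your doubling-level map cannot be made injective, and the lemma as stated is in fact false. Take a trajectory that visits a single pair $(s,a)$ at every step, so $N^{t}(s,a)=t-1$ and $n^{t}(s,a)=t$. The stopping rule for anchor $t$ then fires at absolute time $t+\nu_{t}=2t-1$ (the least $\tau\ge t$ with $\tau>2(t-1)$), so for $T=13$ the flagged anchors are $t=1,\dots,7$, while $SA\log_{2}(2T)=\log_{2}26<5$; for general $T$ the flagged count is $\lfloor(T{+}1)/2\rfloor$, linear rather than logarithmic in $T$.

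The concrete break in your argument is nested case (b): a ``strictly later crossing'' does \emph{not} force a strictly larger dyadic level. With $t_{1}=3$, $t_{2}=4$ in the example above (nested, since $t_{2}\in[t_{1},\,t_{1}+\nu_{t_{1}}]=[3,5]$) the stopping times are $5\ne 7$, yet $\lfloor\log_{2}(N^{5}{+}1)\rfloor=\lfloor\log_{2}5\rfloor=2=\lfloor\log_{2}7\rfloor=\lfloor\log_{2}(N^{7}{+}1)\rfloor$. Every anchor $t$ with $N^{t}(s,a)\in[2^{k-1},2^{k})$ fires at some $\tau$ with $N^{\tau}(s,a)\in[2^{k},2^{k+1})$, so a single slot $(s,a,k)$ can absorb on the order of $2^{k}$ anchors, not one. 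The paper's own proof hits the same wall: its induction claim $N^{t}(s,a)\ge 2^{M_{t-1}(s,a)}-1$ tacitly assumes $M$ grows by at most one per checkpoint epoch, but interior anchors (e.g.\ $t=3$ inside the epoch $[2,4)$) are also counted by $M$, and the claimed inequality $P(3)$ reads $2\ge 3$. What the checkpoint chain $t_{0}=1$, $t_{k+1}=t_{k}+\nu_{t_{k}}+1$ legitimately bounds is the number of \emph{checkpoints} (here $1,2,4,8$), not the number of flagged anchors.
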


\begin{proof}
  The general idea is similar to the proof of Lemma 30 in \citep{DBLP:conf/iclr/LeeO25}, but unlike the episodic setting, where episodes exhibit monotonicity, the infinite-horizon setting requires special consideration to handle coupled trajectories. By focusing on each individual state-action pair, we get:
  \begin{align*}
    \sum\limits_{t = 1}^{T} \mathbf{1}(t + \nu_{t} \neq T + 1) & = \sum\limits_{t = 1}^{T} \sum\limits_{(s, a) \in \mathcal{S} \times \mathcal{A}} \mathbf{1}(t + \nu_{t} \neq T + 1, (s_{t + \nu_{t}}, a_{t + \nu_{t}}) = (s, a)) \\
    & = \sum\limits_{(s, a) \in \mathcal{S} \times \mathcal{A}} \sum\limits_{t = 1}^{T} \mathbf{1}(t + \nu_{t} \neq T + 1, (s_{t + \nu_{t}}, a_{t + \nu_{t}}) = (s, a)).
  \end{align*}
  If $t + \nu_{t} \neq T + 1$, then $t + \nu_{t}$ is the first time step more than double the anchor $t$. Therefore, we have $n^{(t + \nu_{t})}(s_{t + \nu_{t}}, a_{t + \nu_{t}}) \geq 2 N^{t}(s_{t + \nu_{t}}, a_{t + \nu_{t}}) + 1$. Since any step that is greater than $t + \nu_{t}$ is an inclusion of $n^{(t + \nu_{t})}$, we have $N^{(t + \nu_{t} + c)}(s_{t + \nu_{t}}, a_{t + \nu_{t}}) \geq 2 N^{t}(s_{t + \nu_{t}}, a_{t + \nu_{t}}) + 1$ for any $c \in \mathbb{N}$. Based on this condition, we denote $M_{T}(s, a)$ as the number of steps $t \in \{1, 2, \dots, T\}$ such that $N^{(t + \nu_{t} + c)}(s, a) \geq 2 N^{t}(s, a) + 1$, then, we have:
  \begin{equation*}
    \sum\limits_{t = 1}^{T} \mathbf{1}(t + \nu_{t} \neq T + 1, (s_{t + \nu_{t}}, a_{t + \nu_{t}}) = (s, a)) \leq M_{T}(s, a).
  \end{equation*}
  We aim to bound the right-hand side above by finding contradiction between a upper and lower bound of $N^{(t + \nu_{t} + c)}(s, a)$. First, since there are at most $(T - t  + 1)$ time steps left from the anchor $t$, we have $N^{(t + \nu_{t} + c)}(s, a) \leq N^{t}(s, a) + (T - t + 1) \leq N^{t}(s, a) + T$. Combining this with $N^{(t + \nu_{t} + c)}(s, a) \geq 2 N^{t}(s, a) + 1$, we know that it occurs only if $N^{t}(s, a) < T$. Next, we prove by induction that
  \begin{equation*}
    P(t): N^{t}(s, a) \geq 2^{M_{t - 1}(s, a)} - 1.
  \end{equation*}
  To verify this, let $c = 1$ and define a sequence of ``checkpoints'' that starts with $t$:
  \begin{equation*}
    t_{0} \coloneq t, \quad \quad t_{k + 1} \coloneq t_{k} + \nu_{t_{k}} + 1.
  \end{equation*}
  Because $\nu_{t_{k}} \geq 0$, we have $t_{k + 1} \geq t_{k} + 1$. Also $\nu_{t_{k}} \leq T - t_{k} + 1$ and $t_{k} \geq t \geq 1$ give $t_{k + 1} \leq T$. Hence $\{t_{k}\}$ is a strictly increasing sequence bounded above by $T$, so after at most $T - t_{0}$ steps we reach $t_{K} = T$. If the induction statement holds for any step $t \in [t_{k}, t_{k + 1}]$, then, by the above progress and termination argument, it follows that all steps are covered.\\

  Let's first verity the base case $P(1)$, for which we have $M_{0} = 0$ and $N^{(1)} = 0$, therefore the inequality holds. Then assume $P(t_{0})$ holds, there are two cases to consider. If $N^{(t + \nu_{t} + 1)}(s, a) \geq 2 N^{t}(s, a) + 1$, it implies that $(s, a)$ is the first time step that triggers the stopping of $\nu_{t}$, leading to $N^{(t + \nu_{t} + 1)}(s, a) \geq 2^{M_{t - 1}(s, a) + 1} - 1 = 2^{M_{t + \nu_{t}}(s, a)} - 1$. Moreover, for each intermediate step $l$ with $1 \leq l \leq \nu_{t}$, $P(t_{0} + l)$ holds; On the other hand, if $(s, a)$ is not the pair that triggers $\nu_{t}$, this means that it has not been doubled yet, implying $M_{t + \nu_{t}}(s, a) = M_{t - 1}(s, a)$. However, there may still be some increments, and therefore $N^{(t + \nu_{t} + 1)}(s, a) \geq N^{t}(s, a) \geq 2^{M_{t - 1}(s, a)} - 1 = 2^{M_{t + \nu_{t}}(s, a)} - 1$. Thus, we conclude that the induction holds. \\

  This gives us a lower bound, suggesting $M_{t - 1}(s, a)$ cannot grow faster than logarithmically in $T$. Formally, once $M_{t - 1}(s, a)$ reaches $\lfloor \log_{2}{T}\rfloor + 1$ for some $t$, it cannot increase further, since $N^{t}(s, a) < T$. Therefore, we conclude that $M_{T}(s, a) \leq \lfloor \log_{2}{T}\rfloor + 1 \leq \log_{2}{2 T}$, which completes the proof.
\end{proof}

\begin{lemma}
  \label{lem:disc-bounding-sum-U}
  For infinite-horizon discounted MDPs, under high-probability event $\mathbf{A}^{\gamma}_{5} \cap \mathbf{A}^{\gamma}_{6}$, it holds that
  \begin{equation*}
    \sum\limits_{t = 1}^{T} \ji \leq \sum\limits_{t = 1}^{T}\sum\limits_{l=0}^{\nu_{t} - 1}\gamma^{l}\beta^{t}(s_{t + l}, a_{t + l}) + \frac{13 \Vm}{1 - \gamma} S A \log{\frac{12 T}{\delta}},
  \end{equation*}
  for all $T \in \mathbb{N}$.
\end{lemma}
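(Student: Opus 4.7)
The plan is to unroll $J^{t}_{\gamma}(s_{t})$ along the realized trajectory up to the stopping time $\nu_{t}$, and then control the three resulting pieces separately. Iterating the Bellman relation $J^{t}_{\gamma}(s)\leq \beta^{t}(s,\pi^{t}(s))+\gamma PJ^{t}_{\gamma}(s,\pi^{t}(s))$ along $(s_{t+l},a_{t+l})_{l\geq 0}$ yields the telescoped upper bound
$$J^{t}_{\gamma}(s_{t}) \leq \sum_{l=0}^{\nu_{t}-1}\gamma^{l}\beta^{t}(s_{t+l},a_{t+l}) + \gamma^{\nu_{t}} J^{t}_{\gamma}(s_{t+\nu_{t}}) + \sum_{l=0}^{\nu_{t}-1}\gamma^{l+1}\bigl(PJ^{t}_{\gamma}(s_{t+l},a_{t+l})-J^{t}_{\gamma}(s_{t+l+1})\bigr),$$
and summing over $t\in[T]$ exposes the target leading sum as the first piece on the right.

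Next I would bound the boundary term $\sum_{t}\gamma^{\nu_{t}}J^{t}_{\gamma}(s_{t+\nu_{t}})$ by splitting according to whether $t+\nu_{t}\neq T+1$ (the doubling condition fired) or not. In the former case, Lemma~\ref{lem:bounding-overshooting} caps the number of triggerings by $SA\log_{2}(2T)$ and each contribution is bounded by $V^{\uparrow}_{\gamma}$; in the latter case, $\gamma^{\nu_{t}}=\gamma^{T+1-t}$ decays geometrically in $t$, so the tail is controlled by $V^{\uparrow}_{\gamma}/(1-\gamma)$. Both contributions slot neatly into the target logarithmic remainder.

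For the martingale piece I would re-index via $t'=t+l$ and introduce the aggregate function $Y^{t'}(s)\coloneq \sum_{t\,:\,t\leq t'<t+\nu_{t}}\gamma^{t'-t+1}J^{t}_{\gamma}(s)$, which collapses the double sum to the standard martingale difference series $\sum_{t'=1}^{T}\bigl(PY^{t'}(s_{t'},a_{t'})-Y^{t'}(s_{t'+1})\bigr)$ adapted to the natural filtration. The high-probability event $\mathbf{A}^{\gamma}_{5}$ then bounds it by $\tfrac{1-\gamma}{8V^{\uparrow}_{\gamma}}\sum_{t'}\mathrm{Var}(Y^{t'}(s_{t'+1}))(s_{t'},a_{t'})+\tfrac{6V^{\uparrow}_{\gamma}}{1-\gamma}\log(6/\delta)$. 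To dispose of the variance term, I would use $\mathrm{Var}(X)\leq \mathbb{E}[X^{2}]$, a Cauchy--Schwarz expansion of $(Y^{t'})^{2}$ exploiting the geometric weights, the pointwise bound $J^{t}_{\gamma}\leq V^{\uparrow}_{\gamma}$, and event $\mathbf{A}^{\gamma}_{6}$ (which converts the empirical squared innovations back into a variance with a $1/4$ loss) to recover a fraction of $\sum_{t}J^{t}_{\gamma}(s_{t})$, which is then absorbed into the left-hand side.

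I expect the variance self-absorption to be the principal obstacle: because infinite-horizon trajectories couple, $Y^{t'}$ at each global time aggregates every $J^{t}_{\gamma}$ whose stopping window is still open, which naively drags in an extra $1/(1-\gamma)$ factor and threatens to spoil the target constants. Taming this hinges on using the geometric decay inside $Y^{t'}$ together with $\mathbf{A}^{\gamma}_{6}$ so that the squared innovations can be swapped back into quantities that are either dominated by the leading $\beta^{t}$ sum (hence absorbable into the LHS) or swept into the $\tfrac{13V^{\uparrow}_{\gamma}}{1-\gamma}SA\log(12T/\delta)$ remainder, after which collecting Step~1--Step~3 and a union bound over the Freedman events completes the proof.
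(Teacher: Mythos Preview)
Your unrolling, treatment of the boundary term via Lemma~\ref{lem:bounding-overshooting}, re-indexing to form $Y^{t'}$, and invocation of $\mathbf{A}^{\gamma}_{5}$ all mirror the paper exactly. The gap is in the variance disposal. You propose to combine $\Var(X)\le \mathbb{E}[X^{2}]$, Cauchy--Schwarz on $(Y^{t'})^{2}$, the bound $J^{t}_{\gamma}\le \Vm$, and $\mathbf{A}^{\gamma}_{6}$ so as to recover a fraction of $\sum_{t}J^{t}_{\gamma}(s_{t})$ (or of the leading $\beta^{t}$ sum) and then absorb on the left. This does not close: after those steps the residual is essentially $\sum_{t}(Y^{t}(s_{t+1}))^{2}$, which unpacks into geometrically weighted sums of $J^{\tau}(s_{t+1})$ evaluated at \emph{future} states $s_{t+1}$, and there is no mechanism to convert these back to $J^{\tau}(s_{\tau})$ or to the on-path $\beta^{\tau}$ terms without either invoking $J^{\tau}\le \Vm$ globally (which yields a term linear in $T$) or re-running the unrolling (which is circular).

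The paper's resolution is to make the recursion self-referential on the variance sum $L\coloneq\sum_{t}\Var(Y^{t})(s_{t},a_{t})$ itself rather than on $\sum_{t}J^{t}_{\gamma}(s_{t})$. It writes $\Var(Y^{t})=\bigl[P(Y^{t})^{2}-(Y^{t})^{2}\bigr]+\bigl[(Y^{t})^{2}-(PY^{t})^{2}\bigr]$; the second bracket is at most $\tfrac{2\Vm}{1-\gamma}\bigl(Y^{t}-PY^{t}\bigr)$, whose $t$-sum is a martingale difference that a Freedman-type bound controls by $\tfrac{1}{4}L+\mathrm{const}$; the first bracket sums to the martingale $\sum_{t} Z_{t}$, and $\mathbf{A}^{\gamma}_{6}$ together with $\Var\bigl((Y^{t})^{2}\bigr)\le \tfrac{4\Vmsq}{(1-\gamma)^{2}}\Var(Y^{t})$ gives $\sum_{t} Z_{t}\le \tfrac{1}{4}L+\mathrm{const}$. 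Hence $L\le \tfrac{1}{2}L+\mathrm{const}$, so $L$ is bounded \emph{absolutely} by $\tfrac{48\Vmsq}{(1-\gamma)^{2}}\log\tfrac{6}{\delta}$ and $I_{2}\le \tfrac{12\Vm}{1-\gamma}\log\tfrac{6}{\delta}$; no absorption into the left-hand side is needed, and the stated remainder follows by adding $I_{3}$ and merging the logarithms.
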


\begin{proof}
\item
\paragraph{Bounding $\ji$}
Given a $T$-path $(s_{1}, a_{1}, r_{1}, \dots, s_{T}, a_{T}, r_{T}, s_{T + 1})$ where actions are chosen from $\pi_{t}(s_{t})$ at each time step, we decompose $\ji$ as follows:
\begin{align*}
  \ji & \leq \beta^{t}(s_{t}, a_{t}) + \gamma P J^{t}(s_{t}, a_{t}) \\
      & = \beta^{t}(s_{t}, a_{t}) + \gamma P J^{t}(s_{t}, a_{t}) - \gamma J^{t}(s_{t + 1}) + \gamma J^{t}(s_{t + 1}) \\
      & \vdots \\
      & \leq \sum\limits_{l=0}^{\nu_{t} - 1} \underbrace{\gamma^{l} \beta^{t}(s_{t + l}, a_{t + l})}_{\mathbf{S}_{1, l}} + \underbrace{\gamma^{l + 1} \left(P J^{t}(s_{t + l}, a_{t + l}) - J^{t}(s_{t + l + 1})\right)}_{\mathbf{S}_{2, l}} + \gamma^{\nu_{t}} J^{t}(s_{t + \nu_{t}}).
\end{align*}
Then, we take summation over $T$ steps:
\begin{align*}
  \sum\limits_{t = 1}^{T} \ji & \leq \underbrace{\sum\limits_{t = 1}^{T} \sum\limits_{l=0}^{\nu_{t} - 1} \mathbf{S}_{1, l}}_{I_{1}} + \underbrace{\sum\limits_{t = 1}^{T} \sum\limits_{l=0}^{\nu_{t} - 1} \mathbf{S}_{2, l}}_{I_{2}} + \underbrace{\sum\limits_{t = 1}^{T} \gamma^{\nu_{t}} J^{t}(s_{t + \nu_{t}})}_{I_{3}}.
\end{align*}

\item
\paragraph{Bounding $I_{3}$}
By Lemma \ref{lem:bounding-overshooting} and $J^{t}(s_{T + 1}) \leq \Vm$, we get:
\begin{align*}
  I_{3} & = \sum\limits_{t = 1}^{T} \gamma^{\nu_{t} } J^{t}(s_{t + \nu_{t} }) \\
        & = \sum\limits_{t = 1}^{T} \left(\mathbf{1}(t + \nu_{t} \neq T + 1) + \mathbf{1}(t + \nu_{t} = T + 1)\right) \gamma^{\nu_{t} } J^{t}(s_{t + \nu_{t} }) \\
        & = \sum\limits_{t = 1}^{T} \mathbf{1}(t + \nu_{t} \neq T + 1) \gamma^{\nu_{t} } J^{t}(s_{t + \nu_{t} }) + \sum\limits_{t = 1}^{T} \mathbf{1}(t + \nu_{t} = T + 1) \gamma^{\nu_{t} } J^{t}(s_{t + \nu_{t} }) \\
        & = \sum\limits_{t = 1}^{T} \mathbf{1}(t + \nu_{t} \neq T + 1) \gamma^{\nu_{t} } J^{t}(s_{t + \nu_{t} }) + \sum\limits_{t = 1}^{T} \mathbf{1}(t + \nu_{t} = T + 1) \gamma^{T - t + 1} J^{t}(s_{T + 1}) \\
        & \leq \sum\limits_{t = 1}^{T} \mathbf{1}(t + \nu_{t} \neq T + 1) \gamma^{\nu_{t} } J^{t}(s_{t + \nu_{t} }) + \frac{\Vm}{1 - \gamma} \\
        & \leq \Vm \sum\limits_{t = 1}^{T} \mathbf{1}(t + \nu_{t} \neq T + 1) + \frac{\Vm}{1 - \gamma} \\
        & \leq \Vm S A \log_{2}{2 T} + \frac{\Vm}{1 - \gamma}
\end{align*}

\item
\paragraph{Bounding $I_{2}$}
\begin{align*}
  I_{2} & = \sum\limits_{t = 1}^{T} \sum\limits_{l=0}^{\nu_{t} - 1} \mathbf{S}_{2, l} \\
        & = \sum\limits_{t = 1}^{T} \sum\limits_{l=0}^{T - t} \mathbf{1}(l \leq \nu_{t} - 1) \mathbf{S}_{2, l} \\
        & = \sum\limits_{t = 1}^{T} \sum\limits_{l=0}^{T - t} \mathbf{1}(l \leq \nu_{t} - 1) \gamma^{l + 1} \left(P J^{t}(s_{t + l}, a_{t + l}) - J^{t}(s_{t + l + 1})\right) \\
        & \stackrel{\mathbf{(a)}}{=} \sum\limits_{\tau = 1}^{T} \sum\limits_{t=1}^{\tau} \mathbf{1}(\tau - t \leq \nu_{t} - 1) \gamma^{\tau - t + 1} \left(P J^{t}(s_{\tau}, a_{\tau}) - J^{t}(s_{\tau + 1})\right) \\
        & \stackrel{\mathbf{(b)}}{=} \sum\limits_{t = 1}^{T} \underbrace{\sum\limits_{\tau=1}^{t}  \mathbf{1}(t - \tau \leq \nu_{\tau} - 1) \gamma^{t - \tau + 1} \left(P J^{\tau}(s_{t}, a_{t}) - J^{\tau}(s_{t + 1})\right)}_{\coloneq X_{t}},
\end{align*}

where $\mathbf{(a)}$ is due to the exchange of rows and columns and $\mathbf{(b)}$ to the reverse of the roles of indexes.

Note that in the final step, we make a bag of martingale differences with the same time index; therefore, it is not hard to verify that $X_{t}$ is a martingale difference sequence, with $E[X_{t} | \sa_{t}] = 0$, $E[(X_{t})^{2} | \sa_{t}] = \Var\left(\sum\limits_{\tau=1}^{t}\mathbf{1}(t - \tau \leq \nu_{\tau} - 1)\gamma^{t - \tau + 1}J^{\tau}(s_{t + 1})\right)(s_{t}, a_{t})$ and bounded as $|X_{t}| \leq \frac{\gamma \Vm}{1 - \gamma} \leq \frac{\Vm}{1 - \gamma}$. Denoting $Y^{t}(s_{t + 1}) \coloneq \sum\limits_{\tau=1}^{t}\mathbf{1}(t - \tau \leq \nu_{\tau} - 1)\gamma^{t - \tau + 1}J^{\tau}(s_{t + 1})$ and applying Lemma~\ref{lem:lemma-36-Lee-25} to $\{X_{t}\}_{t = 1}^{\infty}$ with $\lambda = \frac{1}{6}$, we get the following:
\begin{align}
  I_{2} & = \sum\limits_{t = 1}^{T} X_{t} \notag \\
        & \leq \frac{(1 - \gamma)}{8 \Vm}\underbrace{\sum\limits_{t = 1}^{T} \Var\left(Y^{t}(s_{t + 1})\right)(s_{t}, a_{t})}_{\coloneq L} + \frac{6 \Vm}{1 - \gamma} \log{\frac{6}{\delta}}. \label{eq:I-2-decomposition}
\end{align}

Next, we will bound the sum of variances $L$. First, we look at each individual variance.
\begin{align*}
    L_{t} \coloneq \Var(Y^{t}(s_{t + 1}))(s_{t}, a_{t}) & = P\left(Y^{t}(s_{t + 1})\right)^{2}(s_{t}, a_{t})  - \left(PY^{t}(s_{t + 1})(s_{t}, a_{t})\right)^{2} \\
                                                   & = P\left(Y^{t}(s_{t + 1})\right)^{2}(s_{t}, a_{t})  - (Y^{t}(s_{t + 1}))^{2} + (Y^{t}(s_{t + 1}))^{2} - \left(PY^{t}(s_{t + 1})(s_{t}, a_{t})\right)^{2} \\
                                                   & = P\left(Y^{t}(s_{t + 1})\right)^{2}(s_{t}, a_{t})  - (Y^{t}(s_{t + 1}))^{2} \\
                                                   & \phantom{= }\ \, + \left(Y^{t}(s_{t + 1}) + PY^{t}(s_{t + 1})(s_{t}, a_{t})\right) \cdot \left(Y^{t}(s_{t + 1}) - PY^{t}(s_{t + 1})(s_{t}, a_{t})\right) \\
                                                   & \leq \underbrace{P\left(Y^{t}(s_{t + 1})\right)^{2}(s_{t}, a_{t})  - (Y^{t}(s_{t + 1}))^{2}}_{\coloneq Z_{t}} + \frac{2 \Vm}{1 - \gamma} \left(Y^{t}(s_{t + 1}) - PY^{t}(s_{t + 1})(s_{t}, a_{t})\right).
\end{align*}

Akin to the previous argument, the second term is a martingale difference sequence, therefore, we obtain:
\begin{align*}
  \sum_{t = 1}^{T}Y^{t}(s_{t + 1}) - PY^{t}(s_{t + 1})(s_{t}, a_{t}) & \leq \frac{1 - \gamma}{8 \Vm}\sum\limits_{t = 1}^{T}\Var(Y^{t}(s_{t + 1}))(s_{t}, a_{t}) + \frac{6 \Vm}{1 - \gamma} \log{\frac{6}{\delta}}.
\end{align*}

Based on this, we can simplify the bounding on $L$:
\begin{align}
  L & = \sum\limits_{t = 1}^{T} L_{t} \notag \\
    & \leq \sum\limits_{t = 1}^{T} Z_{t} + \frac{2 \Vm}{1 - \gamma} \sum_{t = 1}^{T} \left(Y^{t}(s_{t + 1}) - PY^{t}(s_{t + 1})(s_{t}, a_{t})\right) \notag \\
    & \leq \sum\limits_{t = 1}^{T} Z_{t} + \frac{2 \Vm}{1 - \gamma} \left(\frac{1 - \gamma}{8 \Vm}\sum\limits_{t = 1}^{T}\Var(Y^{t}(s_{t + 1}))(s_{t}, a_{t}) + \frac{6 \Vm}{1 - \gamma} \log{\frac{6}{\delta}}\right) \notag \\
    & = \sum\limits_{t = 1}^{T} Z_{t} + \frac{1}{4}\sum\limits_{t = 1}^{T}\Var(Y^{t}(s_{t + 1}))(s_{t}, a_{t}) + \frac{12 \Vmsq}{(1 - \gamma)^{2}} \log{\frac{6}{\delta}} \notag \\
    & = \sum\limits_{t = 1}^{T} Z_{t} + \frac{1}{4} L + \frac{12 \Vmsq}{(1 - \gamma)^{2}} \log{\frac{6}{\delta}} \notag \\
    & \leq \sum\limits_{t = 1}^{T} Z_{t} + \frac{1}{4} L + \frac{12 \Vmsq}{(1 - \gamma)^{2}} \log{\frac{6}{\delta}}.\label{eq:last-line-of-var-decomposition}
\end{align}

It is not difficult to check that $\{Z_{t}\}_{t = 1}^{\infty}$ is a martingale difference sequence, with $E[Z_{t} | \sa_{t}] = 0$, $E[(Z_{t})^{2} | \sa_{t}] = \Var\left(\left(Y^{t}(s_{t + 1})\right)^{2}\right)(s_{t}, a_{t})$ and bounded as $|Z_{t}| \leq \frac{\Vmsq}{(1 - \gamma)^{2}}$. Moreover, by applying Lemma 9 in \citep{DBLP:conf/iclr/LeeO25} to the second-order moment, we have:
\begin{equation*}
  \Var\left(\left(Y^{t}(s_{t + 1})\right)^{2}\right)(s_{t}, a_{t}) \leq \frac{4 \Vmsq}{(1 - \gamma)^{2}} \Var\left(Y^{t}(s_{t + 1})\right)(s_{t}, a_{t}).
\end{equation*}
Combining this with Lemma~\ref{lem:lemma-36-Lee-25} with $\lambda = \frac{1}{12}$, we get the following.
\begin{align*}
  \sum\limits_{t = 1}^{T} Z_{t} & \leq \frac{1}{4}\sum\limits_{t = 1}^{T}\Var\left(Y^{t}(s_{t + 1})\right)(s_{t}, a_{t}) + \frac{12 \Vmsq}{(1 - \gamma)^{2}}\log{\frac{6}{\delta}} \\
                                & = \frac{1}{4} L + \frac{12 \Vmsq}{(1 - \gamma)^{2}}\log{\frac{6}{\delta}}
\end{align*}

Substituting this into Eq. \ref{eq:last-line-of-var-decomposition}, we obtain:
\begin{align*}
  L & \leq \sum\limits_{t = 1}^{T} Z_{t} + \frac{1}{4} L + \frac{12 \Vmsq}{(1 - \gamma)^{2}} \log{\frac{6}{\delta}} \\
    & \leq \frac{1}{4} L + \frac{12 \Vmsq}{(1 - \gamma)^{2}}\log{\frac{6}{\delta}} + \frac{1}{4} L + \frac{12 \Vmsq}{(1 - \gamma)^{2}} \log{\frac{6}{\delta}} \\
    & = \frac{1}{2} L + \frac{24 \Vmsq}{(1 - \gamma)^{2}}\log{\frac{6}{\delta}},
\end{align*}
which has a recursive structure, leading to:
\begin{equation*}
  L \leq \frac{48 \Vmsq}{(1 - \gamma)^{2}}\log{\frac{6}{\delta}}.
\end{equation*}
Substituting this into \ref{eq:I-2-decomposition}, we have:
\begin{align*}
  I_{2} & \leq \frac{(1 - \gamma)}{8 \Vm} L + \frac{6 \Vm}{1 - \gamma} \log{\frac{6}{\delta}} \\
        & \leq \frac{(1 - \gamma)}{8 \Vm} \frac{48 \Vmsq}{(1 - \gamma)^{2}}\log{\frac{6}{\delta}} + \frac{6 \Vm}{1 - \gamma} \log{\frac{6}{\delta}} \\
        & \leq \frac{12 \Vm}{1 - \gamma}\log{\frac{6}{\delta}}.
\end{align*}
Finally, we conclude that
\begin{align*}
  \sum\limits_{t = 1}^{T} \ji & \leq I_{1} + I_{2} + I_{3} \\
                              & \leq I_{1} + \frac{12 \Vm}{1 - \gamma}\log{\frac{6}{\delta}} + \left(\Vm S A \log_{2}{2 T} + \frac{\Vm}{1 - \gamma}\right) \\
                              & \leq I_{1} + \frac{13 \Vm}{1 - \gamma}\log{\frac{6}{\delta}} + \Vm S A \log_{2}{2 T} \\
                              & \leq I_{1} + \frac{13 \Vm}{1 - \gamma}\log{\frac{6}{\delta}} + \frac{\Vm}{1 - \gamma} S A \log_{2}{2 T} \\
                              & \leq I_{1} + \frac{13 \Vm}{1 - \gamma} S A \log{\frac{6}{\delta}} + \frac{2 \Vm}{1 - \gamma} S A \log{2 T} \\
                              & \leq I_{1} + \frac{13 \Vm}{1 - \gamma} S A \log{\frac{6}{\delta}} + \frac{13 \Vm}{1 - \gamma} S A \log{2 T} \\
                              & = I_{1} + \frac{13 \Vm}{1 - \gamma} S A \log{\frac{12 T}{\delta}}.
\end{align*}
which completes the proof.
\end{proof}

\begin{lemma}
  \label{lem:full-bound-sum-U}
  For infinite-horizon discounted MDPs, under high-probability event $\mathbf{A}^{\gamma}_{5} \cap \mathbf{A}^{\gamma}_{6}$, it holds that
  \begin{equation*}
    \sum\limits_{t = 1}^{T} \ji \leq \frac{2 \mathcal{Y}^{(T)}}{1 - \gamma} SA \log\left(1 + \frac{T}{SA}\right) + \frac{13 \Vm}{1 - \gamma} S A \log{\frac{12 T}{\delta}},
  \end{equation*}
  for all $T \in \mathbb{N}$.
\end{lemma}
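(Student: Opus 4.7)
The plan is to start from Lemma~\ref{lem:disc-bounding-sum-U}, which already reduces the task to bounding the leading residual
\[
I_{1} \coloneq \sum_{t=1}^{T}\sum_{l=0}^{\nu_{t}-1}\gamma^{l}\beta^{t}(s_{t+l},a_{t+l})
\]
by $\frac{2\mathcal{Y}^{(T)}}{1-\gamma}\,SA\,\log(1+T/(SA))$. I would carry out the argument in four short steps.

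First, I would bound $\beta^{t}$ pointwise by $\mathcal{Y}^{(t)}/N^{t}$. With $\mathcal{E}^{t}(s,a)=1/\sqrt{N^{t}(s,a)}$ and $\pu^{t}(s,a)=\mathcal{E}^{t}(s,a)/\Em$, the product $\pu^{t}(s,a)\eta^{t}\mathcal{E}^{t}(s,a)$ collapses to an $O(1/N^{t}(s,a))$ term once $\eta^{t}=\Em\Upsilon^{t}+\Rm\sqrt{m_{t}}$ is substituted, so the definition of $\beta^{t}$ yields $\beta^{t}(s,a)\le\mathcal{Y}^{(t)}/N^{t}(s,a)$ after folding logarithmic constants into $\mathcal{Y}^{(t)}$. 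Because $\mathcal{Y}^{(t)}$ depends on $t$ only through the non-decreasing logarithmic terms $\ell_{1}$ and $\ell_{3,t}$, I can further replace $\mathcal{Y}^{(t)}$ with $\mathcal{Y}^{(T)}$.

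Second, I would invoke the defining property of the stopping time. For every $l\in[0,\nu_{t}-1]$, the step $\tau=t+l$ occurs strictly before $n^{\tau}(s_{\tau},a_{\tau})$ first exceeds $2N^{t}(s_{\tau},a_{\tau})$, so $N^{t}(s_{t+l},a_{t+l})\ge n^{t+l}(s_{t+l},a_{t+l})/2$. This gives
\[
I_{1}\le 2\mathcal{Y}^{(T)}\sum_{t=1}^{T}\sum_{l=0}^{\nu_{t}-1}\frac{\gamma^{l}}{n^{t+l}(s_{t+l},a_{t+l})}.
\]

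Third, I would swap the order of summation via $\tau=t+l$:
\[
\sum_{t=1}^{T}\sum_{l=0}^{\nu_{t}-1}\frac{\gamma^{l}}{n^{t+l}(s_{t+l},a_{t+l})}=\sum_{\tau=1}^{T}\frac{1}{n^{\tau}(s_{\tau},a_{\tau})}\sum_{t=1}^{\tau}\mathbf{1}(\tau\le t+\nu_{t}-1)\,\gamma^{\tau-t}.
\]
For any fixed $\tau$, the inner sum is bounded by $\sum_{k=0}^{\infty}\gamma^{k}=1/(1-\gamma)$, irrespective of how many anchor windows overlap at $\tau$. The outer sum is then handled by the classical pigeonhole inequality
\[
\sum_{\tau=1}^{T}\frac{1}{n^{\tau}(s_{\tau},a_{\tau})}=\sum_{(s,a)\in\mathcal{S}\times\mathcal{A}}\sum_{i=1}^{n^{T}(s,a)}\frac{1}{i}\le SA\,\log\!\left(1+\frac{T}{SA}\right),
\]
obtained by grouping visits by state-action and applying Jensen's inequality to the harmonic partial sums together with $\sum_{(s,a)}n^{T}(s,a)=T$. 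Plugging this chain of bounds back into Lemma~\ref{lem:disc-bounding-sum-U} delivers the claim.

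The main obstacle is precisely the overlap between stopping windows: a single step $\tau$ can sit inside many anchors' windows $[t,t+\nu_{t}-1]$, so a naive union bound on anchors blows up and the combinatorial bookkeeping of Lemma~\ref{lem:bounding-overshooting} (which is phrased per state-action pair, not in time) does not directly control $I_{1}$. The saving grace is the geometric weight $\gamma^{l}=\gamma^{\tau-t}$ inherited from the discount: earlier anchors contribute with exponentially shrinking weight, so the overlap cost telescopes into a harmless factor of $1/(1-\gamma)$. This is the infinite-horizon analogue of the finite-horizon argument underlying Lemma~\ref{lem:finite-horizon:final-bound-sum-U}, but the accounting has to be carried out inside the geometric sum rather than episode-by-episode, which is the essential new ingredient.
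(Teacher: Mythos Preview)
Your approach is correct and mirrors the paper's proof: bound $\beta^{t}$ by $\mathcal{Y}^{(t)}/N^{t}$, use the stopping-time property to pass from $N^{t}$ to $n^{\tau}/2$, swap the order of summation, and absorb the overlapping anchors into the geometric factor $1/(1-\gamma)$.

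There is one small slip in the final step. The inequality you invoke,
\[
\sum_{\tau=1}^{T}\frac{1}{n^{\tau}(s_{\tau},a_{\tau})}=\sum_{(s,a)}\sum_{i=1}^{n^{T}(s,a)}\frac{1}{i}\le SA\,\log\!\left(1+\frac{T}{SA}\right),
\]
is false as written: already for $n=1$ one has $H_{1}=1>\log 2$, so the $i=1$ terms cost an extra additive $SA$ that the lemma does not allow. The repair is exactly what the paper makes explicit: whenever $\tau$ is the \emph{first} visit to $(s_{\tau},a_{\tau})$, every anchor $t\le\tau$ has $N^{t}(s_{\tau},a_{\tau})=0$, so the doubling condition $n^{\tau}>2N^{t}$ fires immediately and the indicator $\mathbf{1}(\tau\le t+\nu_{t}-1)$ vanishes. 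Hence only $\tau$ with $n^{\tau}(s_{\tau},a_{\tau})\ge 2$ contribute, the harmonic sum starts at $i=2$, and $\sum_{i=2}^{n}1/i\le\log(1+n)$ combined with Jensen gives the claimed bound. Add this one-line observation and your argument is complete.
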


\begin{proof}
  Based on Lemma~\ref{lem:disc-bounding-sum-U}, we only need to bound $I_{1}$, which is a series of discounted sum of $\beta^{t}$. By the definition of the stopping time $\nu_{t}$, we know that for any $t$ that satisfies $t - \tau \leq \nu_{t} - 1$, we have $n^{t}(s_{t}, a_{t}) \leq 2 N^{\tau}(s_{t}, a_{t})$ looking back at a previous anchor $\tau$. Moreover, we infer that $n^{t}(s_{t}, a_{t}) \geq 2$ must hold, otherwise it cannot satisfy the condition. We denote the set $\mathcal{I}(s, a) \subseteq \{1, 2, \dots, T\}$ as the time steps at which the pair $(s, a)$ is encountered.
  \begin{align*}
    I_{1} & = \sum\limits_{t = 1}^{T}\sum\limits_{l=0}^{\nu_{t} - 1}\gamma^{l}\beta^{t}(s_{t + l}, a_{t + l}) \\
          & = \sum\limits_{t = 1}^{T} \sum\limits_{l=0}^{T - t} \mathbf{1}(l \leq \nu_{t} - 1) \gamma^{l}\beta^{t}(s_{t + l}, a_{t + l}) \\
          & = \sum\limits_{\tau = 1}^{T} \sum\limits_{t=1}^{\tau} \mathbf{1}(\tau - t \leq \nu_{t} - 1) \gamma^{\tau - t} \beta^{t}(s_{\tau}, a_{\tau}) \\
          & = \sum\limits_{t = 1}^{T} \sum\limits_{\tau=1}^{t} \mathbf{1}(t - \tau \leq \nu_{\tau} - 1) \gamma^{t - \tau} \beta^{\tau}(s_{t}, a_{t}) \\
          & = \sum\limits_{t = 1}^{T} \sum\limits_{\tau=1}^{t} \mathbf{1}(t - \tau \leq \nu_{\tau} - 1) \gamma^{t - \tau}\frac{\mathcal{Y}^{\tau}}{N^{\tau}(s_{t}, a_{t})} \\
          & \leq \mathcal{Y}^{(T)} \sum\limits_{t = 1}^{T} \sum\limits_{\tau=1}^{t} \mathbf{1}(t - \tau \leq \nu_{\tau} - 1) \gamma^{t - \tau}\frac{1}{N^{\tau}(s_{t}, a_{t})} \\
          & \leq \mathcal{Y}^{(T)} \sum\limits_{t = 1}^{T} \sum\limits_{\tau=1}^{t} \mathbf{1}(t - \tau \leq \nu_{\tau} - 1) \gamma^{t - \tau}\frac{2}{n^{t}(s_{t}, a_{t})} \\
          & \stackrel{\mathbf{(a)}}{=} \mathcal{Y}^{(T)} \sum\limits_{t = 1}^{T} \sum\limits_{\tau=1}^{t} \mathbf{1}(t - \tau \leq \nu_{\tau} - 1) \mathbf{1}(n^{t}(s_{t}, a_{t}) \geq 2) \gamma^{t - \tau}\frac{2}{n^{t}(s_{t}, a_{t})} \\
          & \leq 2 \mathcal{Y}^{(T)} \sum\limits_{t = 1}^{T} \mathbf{1}(n^{t}(s_{t}, a_{t}) \geq 2) \frac{1}{n^{t}(s_{t}, a_{t})} \sum\limits_{\tau=1}^{t} \mathbf{1}(t - \tau \leq \nu_{\tau} - 1) \gamma^{t - \tau} \\
          & \leq \frac{2 \mathcal{Y}^{(T)}}{1 - \gamma} \sum\limits_{(s, a) \in \mathcal{S} \times \mathcal{A}}\sum\limits_{t \in \mathcal{I}(s, a)} \mathbf{1}(n^{t}(s, a) \geq 2) \frac{1}{n^{t}(s, a)} \\
          & \leq \frac{2 \mathcal{Y}^{(T)}}{1 - \gamma} \sum\limits_{(s, a) \in \mathcal{S} \times \mathcal{A}}\sum\limits_{n=2}^{N^{T + 1}(s, a)} \frac{1}{n} \\
          & \leq \frac{2 \mathcal{Y}^{(T)}}{1 - \gamma} \sum\limits_{(s, a) \in \mathcal{S} \times \mathcal{A}} \log \left(1 + N^{T + 1}(s, a)\right) \\
          & \leq \frac{2 \mathcal{Y}^{(T)}}{1 - \gamma} SA \log\left(1 + \frac{T}{SA}\right),
  \end{align*}

  where $\mathbf{(a)}$ holds because we can distinguish two cases:
  \begin{itemize}
  \item If $t - \tau > \nu_{\tau} - 1$, then the indicator $\mathbf{1}(t - \tau \leq \nu_{\tau} - 1)$ is zero, so the product vanishes regardless of the other indicator;
  \item If $t - \tau \leq \nu_{\tau} - 1$, then, as shown earlier, we have $n^{t}(s_{t}, a_{t}) \geq 2$.
  \end{itemize}
\end{proof}

\subsection{Lower Bound of Epistemic Resistance}
\begin{proof}
  \begin{align*}
    \sum\limits_{t = 1}^{T} \pu^{t}(s_{t}, a_{t}) & = 1 + \frac{1}{\Em} \sum\limits_{t = 2}^{T} \frac{1}{\sqrt{N^{t}(s_{t}, a_{t})}} \\
                                                  & \geq 1 + \frac{1}{\Em} \sum\limits_{t = 2}^{T} \frac{1}{\sqrt{t - 1}} \\
                                                  & = 1 + \frac{1}{\Em} \sum\limits_{t = 1}^{T - 1} \frac{1}{\sqrt{t}} \\
                                                  & \geq 1 + \frac{1}{\Em} \sum\limits_{t = 1}^{T - 1} \int_{t}^{t + 1} \frac{1}{\sqrt{x}}\ dx \\
                                                  & = 1 + \frac{1}{\Em} \int_{1}^{T} \frac{1}{\sqrt{x}}\ dx \\
                                                  & = 1 + \frac{1}{\Em} \left(2\sqrt{T} - 2\right).
  \end{align*}
  Note, this also holds for $\pu^{t, \star}(s_{t}, \pi^{\star}(s_{t}))$. Therefore, multiplying with $\frac{23}{7}\lambda \Vm$ completes the proof.
\end{proof}

\subsection{Regret Analysis}
\subsubsection{Proof of Theorem \ref{thm:infinite-horizon:regret}}
Prior to deriving the regret, we state the following lemma.
\begin{lemma}
  \label{lem:infinite-horizon:bound-l4T}
  It holds that
  \begin{equation*}
    \ell_{4, T} \leq \ell_{1} + \ell_{2, T},
  \end{equation*}
  for all $T \in \mathbb{N}$.
\end{lemma}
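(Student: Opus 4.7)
The plan is to unfold the definitions of the three logarithmic quantities and collapse the right-hand side into a single logarithm, after which the desired bound reduces to an elementary numerical inequality. From Table~\ref{tab:log_terms}, in the infinite-horizon column we have $\ell_{4, T} = \log\frac{12T}{\delta}$, $\ell_{1} = \log\frac{24SA}{\delta}$, and $\ell_{2, T} = \log\left(1 + \frac{T}{SA}\right)$.

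First I would apply the identity $\log a + \log b = \log(ab)$ to the right-hand side to obtain
\begin{equation*}
  \ell_{1} + \ell_{2, T} = \log\left(\frac{24SA}{\delta}\cdot\left(1 + \frac{T}{SA}\right)\right) = \log\frac{24(SA + T)}{\delta}.
\end{equation*}
By monotonicity of the logarithm, the claim $\ell_{4, T} \leq \ell_{1} + \ell_{2, T}$ is then equivalent to the numerical inequality $12T \leq 24(SA + T)$, i.e.\ $0 \leq 24SA + 12T$, which is immediate since $S, A \geq 1$ and $T \in \mathbb{N}$.

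There is no real obstacle here; the statement is purely bookkeeping to convert the confidence factor $\log\frac{12T}{\delta}$ appearing in the martingale-based bounds of Lemmas~\ref{lem:disc-bounding-sum-U}--\ref{lem:full-bound-sum-U} into the more convenient form $\ell_{1} + \ell_{2, T}$ used in the downstream regret and sample-complexity calculations. The only thing worth being careful about is that the constants $12$ and $24$ line up in the right direction (so that absorbing the $12T$ into $24(SA+T)$ succeeds even when $SA$ is as small as $1$), which is exactly why $\ell_{1}$ carries the constant $24$ rather than, say, $12$.
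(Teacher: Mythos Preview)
Your proof is correct and follows essentially the same approach as the paper: both reduce the inequality to comparing the arguments of the logarithms after using $\log a + \log b = \log(ab)$. The only cosmetic difference is that the paper first bounds $12T < 12(SA+T)$ and then uses $12SA/\delta \leq 24SA/\delta$, whereas you combine the right-hand side first and compare $12T \leq 24(SA+T)$ in one step; the underlying arithmetic is identical.
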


\begin{proof}
Expand $\ell_{4, T}$ and relate it to $\ell_{2, T}$, we get:
\begin{align*}
  \ell_{4, T} & = \log{\frac{12T}{\delta}} \\
              & < \log{\frac{12(SA + T)}{\delta}} \\
              & = \log{\frac{12SA(1 + \frac{T}{SA})}{\delta}} \\
              & = \log{\left(\frac{12SA}{\delta}\right)} + \log\left(1 + \frac{T}{SA}\right) \\
              & \leq \ell_{1} + \ell_{2, T}.
\end{align*}
\end{proof}

\begin{proof}
  From Theorem~\ref{thm:infinite-horizon:discounted:per-step-regret}, we have:
  \begin{equation*}
    \text{Regret}(T) \leq \frac{9\Vm}{2} \sum\limits_{t=1}^{T}\lambda_{t} - \Vm \sum\limits_{t=1}^{T}\ER^{t}(s)\lambda_{t} + 2 \sum\limits_{t=1}^{T} J^{t}_{\gamma}(s) + \sum\limits_{t=1}^{T}\mathcal{O}\left(\Phi_{t}\left(1 + \frac{\Phi_{t}}{\Vm}\right)\right) \\
  \end{equation*}

  Choose $\lambda_{t} = \min\{1, 3\sqrt{\frac{SA \ell_{1}\ell_{2, T}}{T (1 - \gamma)}}\},\ \forall t \in [T]$ and denote $\Psi(T) \coloneq \frac{2 \sum\limits_{t=1}^{T}\ER^{t}(s)}{9T}$, we have
  \begin{align*}
    \frac{9\Vm}{2} \sum\limits_{t=1}^{T}\lambda_{t} - \Vm \sum\limits_{t=1}^{T}\ER^{t}(s)\lambda_{t} & = \frac{9\Vm}{2} \left(1 - \frac{2 \sum\limits_{t=1}^{T}\ER^{t}(s)}{9T}\right) T \min\{1, 3\sqrt{\frac{SA \ell_{1}\ell_{2, T}}{T(1 - \gamma)}}\} \\
                                                  & \leq 14 \frac{\Rm}{1 - \gamma} \left(1 - \Psi(T)\right) T \sqrt{\frac{SA \ell_{1}\ell_{2, T}}{T(1 - \gamma)}} \\
                                                  & = 14 \left(1 - \Psi(T)\right) \frac{\Rm}{(1 - \gamma)^{1.5}} \sqrt{SAT \ell_{1}\ell_{2, T}}.
  \end{align*}
  From Lemma~\ref{lem:full-bound-sum-U}, we know that
  \begin{align*}
    2 \sum\limits_{t=1}^{T} J^{t}_{\gamma}(s_{1}^{t}) & \leq \frac{4 \mathcal{Y}^{(T)}}{1 - \gamma} SA \log\left(1 + \frac{T}{SA}\right) + \frac{26 \Vm}{1 - \gamma} S A \log{\frac{12 T}{\delta}} \\
                                             & \leq \frac{48 \Vm SA \ell_{1, T} \ell_{2, T}}{(1 - \gamma)\lambda_{T}} + \frac{120 \Vm}{1 - \gamma} S^{2}A\ell_{2, T}\ell_{3, T} + \frac{26 \Vm}{1 - \gamma} S A \log{\frac{12 T}{\delta}}. \\
                                             & \leq \frac{48 \Vm SA \ell_{1, T} \ell_{2, T}}{(1 - \gamma)}\max\left\{1, \frac{1}{3}\sqrt{\frac{T(1 - \gamma)}{SA\ell_{1}\ell_{2, T}}}\right\} + \frac{120 \Vm}{1 - \gamma} S^{2}A\ell_{2, T}\ell_{3, T} + \frac{26 \Vm}{1 - \gamma} S A \log{\frac{12 T}{\delta}}. \\
                                             & \leq \frac{48 \Rm SA \ell_{1, T} \ell_{2, T}}{(1 - \gamma)^{2}} + \frac{16\Rm}{(1 - \gamma)^{1.5}} \sqrt{SAT\ell_{1}\ell_{2, T}} + \frac{120 \Rm}{(1 - \gamma)^{2}} S^{2}A\ell_{2, T}\ell_{3, T} + \frac{26 \Rm}{(1 - \gamma)^{2}} S A \log{\frac{12 T}{\delta}}. \\
                                             & \stackrel{\mathbf{(a)}}{\leq} \frac{16\Rm}{(1 - \gamma)^{1.5}} \sqrt{SAT\ell_{1}\ell_{2, T}} + \left(\frac{120 \Rm}{(1 - \gamma)^{2}} S^{2}A\ell_{2, T}\ell_{3, T} + \frac{48 \Rm SA \ell_{1, T} \ell_{2, T}}{(1 - \gamma)^{2}} + \frac{26 \Rm}{(1 - \gamma)^{2}} S A (\ell_{1, T} + \ell_{2, T})\right) \\
                                             & \stackrel{\mathbf{(b)}}{\leq} \frac{16\Rm}{(1 - \gamma)^{1.5}} \sqrt{SAT\ell_{1}\ell_{2, T}} + \left(\frac{120 \Rm}{(1 - \gamma)^{2}} S^{2}A\ell_{2, T}\ell'_{1, T} + \frac{48 \Rm SA \ell'_{1, T} \ell_{2, T}}{(1 - \gamma)^{2}} + \frac{26 \Rm}{(1 - \gamma)^{2}} S A (\ell'_{1, T} + \ell_{2, T})\right) \\
                                             & \leq \frac{16\Rm}{(1 - \gamma)^{1.5}} \sqrt{SAT\ell_{1}\ell_{2, T}} + \left(\frac{120 \Rm}{(1 - \gamma)^{2}} S^{2}A\ell_{2, T}\ell'_{1, T} + \frac{48 \Rm SA \ell'_{1, T} (1 + \ell_{2, T})}{(1 - \gamma)^{2}} + \frac{26 \Rm}{(1 - \gamma)^{2}} S A \ell_{2, T}\right) \\
                                             & \leq \frac{16\Rm}{(1 - \gamma)^{1.5}} \sqrt{SAT\ell_{1}\ell_{2, T}} + \left(\frac{120 \Rm}{(1 - \gamma)^{2}} S^{2}A\ell_{2, T}(1 + \ell'_{1, T}) + \frac{48 \Rm SA \ell'_{1, T} (1 + \ell_{2, T})}{(1 - \gamma)^{2}}\right) \\
                                             & \leq \frac{16\Rm}{(1 - \gamma)^{1.5}} \sqrt{SAT\ell_{1}\ell_{2, T}} + \frac{168 \Rm}{(1 - \gamma)^{2}} S^{2}A(1 + \ell'_{1, T})(1 + \ell_{2, T}),
  \end{align*}
  where $\mathbf{(a)}$ uses the Lemma~\ref{lem:infinite-horizon:bound-l4T}, and $\mathbf{(b)}$ $\ell'_{1, T}$ is denoted as $\log{\frac{24SA(1 + \log{T})}{\delta}}$, therein $\ell'_{1, T} \geq \ell_{1}$ and $\ell'_{1, T} \geq \ell_{3, T}$.

  Combining these two together, we get:
  \begin{align*}
    \text{Regret}(T) & \leq (30 - 14 \Psi(T))\Rm\frac{\sqrt{SAT\ell_{1}\ell_{2, T}}}{(1 - \gamma)^{1.5}} + 168 \Rm \frac{S^{2}A}{(1 - \gamma)^{2}}(1 + \ell'_{1, T})(1 + \ell_{2, T}) + \sum\limits_{t=1}^{T}\mathcal{O}\left(\Phi_{t}\left(1 + \frac{\Phi_{t}}{\Vm}\right)\right),
  \end{align*}
  where only the last part left to resolve.

  Given $\Phi_{t} = \Rm \lambda_{t}$, we have one additional source of $\mathcal{O}(\lambda T)$, which will be merged into the leading term. In addition, note that
  \begin{align*}
    \sum\limits_{t=1}^{T}\mathcal{O}(\frac{\Phi_{t}^{2}}{\Vm}) & = \sum\limits_{t=1}^{T}\widetilde{\mathcal{O}}\left(\frac{\Rm^{2}SA}{T \Vm}\right) \\
                                                             & = \widetilde{\mathcal{O}}\left(\frac{\Rm^{2}SA}{\Vm}\right) \\
                                                             & \leq \widetilde{\mathcal{O}}(\Rm SA),
  \end{align*}
  which only increases the non-leading term by some constants. So overall, we have:
  \begin{equation*}
    \text{Regret}(T) = \widetilde{\mathcal{O}}\left(\frac{\sqrt{SAT}}{(1 - \gamma)^{1.5}} + \frac{S^{2}A}{(1 - \gamma)^{2}}\right).
  \end{equation*}
\end{proof}

\subsubsection{State-action Dependent $\lambda_{t}(s, a)$}
\begin{definition}
  Let $\mathcal{G}$ be defined by
  \begin{equation*}
    \mathcal{G} = \sum\limits_{(s, a) \in \mathcal{S} \times \mathcal{A}}{\left(\sqrt{1 - \frac{46}{63}\bar{P}_{U}(s, a)}\right)},
  \end{equation*}
  where
  \begin{align*}
    \bar{P}_{U}^{\tau}(s, a) & \coloneq \min \{\pu^{\tau}(s, a), \pu^{\tau, \star}(s)\} \\
    \bar{P}_{U}(s, a) & \coloneq \min_{2 \leq n \leq N^{T + 1}(s, a)}\min_{1 \leq \tau \leq t_{(s, a)}(n)}\bar{P}_{U}^{\tau}(s, a)
  \end{align*}
\end{definition}
Notably, we have the property of $\mathcal{G}$ that $\frac{17}{63} SA \leq \mathcal{G} \leq SA$. The maximum is attained only if $\bar{P}_{U}(s, a) \equiv 0, \forall (s, a) \in \mathcal{S} \times \mathcal{A}$.

If the epistemic uncertainty is non-increasing, then $\bar{P}_{U}(s, a)$ is corresponding to exactly the epistemic uncertainty at the end of learning, that is, $\bar{P}_{U}^{T + 1}(s, a)$, reflecting the systematic uncertainty of a particular state-action.

\begin{lemma}
  \label{lem:nonuniform-lambda:bounding-sqrt}
  Denote $\rho^{t}(s, a) \coloneq \frac{\sqrt{\frac{9}{2} - \mathfrak{R}^{t}(s, a)} \ell_{1, t}}{N^{t}(s, a)} $, it holds that
  \begin{equation*}
    \sum\limits_{t = 1}^{T}\sum\limits_{l=0}^{\nu_{t} - 1}\gamma^{l}\rho^{t}(s_{t + l}, a_{t + l}) \leq \frac{3\sqrt{2}\ell_{1, T}}{(1 - \gamma)} \mathcal{G} \log \left(1 + \frac{T}{\mathcal{G}}\right).
  \end{equation*}
\end{lemma}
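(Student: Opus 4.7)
The plan is to mirror the analysis of Lemma~\ref{lem:full-bound-sum-U}, but keep track of the extra factor $\sqrt{\frac{9}{2}-\mathfrak{R}^t(s,a)}$ and then route it through $\bar{P}_U(s,a)$ to recover the weighted constant $\mathcal{G}$. First I would swap the double summation by setting $\tau=t+l$, rewriting
\begin{equation*}
  S \;:=\; \sum_{t=1}^{T}\sum_{l=0}^{\nu_t-1}\gamma^{l}\rho^{t}(s_{t+l},a_{t+l})
  \;=\; \sum_{\tau=1}^{T}\sum_{t=1}^{\tau}\mathbf{1}(\tau-t\le \nu_t-1)\,\gamma^{\tau-t}\,\rho^{t}(s_\tau,a_\tau).
\end{equation*}
As in the proof of Lemma~\ref{lem:full-bound-sum-U}, the stopping-time condition $\tau-t\le \nu_t-1$ forces $n^{\tau}(s_\tau,a_\tau)\le 2N^{t}(s_\tau,a_\tau)$ and in particular $n^{\tau}(s_\tau,a_\tau)\ge 2$, so $1/N^t(s_\tau,a_\tau)\le 2/n^\tau(s_\tau,a_\tau)$, and $\ell_{1,t}\le\ell_{1,T}$.

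Next I would convert the $\mathfrak{R}^t$-dependent square root into the $\mathcal{G}$-form. Since $2+\tfrac{9}{7}=\tfrac{23}{7}$, we have $\mathfrak{R}^t(s,a)=2P_U^t(s,a)+\tfrac{9}{7}P_U^{t,\star}(s)\ge \tfrac{23}{7}\bar{P}_U^{t}(s,a)$, and the arithmetic identity $\tfrac{9}{2}\cdot\tfrac{46}{63}=\tfrac{23}{7}$ gives
\begin{equation*}
  \sqrt{\tfrac{9}{2}-\mathfrak{R}^t(s,a)}\;\le\;\sqrt{\tfrac{9}{2}\bigl(1-\tfrac{46}{63}\bar{P}_U^{t}(s,a)\bigr)}\;=\;\tfrac{3}{\sqrt{2}}\sqrt{1-\tfrac{46}{63}\bar{P}_U^{t}(s,a)}.
\end{equation*}
Because $P_U^{\tau}$ and $P_U^{\tau,\star}$ are non-increasing in $\tau$, $\bar{P}_U^{t}(s,a)\ge\bar{P}_U(s,a)$, so the radicand only grows when we replace $\bar{P}_U^t$ by $\bar{P}_U$. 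Combining these upper bounds, $\rho^{t}(s_\tau,a_\tau)\le \tfrac{3\sqrt{2}\,\ell_{1,T}}{n^{\tau}(s_\tau,a_\tau)}\sqrt{1-\tfrac{46}{63}\bar{P}_U(s_\tau,a_\tau)}$. The inner geometric sum over $t$ is bounded by $1/(1-\gamma)$, which eliminates all $t$-dependence other than the $\ell_{1,T}$ prefactor.

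Then I would group by state-action. Writing $w(s,a):=\sqrt{1-\tfrac{46}{63}\bar{P}_U(s,a)}$ and letting $n$ range over the visits to $(s,a)$:
\begin{equation*}
  S \;\le\; \frac{3\sqrt{2}\,\ell_{1,T}}{1-\gamma}\sum_{(s,a)\in\mathcal{S}\times\mathcal{A}} w(s,a)\sum_{n=2}^{N^{T+1}(s,a)}\frac{1}{n}
  \;\le\; \frac{3\sqrt{2}\,\ell_{1,T}}{1-\gamma}\sum_{(s,a)} w(s,a)\log\bigl(1+N^{T+1}(s,a)\bigr).
\end{equation*}
Finally, I would apply Jensen's inequality to the concave function $\log(1+\cdot)$ using the probability weights $p(s,a)=w(s,a)/\mathcal{G}$: since $w(s,a)\le 1$ and $\sum_{(s,a)}N^{T+1}(s,a)\le T$,
\begin{equation*}
  \sum_{(s,a)} w(s,a)\log\bigl(1+N^{T+1}(s,a)\bigr) \;\le\; \mathcal{G}\,\log\!\left(1+\frac{1}{\mathcal{G}}\sum_{(s,a)}w(s,a)N^{T+1}(s,a)\right)\;\le\;\mathcal{G}\log\!\left(1+\frac{T}{\mathcal{G}}\right),
\end{equation*}
which yields the stated bound.

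The main obstacle is step two: pinning down the precise algebraic identification $\tfrac{9}{2}\cdot\tfrac{46}{63}=\tfrac{23}{7}$ and justifying that the resistance at time $t$ (which appears inside the sum) can be traded for the cumulative minimum $\bar{P}_U(s,a)$ (which is what $\mathcal{G}$ encodes). The Jensen step is standard but needs $w\le 1$ to control $\sum w\cdot N^{T+1}$ by $T$; without this normalization the aggregation constant would blow up beyond $\mathcal{G}\log(1+T/\mathcal{G})$.
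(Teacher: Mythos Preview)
Your proposal is correct and follows essentially the same route as the paper's proof: swap the double sum, use the stopping-time doubling $N^{t}\ge n^{\tau}/2$, convert $\sqrt{\tfrac{9}{2}-\mathfrak{R}^{t}}$ into $\tfrac{3}{\sqrt 2}\sqrt{1-\tfrac{46}{63}\bar P_U^{t}}$, bound the geometric tail by $1/(1-\gamma)$, group by $(s,a)$ and harmonic-sum the visit counts, then finish with Jensen.

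The one point worth tightening is your justification for replacing $\bar P_U^{t}(s_\tau,a_\tau)$ by $\bar P_U(s_\tau,a_\tau)$. You invoke monotonicity of $P_U^{\tau}$ and $P_U^{\tau,\star}$; that is true in the frequentist instantiation $\mathcal E^{t}=1/\sqrt{N^{t}}$ used here, but the paper does not rely on it. Instead it keeps the $\tau$-dependence, pulls out $\min_{1\le\tau\le t}\bar P_U^{\tau}$ before summing the geometric series, and only afterwards identifies the double minimum with the definition of $\bar P_U(s,a)$. Since $\bar P_U(s,a)$ is \emph{defined} as $\min_{n}\min_{\tau\le t_{(s,a)}(n)}\bar P_U^{\tau}(s,a)$, the inequality $\bar P_U^{t}(s_\tau,a_\tau)\ge\bar P_U(s_\tau,a_\tau)$ already holds by construction for every $t$ appearing in your sum (the indicator forces $n^{\tau}(s_\tau,a_\tau)\ge 2$ and $t\le\tau=t_{(s_\tau,a_\tau)}(n^{\tau})$, so $t$ lies in the admissible range of the inner minimum). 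Your shortcut is therefore valid without the monotonicity assumption; stating it that way makes the argument robust to the Bayesian variants where epistemic uncertainty need not be non-increasing.
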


\begin{proof}
  Denote $I \coloneq \sum\limits_{t = 1}^{T}\sum\limits_{l=0}^{\nu_{t} - 1}\gamma^{l}\rho^{t}(s_{t + l}, a_{t + l})$, we have
  \begin{align*}
    I & = \sum\limits_{t = 1}^{T} \sum\limits_{\tau=1}^{t} \mathbf{1}(t - \tau \leq \nu_{\tau} - 1) \gamma^{t - \tau}\frac{\ell_{1, \tau}}{N^{\tau}(s_{t}, a_{t})} \left(\sqrt{\frac{9}{2} - \mathfrak{R}^{\tau}(s_{t}, a_{t})}\right) \\
      & \leq \sum\limits_{t = 1}^{T} \sum\limits_{\tau=1}^{t} \mathbf{1}(t - \tau \leq \nu_{\tau} - 1) \gamma^{t - \tau}\frac{2\ell_{1, \tau}}{n^{t}(s_{t}, a_{t})}\left(\sqrt{\frac{9}{2} - \mathfrak{R}^{\tau}(s_{t}, a_{t})}\right) \\
      & \stackrel{\mathbf{(a)}}{\leq} 2\ell_{1, T} \sum\limits_{t = 1}^{T} \sum\limits_{\tau=1}^{t} \mathbf{1}(t - \tau \leq \nu_{\tau} - 1) \gamma^{t - \tau}\frac{1}{n^{t}(s_{t}, a_{t})}\left(\sqrt{\frac{9}{2} - \mathfrak{R}^{\tau}(s_{t}, a_{t})}\right) \\
      & \stackrel{\mathbf{(b)}}{\leq} 3\sqrt{2}\ell_{1, T} \sum\limits_{t = 1}^{T} \sum\limits_{\tau=1}^{t} \mathbf{1}(t - \tau \leq \nu_{\tau} - 1) \gamma^{t - \tau}\frac{1}{n^{t}(s_{t}, a_{t})}\left(\sqrt{1 - \frac{46}{63}\bar{P}_{U}^{\tau}(s_{t}, a_{t})}\right) \\
      & = 3\sqrt{2}\ell_{1, T} \sum\limits_{t = 1}^{T} \sum\limits_{\tau=1}^{t} \mathbf{1}(t - \tau \leq \nu_{\tau} - 1) \mathbf{1}(n^{t}(s_{t}, a_{t}) \geq 2) \gamma^{t - \tau}\frac{1}{n^{t}(s_{t}, a_{t})}\left(\sqrt{1 - \frac{46}{63}\bar{P}_{U}^{\tau}(s_{t}, a_{t})}\right) \\
      & \leq 3\sqrt{2}\ell_{1, T} \sum\limits_{t = 1}^{T} \mathbf{1}(n^{t}(s_{t}, a_{t}) \geq 2) \frac{1}{n^{t}(s_{t}, a_{t})} \sum\limits_{\tau=1}^{t} \mathbf{1}(t - \tau \leq \nu_{\tau} - 1) \gamma^{t - \tau}\left(\sqrt{1 - \frac{46}{63}\bar{P}_{U}^{\tau}(s_{t}, a_{t})}\right) \\
      & \leq 3\sqrt{2}\ell_{1, T} \sum\limits_{t = 1}^{T} \mathbf{1}(n^{t}(s_{t}, a_{t}) \geq 2) \frac{1}{n^{t}(s_{t}, a_{t})} \left(\sqrt{1 - \frac{46}{63}\min_{1 \leq \tau \leq t}\bar{P}_{U}^{\tau}(s_{t}, a_{t})}\right) \sum\limits_{\tau=1}^{t} \mathbf{1}(t - \tau \leq \nu_{\tau} - 1) \gamma^{t - \tau} \\
      & \leq \frac{3\sqrt{2}\ell_{1, T}}{(1 - \gamma)} \sum\limits_{t = 1}^{T} \mathbf{1}(n^{t}(s_{t}, a_{t}) \geq 2) \frac{1}{n^{t}(s_{t}, a_{t})} \left(\sqrt{1 - \frac{46}{63}\min_{1 \leq \tau \leq t}\bar{P}_{U}^{\tau}(s_{t}, a_{t})}\right) \\
      & \leq \frac{3\sqrt{2}\ell_{1, T}}{(1 - \gamma)} \sum\limits_{(s, a) \in \mathcal{S} \times \mathcal{A}}\sum\limits_{t \in \mathcal{I}(s, a)} \mathbf{1}(n^{t}(s, a) \geq 2) \frac{\left(\sqrt{1 - \frac{46}{63}\min_{1 \leq \tau \leq t}\bar{P}_{U}^{\tau}(s, a)}\right)}{n^{t}(s, a)} \\
      & \leq \frac{3\sqrt{2}\ell_{1, T}}{(1 - \gamma)} \sum\limits_{(s, a) \in \mathcal{S} \times \mathcal{A}}\sum\limits_{n=2}^{N^{T + 1}(s, a)} \frac{\left(\sqrt{1 - \frac{46}{63}\min_{1 \leq \tau \leq t_{(s, a)}(n)}\bar{P}_{U}^{\tau}(s, a)}\right)}{n} \\
      & \leq \frac{3\sqrt{2}\ell_{1, T}}{(1 - \gamma)} \sum\limits_{(s, a) \in \mathcal{S} \times \mathcal{A}}{\left(\sqrt{1 - \frac{46}{63}\min_{2 \leq n \leq N^{T + 1}(s, a)}\min_{1 \leq \tau \leq t_{(s, a)}(n)}\bar{P}_{U}^{\tau}(s, a)}\right)}\sum\limits_{n=2}^{N^{T + 1}(s, a)} \frac{1}{n} \\
      & = \frac{3\sqrt{2}\ell_{1, T}}{(1 - \gamma)} \sum\limits_{(s, a) \in \mathcal{S} \times \mathcal{A}}{\left(\sqrt{1 - \frac{46}{63}\bar{P}_{U}(s, a)}\right)} \log \left(1 + N^{T + 1}(s, a)\right) \\
      & \leq \frac{3\sqrt{2}\ell_{1, T}}{(1 - \gamma)} \mathcal{G} \log \left(\sum\limits_{(s, a) \in \mathcal{S} \times \mathcal{A}}  \frac{\left(\sqrt{1 - \frac{46}{63}\bar{P}_{U}(s, a)}\right)\left(1 + N^{T + 1}(s, a)\right)}{\mathcal{G}}\right)\\
      & = \frac{3\sqrt{2}\ell_{1, T}}{(1 - \gamma)} \mathcal{G} \log \left(1 + \sum\limits_{(s, a) \in \mathcal{S} \times \mathcal{A}} \frac{\left(\sqrt{1 - \frac{46}{63}\bar{P}_{U}(s, a)}\right)\left(N^{T + 1}(s, a)\right)}{\mathcal{G}}\right) \\
      & \leq \frac{3\sqrt{2}\ell_{1, T}}{(1 - \gamma)} \mathcal{G} \log \left(1 + \sum\limits_{(s, a) \in \mathcal{S} \times \mathcal{A}} \frac{N^{T + 1}(s, a)}{\mathcal{G}}\right) \\
      & \stackrel{\mathbf{(c)}}{\leq} \frac{3\sqrt{2}\ell_{1, T}}{(1 - \gamma)} \mathcal{G} \log \left(1 + \frac{T}{\mathcal{G}}\right),
  \end{align*}
  where we have used the following facts:
  \begin{enumerate}[label=\textbf{(\alph*)}]
  \item Monotonicity of $\ell_{1, \tau}$
  \item $\mathfrak{R}^{\tau}(s_{t}, a_{t}) \geq \frac{23}{7}\bar{P}_{U}^{\tau}(s_{t}, a_{t})$
  \item Jensen's inequality
  \end{enumerate}
\end{proof}

\begin{lemma}
  \label{lem:nonuniform-lambda:monotonicity-of-Glog}
  For any $T \in \mathbb{N}$ and $\mathbf{x} \in [a, b]^{N},\ 0 < a < b$, define function $G(\mathbf{x}) = \sum\limits_{n=1}^{N}\sqrt{x_{n}}$ and $f(\mathbf{x}) = G(\mathbf{x})\log\left(1 + \frac{T}{G(\mathbf{x})}\right)$, we have that
  \begin{equation*}
    f(\mathbf{1} a) \leq f(\mathbf{x}) \leq f(\mathbf{1} b).
  \end{equation*}
\end{lemma}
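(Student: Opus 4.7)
The plan is to reduce the multivariate statement to a one-dimensional monotonicity problem. Note that $f(\mathbf{x})$ depends on $\mathbf{x}$ only through the scalar $G(\mathbf{x})$, so I would introduce the auxiliary function $h(y) \coloneq y\log\!\left(1 + \tfrac{T}{y}\right)$ on $(0, \infty)$ and rewrite $f(\mathbf{x}) = h(G(\mathbf{x}))$. Once this is done, it suffices to establish (i) coordinate-wise monotonicity of $G$ and (ii) monotonicity of $h$, and then chain the two.

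The first step is trivial: since $x \mapsto \sqrt{x}$ is strictly increasing on $(0, \infty)$, the map $G$ is coordinate-wise non-decreasing, so $\mathbf{x} \in [a, b]^N$ immediately gives
\begin{equation*}
  G(\mathbf{1}a) \;=\; N\sqrt{a} \;\leq\; G(\mathbf{x}) \;\leq\; N\sqrt{b} \;=\; G(\mathbf{1}b).
\end{equation*}

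The second step is an elementary derivative check. Differentiating,
\begin{equation*}
  h'(y) \;=\; \log\!\left(1 + \tfrac{T}{y}\right) - \frac{T}{y + T}.
\end{equation*}
Substituting $u = T/y > 0$ reduces this to $g(u) \coloneq \log(1+u) - \tfrac{u}{1+u}$. Since $g(0) = 0$ and $g'(u) = \tfrac{u}{(1+u)^2} > 0$ for $u > 0$, we get $g(u) > 0$ on $(0, \infty)$, hence $h'(y) > 0$ and $h$ is strictly increasing. (As a sanity check, $h(0^+) = 0$ and $h(y) \to T$ as $y \to \infty$, consistent with monotonic growth.)

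Combining the two monotonicity facts,
\begin{equation*}
  f(\mathbf{1}a) \;=\; h(G(\mathbf{1}a)) \;\leq\; h(G(\mathbf{x})) \;=\; f(\mathbf{x}) \;\leq\; h(G(\mathbf{1}b)) \;=\; f(\mathbf{1}b),
\end{equation*}
which is exactly the claim. There is no real obstacle here; the only conceptual subtlety is recognizing that $f$ factors as $h \circ G$, after which the proof collapses to showing $h$ is increasing, a one-line calculus fact.
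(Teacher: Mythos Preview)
Your proof is correct and follows exactly the same route as the paper: the paper's proof is a single line invoking ``the elementary fact that $g(u) = u \log\left(1 + \frac{T}{u}\right)$ is nondecreasing on $(0, \infty)$,'' and you have simply filled in the details by computing the derivative and verifying its sign, together with the obvious monotonicity of $G$ in each coordinate.
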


\begin{proof}
  Using the elementary fact that $g(u) = u \log \left(1 + \frac{T}{u}\right),\, u > 0$ is nondecreasing on $(0, \infty)$ completes the proof.
\end{proof}

\begin{proof}
  From Theorem~\ref{thm:infinite-horizon:discounted:per-step-regret}, we have:
  \begin{align*}
    \text{Regret}(T) & \leq \frac{9\Vm}{2} \sum\limits_{t=1}^{T}\lambda_{t} - \Vm \sum\limits_{t=1}^{T}\ER^{t}(s)\lambda_{t} + 2 \sum\limits_{t=1}^{T} J^{t}_{\gamma}(s) + \sum\limits_{t=1}^{T}\mathcal{O}\left(\Phi_{t}\left(1 + \frac{\Phi_{t}}{\Vm}\right)\right) \\
                     & = \Vm \sum\limits_{t=1}^{T}\left(\frac{9}{2} - \ER^{t}(s)\right)\lambda_{t} + 2 \sum\limits_{t=1}^{T} J^{t}_{\gamma}(s) + \sum\limits_{t=1}^{T}\mathcal{O}\left(\Phi_{t}\left(1 + \frac{\Phi_{t}}{\Vm}\right)\right).
  \end{align*}

  Choosing $\lambda_{t} = \min\left\{1, \frac{C}{\sqrt{\frac{9}{2} - \ER^{t}(s)}}\sqrt{\frac{SA \ell_{1, T}\ell_{2, T}}{T (1 - \gamma)}}\right\},\ \forall t \in [T]$, we have
  \begin{align*}
    \Vm \sum\limits_{t=1}^{T}\left(\frac{9}{2} - \ER^{t}(s)\right)\lambda_{t} & = \Vm \sum\limits_{t=1}^{T}\left(\frac{9}{2} - \ER^{t}(s)\right)\min\left\{1, \frac{C}{\sqrt{\frac{9}{2} - \ER^{t}(s)}}\sqrt{\frac{SA \ell_{1, T}\ell_{2, T}}{T (1 - \gamma)}}\right\} \\
                                                                              & \leq C \frac{\Rm}{1 - \gamma} \sum\limits_{t=1}^{T}\left(\sqrt{\frac{9}{2} - \ER^{t}(s)}\right) \sqrt{\frac{SA \ell_{1, T}\ell_{2, T}}{T(1 - \gamma)}} \\
                                                                              & \leq C \frac{\Rm}{1 - \gamma} \sqrt{T\left(\frac{9}{2} T - \sum\limits_{t=1}^{T}\ER^{t}(s)\right)} \sqrt{\frac{SA \ell_{1, T}\ell_{2, T}}{T(1 - \gamma)}} \\
                                                                              & \leq \underbrace{\frac{C \Rm}{(1 - \gamma)^{1.5}} \sqrt{\left(\frac{9}{2} T - \sum\limits_{t=1}^{T}\ER^{t}(s)\right)SA \ell_{1, T}\ell_{2, T}}}_{\coloneq \mathcal{J}_{1}\left(\sum\limits_{t=1}^{T}\ER^{t}(s)\right)}.
  \end{align*}
  Given
  \begin{align*}
    \mathcal{Y}^{t} & = \frac{12 \Vm \ell_{1, t}}{\lambda_{t}} + 30 \Vm S \ell_{3, t} \\
                      & = 12 \Vm \ell_{1, t} \max\left\{1, \frac{\sqrt{\frac{9}{2} - \ER^{t}(s)}}{C}\sqrt{\frac{T(1 - \gamma)}{SA\ell_{1, T}\ell_{2, T}}}\right\} + 30 \Vm S \ell_{3, t} \\
                      & = \underbrace{12 \Vm \ell_{1, t}}_{\mathcal{Y}^{t}_{1}} + \underbrace{\frac{12}{C} \Vm \ell_{1, t} \sqrt{\frac{9}{2} - \ER^{t}(s)} \sqrt{\frac{T(1 - \gamma)}{SA\ell_{1, T}\ell_{2, T}}}}_{\mathcal{Y}^{t}_{2}} + \underbrace{30 \Vm S \ell_{3, t}}_{\mathcal{Y}^{t}_{3}}.
  \end{align*}

  From Lemmas~\ref{lem:disc-bounding-sum-U} and \ref{lem:nonuniform-lambda:bounding-sqrt}, we get
  \begin{align*}
    2 \sum\limits_{t=1}^{T} J^{t}_{\gamma}(s_{1}^{t}) & \leq 2 \sum\limits_{t = 1}^{T}\sum\limits_{l=0}^{\nu_{t} - 1}\gamma^{l}\beta^{t}(s_{t + l}, a_{t + l}) + \frac{26 \Vm}{1 - \gamma} S A \log{\frac{12 T}{\delta}} \\
                                                      & \leq 2 \sum\limits_{t = 1}^{T} \sum\limits_{\tau=1}^{t} \mathbf{1}(t - \tau \leq \nu_{\tau} - 1) \gamma^{t - \tau}\frac{\mathcal{Y}^{\tau}}{N^{\tau}(s_{t}, a_{t})} + \frac{26 \Vm}{1 - \gamma} S A \log{\frac{12 T}{\delta}} \\
                                                      & \leq 2 \sum\limits_{t = 1}^{T} \sum\limits_{\tau=1}^{t} \mathbf{1}(t - \tau \leq \nu_{\tau} - 1) \gamma^{t - \tau}\frac{1}{N^{\tau}(s_{t}, a_{t})}\left(\mathcal{Y}^{\tau}_{1} + \mathcal{Y}^{\tau}_{2} + \mathcal{Y}^{\tau}_{3}\right) + \frac{26 \Vm}{1 - \gamma} S A \log{\frac{12 T}{\delta}} \\
                                                      & \leq \frac{48 \Vm SA \ell_{1, T} \ell_{2, T}}{(1 - \gamma)\lambda_{T}} \\
                                                      & \phantom{= }\ \, + \underbrace{\frac{72 \sqrt{2} \Vm \ell_{1, T}}{C (1 - \gamma)} \sqrt{\frac{T(1 - \gamma)}{SA\ell_{1, T}\ell_{2, T}}} \left(\mathcal{G} \ell'_{2, T}\right)}_{\coloneq \mathcal{J}_{2}(\mathcal{G})} \\
                                                      & \phantom{= }\ \, + \frac{120 \Vm}{1 - \gamma} S^{2}A\ell_{2, T}\ell_{3, T} \\
                                                      & \phantom{= }\ \, + \frac{26 \Vm}{1 - \gamma} S A \log{\frac{12 T}{\delta}} \\
                                                      & \leq \frac{72 \sqrt{2} \Rm}{C (1 - \gamma)^{1.5}} \sqrt{SAT\ell_{1, T}\ell_{2, T}} + \frac{168 \Rm}{(1 - \gamma)^{2}} S^{2}A\ell_{1, T}(1 + \ell_{2, T})
  \end{align*}
  Combining everything together, we get:
  \begin{align*}
    \text{Regret}(T) & \leq \mathcal{J}_{1}\left(\sum\limits_{t=1}^{T}\ER^{t}(s)\right) + \mathcal{J}_{2}(\mathcal{G}) + \frac{168 \Rm}{(1 - \gamma)^{2}} S^{2}A\ell_{1, T}(1 + \ell_{2, T}) + \sum\limits_{t=1}^{T}\mathcal{O}\left(\Phi_{t}\left(1 + \frac{\Phi_{t}}{\Vm}\right)\right).
  \end{align*}
  So, depending on the contribution of $\sum\limits_{t=1}^{T}\ER^{t}(s)$ and $\mathcal{G}$, we can get different bounds. In what will follow, we choose $C = 3 \sqrt{\frac{9}{2}}$.
  \paragraph{Disregarding $\mathcal{G}$}
  Even ignoring the first part, we can obtain a tighter bound where the leading term is offset by the sum of epistemic resistance $\sum\limits_{t=1}^{T}\ER^{t}(s)$ as follows:
  \begin{align*}
    \text{Regret}(T) & \leq \left(16 + 14 \sqrt{\left(1 - \frac{2 \sum\limits_{t=1}^{T}\ER^{t}(s)}{9 T}\right)}\right) \frac{\Rm}{(1 - \gamma)^{1.5}} \sqrt{SAT\ell_{1, T}\ell_{2, T}} + \frac{168 \Rm}{(1 - \gamma)^{2}} S^{2}A\ell_{1, T}(1 + \ell_{2, T})\\
    & \phantom{= }\ \, + \sum\limits_{t=1}^{T}\mathcal{O}\left(\Phi_{t}\left(1 + \frac{\Phi_{t}}{\Vm}\right)\right),
  \end{align*}
  \paragraph{Considering Both}
  Let $N = SA, a = \frac{17}{63}, b = 1$, by Lemma~\ref{lem:nonuniform-lambda:monotonicity-of-Glog}, we know that
  \begin{equation*}
    \ell'^{\star}_{2, T}\eqcolon \sqrt{\frac{17}{63}}SA \log \left(1 + \frac{T}{\sqrt{\frac{17}{63}}SA}\right) \leq \mathcal{G} \ell'_{2, T} \leq SA \log \left(1 + \frac{T}{SA}\right),
  \end{equation*}
  with this, at best, we can achieve:
  \begin{align*}
    \text{Regret}^{\star}(T) & \leq \sqrt{\frac{17}{63}} \left(16 \sqrt{\frac{{\ell'^{\star}_{2, T}}^{2}}{\ell_{2, T}}} + 14 \sqrt{\ell_{2, T}}\right) \frac{\Rm}{(1 - \gamma)^{1.5}} \sqrt{SAT\ell_{1, T}} + \frac{168 \Rm}{(1 - \gamma)^{2}} S^{2}A\ell_{1, T}(1 + \ell_{2, T})\\
    & \phantom{= }\ \, + \sum\limits_{t=1}^{T}\mathcal{O}\left(\Phi_{t}\left(1 + \frac{\Phi_{t}}{\Vm}\right)\right).
  \end{align*}
  If the ratio $\frac{T}{SA}$ is large, then $\ell'^{\star}_{2, T} \simeq \ell_{2, T}$. Therefore, the overall reduction is by a factor of $\sqrt{\tfrac{17}{63}} \approx 0.519$. In this case, we can improve the constant in the leading term by roughly one-half.

  Lastly, the treatment of the part of $\Phi_{t}$ is similar to that in the uniform case. Therefore, we ultimately have
  \begin{equation*}
    \text{Regret}(T) = \widetilde{\mathcal{O}}\left(\frac{\sqrt{SAT}}{(1 - \gamma)^{1.5}} + \frac{S^{2}A}{(1 - \gamma)^{2}}\right).
  \end{equation*}
\end{proof}

\subsection{Sample Complexity}
\label{sec:infinite-horizon:sample-complexity}
For infinite-horizon discounted MDPs, the sample complexity of an algorithm is defined as the number of non-$\epsilon$-optimal steps such that $V^{\pi_{t}}(s_{t}) \leq V^{\star}(s_{t}) - \epsilon$ taken over the course of learning \citep{kakade2003sample, strehl2008analysis}. If this sample complexity can be bounded by a polynomial function $f(S, A, \frac{1}{\epsilon}, \frac{1}{\delta}, \frac{1}{1 - \gamma})$, then the algorithm is PAC-MDP. We are interested in proving PAC-MDP for the full range $\epsilon \in (0, \Vm]$.

From Theorem~\ref{thm:infinite-horizon:discounted:per-step-regret}, we know that the per-step regret can be bounded as follows:

\begin{align*}
  V^{\star}(s_{t}) - V^{\pi_{t}}(s_{t}) & \leq \left(\frac{9}{2} - \ER^{t}(s_{t}) \right) \lambda_{t} \Vm + 2 J^{t}_{\gamma}(s_{t}) + \Phi_{t} \left(1 + \left(3 - 2\pu^{t, \star}(s)\right)\lambda_{t} + \frac{\Phi_{t}}{\Vm}\right) \\
                                        & \leq \underbrace{\left(\frac{9}{2} - \ER^{t}(s_{t}) \right) \lambda_{t} \Vm}_{\coloneq L_{1, t}} + \underbrace{2 J^{t}_{\gamma}(s_{t})}_{\coloneq L_{2, t}} + \underbrace{\Phi_{t} \left(4 + \frac{\Phi_{t}}{\Vm}\right)}_{\coloneq L_{3, t}}
\end{align*}

For $L_{1, t}$, we can choose $\lambda_{t} = \frac{\epsilon}{18\Vm}$, so that we have $L_{1, t} \leq \frac{\epsilon}{4}$. In addition, note that $\Phi_{t} = \Rm \lambda_{t}$ and $\lambda_{t}^{2} = \frac{\epsilon^{2}}{18^{2}\Vmsq} \leq \frac{\epsilon}{18^{2} \Vm}$, substituting it into $L_{3, t}$, we have:
\begin{align*}
  \Phi_{t} & = \Rm \lambda_{t} = \Rm \frac{\epsilon}{18\Vm} = \frac{\epsilon (1 - \gamma)}{18} \leq \frac{\epsilon}{18} \\
  \frac{\Phi_{t}^{2}}{\Vm} & = \frac{\Rm^{2} \lambda_{t}^{2}}{\Vm} \leq \frac{1}{18^{2}} \frac{\Rm^{2}\epsilon}{\Vmsq} = \frac{\epsilon (1 - \gamma)^{2}}{18^{2}} \leq \frac{\epsilon}{18^{2}}.
\end{align*}

Therefore, we obtain $L_{3, t} \leq \left(\frac{4}{18} + \frac{1}{18^{2}}\right)\epsilon \leq \frac{\epsilon}{4}$.

If we can prove that $L_{2, t} \leq \frac{\epsilon}{2}$, or equivalently, $J^{t}_{\gamma}(s_{t}) \leq \frac{\epsilon}{4}$, then the time step $t$ can be said to be optimal. To achieve this, we introduce a set of new notations that explicitly connect the number of non-optimal steps with $J^{t}_{\gamma}(s_{t})$.

We define the set of non-optimal steps within $T$ total steps as $\Gamma_{T} \coloneq \{t \in [T]: J^{t}_{\gamma}(s_{t}) > \frac{\epsilon}{4}\}$, and its cardinality $|\Gamma_{T}|$. Then we want to prove that $|\Gamma_{T}|$ is polynomially bounded for all $T \in \mathbb{N}$.

For analyzing non-$\epsilon$-optimal steps, it is useful to overload the definition of visits so that it only includes those occurring in $\Gamma_{T}$.
\begin{align*}
  n^{t}(s, a) & \coloneq \sum\limits_{t \in \Gamma_{t}}\mathbf{1}((s_{t}, a_{t}) = (s, a)) \\
  N^{t}(s, a) & \coloneq \sum\limits_{t \in \Gamma_{t - 1}}\mathbf{1}((s_{t}, a_{t}) = (s, a)) \\
  \nu_{t} & \coloneq \left\{
             \begin{array}{ll}
               \min \{\tau \in [t, T]: n^{\tau}(s_{\tau}, a_{\tau}) > 2 N^{t}(s_{\tau}, a_{\tau})\}, & \mbox{if $\tau$ exists}.\\
               T + 1, & \mbox{otherwise}.
             \end{array}
             \right.
\end{align*}

Next, we bound the sum of $J^{t}_{\gamma}(s_{t})$ but only for the steps in $\Gamma_{T}$.
\begin{lemma}
  \label{lem:infinite-horizon:sc:bound-sum-U}
  For infinite-horizon discounted MDPs, under high-probability event $\mathbf{A}_{7}^{\gamma} \cap \mathbf{A}_{8}^{\gamma}$, it holds that
  \begin{equation*}
    \sum\limits_{t \in \Gamma_{T}} \ji \leq \frac{2 \mathcal{Y}^{(|\Gamma_{T}|)}}{1 - \gamma} SA \log\left(1 + \frac{|\Gamma_{T}|}{SA}\right) + \frac{13 \Vm}{1 - \gamma} S A \log{\frac{12 |\Gamma_{T}|}{\delta}},
  \end{equation*}
  for all $T \in \mathbb{N}$.
\end{lemma}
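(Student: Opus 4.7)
The plan is to follow the two-step structure of Lemmas~\ref{lem:disc-bounding-sum-U} and \ref{lem:full-bound-sum-U}, with every occurrence of $\sum_{t=1}^{T}$ replaced by $\sum_{t\in\Gamma_T}$ and with the overloaded counts $n^t, N^t$ and stopping time $\nu_t$ from Section~\ref{sec:infinite-horizon:sample-complexity}. First, for each $t\in\Gamma_T$, I would recursively unfold $J^t_\gamma(s_t)\le \beta^t(s_t,a_t) + \gamma PJ^t_\gamma(s_t,a_t)$ for $\nu_t$ discounted steps to obtain the same three-piece decomposition $I_1 + I_2 + I_3$, where $I_1$ collects the discounted bonuses $\gamma^{l}\beta^t(s_{t+l},a_{t+l})$, $I_2$ collects the one-step Bellman residuals $\gamma^{l+1}\bigl(PJ^t(s_{t+l},a_{t+l}) - J^t(s_{t+l+1})\bigr)$, and $I_3$ collects the tail values $\gamma^{\nu_t}J^t(s_{t+\nu_t})$.

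Next I would bound $I_3$ by the $\Gamma_T$-analog of Lemma~\ref{lem:bounding-overshooting}: because ``doubling'' now refers to $\Gamma_T$-visit counts, each state-action pair can cause an early stop $t+\nu_t\neq T+1$ at most $\log_2(2|\Gamma_T|)$ times, yielding $I_3 \le \frac{\Vm}{1-\gamma}SA\log_2(2|\Gamma_T|) + \frac{\Vm}{1-\gamma}$. For $I_2$, the same reindexing that groups martingale increments by the outer time step and the variance self-bounding recursion apply essentially verbatim; I would invoke the sample-complexity events $\mathbf{A}^{\gamma}_7\cap \mathbf{A}^{\gamma}_8$, which by construction produce the same Freedman-type bound as $\mathbf{A}^{\gamma}_5\cap \mathbf{A}^{\gamma}_6$ but with $T$ replaced by $|\Gamma_T|$, to conclude $I_2\le \frac{12\Vm}{1-\gamma}\log\frac{6}{\delta}$. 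Combining these two pieces contributes the $\frac{13\Vm}{1-\gamma}SA\log\frac{12|\Gamma_T|}{\delta}$ additive term.

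For $I_1$, I would swap the order of summation and re-index so the outer variable is the visited step $t' = t+l$ and the inner variable is the anchor $\tau\in\Gamma_T$ with $\tau\le t'$; using the doubling property $n^{t'}(s_{t'},a_{t'})\le 2N^{\tau}(s_{t'},a_{t'})$ whenever $\mathbf{1}(t'-\tau\le \nu_\tau - 1)$ fires, I convert $1/N^{\tau}$ into $2/n^{t'}$, pull the geometric factor $\gamma^{t'-\tau}$ through the sum and absorb it into a $1/(1-\gamma)$. Aggregating by state-action pair via $\sum_{n=2}^{N^{T+1}(s,a)} 1/n \le \log(1+N^{T+1}(s,a))$ and then applying Jensen's inequality together with $\sum_{s,a}N^{T+1}(s,a)\le |\Gamma_T|$ (since only $\Gamma_T$-steps are counted) produces the leading $\frac{2\mathcal{Y}^{(|\Gamma_T|)}}{1-\gamma}SA\log(1+|\Gamma_T|/SA)$ term.

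The hard part will be the bookkeeping: the rollout of length $\nu_t$ may traverse steps \emph{outside} $\Gamma_T$, so I must verify that the reindexing and the overshooting argument are still compatible with the new counting convention and that $\beta^{t}$ at non-$\Gamma_T$ indices is properly accounted for. I would resolve this by observing that only $\Gamma_T$-steps increment $N^{\tau}$ and $n^{t'}$, so any pair $(t',\tau)$ with $\tau\notin\Gamma_T$ contributes zero to the sums of interest, while the doubling-based bound $n^{t'}\ge 2$ whenever the indicator is active still holds. Concatenating the three pieces then yields the claimed inequality.
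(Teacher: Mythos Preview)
Your proposal is correct and takes essentially the same approach as the paper, whose own proof is the single sentence that the procedure of Lemmas~\ref{lem:bounding-overshooting}--\ref{lem:full-bound-sum-U} applies verbatim once the indicator $\mathbf{1}(t\in\Gamma_T)$ is inserted at each time step. Your $I_1/I_2/I_3$ decomposition, invocation of $\mathbf{A}_7^\gamma\cap\mathbf{A}_8^\gamma$ for the martingale piece, and $\Gamma_T$-version of the overshooting bound are exactly the fleshed-out version of that remark.
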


\begin{proof}
  The proof follows the same procedure as in Lemmas~\ref{lem:bounding-overshooting}--\ref{lem:full-bound-sum-U}, except adding the indicator function $\mathbf{1}(t \in \Gamma_{T})$ to each time step.
\end{proof}

Based on the above result and Lemma~\ref{lem:infinite-horizon:bound-l4T}, we can bound $|\Gamma_{T}|$ using the fact that $J^{t}_{\gamma}(s_{t}) > \frac{\epsilon}{4}$.
\begin{definition}
  Let $W(T)$ be defined by
  \begin{equation*}
    W(T) \coloneq \frac{1780\Rm^{2}SA \ell_{1, T} \ell_{2, T}}{\epsilon^{2}(1 - \gamma)^{3}} + \frac{240\Rm S^{2}A \ell_{2, T} \ell_{3, T}}{\epsilon (1 - \gamma)^{2}} + \frac{52 \Rm S A \ell_{1, T}}{\epsilon (1 - \gamma)^{2}}.
  \end{equation*}
\end{definition}

\begin{lemma}
  \label{lem:infinite-horizon:bound-cardinality-non-optim}
  For infinite-horizon discounted MDPs, under high-probability event $\bm{\mathbf{D}}$, it holds that
  \begin{equation*}
    |\Gamma_{T}| \leq W(|\Gamma_{T}|),
  \end{equation*}
  for all $T \in \mathbb{N}$.
\end{lemma}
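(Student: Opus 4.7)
The plan is a direct sandwich argument. Since every $t\in\Gamma_T$ satisfies $J^{t}_{\gamma}(s_t) > \epsilon/4$ by definition of $\Gamma_T$, summing over this set gives the lower bound
\begin{equation*}
\frac{\epsilon}{4}\,|\Gamma_T| \;<\; \sum_{t\in\Gamma_T} J^{t}_{\gamma}(s_t).
\end{equation*}
On the other hand, the overloaded visitation counts $n^{t},N^{t},\nu_t$ restrict everything to $\Gamma_T$, so Lemma~\ref{lem:infinite-horizon:sc:bound-sum-U} (the $\Gamma_T$-version of Lemma~\ref{lem:full-bound-sum-U}, which is why we use events $\mathbf{A}^{\gamma}_{7}\cap\mathbf{A}^{\gamma}_{8}$ in place of $\mathbf{A}^{\gamma}_{5}\cap\mathbf{A}^{\gamma}_{6}$) provides the matching upper bound
\begin{equation*}
\sum_{t\in\Gamma_T} J^{t}_{\gamma}(s_t) \;\leq\; \frac{2\,\mathcal{Y}^{(|\Gamma_T|)}}{1-\gamma}\, SA\,\log\!\Big(1+\tfrac{|\Gamma_T|}{SA}\Big)\;+\;\frac{13\,\Vm}{1-\gamma}\,SA\,\log\!\tfrac{12|\Gamma_T|}{\delta}.
\end{equation*}

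Next I would substitute the sample-complexity choice $\lambda_t=\epsilon/(18\Vm)$ into $\mathcal{Y}^{(|\Gamma_T|)}=\tfrac{12\Vm\ell_{1,|\Gamma_T|}}{\lambda_{|\Gamma_T|}}+30\Vm S\ell_{3,|\Gamma_T|}$ and use $\Vm=\Rm/(1-\gamma)$ to obtain
\begin{equation*}
\frac{2\,\mathcal{Y}^{(|\Gamma_T|)}}{1-\gamma}\,SA\,\ell_{2,|\Gamma_T|} \;\leq\; \frac{432\,\Rm^{2}SA\,\ell_{1,|\Gamma_T|}\ell_{2,|\Gamma_T|}}{\epsilon(1-\gamma)^{3}} \;+\; \frac{60\,\Rm S^{2}A\,\ell_{2,|\Gamma_T|}\ell_{3,|\Gamma_T|}}{(1-\gamma)^{2}},
\end{equation*}
using $\log(1+|\Gamma_T|/(SA))\le \ell_{2,|\Gamma_T|}$. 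For the remaining term I would invoke Lemma~\ref{lem:infinite-horizon:bound-l4T} to replace $\ell_{4,|\Gamma_T|}=\log(12|\Gamma_T|/\delta)$ with $\ell_{1}+\ell_{2,|\Gamma_T|}$, yielding an additional $\tfrac{13\,\Rm SA}{(1-\gamma)^{2}}(\ell_{1,|\Gamma_T|}+\ell_{2,|\Gamma_T|})$.

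Dividing the combined estimate by $\epsilon/4$ produces the three pieces that exactly match $W(|\Gamma_T|)$ up to constants: a leading $\Rm^{2}SA\,\ell_{1}\ell_{2}/(\epsilon^{2}(1-\gamma)^{3})$ term, an $\Rm S^{2}A\,\ell_{2}\ell_{3}/(\epsilon(1-\gamma)^{2})$ term, and an $\Rm SA\,\ell_{1}/(\epsilon(1-\gamma)^{2})$ term. The main (albeit mechanical) obstacle I expect is bookkeeping: the sub-leading $\tfrac{52\,\Rm SA\,\ell_{2,|\Gamma_T|}}{\epsilon(1-\gamma)^{2}}$ contribution from the $\ell_{4}$-expansion has to be absorbed into the leading term, which is justified by $\epsilon\le\Vm$ (hence $\Rm/((1-\gamma)\epsilon)\ge 1$), and the coefficients $432\cdot 4=1728$, $60\cdot 4=240$, $13\cdot 4=52$ must be chosen so that the residual absorption lands at exactly $1728+52=1780$. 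Once all constants line up, the inequality $|\Gamma_T|\le W(|\Gamma_T|)$ follows, and we are done.
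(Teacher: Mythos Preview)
Your proposal is correct and mirrors the paper's own proof essentially step for step: both combine the trivial lower bound $\tfrac{\epsilon}{4}|\Gamma_T|<\sum_{t\in\Gamma_T}J^t_\gamma(s_t)$ with the upper bound from Lemma~\ref{lem:infinite-horizon:sc:bound-sum-U}, substitute $\lambda=\epsilon/(18\Vm)$ into $\mathcal{Y}^{(|\Gamma_T|)}$, invoke Lemma~\ref{lem:infinite-horizon:bound-l4T} on the $\ell_{4}$-term, and absorb the residual $52\,\Rm SA\,\ell_{2}/(\epsilon(1-\gamma)^2)$ into the leading term via $\Vm/\epsilon\ge 1$ and $\ell_{1}\ge 1$ to reach $1728+52=1780$. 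Your constant bookkeeping ($1728$, $240$, $52$) matches the paper exactly.
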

\begin{proof}
  From Lemmas~\ref{lem:infinite-horizon:sc:bound-sum-U} and \ref{lem:infinite-horizon:bound-l4T}, we get
  \begin{align*}
    |\Gamma_{T}| \leq \frac{8 \mathcal{Y}^{(T)} SA \ell_{2, T}}{\epsilon(1 - \gamma)} + \frac{52 \Vm S A (\ell_{1, T} + \ell_{2, T})}{\epsilon(1 - \gamma)}.
  \end{align*}
  Substituting the definition of $\mathcal{Y}^{(T)}$ into the above, we have
  \begin{align*}
    |\Gamma_{T}| & \leq \frac{1728\Rm^{2}SA \ell_{1, |\Gamma_{T}|} \ell_{2, |\Gamma_{T}|}}{\epsilon^{2}(1 - \gamma)^{3}} + \frac{240\Rm S^{2}A \ell_{2, |\Gamma_{T}|} \ell_{3, |\Gamma_{T}|}}{\epsilon (1 - \gamma)^{2}} + \frac{52 \Rm S A (\ell_{1, |\Gamma_{T}|} + \ell_{2, |\Gamma_{T}|})}{\epsilon (1 - \gamma)^{2}} \\
                 & \stackrel{\mathbf{(a)}}{\leq} \frac{1780\Rm^{2}SA \ell_{1, |\Gamma_{T}|} \ell_{2, |\Gamma_{T}|}}{\epsilon^{2}(1 - \gamma)^{3}} + \frac{240\Rm S^{2}A \ell_{2, |\Gamma_{T}|} \ell_{3, |\Gamma_{T}|}}{\epsilon (1 - \gamma)^{2}} + \frac{52 \Rm S A \ell_{1, |\Gamma_{T}|}}{\epsilon (1 - \gamma)^{2}},
  \end{align*}
  where $\mathbf{(a)}$ uses the facts that $\frac{\Vm}{\epsilon} \geq 1$ and $\ell_{1, |\Gamma_{T}|} \geq 1$, therefore concludes the proof.
\end{proof}

\begin{proposition}
  \label{prop:infinite-horizon:sample-complexity}
  For infinite-horizon discounted MDPs, let $T_{0}$ be defined as
  \begin{align*}
    T_{0} \coloneq \Biggl \lfloor \frac{3670\Rm^{2}SA \ell_{1} \ell_{5, \epsilon}}{\epsilon^{2}(1 - \gamma)^{3}} + \frac{480\Rm S^{2}A (2 \ell_{1} + \ell_{6, \epsilon}) \ell_{5, \epsilon}}{\epsilon (1 - \gamma)^{2}} \Biggr \rfloor.
  \end{align*}
  Then the sample complexity of \texttt{EUBRL} is at most $T_{0}$ with probability at least $1 - \delta$.
\end{proposition}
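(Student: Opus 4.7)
My plan is to mirror the structure of the finite-horizon argument (Proposition~\ref{prop:finite-horizon:sample-complexity}) and reduce the proof to two ingredients: a fixed-point-style inequality $|\Gamma_T| \leq W(|\Gamma_T|)$, which is already provided by Lemma~\ref{lem:infinite-horizon:bound-cardinality-non-optim}, and a quantitative gap $W(T_0) < T_0$, which is the infinite-horizon analogue of Lemma~\ref{lem:finite-horizon:bound-WK0}. Once both are in hand, the conclusion follows from the discrete-time sandwich argument exactly as in the proof of Proposition~\ref{prop:finite-horizon:sample-complexity}: since $|\Gamma_{T+1}| \leq |\Gamma_T| + 1$ starting from $|\Gamma_0| = 0$, and $|\Gamma_T| \neq T_0$ for all $T$ (otherwise the two inequalities would force $T_0 \leq W(T_0) < T_0$), we conclude $|\Gamma_T| < T_0$ for all $T \in \mathbb{N}$, which is exactly the claimed bound on non-$\epsilon$-optimal steps.

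The substantive step is therefore showing $W(T_0) < T_0$. I will handle this by controlling the logarithmic factors $\ell_{1,T_0}, \ell_{2,T_0}, \ell_{3,T_0}$ appearing inside $W(T_0)$ in terms of the horizon-independent quantities $\ell_1$, $\ell_{5,\epsilon}$, and $\ell_{6,\epsilon}$ defined in Table~\ref{tab:log_terms}. Specifically, since $\ell_1$ is already independent of $T$, the work reduces to (i) bounding $\ell_{2,T_0} = \log(1 + T_0/(SA))$ by $\ell_{5,\epsilon} = \log(1 + 140 B(\epsilon))$, using the fact that by construction $T_0/(SA)$ is dominated by a constant multiple of $B(\epsilon)$ up to the $\ell_{5,\epsilon}$ factor (the $140$ constant is engineered precisely so that a self-consistent substitution absorbs the extra logarithm), and (ii) bounding $\ell_{3,T_0} = \log(12 S A (1 + \log T_0)/\delta)$ by $2\ell_1 + \ell_{6,\epsilon}$ by splitting the logarithm into the $\ell_1$ part and an $\log\log T_0$ tail controlled by $\ell_{6,\epsilon} = \log\log(\Vm e / (\epsilon(1-\gamma)))$. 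Plugging these bounds into the definition of $W$ gives
\begin{equation*}
W(T_0) \leq \frac{1780 \Rm^2 SA\, \ell_1 \ell_{5,\epsilon}}{\epsilon^2 (1-\gamma)^3} + \frac{240 \Rm S^2 A (2\ell_1 + \ell_{6,\epsilon}) \ell_{5,\epsilon}}{\epsilon (1-\gamma)^2} + \frac{52 \Rm S A \ell_1}{\epsilon (1-\gamma)^2},
\end{equation*}
which, compared to the leading constants $3670$ and $480$ in the definition of $T_0$, leaves a comfortable numerical margin so that $W(T_0) < T_0$.

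The main obstacle I anticipate is the self-referential nature of the log-control step: $\ell_{2,T_0}$ is a function of $T_0$, yet $T_0$ is itself defined using $\ell_{5,\epsilon}$ and $\ell_{6,\epsilon}$. I will have to verify carefully that the choice of the constant $140$ inside $\ell_{5,\epsilon}$ and the $\log\log$ factor inside $\ell_{6,\epsilon}$ are large enough to make the substitution consistent across the full admissible range $\epsilon \in (0,\Vm]$, where $\epsilon$ can be as small as $\Rm$ times a subpolynomial factor in $1/(1-\gamma)$. Once the log-calibration is nailed down, the rest is an arithmetic check on constants and the clean deterministic-progression argument already used in the finite-horizon case, all carried out under the high-probability event $\bm{\mathcal{D}}$ where Lemma~\ref{lem:infinite-horizon:bound-cardinality-non-optim} holds.
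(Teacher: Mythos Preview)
Your proposal is correct and follows essentially the same approach as the paper: combine Lemma~\ref{lem:infinite-horizon:bound-cardinality-non-optim} with the inequality $W(T_0) < T_0$ (which the paper states as Lemma~\ref{lem:infinite-horizon:bound-WT0}) and conclude via the discrete sandwich argument. One small correction to your log-calibration sketch: the bound you will actually get is $\ell_{2,T_0} \le 2\ell_{5,\epsilon}$ (not $\ell_{5,\epsilon}$), since $T_0/(SA)$ is of order $B\,\ell_{5,\epsilon}$ rather than $B$; this factor of $2$ is exactly why the constants $3670$ and $480$ in $T_0$ are roughly double the $1780$ and $240$ in $W$, and it is the ``comfortable margin'' you anticipate.
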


Before proving this result, we need to bound the the other way around i.e. $W(T_{0}) < T_{0}$.
\begin{lemma}
  \label{lem:infinite-horizon:bound-WT0}
  It holds that
  \begin{equation*}
    W(T_{0}) < T_{0}.
  \end{equation*}
\end{lemma}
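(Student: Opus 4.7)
The plan is to establish $W(T_0) < T_0$ by a term-by-term comparison of the three summands of $W$ against the two summands of $T_0$. The coefficients have been engineered with a rough factor-of-two safety margin in each leading position ($1780$ vs.\ $3670$; $240$ vs.\ $480$), so the task reduces to replacing the $T_0$-dependent logarithms $\ell_{2,T_0}$ and $\ell_{3,T_0}$ appearing in $W(T_0)$ by the closed-form surrogates $\ell_{5,\epsilon}$ and $2\ell_1+\ell_{6,\epsilon}$ used in the definition of $T_0$, with multiplicative constants close enough to $1$ that the safety margins absorb both the slack and the residual sub-leading term $52\,\Rm SA\,\ell_1/[\epsilon(1-\gamma)^2]$.

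First I would derive the crude bound $T_0/(SA)\leq 3670\,B(\epsilon)\,\ell_{5,\epsilon}$ by coefficient-wise comparison of the definition of $T_0$ with that of $B(\epsilon)$, using that the coefficient $480$ in the second summand of $T_0$ is dominated by $3670$. Combining this with the identity $\ell_{5,\epsilon}=\log(1+140\,B(\epsilon))$ and the sub-additivity $\log(1+ab)\leq \log(1+a)+\log(1+b)$ for $a,b\geq 0$ gives
\[
\ell_{2,T_0}\;\leq\;\ell_{5,\epsilon} + \log\!\bigl(1+\tfrac{3670}{140}\,\ell_{5,\epsilon}\bigr),
\]
and the correction term grows only like $\log\ell_{5,\epsilon}$, so $\ell_{2,T_0}\leq (1+o(1))\,\ell_{5,\epsilon}$ in any nontrivial regime (the boundary case of small $\ell_{5,\epsilon}$ is bounded by a uniform constant). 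Second, expanding the definition of $T_0$ yields $\log T_0 \leq \mathcal{O}\bigl(\ell_1+\log(\Vm/(\epsilon(1-\gamma)))\bigr)$, and taking one further logarithm gives $\ell_{3,T_0}=\ell_1+\log(1+\log T_0)\leq 2\ell_1+\ell_{6,\epsilon}$, precisely matching the factor multiplying $480$ in $T_0$.

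Substituting these two bounds into $W(T_0)$ and comparing term by term with $T_0$, the first summand of $W(T_0)$ is at most $(1780/3670)(1+o(1))\approx 0.49$ of the first summand of $T_0$, the second is at most $(240/480)=1/2$ of the second summand of $T_0$, and the residual third term $52\,\Rm SA\,\ell_1/[\epsilon(1-\gamma)^2]$ is absorbed into the unused half of the second summand of $T_0$, since $S(2\ell_1+\ell_{6,\epsilon})\,\ell_{5,\epsilon}\gg \ell_1$ whenever $\epsilon\in(0,\Vm]$ (one checks that this range already forces $B(\epsilon)\geq \ell_1\gtrsim 3$ and hence $\ell_{5,\epsilon}\gtrsim 6$). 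The main obstacle will be the tight bookkeeping of the $\log\log$ slack in the first step: confirming that the excess $\log(1+\tfrac{3670}{140}\,\ell_{5,\epsilon})$ fits inside the narrow leading margin $3670-2\cdot 1780=110$; the specific constants $140$ inside $\ell_{5,\epsilon}$ and $3670$ in $T_0$ appear to have been chosen precisely so this bookkeeping closes.
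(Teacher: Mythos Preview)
Your plan is essentially the paper's: derive $T_0/(SA)\le 3670\,B(\epsilon)\,\ell_{5,\epsilon}$, bound $\ell_{2,T_0}$ by a small multiple of $\ell_{5,\epsilon}$, establish $\ell_{3,T_0}\le 2\ell_1+\ell_{6,\epsilon}$, and compare coefficients. The paper uses the slightly cruder $\ell_{2,T_0}\le 2\ell_{5,\epsilon}$ (via $\ell_{5,\epsilon}\le 5B$, so $1+3670B\ell_{5,\epsilon}\le (1+140B)^2$) instead of your sub-additivity step, and absorbs the residual $52$-term into the \emph{first} summand using $\Vm/\epsilon\ge1$ and $\ell_{2,T_0}\ge1$, rather than into the second.

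One internal inconsistency to fix: with your bound $\ell_{2,T_0}\le \ell_{5,\epsilon}+\log(1+\tfrac{3670}{140}\ell_{5,\epsilon})$, the first-summand margin is $3670-1780=1890$, not $3670-2\cdot1780=110$. The factor $2$ only appears if you fall back to $\ell_{2,T_0}\le 2\ell_{5,\epsilon}$, but then the second summand of $W(T_0)$ already saturates its counterpart in $T_0$ and there is no ``unused half'' of it into which to absorb the $52$-term. Either routing closes (the paper's route yields $1832\cdot2=3664<3670$), but you cannot mix them as your final paragraph does.
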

\begin{proof}
  Denote $B \coloneq \frac{\Rm^{2}\ell_{1}}{\epsilon^{2}(1 - \gamma)^{3}} + \frac{\Rm S (2 \ell_{1} + \ell_{6, \epsilon})}{\epsilon (1 - \gamma)^{2}}$, therefore we have $T_{0} \leq 3670 B S A \ell_{5, \epsilon}$ where $\ell_{5, \epsilon} = \log{(1  + 140 B)} \leq 5B$.
  Then we have:
  \begin{align*}
    \ell_{2, T_{0}} & = \log{\left(1 + \frac{T_{0}}{SA}\right)} \\
                    & \leq 2 \ell_{5, \epsilon}.
  \end{align*}
  Moreover, we have:
  \begin{align*}
    \ell_{1} & \leq \frac{4SA}{\delta} \\
    \ell_{6, \epsilon} & \leq \frac{\Vm}{\epsilon(1 - \gamma)}.
  \end{align*}
  Therefore, we get:
  \begin{align*}
    2 \ell_{1} + \ell_{6, \epsilon} & \leq \frac{8SA}{\delta} + \frac{\Vm}{\epsilon(1 - \gamma)} \\
                                  & \leq \frac{9\Vm SA}{\delta \epsilon(1 - \gamma)}.
  \end{align*}
  We use this to bound $B$ as follows:
  \begin{align*}
    B & = \frac{\Rm^{2}\ell_{1}}{\epsilon^{2}(1 - \gamma)^{3}} + \frac{\Rm S (2\ell_{1} + \ell_{6, \epsilon})}{\epsilon (1 - \gamma)^{2}} \\
      & \leq \frac{4 \Rm^{2}SA}{\delta \epsilon^{2}(1 - \gamma)^{3}} + \frac{9 \Rm^{2} S^{2} A}{\delta \epsilon^{2} (1 - \gamma)^{4}} \\
      & = \frac{13 \Rm^{2}S^{2}A}{\delta \epsilon^{2}(1 - \gamma)^{4}}.
  \end{align*}
  With this, we now bound $\log{T_{0}}$, which is a part of $\ell_{3, T_{0}}$.
  \begin{align*}
    \log{T_{0}} & \leq \log{18350 B^{2} S A} \\
                & \leq \log{18350 \frac{169 \Rm^{4}S^{4}A^{2}}{\delta^{2} \epsilon^{4}(1 - \gamma)^{8}} SA} \\
                & = \log{\frac{18350 \times 169}{e^{4}} e^{4} \frac{\Rm^{4}S^{4}A^{2}}{\delta^{2} \epsilon^{4}(1 - \gamma)^{8}} SA} \\
                & \leq \underbrace{\log{\frac{56800 S^{5}A^{3}}{\delta^{2}}}}_{\coloneq L_{1}} + \log{\frac{\Vmsq e^{4}}{\epsilon^{4} (1 - \gamma)^{4}}}.
  \end{align*}
  We now bound $L_{1}$.
  \begin{align*}
    L_{1} & = \log{\frac{56800 S^{5}A^{3}}{\delta^{2}}} \\
          & \leq \log{\frac{56800 S^{5}A^{5}}{\delta^{5}}} \\
          & \leq \log{\frac{9^{5} S^{5}A^{5}}{\delta^{5}}} \\
          & = 5 \log{\frac{9 SA}{\delta}} \\
          & \leq 11 \frac{SA}{\delta}.
  \end{align*}
  Therefore
  \begin{align*}
    \log{T_{0}} & \leq 11 \frac{SA}{\delta} + 4 \log{\frac{\Vm e}{\epsilon (1 - \gamma)}} \\
                & \leq \frac{15SA}{\delta} \log{\frac{\Vm e}{\epsilon (1 - \gamma)}}.
  \end{align*}
  Then, substitute this into $\ell_{3, T_0}$, we get:
  \begin{align*}
    \ell_{3, T_0} & = \log{\frac{12SA(1 + \log T_{0})}{\delta}} \\
                  & = \log{\frac{12SA}{\delta}} + \log{(1 + \log T_{0})} \\
                  & \leq \ell_{1} + \log{\left(1 + \frac{15SA}{\delta} \log{\frac{\Vm e}{\epsilon (1 - \gamma)}}\right)} \\
                  & \stackrel{\mathbf{(a)}}{\leq} \ell_{1} + \log{\left(\frac{16SA}{\delta} \log{\frac{\Vm e}{\epsilon (1 - \gamma)}}\right)} \\
                  & \leq \ell_{1} + \log{\left(\frac{16SA}{\delta}\right)} + \log{\log{\frac{\Vm e}{\epsilon (1 - \gamma)}}} \\
                  & \leq 2 \ell_{1} + \ell_{6, \epsilon},
  \end{align*}
  where for $\mathbf{(a)}$ we have used the facts that $\frac{SA}{\delta} \geq 1$ and $\log{\frac{\Vm e}{\epsilon (1 - \gamma)}} \geq 1$.

  Now, we prove $W(T_{0}) < T_{0}$. Since $B \geq 1$, therefore $\ell_{5, \epsilon} \geq \log{141} > 1$. This leads to $T_{0} \geq 3670 SA$, henceforce $\ell_{2, T_{0}} \geq 1$. Along with $\frac{\Vm}{\epsilon} \geq 1$, we have:
  \begin{align*}
    W(T_{0}) & = \frac{1780\Rm^{2}SA \ell_{1, T_{0}} \ell_{2, T_{0}}}{\epsilon^{2}(1 - \gamma)^{3}} + \frac{240\Rm S^{2}A \ell_{2, T_{0}} \ell_{3, T_{0}}}{\epsilon (1 - \gamma)^{2}} + \frac{52 \Rm S A \ell_{1, T_{0}}}{\epsilon (1 - \gamma)^{2}} \\
             & \leq \frac{1780\Rm^{2}SA \ell_{1, T_{0}} \ell_{2, T_{0}}}{\epsilon^{2}(1 - \gamma)^{3}} + \frac{240\Rm S^{2}A \ell_{2, T_{0}} \ell_{3, T_{0}}}{\epsilon (1 - \gamma)^{2}} + \frac{52 \Rm^{2} S A \ell_{1, T_{0}}\ell_{2, T_{0}}}{\epsilon^{2} (1 - \gamma)^{3}} \\
             & = \frac{1832\Rm^{2}SA \ell_{1, T_{0}} \ell_{2, T_{0}}}{\epsilon^{2}(1 - \gamma)^{3}} + \frac{240\Rm S^{2}A \ell_{2, T_{0}} \ell_{3, T_{0}}}{\epsilon (1 - \gamma)^{2}} \\
             & \coloneq W'(T_{0})
  \end{align*}
  Substituting the bounds on logarithmic terms, we obtain:
  \begin{align*}
    W(T_{0}) & \leq W'(T_{0}) \\
             & \leq \frac{3664\Rm^{2}SA \ell_{1} \ell_{5, \epsilon}}{\epsilon^{2}(1 - \gamma)^{3}} + \frac{480\Rm S^{2}A \ell_{5, \epsilon}(2 \ell_{1} + \ell_{6, \epsilon})}{\epsilon (1 - \gamma)^{2}} \\
             & \leq \frac{3666\Rm^{2}SA \ell_{1} \ell_{5, \epsilon}}{\epsilon^{2}(1 - \gamma)^{3}} + \frac{480\Rm S^{2}A \ell_{5, \epsilon}(2 \ell_{1} + \ell_{6, \epsilon})}{\epsilon (1 - \gamma)^{2}} - 2 \\
             & \leq \frac{3670\Rm^{2}SA \ell_{1} \ell_{5, \epsilon}}{\epsilon^{2}(1 - \gamma)^{3}} + \frac{480\Rm S^{2}A \ell_{5, \epsilon}(2 \ell_{1} + \ell_{6, \epsilon})}{\epsilon (1 - \gamma)^{2}} - 2 \\
             & \leq \Biggl \lfloor \frac{3670\Rm^{2}SA \ell_{1} \ell_{5, \epsilon}}{\epsilon^{2}(1 - \gamma)^{3}} + \frac{480\Rm S^{2}A \ell_{5, \epsilon}(2 \ell_{1} + \ell_{6, \epsilon})}{\epsilon (1 - \gamma)^{2}}\Biggr \rfloor + 1 - 2 \\
             & \leq \Biggl \lfloor \frac{3670\Rm^{2}SA \ell_{1} \ell_{5, \epsilon}}{\epsilon^{2}(1 - \gamma)^{3}} + \frac{480\Rm S^{2}A \ell_{5, \epsilon}(2 \ell_{1} + \ell_{6, \epsilon})}{\epsilon (1 - \gamma)^{2}}\Biggr \rfloor - 1 \\
             & = T_{0} - 1 \\
             & < T_{0}.
  \end{align*}
\end{proof}

Now, we formally prove Proposition~\ref{prop:infinite-horizon:sample-complexity}.
\begin{proof}
  From Lemmas~\ref{lem:infinite-horizon:bound-cardinality-non-optim} and \ref{lem:infinite-horizon:bound-WT0}, we know that $|\Gamma_{T}| \leq W(|\Gamma_{T}|)$ and $W(T_{0}) < T_{0}$. It implies that $|\Gamma_{T}| \neq T_{0}$ for all $T \in \mathbb{N}$. Since $|\Gamma_{T}|$ increases by at most 1 starting from $|\Gamma_{0}| = 0$, that is, $|\Gamma_{T + 1}| \leq |\Gamma_{T}| + 1$ for all $T \in \mathbb{N}$, we conclude that $|\Gamma_{T}| < T_{0}$ for all $T \in \mathbb{N}$. Otherwise, there exists $T'$ such that $|\Gamma_{T'}| > T_{0}$. Assume $T'$ is the minimal such index. Then it follows that $|\Gamma_{T' - 1}| = T_{0}$, which leads to a contradiction.
\end{proof}

\section{Posterior Predictive and Epistemic Uncertainty}
In this section, we will give backgrounds necessary to relate the Bayes estimator to the MLE estimator.
\subsection{Posterior Predictive}
\subsubsection{Transition}
\begin{lemma}
  \label{lem:posterior-mean-T}
  Let $b_{0} \coloneqq \mathrm{Dir}(\boldsymbol{\alpha})$ be a
  Dirichlet prior over transition for a fixed
  $(s,a) \in \mathcal{S} \times \mathcal{A}$, and define
  $\alpha_0 \coloneqq \mathbf{1}^{\top} \boldsymbol{\alpha}$ as the sum of
  prior parameters. Let $n$ denote the total number of visits to
  $(s,a)$. Then, the following decomposition holds:
  \begin{equation*}
    P_{b} - P = \frac{n}{n + \alpha_0}\left(\hat{P} - P\right) + \frac{\alpha_0}{n + \alpha_0} \left(P_{b_{0}} - P\right),
  \end{equation*}
  for any posterior $b$ and $n \in \mathbb{N}$.
\end{lemma}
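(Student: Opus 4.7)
The plan is to exploit Dirichlet--Multinomial conjugacy to write the posterior predictive in closed form, recognize it as a convex combination of the maximum likelihood estimator and the prior predictive mean, and then obtain the claimed decomposition via a single subtraction using the fact that the convex weights sum to one.

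First, I would fix $(s,a)$ and let $\boldsymbol{n} = (n_{1}, \ldots, n_{S})$ denote the observed counts at the successor states, with $\mathbf{1}^{\top}\boldsymbol{n} = n$. By conjugacy, the posterior is $b = \mathrm{Dir}(\boldsymbol{\alpha} + \boldsymbol{n})$, so the posterior predictive mean at each $s' \in \mathcal{S}$ is
\begin{equation*}
P_{b}(s' \mid s, a) \;=\; \frac{\alpha_{s'} + n_{s'}}{\alpha_{0} + n}.
\end{equation*}
Splitting the numerator and multiplying-and-dividing by $n$ and $\alpha_{0}$ respectively gives
\begin{equation*}
P_{b}(s' \mid s, a) \;=\; \frac{n}{n + \alpha_{0}} \cdot \frac{n_{s'}}{n} \;+\; \frac{\alpha_{0}}{n + \alpha_{0}} \cdot \frac{\alpha_{s'}}{\alpha_{0}} \;=\; \frac{n}{n + \alpha_{0}}\,\hat{P}(s' \mid s, a) \;+\; \frac{\alpha_{0}}{n + \alpha_{0}}\,P_{b_{0}}(s' \mid s, a),
\end{equation*}
where I have identified $\hat{P}(s' \mid s, a) = n_{s'}/n$ as the maximum likelihood estimate and $P_{b_{0}}(s' \mid s, a) = \alpha_{s'}/\alpha_{0}$ as the prior predictive mean.

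Finally, since the two weights $\tfrac{n}{n+\alpha_{0}}$ and $\tfrac{\alpha_{0}}{n+\alpha_{0}}$ sum to one, I can write $P(s'\mid s,a) = \tfrac{n}{n+\alpha_{0}}P(s'\mid s,a) + \tfrac{\alpha_{0}}{n+\alpha_{0}}P(s'\mid s,a)$ and subtract this identity from the previous display to conclude
\begin{equation*}
P_{b}(s' \mid s, a) - P(s' \mid s, a) \;=\; \frac{n}{n + \alpha_{0}}\bigl(\hat{P} - P\bigr)(s'\mid s,a) \;+\; \frac{\alpha_{0}}{n + \alpha_{0}}\bigl(P_{b_{0}} - P\bigr)(s'\mid s,a).
\end{equation*}
Since $s'$ was arbitrary, the vector identity asserted in the lemma follows. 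There is no genuine obstacle here: the only mildly delicate point is the boundary case $n = 0$, where $\hat{P}$ is conventionally undefined but its coefficient $\tfrac{n}{n+\alpha_{0}}$ vanishes, so the identity reduces trivially to $P_{b_{0}} - P = P_{b_{0}} - P$. The result is essentially a restatement of the well-known fact that the Dirichlet posterior mean is a convex combination of the empirical frequency and the prior mean, recast in terms of errors from the ground truth $P$ to facilitate the frequentist-to-Bayesian bridge used later in the analysis.
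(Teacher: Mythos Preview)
Your proposal is correct and follows essentially the same approach as the paper: both identify the posterior predictive mean $\tfrac{\boldsymbol{\alpha}+\boldsymbol{n}}{\alpha_0+n}$ as the convex combination $\tfrac{n}{n+\alpha_0}\hat P + \tfrac{\alpha_0}{n+\alpha_0}P_{b_0}$ and then subtract $P = \tfrac{n}{n+\alpha_0}P + \tfrac{\alpha_0}{n+\alpha_0}P$ to obtain the decomposition. Your version is slightly more explicit (componentwise, with the $n=0$ edge case noted), but the argument is the same.
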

\begin{proof}
  Note $P_{b_{0}} = \frac{\boldsymbol{\alpha}}{\alpha_0}$, we get:
  \begin{align*}
    P_{b} - P & =  \frac{\mathbf{n} + \boldsymbol{\alpha}}{n + \alpha_0} - P  \\
              & =  \frac{n \hat{P} + \alpha_0 P_{b_{0}} }{n + \alpha_0} - \left(\frac{n}{n + \alpha_0} + \frac{\alpha_0}{n + \alpha_0}\right)P  \\
              & =  \frac{n}{n + \alpha_0}\left(\hat{P} - P\right) + \frac{\alpha_0}{n + \alpha_0} \left(P_{b_{0}} - P\right).
  \end{align*}
\end{proof}

\subsubsection{Reward}
\begin{lemma}
  \label{lem:posterior-mean-R:Normal-Normal}
  Let $b_{0} \coloneqq \mathcal{N}(\mu_{0}, \frac{1}{\tau_{0}})$ be a Normal prior over mean of reward for a fixed $(s,a) \in \mathcal{S} \times \mathcal{A}$, and $\tau$ the precision of the data distribution, which is assumed to be known. Let $n$ denote the total number of visits to $(s,a)$. Then, the following decomposition holds:
  \begin{equation*}
    r_{b}(s, a) - r(s, a) = \frac{\tau_{0}}{\tau_{0} + n \tau}(\mu_{0} - r(s, a)) + \frac{n \tau}{\tau_{0} + n \tau}(\hat{r}(s, a) - r(s, a)),
  \end{equation*}
  for any posterior $b$ and $n \in \mathbb{N}$.
\end{lemma}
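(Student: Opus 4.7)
The plan is to invoke the standard Normal–Normal conjugacy formula for the posterior over the mean, recognize that the posterior predictive mean of the reward coincides with the posterior mean of that latent parameter, and then rewrite $r(s,a)$ itself as a convex combination with the same precision-weights so that the claimed decomposition falls out by subtraction. This parallels exactly the structure used in Lemma \ref{lem:posterior-mean-T} for the Dirichlet, where the posterior predictive was decomposed into an MLE part and a prior-bias part.

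First I would recall conjugacy: given $n$ reward observations at the fixed pair $(s,a)$ with empirical mean $\hat{r}(s,a)$ and known sampling precision $\tau$, the posterior over the unknown mean parameter is Normal with precision $\tau_0 + n\tau$ and mean
\begin{equation*}
  \mu_n \;=\; \frac{\tau_0\mu_0 + n\tau\,\hat{r}(s,a)}{\tau_0 + n\tau}.
\end{equation*}
Second, since the data distribution is Normal with known precision and the posterior on its mean is Normal, the posterior predictive is Normal with mean equal to $\mu_n$; equivalently, by the tower property, $r_b(s,a) = \mathbb{E}_{b}[\mathbb{E}[r\mid\mu]] = \mathbb{E}_b[\mu] = \mu_n$.

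Third, I would insert the trivial identity $r(s,a) = \frac{\tau_0\, r(s,a) + n\tau\, r(s,a)}{\tau_0 + n\tau}$ and subtract it from $r_b(s,a)=\mu_n$ term by term, grouping the $\tau_0$ contribution with $\mu_0$ and the $n\tau$ contribution with $\hat{r}(s,a)$:
\begin{equation*}
  r_b(s,a) - r(s,a) \;=\; \frac{\tau_0\bigl(\mu_0 - r(s,a)\bigr) + n\tau\bigl(\hat{r}(s,a) - r(s,a)\bigr)}{\tau_0 + n\tau},
\end{equation*}
and then split the fraction to obtain the stated decomposition.

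There is essentially no hard step here: the argument is bookkeeping once conjugacy is invoked. The only mild subtlety is justifying that the posterior predictive mean equals the posterior mean of the parameter, which is immediate for the Normal likelihood with known precision but worth stating explicitly to mirror the role played by $P_b = (\mathbf{n}+\bm{\alpha})/(n+\alpha_0)$ in the Dirichlet case. The resulting weights $\tfrac{\tau_0}{\tau_0+n\tau}$ and $\tfrac{n\tau}{\tau_0+n\tau}$ are the Normal analogues of $\tfrac{\alpha_0}{n+\alpha_0}$ and $\tfrac{n}{n+\alpha_0}$ in Lemma \ref{lem:posterior-mean-T}, so the lemma plugs directly into the decomposability condition (Definition \ref{def:class-of-priors}) used later to place $b_0$ in the class $\bm{\mathfrak{C}}$.
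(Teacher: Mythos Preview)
Your proposal is correct and follows essentially the same approach as the paper: both invoke the standard Normal--Normal conjugate posterior mean formula $r_b(s,a)=\frac{\tau_0\mu_0+n\tau\hat r(s,a)}{\tau_0+n\tau}$, subtract $r(s,a)$ written as a convex combination with the same weights, and regroup. Your added remark justifying why the posterior predictive mean equals the posterior mean of the latent parameter is slightly more explicit than the paper's ``by definition,'' but the argument is otherwise identical.
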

\begin{proof}
  By definition, we have the posterior predictive mean of the reward:
  \begin{equation*}
    r_{b}(s, a) = \frac{\tau_{0}\mu_{0} + \tau \sum\limits_{i=1}^{n}r_{i}}{\tau_{0} + n \tau}.
  \end{equation*}
  The difference to the ground truth reward is:
  \begin{align*}
    r_{b}(s, a) - r(s, a) & = \frac{\tau_{0}\mu_{0} + \tau \sum\limits_{i=1}^{n}r_{i}}{\tau_{0} + n \tau} - r(s, a) \\
                          & = \frac{(\tau_{0}\mu_{0} + n \tau \hat{r}(s, a)) - (\tau_{0} + n \tau) r(s, a)}{\tau_{0} + n \tau} \\
                          & = \frac{\tau_{0}(\mu_{0} - r(s, a)) + n \tau (\hat{r}(s, a) - r(s, a))}{\tau_{0} + n \tau} \\
                          & = \frac{\tau_{0}}{\tau_{0} + n \tau}(\mu_{0} - r(s, a)) + \frac{n \tau}{\tau_{0} + n \tau}(\hat{r}(s, a) - r(s, a)).
  \end{align*}
\end{proof}

\begin{corollary}
  \label{lem:posterior-mean-R:Normal-Gamma}
  Let $b_{0} \coloneqq \mathcal{NG}(\mu_{0}, \lambda_{0}, \alpha_{0}, \beta_{0})$ be a Normal-Gamma prior over reward for a fixed $(s,a) \in \mathcal{S} \times \mathcal{A}$. Let $n$ denote the total number of visits to $(s,a)$. Then, the following decomposition holds:
  \begin{equation*}
    r_{b}(s, a) - r(s, a) = \frac{\lambda_{0}}{\lambda_{0} + n}(\mu_{0} - r(s, a)) + \frac{n}{\lambda_{0} + n}(\hat{r}(s, a) - r(s, a)),
  \end{equation*}
  for any posterior $b$ and $n \in \mathbb{N}$.
\end{corollary}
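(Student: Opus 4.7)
The plan is to mirror the Normal--Normal argument from Lemma~\ref{lem:posterior-mean-R:Normal-Normal}, exploiting the fact that for the Normal--Gamma conjugate family the posterior predictive mean of the reward admits a simple closed form that looks like a precision-weighted average of the prior mean and the sample mean, with weights $\lambda_0$ and $n$ respectively. The only nontrivial ingredient is identifying this closed form; after that the claim reduces to the same algebraic rearrangement used in the proof of Lemma~\ref{lem:posterior-mean-R:Normal-Normal}.

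First, I would recall the definition of the Normal--Gamma prior $\mathcal{NG}(\mu_0,\lambda_0,\alpha_0,\beta_0)$: the precision $\tau$ is drawn from $\mathrm{Gamma}(\alpha_0,\beta_0)$ and, conditional on $\tau$, the mean $\mu$ is drawn from $\mathcal{N}(\mu_0,(\lambda_0\tau)^{-1})$. Under this prior, with a Normal likelihood $r_i\mid\mu,\tau\sim\mathcal{N}(\mu,\tau^{-1})$ and $n$ observations $r_1,\dots,r_n$ at $(s,a)$ with empirical mean $\hat r(s,a)$, the standard conjugacy calculation yields a Normal--Gamma posterior $\mathcal{NG}(\mu_n,\lambda_n,\alpha_n,\beta_n)$ with
\begin{equation*}
  \mu_n \;=\; \frac{\lambda_0\mu_0 + n\,\hat r(s,a)}{\lambda_0 + n}, \qquad \lambda_n \;=\; \lambda_0 + n.
\end{equation*}
The posterior predictive distribution of a new reward is a (non-standardized) Student's $t$, but its mean is simply $\mathbb{E}[\mu\mid\text{data}] = \mu_n$, so $r_b(s,a) = \mu_n$.

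Given this identification, the rest is one line of algebra. Writing $(\lambda_0 + n)r(s,a) = \lambda_0 r(s,a) + n\,r(s,a)$ and subtracting from $\mu_n$:
\begin{equation*}
  r_b(s,a) - r(s,a) \;=\; \frac{\lambda_0\bigl(\mu_0 - r(s,a)\bigr) + n\bigl(\hat r(s,a) - r(s,a)\bigr)}{\lambda_0 + n},
\end{equation*}
which is exactly the claimed convex combination with weights $\tfrac{\lambda_0}{\lambda_0+n}$ and $\tfrac{n}{\lambda_0+n}$.

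The only real obstacle is the invocation of the conjugacy formula for the posterior mean; I expect this to be treated as a standard fact, exactly as in Lemma~\ref{lem:posterior-mean-R:Normal-Normal} where the Normal--Normal update is cited without derivation. If a self-contained derivation is desired, one can complete the square in $\mu$ inside the joint log-density $\log b_0(\mu,\tau) + \sum_{i=1}^n\log \mathcal{N}(r_i;\mu,\tau^{-1})$; the $\tau$-dependent factors cancel from the expression for the mode/mean in $\mu$, which is why $\tau$ and the hyperparameters $\alpha_0,\beta_0$ do not appear in the statement---this is the key structural reason the formula has the same shape as the Normal--Normal case with $\tau_0$ replaced by $\lambda_0$ and $n\tau$ replaced by $n$.
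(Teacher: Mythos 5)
Your proposal is correct and follows essentially the same route as the paper: the paper states this as a corollary of Lemma~\ref{lem:posterior-mean-R:Normal-Normal}, relying on the standard Normal--Gamma conjugate update $\mu_n = \frac{\lambda_0\mu_0 + n\hat r}{\lambda_0 + n}$ and the identical algebraic rearrangement with $\tau_0 \mapsto \lambda_0$ and $n\tau \mapsto n$. Your identification of $r_b(s,a)$ with the posterior mean $\mu_n$ and the one-line decomposition match the intended argument exactly.
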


\subsection{Epistemic Uncertainty}
\label{sec:epist-uncert}
The definition of variance-based epistemic uncertainty for both transition and reward is:
\begin{align*}
  \mathcal{E}_{T}(s, a) & \coloneq \operatorname{Var}_{\bw \sim b}\left(\mathbb{E}[s'| s, a, \bw]\right) \\
  \mathcal{E}_{R}(s, a) & \coloneq \operatorname{Var}_{\bw \sim b}\left(\mathbb{E}[r | s, a, \bw]\right)
\end{align*}

And we consider a generalized form of epistemic uncertainty to combine the two sources together:
\begin{equation*}
  \mathcal{E}'(s, a) \coloneq f(\mathcal{E}_{T}(s, a), \mathcal{E}_{R}(s, a)).
\end{equation*}
In this paper, we consider $f(x, y) = \eta (\sqrt{x} + \sqrt{y})$.

\subsubsection{Bounds for Transition}
Since it is meaningless to take expectation over categories for a categorical distribution, we instead choose some feature vector for each component. One of the sensible choices is the basis function $\mathbf{e}_{i} = (0, 0, \dots, i, \dots, 0)$, which leads to the following formulation.
\begin{definition}
  The variance-based epistemic uncertainty of Dirichlet-Multinomial model is defined as follows:
  \begin{equation*}
    \mathcal{E}_{T}(s, a) = \sum\limits_{k=1}^{S}\frac{(\alpha_k + n_k)(\alpha_0 + n - \alpha_k - n_k)}{(\alpha_0 + n)^2(\alpha_0 + n + 1)}.
    \label{eq:variance_theta_k}
  \end{equation*}
\end{definition}

\begin{lemma}
  \label{lem:EU:T:Dirichlet}
  For Dirichlet prior, the epistemic uncertainty in transition follows that
  \begin{equation*}
    \mathcal{E}_{T}(s, a) = \mathcal{O}\left(\frac{1}{n}\right)\quad \text{and}\quad \mathcal{E}_{T}(s, a) = \Omega\left(\frac{1}{n^{2}}\right),
  \end{equation*}
  for any $(s, a) \in \mathcal{S} \times \mathcal{A}$ and $n \in \mathbb{N}$.
\end{lemma}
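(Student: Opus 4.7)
The plan is first to derive a clean closed form for $\mathcal{E}_T(s,a)$. Setting $x_k \coloneq \alpha_k + n_k$ (so that $\sum_k x_k = \alpha_0 + n$), the algebraic identity
\[
\sum_{k=1}^{S} x_k(\alpha_0 + n - x_k) \;=\; (\alpha_0+n)^{2} - \sum_{k=1}^{S} x_k^{2}
\]
collapses the definition into
\[
\mathcal{E}_T(s, a) \;=\; \frac{1 - \sum_{k=1}^{S} p_k^{2}}{\alpha_0 + n + 1}, \qquad p_k \coloneq \frac{\alpha_k + n_k}{\alpha_0 + n}.
\]
The denominator is $\Theta(n)$, so the task reduces to sandwiching the Gini-type numerator $1 - \sum_k p_k^{2}$ between a positive constant and $\Omega(1/n)$.

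For the upper bound $\mathcal{E}_T(s,a) = \mathcal{O}(1/n)$, I would use the Cauchy--Schwarz inequality $\sum_k p_k^{2} \geq (\sum_k p_k)^{2}/S = 1/S$, giving $1 - \sum_k p_k^{2} \leq 1 - 1/S$, a universal constant. Dividing by $\alpha_0 + n + 1 = \Theta(n)$ finishes this direction.

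For the lower bound $\mathcal{E}_T(s,a) = \Omega(1/n^{2})$, the key structural fact is that a genuine Dirichlet prior satisfies $\alpha_k > 0$ for every $k$, so with $\alpha^{\star} \coloneq \max_k \alpha_k$ one has $\alpha_0 - \alpha^{\star} > 0$ whenever $S \geq 2$. Using the trivial bound $n_k \leq n$, each posterior probability is bounded above by
\[
p_k \;\leq\; \frac{\alpha_k + n}{\alpha_0 + n} \;\leq\; \frac{\alpha^{\star} + n}{\alpha_0 + n},
\]
hence $\sum_k p_k^{2} \leq (\max_k p_k) \sum_k p_k \leq (\alpha^{\star} + n)/(\alpha_0 + n)$. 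This yields
\[
1 - \sum_{k=1}^{S} p_k^{2} \;\geq\; \frac{\alpha_0 - \alpha^{\star}}{\alpha_0 + n} \;=\; \Omega\!\left(\frac{1}{n}\right),
\]
and combining with the $\Theta(n)$ denominator gives the stated $\Omega(1/n^{2})$ rate.

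The main subtlety I foresee is that the lower bound must hold \emph{uniformly} in the observed count vector $\mathbf{n}$ rather than only in expectation or asymptotically; concentration of data onto a single category is precisely the worst case. The uniform upper bound $\max_k p_k \leq (\alpha^{\star} + n)/(\alpha_0 + n)$ is what resolves this, and its validity hinges crucially on the genuine Dirichlet assumption $\alpha_k > 0$: if any prior parameter were allowed to vanish, the gap $\alpha_0 - \alpha^{\star}$ could close, causing the $\Omega(1/n^{2})$ rate to degrade. This positivity is the single delicate hypothesis that the proof must invoke explicitly.
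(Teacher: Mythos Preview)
Your proposal is correct and follows essentially the same route as the paper: both rewrite the sum as $\bigl((\alpha_0+n)^2 - \sum_k(\alpha_k+n_k)^2\bigr)/\bigl((\alpha_0+n)^2(\alpha_0+n+1)\bigr)$, bound the numerator above by a constant for the $\mathcal{O}(1/n)$ direction, and for the $\Omega(1/n^2)$ direction exploit that the strictly positive prior parameters keep $\sum_k p_k^2$ bounded away from $1$ by a gap of order $1/n$ in the worst (single-category) concentration regime. Your uniform bound $\max_k p_k \le (\alpha^\star+n)/(\alpha_0+n)$ is a slightly cleaner way to handle the lower bound than the paper's explicit worst-case computation, and your Cauchy--Schwarz step is marginally sharper than the paper's trivial $\sum_k x_k^2 \ge 0$, but these are presentational rather than substantive differences.
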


\begin{proof}
Let $T \coloneq \alpha_0 + n$, then we have:
\begin{equation*}
  \mathcal{E}_{T}(s, a) = \frac{T^{2} - \sum\limits_{k=1}^{S}(\alpha_k + n_k)^{2}}{T^2(T + 1)}.
\end{equation*}
We will derive its upper and lower bound. We start with the upper bound.

Note $\sum\limits_{k=1}^{S}(\alpha_k + n_k)^{2} \geq 0$, therefore we have:
\begin{align*}
  \mathcal{E}_{T}(s, a) & \leq \frac{T^{2}}{T^2(T + 1)} \\
                        & = \frac{1}{(T + 1)} \\
                        & = \frac{1}{n + \alpha_0 + 1} \\
                        & \leq \frac{1}{n}.
\end{align*}
So $\mathcal{E}_{T}(s, a) = \mathcal{O}(\frac{1}{n})$ with constant $C_{2} = 1$

Now, we focus on the lower bound. Consider the worse case, where we have only one state being visited, denote its index as $j$, we have
\begin{align*}
  T^{2} - \sum\limits_{k=1}^{S}(\alpha_k + n_k)^{2} & = (\alpha_0 + n)^{2} - (n + \alpha_{j})^{2} + \sum\limits_{k \neq j} \alpha_{j}^{2} \\
                                                    & = (n^{2} + 2\alpha_0n + \alpha_0^{2}) - (n^{2} + 2 \alpha_{j} n + \sum\limits_{j = 1}^{S} \alpha_{j}^{2}) \\
                                                    & = (2\alpha_0 - 2\alpha_{j}) n + (\alpha_0^{2} - \sum\limits_{j = 1}^{S} \alpha_{j}^{2}) \\
                                                    & \geq (2\alpha_0 - 2\alpha_{j}) n.
\end{align*}

Therefore

\begin{align*}
  \frac{\mathcal{E}_{T}(s, a)}{\frac{1}{n^{2}}} & \geq \frac{(2\alpha_0 - 2\alpha_{j}) n}{\frac{T}{n}^2(T + 1)} \\
                                                & \geq \frac{(2\alpha_0 - 2\alpha_{j})}{\frac{T}{n}^2(\frac{T + 1}{n})} \\
                                                & = \frac{(2\alpha_0 - 2\alpha_{j})}{(1 + \frac{\alpha_0}{n})^2(1 + \frac{\alpha_0 + 1}{n})} \\
                                                & \geq \frac{(2\alpha_0 - 2\alpha_{j})}{(1 + \alpha_0)^2(2 + \alpha_0)}.
\end{align*}
So $\mathcal{E}_{T}(s, a) = \Omega(\frac{1}{n^{2}})$ with constant $C_{1} = \frac{(2\alpha_0 - 2\alpha_{j})}{(1 + \alpha_0)^2(2 + \alpha_0)}$. This corresponds to the case where the transition is deterministic or near-deterministic.
\end{proof}

\subsubsection{Bounds for Reward}
\begin{definition}[Normal-Normal]
  The variance-based epistemic uncertainty of Normal-Normal model is defined as follows:
  \begin{equation*}
    \mathcal{E}_{R}(s, a) = \frac{1}{\tau_{0} + \tau n}.
  \end{equation*}
\end{definition}

\begin{definition}[Normal-Gamma]
  \label{def:EU:normal-gamma}
  The variance-based epistemic uncertainty of Normal-Gamma model is defined as follows:
  \begin{equation*}
    \mathcal{E}_{R}(s, a) = \frac{\beta}{\lambda (\alpha - 1)},
  \end{equation*}
  where
  \begin{align*}
    \lambda & = \lambda_{0} + n \\
    \alpha & = \alpha_{0} + \frac{n}{2} \\
    \beta & = \beta_{0} + \frac{1}{2}\left(n\hat{\sigma}^{2} + \frac{\lambda_{0}n(\bar{x} - \mu_{0})^{2}}{\lambda_{0} + n}\right).
  \end{align*}
\end{definition}

\begin{lemma}
  \label{lem:EU:R:Normal-Normal}
  For Normal prior, the epistemic uncertainty in reward follows that
  \begin{equation*}
    \mathcal{E}_{R}(s, a) = \Theta\left(\frac{1}{n}\right)
  \end{equation*}
  for any $(s, a) \in \mathcal{S} \times \mathcal{A}$ and $n \in \mathbb{N}$.
\end{lemma}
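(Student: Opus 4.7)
The plan is to read off the bounds directly from the closed-form expression $\mathcal{E}_R(s,a) = 1/(\tau_0 + \tau n)$ given in the definition, and exhibit matching positive constants $C_1, C_2$ (depending only on the prior parameter $\tau_0$ and the known precision $\tau$) such that $C_1/n \leq \mathcal{E}_R(s,a) \leq C_2/n$ for every $n \in \mathbb{N}$ and every $(s,a) \in \mathcal{S} \times \mathcal{A}$. Since $\tau_0, \tau$ are fixed priors/parameters independent of $n$ and uniform across state--action pairs, matching both sides yields $\Theta(1/n)$ with constants that are uniform in $(s,a)$.

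For the upper bound I would use $\tau_0 > 0$ to drop the prior contribution, obtaining $\tau_0 + \tau n \geq \tau n$ and hence $\mathcal{E}_R(s,a) \leq 1/(\tau n)$, so we may take $C_2 = 1/\tau$. For the lower bound I would use $n \geq 1$ to inflate the constant term, writing $\tau_0 + \tau n \leq \tau_0 n + \tau n = (\tau_0 + \tau)\, n$, so that $\mathcal{E}_R(s,a) \geq 1/((\tau_0 + \tau)\, n)$, and take $C_1 = 1/(\tau_0 + \tau)$. Combining the two inequalities gives the $\Theta(1/n)$ conclusion.

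There is no real obstacle here: the closed form already additively separates the prior pseudo-count $\tau_0$ from the data contribution $\tau n$, so the asymptotic rate is immediate and the only bookkeeping is picking constants that are independent of $n$ and of $(s,a)$. The mildly non-trivial point is only that one must restrict to $n \geq 1$ in order for the lower constant $C_1$ to be sensible; for $n = 0$, $\mathcal{E}_R(s,a)$ reduces to the prior variance $1/\tau_0$, which fits the stated order since $1/\tau_0$ is a universal constant bounded below by $C_1$. This confirms that the Normal--Normal reward model belongs to the class $\bm{\mathfrak{C}}$ required by Theorem~\ref{thm:bayesian:general-result}.
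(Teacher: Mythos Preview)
Your proposal is correct and matches the paper's proof essentially verbatim: the paper simply states that choosing $C_1 = 1/(\tau_0 + \tau)$ for the lower bound and $C_2 = 1/\tau$ for the upper bound concludes, which is exactly the pair of constants you derive by the same elementary manipulations.
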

\begin{proof}
  Note by choosing $C_{1} = \frac{1}{\tau_{0} + \tau}$ for lower bound and $C_{2} = \frac{1}{\tau}$ for upper bound concludes.
\end{proof}

\begin{lemma}
  \label{lem:EU:R:Normal-Gamma}
  For Normal-Gamma prior, the epistemic uncertainty in reward follows that
  \begin{equation*}
    \mathcal{E}_{R}(s, a) = \mathcal{O}\left(\frac{1}{n}\right)\quad \text{and}\quad \mathcal{E}_{T}(s, a) = \Omega\left(\frac{1}{n^{2}}\right),
  \end{equation*}
  for any $(s, a) \in \mathcal{S} \times \mathcal{A}$ and $n \in \mathbb{N}$.
\end{lemma}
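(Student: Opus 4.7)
The plan is to obtain the two bounds directly from the closed-form expression in Definition~\ref{def:EU:normal-gamma}, exploiting that rewards are bounded in $[0,\Rm]$. Recall $\mathcal{E}_R(s,a)=\tfrac{\beta}{\lambda(\alpha-1)}$ with $\lambda=\lambda_0+n$, $\alpha=\alpha_0+n/2$, and
\[
\beta=\beta_0+\tfrac{1}{2}\!\left(n\hat{\sigma}^{2}+\tfrac{\lambda_0 n(\bar{x}-\mu_0)^{2}}{\lambda_0+n}\right).
\]
The denominator is $\Theta(n^2)$ for any fixed prior parameters (once $n$ is large enough that $\alpha_0+n/2-1>0$), so everything hinges on controlling $\beta$ from above and below.

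For the upper bound $\mathcal{O}(1/n)$, I would first bound $\beta$ linearly in $n$. Since the source distribution has support in $[0,\Rm]$, we have $\hat{\sigma}^{2}\le \Rm^{2}/4$ (any bounded deviation bound would do) and $(\bar{x}-\mu_0)^{2}\le (\Rm+|\mu_0|)^{2}$. Substituting and using $\tfrac{\lambda_0 n}{\lambda_0+n}\le \lambda_0$, this yields $\beta\le \beta_0+C_R\, n$ for an explicit constant $C_R$ depending only on $\Rm,\mu_0,\lambda_0$. Dividing by $\lambda(\alpha-1)=\Theta(n^2)$ then gives $\mathcal{E}_R(s,a)\le C_2/n$ for a constant $C_2$ absorbing these pieces, yielding the $\mathcal{O}(1/n)$ claim.

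For the lower bound $\Omega(1/n^{2})$, the observation is that $\beta\ge \beta_0>0$ because the two empirical contributions to $\beta$ are nonnegative. Hence
\[
\mathcal{E}_R(s,a)\ \ge\ \frac{\beta_0}{(\lambda_0+n)(\alpha_0+n/2-1)}\ =\ \Omega\!\left(\tfrac{1}{n^2}\right),
\]
with the constant depending only on the prior parameters. The worst case is exactly the deterministic/near-deterministic regime alluded to in Proposition~\ref{prop:dir-and-normal-gamma}: when $\hat{\sigma}^{2}=0$ and $\bar{x}=\mu_0$, both empirical contributions vanish and $\beta$ collapses to $\beta_0$, saturating the $1/n^{2}$ rate.

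The main subtlety, rather than any hard calculation, is simply choosing the right constants so the bounds are uniform in $n$ (in particular handling small $n$ where $\alpha-1$ may be tiny, which is absorbed into the implicit constant by restricting to $n\ge n_0$ for some fixed threshold determined by $\alpha_0$). No concentration or martingale argument is required; the entire proof is algebraic manipulation of the closed-form posterior moments and the boundedness of the reward support.
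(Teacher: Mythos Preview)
Your proposal is correct and follows essentially the same approach as the paper: the paper's proof is extremely terse (``upper bound is trivial'' and, for the lower bound, ``consider the deterministic case, leading to sample variance being zero; therefore the numerator is $\Theta(1)$ whereas the denominator $\mathcal{O}(n^{2})$''), and you have simply filled in the algebraic details that the paper omits. Your observation that $\beta\ge\beta_0$ holds unconditionally is a slightly cleaner way to phrase the lower bound than the paper's appeal to the deterministic worst case, but the underlying idea is identical.
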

\begin{proof}
  The upper bound is trivial. For the lower bound, consider the deterministic case, leading to sample variance being zero. Therefore the numerator is $\Theta(1)$ whereas the denominator $\mathcal{O}(n^{2})$.
\end{proof}

\section{From Frequentist to Bayesian}
\subsection{Properties of Priors}
\begin{definition}[Decomposable]
  \label{def:decomposable}
  A prior $b_{0}$ parameterized by $\bm{\theta}$ is said to be \emph{decomposable} if there exist functions
  $f(n, \bm{\theta})$, $g(n, \bm{\theta})$ for transitions and
  $h(n, \bm{\theta})$, $s(n, \bm{\theta})$ for rewards such that
  \begin{align*}
    P_{b} - P &= f(n, \bm{\theta}) (\hat{P} - P) + g(n, \bm{\theta}) (P_{b_{0}} - P), \\
    r_{b} - r &= h(n, \bm{\theta}) (\hat{r} - r) + s(n, \bm{\theta}) (r_{b_{0}} - r),
  \end{align*}
  with the constraints
  \[
    f(n, \bm{\theta}) \leq 1, \quad
    h(n, \bm{\theta}) \leq 1 \quad \forall n \in \mathbb{N};
    \quad g(n, \bm{\theta}) = \mathcal{O}\!\left(\frac{S}{n}\right),
    \quad s(n, \bm{\theta}) = \mathcal{O}\!\left(\frac{1}{n}\right),
  \]
  for some positive multiplicative constant $C_{g}(\bm{\theta})$ and $C_{s}(\bm{\theta})$.
\end{definition}
Note, when indexed by a particular $(s, a)$, all the quantities above can depend on it.

\begin{definition}[Weakly Informative]
  \label{def:weakly-informative}
  A prior $b_{0}$ parameterized by $\bm{\theta}$ is said to be \emph{weakly informative} if
  \[
    | r_{b} - \hat{r} | = \mathcal{O}\!\left(\frac{1}{n}\right) \quad \text{and} \quad \| P_{b} - \hat{P} \|_{1} = \mathcal{O}\!\left(\frac{S}{n}\right).
  \]
\end{definition}

\begin{definition}[Uniform]
  \label{def:uniform}
  A prior $b_{0}$ parameterized by $\bm{\theta}$ is said to be \emph{uniform} if there exist positive constants $C_{g}$ and $C_{s}$ such that
  \begin{equation*}
    C_{g}(\bm{\theta})(s, a) \leq C_{g} \quad \text{and} \quad C_{s}(\bm{\theta})(s, a) \leq C_{s}
  \end{equation*}
  for any $(s, a) \in \mathcal{S} \times \mathcal{A}$.
\end{definition}

\begin{definition}[Bounded]
  \label{def:bounded}
  A prior $b_{0}$ parameterized by $\bm{\theta}$ is said to be \emph{bounded} if there exists $\bar{R} \geq 0$ such that $| r_{b_{0}}(s, a) | \leq \bar{R}$ for any $(s, a) \in \mathcal{S} \times \mathcal{A}$.
\end{definition}

\begin{definition}[Restatement of Definition~\ref{def:class-of-priors}]
  \label{def-restatement:class-of-priors}
  Let $\bm{\mathfrak{C}}$ be defined by the class of \emph{decomposable} or \emph{weakly informative} priors whose rate of epistemic uncertainty is $\Theta{\left(\frac{1}{\sqrt{n}}\right)}$.
\end{definition}

\begin{theorem}[Restatement of Theorem~\ref{thm:bayesian:general-result}]
  \label{thm-restatement:bayesian:general-result}
  Let $\mathcal{M} = (\mathcal{S}, \mathcal{A}, P, r, \gamma)$ be any MDP. For any prior $b_{0} \in \bm{\mathfrak{C}}$, there exists an instance of \texttt{EUBRL} such that, when executed on $\mathcal{M}$, it achieves, with probability at least $1 - \delta$, a prior-dependent bound on regret, or alternatively, on sample complexity, depending on the choice of $\eta$. If, furthermore, $b$ is assumed to be uniform and bounded, these bounds are nearly minimax-optimal.
\end{theorem}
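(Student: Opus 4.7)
The plan is to reduce the Bayesian analysis to the frequentist machinery of Theorems~\ref{thm:infinite-horizon:regret} and~\ref{thm:infinite-horizon:sample-complexity}. The central observation is that the entire frequentist argument flows from the per-step regret decomposition in Theorem~\ref{thm:infinite-horizon:discounted:per-step-regret}, which rests on the concentration events $\mathbf{A}^{\gamma}_{1}$--$\mathbf{A}^{\gamma}_{4}$ stated for the MLEs $\hat{P}^{t}, \hat{r}^{t}$. In the Bayesian setting, \texttt{EUBRL} instead plans with the posterior predictives $P_{b}, r_{b}$, so the first goal is to establish Bayesian analogues of these events. For a decomposable prior (Definition~\ref{def:decomposable}) we substitute the identity $P_{b} - P = f(n,\bm{\theta})(\hat{P} - P) + g(n,\bm{\theta})(P_{b_{0}} - P)$ (and its reward counterpart) into each event; since $f,h\le 1$, the empirical deviation bound transfers intact and we pick up an additive prior-bias term of order $\mathcal{O}(S/n)$ on the transition side and $\mathcal{O}(1/n)$ on the reward side. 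For a weakly informative prior (Definition~\ref{def:weakly-informative}) the same order of deviation follows directly from the definition, bypassing the decomposition.

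Next I would re-run the three pillars of the per-step regret decomposition under these Bayesian concentration events. For quasi-optimism, the rate $\mathcal{E}_{b}(s,a)=\Theta(1/\sqrt{n})$ assumed in $\bm{\mathfrak{C}}$ matches exactly the reserve required by Lemma~\ref{lem:infinite-horizon:quasi-optimism-functional}, so the Bellman-like operator $\mathcal{T}_{1}$ of Definition~\ref{def:quasi-optimism:cont-map} remains a contraction after adding a correction $g(n,\bm{\theta})(P_{b_{0}}-P)V^{\star}$ and $s(n,\bm{\theta})(r_{b_{0}}-r)$ to its drift, and the fixed-point comparison $\mathcal{T}_{1}f\le f$ continues to hold because these corrections decay like $1/n$ and hence can be absorbed into the bonus $b^{t}$ via the same $\Upsilon^{t}$-term already used in the frequentist proof. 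The complexity bound (Lemma~\ref{lem:infinite-horizon:auxiliary-to-approximate}) is unchanged since it depends only on the construction of $\widetilde{V}^{t}$. For accuracy (Lemma~\ref{lem:infinite-horizon:discounted:accuracy}), the prior bias enters $\beta^{t}$ through an additional $C_{g}(\bm{\theta})(s,a)S/N^{t}(s,a)$ contribution, which on summing through Lemma~\ref{lem:full-bound-sum-U} yields a term of order $\widetilde{\mathcal{O}}\!\bigl(\tfrac{1}{1-\gamma}\sum_{(s,a)}C_{g}(\bm{\theta})(s,a)\,S\log T\bigr)$---a prior-dependent bound on regret. Choosing $\eta=\Em\Upsilon+\Rm\sqrt{m}$ as in Theorem~\ref{thm:infinite-horizon:sample-complexity} delivers the sample-complexity counterpart through the same indicator-set argument used in Section~\ref{sec:infinite-horizon:sample-complexity}.

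Under the additional assumption of uniformity, $C_{g}(\bm{\theta})(s,a)\le C_{g}$ and $C_{s}(\bm{\theta})(s,a)\le C_{s}$ hold uniformly over $\mathcal{S}\times\mathcal{A}$, so the prior-bias contribution collapses to $\widetilde{\mathcal{O}}\!\bigl(C_{g}S^{2}A/(1-\gamma)^{2}+C_{s}SA/(1-\gamma)^{2}\bigr)$, which is absorbed into the non-leading term of Theorem~\ref{thm:infinite-horizon:regret}. Boundedness additionally ensures $|r_{b_{0}}-r|\le\bar{R}+\Rm$, preventing the $s(n,\bm{\theta})(r_{b_{0}}-r)$ summand from exploding. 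Combining both conditions, the leading-order $\sqrt{SAT}/(1-\gamma)^{1.5}$ in regret and $SA/(\epsilon^{2}(1-\gamma)^{3})$ in sample complexity are preserved up to logarithmic factors, giving nearly minimax-optimality.

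The main obstacle I expect is verifying that the contraction-and-monotonicity properties of $\mathcal{T}_{1}$ and $\mathcal{T}_{2}$ (Lemmas~\ref{lem:infinite-horizon:quasi-optimism:contraction-and-monotone} and~\ref{lem:infinite-horizon:accuracy:contraction-and-monotone-and-affine}) survive the perturbation by prior bias without disturbing the crucial inequality $\mathcal{T}_{1}f\le f$. This reduces to showing that the surplus introduced by $g(n,\bm{\theta})$ and $s(n,\bm{\theta})$ can be dominated by the same $\Upsilon^{t}/N^{t}(s,a)$ slack used in the frequentist analysis, which requires only the $\mathcal{O}(S/n)$ and $\mathcal{O}(1/n)$ decay rates guaranteed by Definitions~\ref{def:decomposable} and~\ref{def:weakly-informative}. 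Once that is in place, the remainder of the proof is essentially bookkeeping on top of the frequentist analysis.
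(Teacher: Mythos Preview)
Your proposal is correct and follows essentially the same route as the paper: decompose $P_{b}-P$ and $r_{b}-r$ into an MLE part plus a prior-bias part of order $\mathcal{O}(S/n)$ and $\mathcal{O}(1/n)$, absorb these biases by overloading $\Upsilon^{t}$, $\beta^{t}$, and $\mathcal{Y}^{t}$, then rerun the frequentist pipeline with the $\Theta(1/\sqrt{n})$ rate guaranteeing quasi-optimism via a suitably inflated $\eta$. Your stated ``main obstacle'' about contraction and monotonicity of $\mathcal{T}_{1},\mathcal{T}_{2}$ is in fact a non-issue: those properties depend only on the $\gamma \hat{P}^{t}V$ (resp.\ $\gamma PV$) structure of the operator, not on the drift, so the prior-bias terms enter only additively in $M(s)$ and are handled entirely by redefining $\Upsilon^{t}$---exactly as you already anticipated when you wrote that the corrections ``can be absorbed into the bonus $b^{t}$ via the same $\Upsilon^{t}$-term.''
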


\begin{proof}
  Note, by either weak informativeness or decomposability, the additional complexity is at most $\mathcal{O}\left(\frac{S}{n}\right)$ for transitions and $\mathcal{O}\left(\frac{1}{n}\right)$ for reward. This applies to the events, e.g. $\mathbf{A}^{\gamma}_{1:4}$, which involve bounding the distance between the posterior predictive and the ground truth. Without loss of generality, we assume $b$ is weakly informative. We bound $\left| (\hat{P}^t - P) V^{\star}(s, a) \right|$ as follows:
\begin{align*}
  \left| (P_{b_{t}} - P) V^{\star}(s, a)\right| & = \left| f(N^{t}(s, a), \bm{\theta}) (\hat{P}^t - P) V^{\star}(s, a) +  g(N^{t}(s, a), \bm{\theta}) \left(P_{b_{0}} - P\right) V^{\star}(s, a)\right| \\
                                                & \leq \underbrace{\left| (\hat{P}^t - P) V^{\star}(s, a) \right|}_{\text{Frequentist Bound}} + \underbrace{\left(C_{g}(\bm{\theta})(s, a) \left\|P_{b_{0}} - P\right\|_{1} \Vm\right)\frac{S}{N^{t}(s, a)}}_{\text{Prior Bias}},
\end{align*}
where the first term is simply the original bound derived in the analysis of MLE estimators, while the second term captures the complexity arising from prior misspecification. If the prior is correctly specified, there is no additional overhead; otherwise, this term must be accounted for in the final bound.

Similarly, we have the decomposition for the reward
\begin{equation*}
  \left| r_{b_{t}}(s, a) - r(s, a) \right| \leq \left|\hat{r}^t(s, a) - r(s, a)\right| + \left(C_{s}(\bm{\theta})(s, a)\left|r_{b_{0}}(s, a) - r(s, a)\right|\right) \frac{1}{N^{t}(s, a)}.
\end{equation*}

By merging all the quantities of the same order of $\frac{1}{N^{t}}$, we can overload the definition of $\Upsilon^{t}$, $\mathcal{Y}^{t}$, and $\beta^{t}$, respectively. For brevity, we drop the dependency on $(s, a)$ for each term.
\begin{align*}
  \texttt{Quasi-optimism} & \quad\quad \Upsilon^{t} \leftarrow \Upsilon^{t} + \left(C_{g}(\bm{\theta}) \left\|P_{b_{0}} - P\right\|_{1} \Vm\right)\frac{S}{N^{t}} + \left(C_{s}(\bm{\theta})\left|r_{b_{0}} - r\right|\right) \frac{1}{N^{t}}\\
  \texttt{Accuracy} & \quad\quad \beta_{1}^{t} \leftarrow \beta_{1}^{t} + 2 \left(C_{g}(\bm{\theta}) \left\|P_{b_{0}} - P\right\|_{1} \Vm\right)\frac{S}{N^{t}}\\
                          & \quad\quad \beta^{t} \leftarrow \pu^{t}\eta^{t} \mathcal{E}^{t} + \beta_{1}^{t} + (1 - \pui)\frac{\Vm \ell_{1}}{\lambda_{t} N^{t}} + (1 - \pui) \left(C_{s}(\bm{\theta})\left|r_{b_{0}} - r\right|\right) \frac{1}{N^{t}}\\
  \texttt{Bounding } \ji & \quad\quad \mathcal{Y}^{t} \leftarrow \frac{12 \Vm \ell_{1}}{\lambda_{t}} + 30 \Vm S\ell_{3, t} + 3 \left(C_{g}(\bm{\theta}) \left\|P_{b_{0}} - P\right\|_{1} \Vm S\right) + 2 \left(C_{s}(\bm{\theta})\left|r_{b_{0}} - r\right|\right).
\end{align*}

In addition, since the rate of the epistemic uncertainty is $\Theta{\left(\frac{1}{\sqrt{N^{t}}}\right)}$, a scaling factor $\eta$ can be chosen appropriately such that $\pu^{t}(s, a) \eta^{t} \eu^{t}(s, a) - \pu^{t}(s, a) \Rm \geq \frac{\Upsilon^{t}}{N^{t}(s, a)}$, akin to that of the proof of Lemma~\ref{lem:adv-Vt-to-complexity}, with which we are guaranteed the quasi-optimism to hold.

Since
\begin{align*}
  \left\|P_{b_{0}}(\cdot | s, a) - P(\cdot | s, a)\right\|_{1} & \leq 2 \\
  \left|r_{b_{0}}(s, a) - r(s, a)\right| & \leq | r_{b_{0}}(s, a) | + \Rm,
\end{align*}
we denote
\begin{align*}
  \Lambda_{T}(\bm{\theta}) & \coloneq \max_{(s, a) \in \mathcal{S} \times \mathcal{A}}\left\{C_{g}(\bm{\theta})(s, a)\right\} \\
  \Lambda_{R}(\bm{\theta}) & \coloneq \max_{(s, a) \in \mathcal{S} \times \mathcal{A}}\left\{C_{s}(\bm{\theta})(s, a) \left(| r_{b_{0}}(s, a) | + \Rm\right)\right\}.
\end{align*}
Following the same procedure for analyzing regret and sample complexity, we obtain prior-dependent bounds as follows:
\begin{align*}
  \textbf{Regret} & \quad\quad \widetilde{\mathcal{O}}\left(\frac{\sqrt{SAT}}{(1 - \gamma)^{1.5}} + (1 + \Lambda_{T}(\bm{\theta}))\frac{S^{2}A}{(1 - \gamma)^{2}} + \Lambda_{R}(\bm{\theta})\frac{SA}{1 - \gamma}\right) \\
  \textbf{Sample Complexity} & \quad\quad \widetilde{\mathcal{O}}\left(\left(\frac{SA}{\epsilon^{2}(1 - \gamma)^{3}} + \left(1 + \Lambda_{T}(\bm{\theta}) + \Lambda_{R}(\bm{\theta})\right)\frac{S^{2}A}{\epsilon (1 - \gamma)^{2}}\right)\log{\frac{1}{\delta}}\right).
\end{align*}
If the prior $b_{0}$ is furthermore assumed to be uniform and bounded, both $\Lambda_{T}(\bm{\theta})$ and $\Lambda_{R}(\bm{\theta})$ will reduce to constants that do not depend on the state-action pairs, thus leading to a bound similar to that in the frequentist case.
\end{proof}

\begin{remark}
  \label{rmk:on-rate-EU}
  Since the epistemic uncertainty is additive across both reward and transition sources, it suffices for either source to satisfy an order of $\Theta\left(\frac{1}{\sqrt{n}}\right)$. The other source may decay faster.
\end{remark}

In the following sections, we will instantiate specific priors.
\subsection{Dirichlet and Normal Priors}
\begin{corollary}[Restatement of Corollary~\ref{cor:dir-with-NN}]
  \label{cor-restatement:dir-with-NN}
  Let $b_{0}$ denote the joint distribution consisting of a Dirichlet prior $\mathrm{Dir}(\alpha \mathbf{1}_{S \times 1})$ on the transition probability vector and a Normal prior $\mathcal{N}(\mu_{0}, \frac{1}{\tau_{0}})$ on the mean reward with known precision $\tau$ for all $(s, a) \in \mathcal{S} \times \mathcal{A}$. Then $b_{0} \in \bm{\mathfrak{C}}$ and is uniform and bounded, and hence achieves nearly minimax-optimality when used with \texttt{EUBRL}.
\end{corollary}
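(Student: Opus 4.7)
The plan is to verify the hypotheses of Theorem~\ref{thm:bayesian:general-result} directly and then invoke it. Specifically, I need to establish four things about the product prior $b_{0}$: (i) it is decomposable in the sense of Definition~\ref{def:decomposable}, (ii) its epistemic uncertainty decays at rate $\Theta(1/\sqrt{n})$, (iii) it is uniform, and (iv) it is bounded. Once these are in hand, the corollary falls out immediately from the theorem.

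For decomposability, I would apply the posterior–predictive decompositions already proved in the appendix. Lemma~\ref{lem:posterior-mean-T} gives the transition decomposition with $f(n,\bm{\theta})=\tfrac{n}{n+\alpha_0}\le 1$ and $g(n,\bm{\theta})=\tfrac{\alpha_0}{n+\alpha_0}=\tfrac{\alpha S}{n+\alpha S}$, the latter being $\mathcal{O}(S/n)$ with multiplicative constant $C_{g}(\bm{\theta})=\alpha$; this constant is independent of $(s,a)$ because the Dirichlet parameters are identical across state–actions. Lemma~\ref{lem:posterior-mean-R:Normal-Normal} yields the reward decomposition with $h(n,\bm{\theta})=\tfrac{n\tau}{\tau_0+n\tau}\le 1$ and $s(n,\bm{\theta})=\tfrac{\tau_0}{\tau_0+n\tau}$, which is $\mathcal{O}(1/n)$ with $C_{s}(\bm{\theta})=\tau_0/\tau$, again uniform in $(s,a)$.

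For the rate on epistemic uncertainty, I appeal to the definition $\mathcal{E}_b = \eta(\sqrt{\mathcal{E}_T}+\sqrt{\mathcal{E}_R})$ and Remark~\ref{rmk:on-rate-EU}. Lemma~\ref{lem:EU:R:Normal-Normal} shows $\mathcal{E}_R=\Theta(1/n)$, so $\sqrt{\mathcal{E}_R}=\Theta(1/\sqrt{n})$. Lemma~\ref{lem:EU:T:Dirichlet} only guarantees $\sqrt{\mathcal{E}_T}=\mathcal{O}(1/\sqrt{n})$, which contributes at most the right order; the $\Omega(1/\sqrt{n})$ lower bound comes entirely from the reward component, so the sum is $\Theta(1/\sqrt{n})$ as required by Definition~\ref{def:class-of-priors}. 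Uniformity of the prior is immediate from the two bounds $C_{g}(\bm{\theta})(s,a)\equiv\alpha$ and $C_{s}(\bm{\theta})(s,a)\equiv\tau_0/\tau$. Boundedness is also immediate: the prior predictive mean reward equals $\mu_0$ for every $(s,a)$, so we may take $\bar{R}=|\mu_0|$ in Definition~\ref{def:bounded}.

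The only subtle point I anticipate is confirming that the rate requirement is correctly interpreted. Since a Dirichlet prior on a deterministic transition can collapse $\mathcal{E}_T$ toward $\Omega(1/n^2)$, one might worry that the aggregate rate could be faster than $\Theta(1/\sqrt{n})$. Remark~\ref{rmk:on-rate-EU} resolves this by noting that additivity means the slower source governs the rate, and the Normal reward prior supplies a clean $\Theta(1/\sqrt{n})$ lower floor for the sum. With all four hypotheses verified, Theorem~\ref{thm:bayesian:general-result} applies and both the prior-dependent regret and sample-complexity bounds reduce, under uniformity and boundedness, to the frequentist leading orders $\widetilde{\mathcal{O}}\bigl(\sqrt{SAT}/(1-\gamma)^{1.5}\bigr)$ and $\widetilde{\mathcal{O}}\bigl(SA/(\epsilon^2(1-\gamma)^3)\bigr)$, i.e., nearly minimax-optimal.
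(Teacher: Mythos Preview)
Your proposal is correct and follows essentially the same route as the paper: you invoke Lemmas~\ref{lem:posterior-mean-T} and~\ref{lem:posterior-mean-R:Normal-Normal} for decomposability, Lemmas~\ref{lem:EU:T:Dirichlet} and~\ref{lem:EU:R:Normal-Normal} together with Remark~\ref{rmk:on-rate-EU} for the $\Theta(1/\sqrt{n})$ rate, and identify the same uniform constants $C_g=\alpha$, $C_s=\tau_0/\tau$ and bound $\bar{R}=|\mu_0|$ before applying Theorem~\ref{thm:bayesian:general-result}. Your write-up is in fact slightly more explicit than the paper's (e.g., spelling out $\alpha_0=\alpha S$ and why the Dirichlet lower bound does not spoil the aggregate rate), but the logical skeleton is identical.
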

\begin{proof}
  By Lemma~\ref{lem:EU:R:Normal-Normal}, we know that $\mathcal{E}'_{R}(s, a) = \Theta\left(\frac{1}{\sqrt{n}}\right)$. By Lemma~\ref{lem:EU:T:Dirichlet}, we know that $\mathcal{E}'_{T}(s, a) = \mathcal{O}\left(\frac{1}{\sqrt{n}}\right)$ and $\mathcal{E}'_{T}(s, a) = \Omega\left(\frac{1}{n}\right)$. By Remark~\ref{rmk:on-rate-EU}, this makes the final epistemic uncertainty $\mathcal{E}'(s, a) = \Theta\left(\frac{1}{\sqrt{n}}\right)$. In addition, Lemmas~\ref{lem:posterior-mean-T} and \ref{lem:posterior-mean-R:Normal-Normal} imply that the prior is \emph{decomposable}. All together, we have $b_{0} \in \bm{\mathfrak{C}}$.

  In addition, we can find $C_{g} = \alpha$ and $C_{s} = \frac{\tau_{0}}{\tau}$ as required by the uniformality in Definition~\ref{def:uniform}. And note that $|r_{b_{0}}(s, a)| = |\mu_{0}|, \forall (s, a) \in \mathcal{S} \times \mathcal{A}$, therefore the boundedness in Definition~\ref{def:bounded} is satisfied as well.
\end{proof}

\subsection{Dirichlet and Normal-Gamma Priors}
\begin{proposition}[Restatement of Proposition~\ref{prop:dir-and-normal-gamma}]
  \label{prop-restatement:dir-and-normal-gamma}
  For a Normal-Gamma prior $\mathcal{NG}(\mu_{0}, \lambda_{0}, \alpha_{0}, \beta_{0})$, there exists a parameterization and an MDP such that $\exists t \in \mathbb{N}$ for which quasi-optimism does not hold.
\end{proposition}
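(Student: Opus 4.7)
The plan is to exploit the dependence of the Normal-Gamma epistemic uncertainty on the sample variance: when the data are deterministic and the empirical mean coincides with the prior mean, $\beta_n$ remains frozen at $\beta_0$, so $\mathcal{E}_R = \Theta(1/n^{2})$ rather than the $\Theta(1/n)$ rate that would give $\sqrt{\mathcal{E}_R} = \Theta(1/\sqrt{n})$. This collapse pushes the rate of $\mathcal{E}_b$ below what the quasi-optimism argument of Lemma~\ref{lem:infinite-horizon:quasi-optimism-functional} (built on Lemma~\ref{lem:adv-Vt-to-complexity} via the proof of Lemma~\ref{lem:v-star-to-auxiliary}) needs, and the intermediate inequality that drives that proof can be shown to fail at an explicit time step.

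Concretely, I would take the single-state single-action infinite-horizon MDP with deterministic reward $r\in(0,\Rm]$ and parameterize $\mathcal{NG}(\mu_{0},\lambda_{0},\alpha_{0},\beta_{0})$ with $\mu_{0}=r$ and $\lambda_{0},\alpha_{0},\beta_{0}$ any fixed positive constants (e.g., $\lambda_{0}=\beta_{0}=1$, $\alpha_{0}=2$). Every observation equals $r=\mu_{0}$, so $\hat{\sigma}^{2}=0$ and $(\bar{x}-\mu_{0})^{2}=0$; Definition~\ref{def:EU:normal-gamma} then gives $\beta_{n}\equiv\beta_{0}$, whence
\[
\mathcal{E}_{R}(s,a) \;=\; \frac{\beta_{0}}{(\lambda_{0}+n)(\alpha_{0}+n/2-1)} \;=\; \Theta\!\left(\tfrac{1}{n^{2}}\right).
\]
Transition uncertainty is absent in this MDP, so $\mathcal{E}_{b}(s,a)=\eta\sqrt{\mathcal{E}_{R}(s,a)}=\Theta(\eta/n)$, and with $\Em$ a prior-determined constant, $\pu(s,a)=\Theta(\eta/n)$.

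I would then trace the proof of Lemma~\ref{lem:v-star-to-auxiliary}, whose only dependence on the uncertainty rate enters through Lemma~\ref{lem:adv-Vt-to-complexity}, which supplies
\[
b^{t}(s,\pi^{\star}(s)) \;-\; \puis\,\Rm \;\geq\; \Upsilon^{t}/N^{t}(s,\pi^{\star}(s)) \quad \text{whenever } N^{t}<m.
\]
Under our construction the left-hand side equals $\puis\eta\,\eu^{t}-\puis\Rm = \Theta(\eta^{2}/n^{2})-\Theta(\eta\Rm/n)$, which is negative for every $n\ge n^{\star}(\eta)$ and thus fails the required positive bound, while the right-hand side remains $\Theta(1/n)>0$. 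Since $m=\Vmsq/(\Rm\lambda_{t})^{2}$ can be made arbitrarily large by small $\lambda_{t}$, we can pick $n^{\star}(\eta)\le n<m$. At the associated time $t$, the auxiliary value is $\widetilde{V}^{t}(s)=\tilde{r}^{t}/(1-\gamma)$ with $\tilde{r}^{t}=(1-\pu)r+\pu\,\eta\,\eu^{t}<r$, so
\[
V^{\star}(s)-\widetilde{V}^{t}(s) \;=\; \frac{\pu\,(r-\eta\,\eu^{t})}{1-\gamma} \;=\; \Theta\!\left(\tfrac{1}{n(1-\gamma)}\right),
\]
while the right-hand side of quasi-optimism, $\lambda_{t}\bigl((2-\puis)V^{\star}(s)-(V^{\star})^{2}(s)/(2\Vm)\bigr)$, is of order $\lambda_{t}/(1-\gamma)$. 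Choosing $\lambda_{t}$ at the rate used in Theorem~\ref{thm:infinite-horizon:regret} (so $\lambda_{t}=o(1)$) and $t$ large enough that this order is dominated, the quasi-optimism inequality is reversed at $t$.

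The main obstacle is disentangling the failure from the freedom in the parameters $\eta$ and $\lambda_{t}$. For $\lambda_{t}$, the counterexample has to commit to a schedule consistent with the rest of the analysis so the statement is not vacuously rescued by a large $\lambda_{t}$; the regret-optimal schedule shrinks with $t$, which is precisely what is needed. For $\eta$, the key observation is structural: whatever constant $\eta$ is fixed, the asymptotic rate mismatch (bonus $\Theta(\eta^{2}/n^{2})$ versus required $\Theta(1/n)$) yields a threshold $n^{\star}(\eta)$ past which the intermediate inequality, and hence quasi-optimism, cannot hold. A secondary subtlety to handle carefully is $\Em$, which is prior-determined and can be taken as its initial value $\eta\sqrt{\beta_{0}/(\lambda_{0}(\alpha_{0}-1))}$ so that its constant factor does not interfere with the rate calculation.
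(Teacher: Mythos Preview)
Your proposal is correct and follows the same mechanism as the paper: a deterministic reward drives the Normal-Gamma sample variance to zero, forcing $\mathcal{E}_R=\Theta(1/n^{2})$, so the bonus decays too quickly to sustain quasi-optimism. The paper gives only a two-sentence sketch of this rate collapse (gesturing at prior bias rather than your $\mu_0=r$ simplification), whereas you supply an explicit single-state construction and trace the failure through Lemma~\ref{lem:adv-Vt-to-complexity}; aside from a harmless miscount of the $\eta$ powers (one factor cancels in $P_U=\mathcal{E}_b/\mathcal{E}_{\max}$, so the bonus is $\Theta(\eta/n^{2})$ not $\Theta(\eta^{2}/n^{2})$), the argument stands.
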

This follows from the fact that the epistemic uncertainty under a Normal-Gamma prior depends on the sample variance, which multiplies the number of visits $n$ in the numerator (Definition~\ref{def:EU:normal-gamma}). In deterministic or nearly deterministic MDPs, the sample variance can be zero, yielding a lower bound on the epistemic uncertainty:
\begin{equation*}
  \mathcal{E}_{R}(s, a) = \Omega\left(\frac{1}{n^{2}}\right),
\end{equation*}
which is insufficient to guarantee quasi-optimism, especially when a prior bias presents. Even the frequentist bound may vanish.

\section{Helper Lemmas}
\label{sec:helper-lemmas}
\begin{lemma}
  \label{lem:adv-Vt-to-complexity}
  It holds that
  \begin{equation*}
    b^{k}(s, a) - \pu^{k}(s, a) \Rm \geq \frac{\Upsilon^{k}}{N^{k}(s, a)},
  \end{equation*}
  for any $(s, a) \in \mathcal{S} \times \mathcal{A}$ and $k \in \mathbb{N}$.
\end{lemma}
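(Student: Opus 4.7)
The plan is to proceed by a straightforward case analysis on whether $N^{k}(s,a)$ is below or above the critical threshold $m_{k}$, exactly matching the two branches in the definition of the bonus $b^{k}$. In each case the statement reduces to a small algebraic identity/inequality, and the choice of $\eta^{k} = \Em \Upsilon^{k} + \Rm\sqrt{m_{k}}$ is engineered precisely so that the slack in the ``small-$N^{k}$'' regime is nonnegative.

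First I would handle the easy branch $N^{k}(s,a) \geq m_{k}$. Here $b^{k}(s,a) = \pu^{k}(s,a)\Rm + \frac{\Upsilon^{k}}{N^{k}(s,a)}$ by definition, so $b^{k}(s,a) - \pu^{k}(s,a)\Rm = \frac{\Upsilon^{k}}{N^{k}(s,a)}$ with equality, and the inequality is immediate.

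Next I would handle the interesting branch $N^{k}(s,a) < m_{k}$. In the frequentist setting we have $\mathcal{E}^{k}(s,a) = \frac{1}{\sqrt{N^{k}(s,a)}}$ and $\pu^{k}(s,a) = \frac{\mathcal{E}^{k}(s,a)}{\Em} = \frac{1}{\Em\sqrt{N^{k}(s,a)}}$. Substituting $\eta^{k} = \Em \Upsilon^{k} + \Rm\sqrt{m_{k}}$ into $\pu^{k}(s,a)\,\eta^{k}\,\mathcal{E}^{k}(s,a)$ gives
\begin{equation*}
\pu^{k}(s,a)\,\eta^{k}\,\mathcal{E}^{k}(s,a) \;=\; \frac{\Em\Upsilon^{k} + \Rm\sqrt{m_{k}}}{\Em\,N^{k}(s,a)} \;=\; \frac{\Upsilon^{k}}{N^{k}(s,a)} \;+\; \frac{\Rm\sqrt{m_{k}}}{\Em\,N^{k}(s,a)} .
\end{equation*}
Subtracting $\pu^{k}(s,a)\Rm = \frac{\Rm}{\Em\sqrt{N^{k}(s,a)}}$ then reduces the target inequality $b^{k}(s,a) - \pu^{k}(s,a)\Rm \geq \frac{\Upsilon^{k}}{N^{k}(s,a)}$ to the single comparison $\frac{\sqrt{m_{k}}}{N^{k}(s,a)} \geq \frac{1}{\sqrt{N^{k}(s,a)}}$, i.e.\ $m_{k} \geq N^{k}(s,a)$, which is exactly the case hypothesis.

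There is no real obstacle here; the only point to be careful about is that the frequentist shape $\mathcal{E}^{k}(s,a) = 1/\sqrt{N^{k}(s,a)}$ is needed to cancel cleanly against $\pu^{k}(s,a)$, and that the definition of $\eta^{k}$ has been calibrated (via the additive $\Em\Upsilon^{k}$ term) so that the $\frac{\Upsilon^{k}}{N^{k}}$ contribution appears exactly, leaving the remaining $\frac{\Rm\sqrt{m_{k}}}{\Em N^{k}}$ term available as a buffer that dominates $\frac{\Rm}{\Em\sqrt{N^{k}}}$ precisely on the regime $N^{k} < m_{k}$. The Bayesian extension flagged in the proof of Theorem~\ref{thm:bayesian:general-result} follows by absorbing the additional prior-bias terms into an enlarged $\Upsilon^{k}$ and choosing $\eta^{k}$ accordingly, but for this lemma the frequentist calculation suffices.
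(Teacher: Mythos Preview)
Your proposal is correct and follows essentially the same approach as the paper: a two-case split on whether $N^{k}(s,a)$ is below or above the threshold $m_{k}$, with the large-$N^{k}$ case being an identity and the small-$N^{k}$ case reducing, after substituting $\eta^{k}=\Em\Upsilon^{k}+\Rm\sqrt{m_{k}}$ and $\mathcal{E}^{k}=1/\sqrt{N^{k}}$, to the comparison $\sqrt{m_{k}}/N^{k}\geq 1/\sqrt{N^{k}}$. The paper presents exactly this computation (and obtains a strict inequality since $N^{k}<m_{k}$), so there is nothing to add.
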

\begin{proof}
  For $N^{k}(s, a) \geq m$, the inequality trivially holds. For $N^{k}(s, a) < m$, note by choosing $\eta^{k} = \Em \Upsilon^{k} + \Rm \sqrt{m_{k}}$, we have:
  \begin{align*}
    (b^{k}(s, a) - \pu^{k}(s, a) \Rm) - \frac{\Upsilon^{k}}{N^{k}(s, a)} & = \left(\pu^{k}(s, a) \eta^{k} \eu^{k}(s, a) - \pu^{k}(s, a) \Rm\right) - \frac{\Upsilon^{k}}{N^{k}(s, a)} \\
                                                                              & = \left(\frac{\eta^{k}}{\Em} \frac{1}{N^{k}(s, a)} - \frac{\Rm}{\Em} \frac{1}{\sqrt{N^{k}(s, a)}}\right) - \frac{\Upsilon^{k}}{N^{k}(s, a)} \\
                                                                              & = \left(\left(\Upsilon^{k} + \frac{\Rm}{\Em}\sqrt{m_{k}}\right) \frac{1}{N^{k}(s, a)} - \frac{\Rm}{\Em} \frac{1}{\sqrt{N^{k}(s, a)}}\right) - \frac{\Upsilon^{k}}{N^{k}(s, a)} \\
                                                                              & = \frac{\Rm}{\Em} \left(\frac{\sqrt{m_{k}}}{N^{k}(s, a)} - \frac{1}{\sqrt{N^{k}(s, a)}}\right)\\
                                                                              & = \frac{\Rm}{\Em} \left(\frac{\sqrt{m_{k}} - \sqrt{N^{k}(s, a)}}{N^{k}(s, a)}\right) \\
                                                                              & > 0,
  \end{align*}
  which is as desired.
\end{proof}

The following Lemma is helpful in proving both the quasi-optimism and accuracy for discounted settings.
\begin{lemma}
  \label{lem:infinite-horizon:var-decomposition}
  Let $C \ge 0$ be a constant and $\gamma \in (0, 1)$. Let $V$ be a function such that $V: \mathcal{S} \rightarrow [0, C]$. For any $(s, a) \in \mathcal{S} \times \mathcal{A}$, the variance of $V$ under $P(\cdot | s, a)$ is bounded as follows:
  \begin{align*}
    \gamma \Var(V)(s, a) \le - \Delta_{\gamma}(V^2)(s, a) + (1 + \gamma) C \max\{ \Delta_{\gamma}(V)(s, a), 0\}.
  \end{align*}
  Equivalently, the following inequality holds:
  \begin{align*}
    \gamma \Var(V)(s, a) - \gamma P(V)^2(s, a) \le - (V(s))^2 + (1 + \gamma) C \max\{ \Delta_{\gamma}(V)(s, a), 0\}.
  \end{align*}
\end{lemma}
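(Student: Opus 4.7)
The plan is to reduce both formulations to a single elementary scalar inequality and then dispatch it by a short case split. First I would set $v \coloneq V(s)$, $\mu \coloneq PV(s,a)$, and $\sigma^{2} \coloneq \Var(V)(s,a) = P(V^{2})(s,a) - \mu^{2}$, so that $v, \mu \in [0,C]$. Expanding $\Delta_{\gamma}(V^{2})(s,a) = v^{2} - \gamma P(V^{2})(s,a) = v^{2} - \gamma\mu^{2} - \gamma\sigma^{2}$, the $\gamma\sigma^{2}$ on the left of the first claimed inequality cancels against the same term sitting inside $-\Delta_{\gamma}(V^{2})(s,a)$ on the right, so the first claim reduces to
\begin{equation*}
v^{2} - \gamma\mu^{2} \;\le\; (1+\gamma)\,C\,\max\{v - \gamma\mu,\; 0\}.
\end{equation*}
Interpreting $P(V)^{2}$ as $P(V^{2})$, the second ``equivalent'' form collapses to exactly the same statement after the same substitution, so it suffices to prove this one pointwise inequality.

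Next I would split on the sign of $v - \gamma\mu$. When $v \ge \gamma\mu$, the decisive step is the algebraic identity
\begin{equation*}
v^{2} - \gamma\mu^{2} \;=\; (v - \gamma\mu)(v + \gamma\mu) \;+\; \gamma(\gamma - 1)\,\mu^{2}.
\end{equation*}
Since $\gamma \in (0,1)$, the correction term $\gamma(\gamma-1)\mu^{2}$ is non-positive, and since $v,\mu \in [0,C]$ we have $v + \gamma\mu \le (1+\gamma)C$; combining these gives the target bound directly. In the complementary case $v < \gamma\mu$, the right-hand side is zero, and the inequality reduces to $v^{2} \le \gamma\mu^{2}$. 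This follows from the chain $0 \le v < \gamma\mu \le \sqrt{\gamma}\,\mu$, where the second inequality uses $\gamma < 1$ together with $\mu \ge 0$, so squaring preserves the order.

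The only mild subtlety is spotting the ``off-by-$\gamma$'' rewriting in the first case: the natural factorization $v^{2} - \gamma^{2}\mu^{2} = (v-\gamma\mu)(v+\gamma\mu)$ has a mismatched exponent of $\gamma$, and restoring the correct exponent by adding the sign-definite remainder $\gamma(\gamma-1)\mu^{2}$ is what lets one read off an upper bound linear in $v - \gamma\mu$ with the stated constant $(1+\gamma)C$. Once this identity is in hand the rest is direct, and no concentration or MDP-specific machinery is needed: the lemma is a deterministic pointwise inequality about a non-negative quantity bounded by $C$, used later to convert the second-moment term appearing in the discounted variance decomposition into a one-step Bellman-style difference.
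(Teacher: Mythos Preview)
Your proof is correct and follows essentially the same route as the paper: both pivot on $v^{2}-\gamma\mu^{2}\le v^{2}-\gamma^{2}\mu^{2}=(v-\gamma\mu)(v+\gamma\mu)$ (your identity with the non-positive remainder $\gamma(\gamma-1)\mu^{2}$ is exactly the paper's ``$\gamma>\gamma^{2}$'' step) and then bound $v+\gamma\mu\le(1+\gamma)C$. Your explicit case split on the sign of $v-\gamma\mu$ is in fact slightly more careful than the paper's inline treatment, which applies the bound $v+\gamma\mu\le(1+\gamma)C$ and then takes $\max\{\cdot,0\}$ without separately justifying the case $v<\gamma\mu$.
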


\begin{proof}
  Adding and subtracting $(V(s))^{2}$ to $\gamma \Var(V)(s, a)$, we get
  \begin{align*}
    \gamma \Var(V)(s, a) & = \gamma P(V)^{2}(s, a) - \gamma (PV(s, a))^{2} \\
                               & = \gamma P(V)^{2}(s, a) - (V(s))^{2} + (V(s))^{2} - \gamma (PV(s, a))^{2} \\
                               & \stackrel{\mathbf{(a)}}{\leq} \gamma P(V)^{2}(s, a) - (V(s))^{2} + (V(s))^{2} - \gamma^{2} (PV(s, a))^{2} \\
                               & = \gamma P(V)^{2}(s, a) - (V(s))^{2} + \left(V(s) + \gamma PV(s, a)\right)\left(V(s) - \gamma PV(s, a)\right) \\
                               & \stackrel{\mathbf{(b)}}{\leq} \gamma P(V)^{2}(s, a) - (V(s))^{2} + (1 + \gamma) C \left(V(s) - \gamma PV(s, a)\right) \\
                               & = - \Delta_{\gamma}(V^2)(s, a) + (1 + \gamma) C \left(V(s) - \gamma PV(s, a)\right)\\
                               & \leq - \Delta_{\gamma}(V^2)(s, a) + (1 + \gamma) C \max\{ \Delta_{\gamma}(V)(s, a), 0\},
  \end{align*}
  where $\mathbf{(a)}$ is due to the fact that $\gamma > \gamma^{2}$ and $\mathbf{(b)}$ by the boundedness of value functions.
\end{proof}

\begin{lemma}
  \label{lem:diff-accuracy}
  Let $\Vk$ denote the value function of the approximate MDP under its derived policy $\pi_{k}$. Let $\VT$ denote the value function of the true MDP under the same policy. Then the difference between $\Vk$ and $\VT$ is bounded as follows:
  \begin{equation*}
    \Delta_{h}\left(\Vk - \VT\right)(s, a)\leq \left(\Delta_{h}(D^{k})(s, a) + 2\beta^{k}(s, a)\right).
  \end{equation*}
\end{lemma}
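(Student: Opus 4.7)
The plan is to expand $\Delta_{h}(\Vk - \VT)(s,a)$ directly from the two Bellman equations and then match the resulting decomposition to the defining expressions for $D^{k}$ and $\beta^{k}$ term by term, using the high-probability events $\mathbf{A}_{1}$--$\mathbf{A}_{4}$. Concretely, because $\Vk$ is the value function of the MDP $(\mathcal{S}, \mathcal{A}, \hat{P}^{k}, r_{\text{EUBRL}}^{k}, H)$ and $\VT$ is the value function of the true MDP $(\mathcal{S}, \mathcal{A}, P, r, H)$ under the same policy $\pi_{k}$, for $a = \pi_{k}(s)$ we have
\begin{equation*}
\Delta_{h}(\Vk - \VT)(s, a) \;=\; \bigl(r_{\text{EUBRL}}^{k}(s,a) - r(s,a)\bigr) + (\hat{P}^{k} - P)\Vk_{h+1}(s,a).
\end{equation*}
All subsequent work is bounding these two pieces so that the low-order parts aggregate into $\beta^{k}$ while the leading parts telescope into $\Delta_{h}(D^{k})$.

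For the reward piece, I would use $r_{\text{EUBRL}}^{k} = (1-\pu^{k})\hat{r}^{k} + \pu^{k}\eta^{k}\mathcal{E}^{k}$ and write
\begin{equation*}
r_{\text{EUBRL}}^{k} - r = (1-\pu^{k})(\hat{r}^{k}-r) - \pu^{k}\, r + \pu^{k}\eta^{k}\mathcal{E}^{k}.
\end{equation*}
Event $\mathbf{A}_{4}$ bounds $|\hat{r}^{k}-r|$ by $\lambda_{k}r + \frac{\Rm \ell_{1}}{\lambda_{k} N^{k}}$; the reciprocal-visit term feeds directly into $(1-\puf)\frac{\VmH \ell_{1}}{\lambda_{k} N^{k}}$ inside $\beta^{k}$, the $\pu^{k}\eta^{k}\mathcal{E}^{k}$ matches the leading piece of $\beta^{k}$ (contributing one of the two copies), and the residual $\lambda_{k}(1-\puf)r - \puf r$ is absorbed later by the $\lambda_{k}(3-2\puf)\Vo_{h}$ summand of $D^{k}$ together with what the transition term will contribute.

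For the transition piece I would split $\Vk_{h+1} = \Vo_{h+1} + \widehat{V}^{k}_{h+1}$ and bound the two halves differently. For $(\hat{P}^{k}-P)\Vo_{h+1}$, event $\mathbf{A}_{1}$ gives the upper bound $\tfrac{\lambda_{k}}{4\VmH}\Var(\Vo_{h+1})(s,a) + \tfrac{3\VmH \ell_{1}}{\lambda_{k} N^{k}}$; then I apply the finite-horizon analogue of Lemma~\ref{lem:infinite-horizon:var-decomposition} to rewrite $\Var(\Vo_{h+1})(s,a)$ as $-\Delta_{h}((\Vo)^{2})(s,a) + 2\VmH\,\Delta_{h}(\Vo)(s,a)$, which produces exactly the $\lambda_{k}\bigl((3-2\puf)\Vo_{h} - \tfrac{1}{2\VmH}(\Vo_{h})^{2}\bigr)$ summand of $D^{k}$ once combined with the leftover reward contribution and the inequality $r(s, \pi_{k}(s)) \leq \Vo_{h}(s)$. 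For $(\hat{P}^{k}-P)\widehat{V}^{k}_{h+1}$, I note that $\widehat{V}^{k}_{h+1} + S_{k} \in [0, 2\VmH]$ by quasi-optimism (Lemma~\ref{lem:v-star-to-auxiliary} combined with Lemma~\ref{lem:auxiliary-to-approximate}, which ensure $\Vo_{h+1} - \Vk_{h+1} \leq S_{k}$), add and subtract $S_{k}$, and apply $\mathbf{A}_{3}$ via the standard L1-to-variance trick (similar to Lemma~C.5 of Azar et al.\ 2017) together with the variance-decomposition lemma applied to $(\widehat{V}^{k}+S_{k})^{2}$. This yields the $\tfrac{1}{7\VmH}\bigl(S_{k}^{2} - (\widehat{V}^{k}_{h}(s)+S_{k})^{2}\bigr)$ summand of $D^{k}$ plus the $\tfrac{30\VmH S \ell_{3,k}}{N^{k}}$ contribution inside $\beta_{1}^{k}$.

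The main obstacle will be the bookkeeping of constants: the factors $3$, $1/2$, $1/7$, $30$ in $D^{k}$ and $\beta^{k}$ are not arbitrary, they are chosen so that the tails from $\mathbf{A}_{1}$--$\mathbf{A}_{3}$ and the $(1+\gamma)C$ prefactor in the variance decomposition (which equals $4\VmH$ on the range $[0, 2\VmH]$) balance after merging the two copies of $\pu^{k}\eta^{k}\mathcal{E}^{k}$ and the two copies of the $\frac{\VmH \ell_{1}}{\lambda_{k} N^{k}}$ term — hence the factor $2$ in front of $\beta^{k}$. The particular choice $1/7$ arises from solving the self-referential inequality that appears when bounding $\Var(\widehat{V}^{k}+S_{k})$ by $\Delta_{h}((\widehat{V}^{k}+S_{k})^{2})$ modulo a $4\VmH$ multiple of a positive residual, and I expect to recover it exactly by following the same constant-tracking calculus as in \citet{DBLP:conf/iclr/LeeO25}. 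Once every cross-term has been identified with a named component of $\Delta_{h}(D^{k})$ or $2\beta^{k}$, the stated inequality follows by summation.
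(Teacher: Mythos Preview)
Your proposal is correct and follows essentially the same route as the paper. The paper's own proof simply defers to the procedure of Lemma~13 in \citet{DBLP:conf/iclr/LeeO25}, singling out as the only new ingredient the fact that $\widehat{V}^{k}_{h}(s) + S_{k} \geq 0$ (obtained from Lemmas~\ref{lem:v-star-to-auxiliary} and~\ref{lem:auxiliary-to-approximate}) is used in the variance decomposition for the $\widehat{V}^{k}$ part, together with an adjustment of constants --- exactly the plan you outlined.
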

\begin{proof}
  The proof is completed by applying the procedure of Lemma 13 in \citep{DBLP:conf/iclr/LeeO25}, except using $\widehat{V}^{k}_{h}(s) + S_{k} \geq 0$ from Lemmas~\ref{lem:v-star-to-auxiliary}--\ref{lem:auxiliary-to-approximate} for variance decomposition, together with an adjustment of some constants.
\end{proof}
\section{Prior Misspecification}
\label{sec:prior-missp}
\paragraph{Problem Setting} Given a two-armed bandit:
\begin{align}
  \label{eq:1}
  a_{1}: P(r | a_{1}) & = \textbf{Bern}(\mu_{1})\\
  a_{2}: P(r | a_{2}) & = \textbf{Bern}(\mu_{2})\\
  \text{with } \mu_{1} & > \mu_{2}
\end{align}
We use Beta distribution to model the belief over the parameter of the underlying Bernoulli distribution. We have independent prior $b(\bw | a_{i}) = \textbf{Beta}(\alpha_{i}, \beta_{i})$ over each arm with parameters $\alpha_{i} > 0, \beta_{i} > 0, i \in \{1, 2\}$. Since Beta distribution is the conjugate prior of the Bernoulli distribution, after observing the number of success $S_{i}$ and failures $F_{i}$, we can get the posterior in a closed-from, i.e.
\begin{align}
  \label{eq:2}
  b(\bw | a_{i}, S_{i}, F_{i}) & = \textbf{Beta}(\alpha_{i} + S_{i}, \beta_{i} + F_{i}) \\
  & = \textbf{Beta}(\alpha_{i}', \beta_{i}'), i \in \{1, 2\}.
\end{align}
Then the EUBRL reward will be:
\begin{align}
  \label{eq:3}
  r^{\text{EUBRL}}_{i} & = (1 - P_{U})\ \hat{r}_{i} + P_{U} \ \mathcal{E}_{i}, \text{ where}\\
  \hat{r}_{i} & = \mathbb{E}_{b(\bw | a_{i}, S_{i}, F_{i}), P(r | a_{i}, \bw)}\left[r\right] \\
  & = \frac{\alpha_{i} + S_{i}}{(\alpha_{i} + S_{i}) + (\beta_{i} + F_{i})} \\
  & = \frac{\alpha_{i}'}{\alpha_{i}' + \beta_{i}'}
\end{align}
The epistemic uncertainty can also be expressed in a closed form:
\begin{align}
  \label{eq:4}
  \mathcal{E}(a_{i}) & = \text{Var}_{b(\bw | a_{i}, S_{i}, F_{i})}\left[\mathbb{E}_{P(r | a_{i}, \bw)}\left[r\right]\right] \\
  & = \text{Var}_{b(\bw | a_{i}, S_{i}, F_{i})}\left[\bw\right] \\
  & = \frac{\alpha_{i}'\beta_{i}'}{(\alpha_{i}' + \beta_{i}')^{2} (\alpha_{i}' + \beta_{i}' + 1)}
\end{align}
If we assume that the parameters of the prior are equal, we can show that epistemic uncertainty is non-increasing. This result is formalized in the following lemma:
\begin{lemma}
  \label{lm:var}
  Given a Beta prior distribution $\operatorname{Beta}(\alpha, \beta)$ with $\alpha = \beta > 0$ for the parameter of a Bernoulli distribution, the variance of the posterior distribution decreases monotonically with the number of observations.
\end{lemma}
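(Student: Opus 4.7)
The plan is to promote the stated claim to the sharp supermartingale identity $\mathbb{E}[V_{n+1}\mid\mathcal{F}_n] = \tfrac{N_n}{N_n+1}V_n$, where $V_n$ denotes the posterior variance after $n$ observations and $N_n = 2\alpha + n$ is the total concentration; establish it by an exact Beta--Bernoulli conjugacy calculation; and telescope to the closed form $\mathbb{E}[V_n] = \tfrac{2\alpha}{2\alpha+n}V_0$. Writing $V_n = A_n B_n / (N_n^{2}(N_n+1))$ with $A_n = \alpha + S_n$ and $B_n = \alpha + F_n$, I would condition on $\mathcal{F}_n$ and evaluate the two branches of $V_{n+1}$ under the posterior predictive success probability $A_n / N_n$.

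The single algebraic collapse that does all the work is
\begin{equation*}
  \tfrac{A_n}{N_n}(A_n+1)B_n + \tfrac{B_n}{N_n}A_n(B_n+1) = \tfrac{A_n B_n}{N_n}\bigl((A_n+1)+(B_n+1)\bigr) = A_n B_n \cdot \tfrac{N_n+2}{N_n},
\end{equation*}
where the $N_n+2$ on the right exactly cancels the same factor in the denominator $(N_n+1)^{2}(N_n+2)$ carried by each branch. What survives is $A_n B_n / (N_n(N_n+1)^{2})$, and comparing to $V_n$ delivers the deterministic contraction factor $N_n/(N_n+1) < 1$. Iterating telescopes the product $\prod_{k=0}^{n-1}\tfrac{2\alpha+k}{2\alpha+k+1}$ to $\tfrac{2\alpha}{2\alpha+n}$, giving strict monotone decrease of $\mathbb{E}[V_n]$ at an explicit $\Theta(1/n)$ rate. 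The assumption $\alpha=\beta$ is not consumed by the collapse itself; its role in Section~\ref{sec:prior-missp} is to pin the initial posterior mean at $1/2$, which forces $V_0 = 1/(4(2\alpha+1))$ to be the maximal starting variance and anchors the misspecified-prior construction.

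The chief interpretive obstacle is that a literal pathwise reading of the lemma is not true: on a realized trajectory in which a new observation disagrees with the current posterior mean, $V_{n+1}(\omega)$ can briefly exceed $V_n(\omega)$, so I would present the result exclusively in the sharp supermartingale contraction form rather than as a pointwise inequality. This is exactly the operational content the downstream proof needs --- the identity $\mathbb{E}[V_{n+1}\mid\mathcal{F}_n] = \tfrac{N_n}{N_n+1}V_n$ shows that epistemic uncertainty at a fixed state--action pair cannot, on average, recover under further sampling. Plugged into the bandit construction of Section~\ref{sec:prior-missp}, a confidently symmetric prior produces an $\mathbb{E}[V_n]$ that contracts to $0$ along the suboptimal arm and keeps $P(U=1\mid s,a)$ too small for \texttt{EUBRL} to escape, which is precisely the mechanism behind the prior-misspecification theorem.
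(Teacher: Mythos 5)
Your supermartingale identity is correct --- the collapse $\tfrac{A_n}{N_n}(A_n+1)B_n + \tfrac{B_n}{N_n}A_n(B_n+1) = A_nB_n\tfrac{N_n+2}{N_n}$ checks out and yields $\mathbb{E}[V_{n+1}\mid\mathcal{F}_n]=\tfrac{N_n}{N_n+1}V_n$ exactly (one can verify it at $n=0$: $\tfrac{2\alpha}{2\alpha+1}\cdot\tfrac{1}{4(2\alpha+1)}=\tfrac{\alpha}{2(2\alpha+1)^2}$) --- and it is a genuinely different route from the paper, which only computes the first step for the symmetric prior ($\mathrm{Var}(b_0)-\mathrm{Var}(b_1)=\tfrac{1}{4(2\alpha+1)^2}>0$ regardless of the outcome) and then asserts without justification that ``this result will hold for the next posterior as well.'' You are also right that the pointwise reading is false: for a posterior $\mathrm{Beta}(a,b)$ with $a+b=N$, a success \emph{increases} the variance precisely when $a<N^2/(3N+2)$, which happens for every $\alpha$ once the observed counts are sufficiently unbalanced (e.g.\ after $10$ failures from $\mathrm{Beta}(2,2)$ a success raises the variance from $24/2940\approx 0.0082$ to $36/3600=0.01$). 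The symmetry $\alpha=\beta$ is destroyed after the first observation, so the paper's own inductive step is exactly where its proof breaks; your diagnosis of the mechanism (an observation disagreeing with the current posterior lean) is the right one.

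The gap is your claim that the expectation form is ``exactly the operational content the downstream proof needs.'' It is not. The argument in Appendix~\ref{sec:prior-missp} invokes Lemma~\ref{lm:var} to assert $P_{U,n}\le P_{U,1}$ for every $n\ge 1$ \emph{on the realized path}, so that $\tfrac{\Em}{4(1-P_{U,n})}$ can be uniformly replaced by $\tfrac{\Em}{4(1-P_{U,1})}=\tfrac{1}{16}$ inside the per-step comparison $r_2^{\text{EUBRL}}\ge r_1^{\text{EUBRL}}$. A contraction of $\mathbb{E}[V_n]$ gives no control of $\sup_n V_n$ along a path; worse, your conditional expectation is taken under the agent's posterior predictive, whereas the data in the construction are drawn from $\mathrm{Bern}(\mu_2)$ under a deliberately misspecified prior, so even a maximal inequality for the nonnegative supermartingale lives under the wrong measure. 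If you want a statement that is both true and slots into the downstream proof, the cheap fix is deterministic rather than probabilistic: any $\mathrm{Beta}(a,b)$ with $a+b=N$ has variance at most $\tfrac{1}{4(N+1)}$, so $V_n\le\tfrac{1}{4(2\alpha+n+1)}\le\tfrac{1}{4(2\alpha+2)}$ pathwise for all $n\ge 1$, hence $1-P_{U,n}\ge\tfrac{1}{2\alpha+2}$ uniformly, which is all the misspecification theorem actually consumes up to constants. As written, your proposal proves a clean but strictly weaker lemma and leaves the application it is meant to support unsupported.
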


\begin{proof}
  Let denote the $b_{0}$ as the Beta prior before observing any outcome from the Bernoulli distribution. It has a variance $\text{Var}(b_{0}) = \frac{1}{4(2\alpha + 1)}$. After observing one sample from the Bernoulli distribution, whether it is success or failure, we will have an updated posterior $b_{1}$ with the variance:
  \begin{equation}
    \label{eq:var:3}
    \text{Var}(b_{1}) = \frac{\alpha}{2 (2 \alpha + 1)^{2}}
  \end{equation}
  By examining the difference between the two, we have $\text{Var}(b_{0}) - \text{Var}(b_{1}) = \frac{1}{4 (2 \alpha + 1)^{2}} > 0$. Therefore, the variance of the posterior is decreasing after observing one outcome. However, since this result will hold for the next posterior compared to the current posterior as well, we can conclude that the variance of the posterior is monotonically decreasing.
\end{proof}

We will prove the following theorem:
\begin{theorem}[Prior Misspecification]
  Let $\eta = 1$. There exists an MDP $\mathcal{M}$, a prior $b_{0}$, an accuracy level $\epsilon_{0} > 0$, and a confidence level $\delta_{0} \in (0, 1]$ such that, with probability greater than $1 - \delta_{0}$,
  \begin{equation}
    \label{eqq:1}
    V^{\pi_{t}}(s_{t}) < V^{\star}(s_{t}) - \epsilon_{0}
  \end{equation}
  will hold for an unbounded number of time steps.
\end{theorem}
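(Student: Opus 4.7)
The plan is to follow the blueprint hinted at in the paragraph preceding the theorem: build a two-armed Bernoulli bandit whose Beta priors are confidently wrong, arranged so that the induced epistemic uncertainty is too small to override the misleading prior means. Concretely, I would take an MDP with a single state, two actions with Bernoulli reward distributions of means $\mu_1 > \mu_2 > 0$ (so $a_1$ is optimal), and independent Beta priors $b_0(\bw\mid a_1) = \mathrm{Beta}(1, N)$ and $b_0(\bw\mid a_2)=\mathrm{Beta}(N, 1)$ for a large integer $N$. The prior predictive means are $\hat r_1^{(0)} = 1/(N{+}1)$ and $\hat r_2^{(0)} = N/(N{+}1)$, while Lemma~\ref{lm:var} (applied per-arm to the closed-form Beta variance in Eq.~\ref{eq:4}) gives epistemic uncertainties of order $1/N^2$, which is far smaller than the prior gap $\hat r_2^{(0)} - \hat r_1^{(0)}$. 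Picking $\mathcal{E}_{\max}$ from the prior therefore forces $P_U(a_i)$ to be tiny, so $r^{\text{EUBRL}}_i \approx \hat r_i$ and $r^{\text{EUBRL}}_2 > r^{\text{EUBRL}}_1$ at $t=0$.

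Next, I would establish the key invariant: conditioned on never pulling $a_1$, the posterior for $a_1$ is frozen at $\mathrm{Beta}(1, N)$, so its posterior predictive mean $\hat r_1$ and epistemic uncertainty $\mathcal{E}_1$ are constants. For $a_2$, after $t$ pulls yielding $S_2(t)$ successes, we have $\hat r_2(t) = (N+S_2(t))/(N+1+t)$ and $\mathcal{E}_2(t)$ of order $1/(N+t)$. Since $P_U(a_i)$ remains bounded by $1$, the EUBRL reward $r^{\text{EUBRL}}_i$ is a convex combination of $\hat r_i$ and $\mathcal{E}_i \in [0, \mathcal{E}_{\max}]$, and I only need a margin $\hat r_2(t) - \hat r_1$ that dominates $\mathcal{E}_{\max}$ uniformly. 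Picking $N$ large enough so that $\mu_2 - 1/(N{+}1) > 3\mathcal{E}_{\max}$ gives headroom for this calculation.

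The main obstacle, as expected, is turning this heuristic into a bound that holds simultaneously for all $t \in \mathbb{N}$. I would handle it by a peeling / union-bound argument: along the doubling schedule $t_k = 2^k$, Hoeffding's inequality on the i.i.d.\ Bernoulli$(\mu_2)$ draws gives $|S_2(t_k)/t_k - \mu_2| \leq \sqrt{\log(4 k^2/\delta_0)/(2t_k)}$ with probability at least $1 - \delta_0/(2k^2)$, and a standard monotone-interpolation step extends the deviation bound to every $t$ inside $[t_k, t_{k+1})$. Summing $\delta_0/(2k^2)$ gives overall failure probability at most $\delta_0$. On this event, $S_2(t)/t$ stays within a small window around $\mu_2$ uniformly in $t$, so $\hat r_2(t)$ stays uniformly above $\hat r_1$ by the margin established above.

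Putting the pieces together, on the $1-\delta_0$ event the inequality $r^{\text{EUBRL}}_2(t) > r^{\text{EUBRL}}_1$ holds for every $t$, so $\pi_t$ always selects $a_2$ and therefore $V^{\pi_t}(s_t) = \mu_2$ while $V^\star(s_t) = \mu_1$. Setting $\epsilon_0 := (\mu_1 - \mu_2)/2 > 0$ yields $V^{\pi_t}(s_t) < V^\star(s_t) - \epsilon_0$ for all $t \in \mathbb{N}$, which is unbounded, completing the proof. I would also note that $\eta = 1$ is used only to ensure that $P_U \cdot \mathcal{E}$ is bounded by $\mathcal{E}_{\max}$ without amplification; the same construction works for any fixed $\eta$ provided $N$ is then scaled appropriately, which is consistent with the remark in the paper that the scaling factor $\eta$ and the prior must be chosen jointly to avoid this pathology.
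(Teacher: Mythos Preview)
Your overall blueprint---two-armed Bernoulli bandit, Beta priors, a ``stick to arm~$2$'' invariant, Hoeffding plus a union bound for uniformity in $t$---matches the paper, but the specific construction you chose has a gap that breaks it. First, the claim that $P_U(a_i)$ is tiny at $t=0$ is wrong: $P_U=\mathcal{E}_b/\mathcal{E}_{\max}$ with $\mathcal{E}_{\max}$ taken as the prior maximum, and $\mathrm{Beta}(1,N)$ and $\mathrm{Beta}(N,1)$ have \emph{identical} variance $N/\bigl((N{+}1)^2(N{+}2)\bigr)$, so $P_U(a_1)=P_U(a_2)=1$ and the two EUBRL rewards are tied at $\mathcal{E}_{\max}$, not strictly ordered. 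This alone is survivable via tie-breaking. The real problem is your appeal to Lemma~\ref{lm:var}, which requires $\alpha=\beta$; for the asymmetric $\mathrm{Beta}(N,1)$ on arm~$2$ the posterior variance is \emph{not} monotone. A single observed failure sends the posterior to $\mathrm{Beta}(N,2)$ with variance $2N/\bigl((N{+}2)^2(N{+}3)\bigr)\approx 2/N^2$, roughly double the prior value. Then either $\mathcal{E}_{\max}$ is frozen and $P_U(a_2)>1$, so $r^{\text{EUBRL}}_2\approx -\hat r_2<0$; or $\mathcal{E}_{\max}$ is tracked upward and $P_U(a_1)$ drops to about $\tfrac12$, so $r^{\text{EUBRL}}_1$ picks up a $\Theta(1/N)$ contribution from $\hat r_1=1/(N{+}1)$ while $r^{\text{EUBRL}}_2=\mathcal{E}_2=\Theta(1/N^2)$. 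Either way the agent switches to arm~$1$. Since failures occur with probability $1-\mu_2>0$ at every step, your ``$P_U(a_i)\le 1$, $\mathcal{E}_i\le\mathcal{E}_{\max}$'' premises---on which the convex-combination margin argument rests---fail on a positive-probability event, and the invariant collapses.

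The paper's construction avoids this by putting the \emph{same symmetric} prior $\mathrm{Beta}(a,a)$ on both arms, so Lemma~\ref{lm:var} genuinely applies and $\mathcal{E}_2$ strictly decreases after every pull. It accepts the initial tie (absorbing a factor $\tfrac12$ into $\delta_0$), and then the comparison $r^{\text{EUBRL}}_2\ge r^{\text{EUBRL}}_1=\mathcal{E}_{\max}$ reduces to a clean scalar condition $\hat r_2\ge s(a)$ with $s(a)\in(\tfrac{1}{16},\tfrac{5}{16})$; choosing $\mu_2>\tfrac{5}{16}$ and $a$ large makes the Hoeffding tail summable over all~$n$.
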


\begin{proof}
  Before any new observation, both $r_{i}^{\text{EUBRL}} = \mathcal{E}_{\text{max}}$, therefore breaking the tie leads to a half probability to choose either arm. Consider choosing the second arm, it will lead to some reduction of the epistemic uncertainty because of the new observation.

  We aim to force the agent to repeatedly select this arm, thereby preventing it from ever reaching the optimal one. To achieve this, we need to ensure that (to simplify notation, we will henceforth drop the dependency of the epistemic uncertainty on the action; $\mathcal{E}$ will refer to the epistemic uncertainty of the second arm whenever it is considered):
  \begin{align}
    \label{eq:advantage}
    r_{2}^{\text{EUBRL}} - r_{1}^{\text{EUBRL}} & = \left((1 - P_{U}) \hat{r} + P_{U} \mathcal{E}\right) - \mathcal{E}_{\text{max}} \\
                                                & = \left((1 - P_{U}) \hat{r} + P_{U} \mathcal{E}\right) - \left((1 - P_{U}) \mathcal{E}_{\text{max}} + P_{U} \mathcal{E}_{\text{max}}\right) \\
                                                & = (1 - P_{U}) \left(\hat{r} - \mathcal{E}_{\text{max}}\right) + P_{U} \left(\mathcal{E} - \mathcal{E}_{\text{max}}\right) \label{eq:diff}\\
                                                & \geq 0. \label{eq:neq}
  \end{align}
  Note, the second term in the penultimate line is a quadratic function; therefore, we can obtain its minimum as follows:
  \begin{align}
    \label{eq:min}
    \min_{\mathcal{E}} & \frac{1}{\mathcal{E}_{\text{max}}} \left(\mathcal{E}^{2} - \mathcal{E}_{\text{max}} \mathcal{E}\right) \\
    & = -\frac{\mathcal{E}_{\text{max}}}{4}
  \end{align}
  Therefore, as long as we ensure that Eq. \ref{eq:diff} with substitution of this lower bound is non-negative, we can guarantee Eq. \ref{eq:neq} to hold. That being said, we require the following condition to be satisfied:
\begin{equation}
  \label{eqq:2}
  (1 - P_{U}) \left(\hat{r} - \mathcal{E}_{\text{max}}\right) - \frac{\mathcal{E}_{\text{max}}}{4} \geq 0,
\end{equation}
which is equivalent to:
\begin{equation}
  \hat{r} \geq \frac{\mathcal{E}_{\text{max}}}{4 (1 - P_{U})} + \mathcal{E}_{\text{max}}.
\end{equation}
By Lemma \ref{lm:var}, we know that $P_{U}$ is decreasing. Therefore, it suffices to ensure that:
\begin{equation}
  \hat{r} \geq \frac{\mathcal{E}_{\text{max}}}{4 (1 - P_{U, 1})} + \mathcal{E}_{\text{max}},
\end{equation}
where $P_{U, 1}$ denotes the probability of uncertainty after observing the first outcome from the second arm.

Moreover, the right-hand side can be expressed as:
\begin{align}
  \label{eqq:4}
  s(a) :&= \frac{\mathcal{E}_{\text{max}}}{4 (1 - P_{U, 1})} + \mathcal{E}_{\text{max}} \\
        &= \frac{1}{16} + \frac{1}{4 (2 \alpha + 1)}.
\end{align}
Since $\alpha \in (0, \infty)$, we can bound $s(a)$ within the interval $\left(\frac{1}{16}, \frac{5}{16}\right)$, which will be useful in our later analysis.

We now aim to show that, under certain priors, the probability of the agent sticking to the second arm is high. In other words, it suffices to show that the probability of not pulling the second arm is small. To that end, let us focus on the event $\hat{r} < s(a)$.

To proceed, we consider the following decomposition of the reward estimate:
\begin{align}
  \label{eq:5}
  \hat{r} & = \frac{\alpha + S_{n}}{2 \alpha + n} \\
  & = \frac{n}{2\alpha + n} \bar{r} + \frac{\alpha}{2 \alpha + n},
\end{align}
where $n$ is the total number of occurrences of the outcome from the second arm, and $S_n$ is the total number of successes among these $n$ occurrences.

Notably, we can factor out the empirical mean $\bar{r}$, resulting in a new inequality:
\begin{align}
  \label{eq:6}
  \bar{r} & < \frac{s(a) - \frac{\alpha}{2 \alpha + n}}{\frac{n}{2\alpha + n}} \\
          & = \frac{a (2 s(a) - 1)}{n} + s \\
          & := g(a, n)
\end{align}
Next, we apply Hoeffding's inequality to the expression above:
\begin{align}
  \label{eq:7}
  P(\bar{r} < g(a, n)) & = P(\mu_{2} - \bar{r} > \mu_{2} - g(a, n)) \\
  & \leq \exp\left(-2n (\mu_{2} - s(a))^{2}\right).
\end{align}
This provides an upper bound on the probability of not pulling the second arm over $n$ samples. By applying the union bound at each step, we can bound the probability that the second arm is not pulled at least once, and refer to this event as ``\texttt{Omission}'':
\begin{align}
  \label{eq:8}
  P(\texttt{Omission}) & = P\left(\cup_{n=1}^{\infty} \left(\bar{r} < g(a, n)\right)\right) \\
                       & \leq \sum\limits_{n=1}^{\infty} P(\bar{r} < g(a, n))\\
                       & = \underbrace{\sum\limits_{n=1}^{\fa} P(\bar{r} < g(a, n))}_{S_{1}} + \underbrace{\sum\limits_{n=(\fa + 1)}^{\infty} P(\bar{r} < g(a, n))}_{S_{2}},
\end{align}
where we split the sum into two parts based on the floor of $a$, which we will analyze individually.

\paragraph{Bounding $S_{2}$} We denote $k = \frac{\fa}{n}$. Since $n > \fa$, we know that $k \in [0, 1)$. Therefore, we can rewrite $g(a, n)$ as:
\begin{equation}
  \label{eq:9}
  g(a, n) = k(2 s - 1) + s, k \in [0, 1).
\end{equation}
For every fixed $n$, we want to find both the lower and upper bound of $g(a, n)$. Since we know $s \in \left(\frac{1}{16}, \frac{5}{16}\right)$ and $g(a, n)$ is linear in $s$, we can solve for the range of $g(a, n)$ as $A_{n} = (\frac{1}{16} - \frac{7}{8}k, \frac{5}{16} - \frac{3}{8}k)$. In addition, since $k \in [0, 1)$, we can solve for a superset $A = (-\frac{13}{16}, \frac{5}{16})$ that contains every set $A_{n}, \forall n > \fa$. We then analyze the squared term $(\mu_{2} - g(a, n))^{2}$. This is a quadratic function with axis of symmetry of $\mu_{2}$. There are two possible cases for the relationship between $\mu_{2}$ and $A$: either $\mu_{2} \leq \frac{5}{16}$ or $\mu_{2} > \frac{5}{16}$. For the first case, the minimum of the quadratic function will be zero, which cancels out the effect of $n$ and results in the largest probability--an outcome we want to avoid. Therefore, we consider the second case, $\mu_{2} > \frac{5}{16}$, where the minimum of the quadratic function occurs at $g = \frac{5}{16}$. We denote this minimum as $C := (\mu_{2} - \frac{5}{16})^{2}$. Then we can bound the second term in the probability of omission as follows:
\begin{align}
  \label{eqq:8}
  S_{2} & = \sum\limits_{n=(\fa + 1)}^{\infty} P(\bar{r} < g(a, n)) \\
        & \leq \sum\limits_{n=(\fa + 1)}^{\infty} \exp(-2C n) \\
        & = \exp(-2C \fa)\sum\limits_{n=1}^{\infty} \exp(-2C n) \\
        & = \exp(-2C \fa) \frac{\exp(-2C)}{1 - \exp(-2C)} \\
        & = \frac{\exp(-2C (\fa + 1))}{1 - \exp(-2C)} \\
        & \leq \frac{\eta}{2},
\end{align}
where $\eta \in (0, 1)$ is arbitrary confidence level.

We solve for the above and obtain $\fa \geq \frac{1}{2C} \log(\frac{2}{\eta (1 - \exp^{-2C})}) - 1 := a_{1}$. Next, we will bound the other term.

\paragraph{Bounding $S_{1}$} The goal is to isolate the parameter $a$ and make it dominant. We expand the exponent as:
\begin{equation}
  \label{eqq:9}
  2n (\mu_{2} - g(a, n))^{2} = 2\left(\underbrace{n (\mu_{2} - s)^{2}}_{I_{1}} + \underbrace{2 \left((\mu_{2} - s) (1 - 2 s)\right) a}_{I_{2}} + \underbrace{\frac{(2 s - 1)^{2}}{n} a^{2}}_{I_{3}}\right).
\end{equation}
Since $\mu_{2} > \frac{5}{16}$ and $s \in \left(\frac{1}{16}, \frac{5}{16}\right)$, therefore $I_{2} > 0$. And the remaining two terms are also positive. Based on this observation, we provide a lower bound for the exponent as follows:
\begin{equation}
  \label{eq:10}
  2n (\mu_{2} - g(a, n))^{2} \geq 2 I_{3} \geq \frac{\frac{9}{32}}{n} a^{2}.
\end{equation}
Next, we use this result to bound $S_{1}$:
\begin{align}
  \label{eq:11}
  S_{1} & \leq \sum\limits_{n = 1}^{\fa} \exp\left(- \frac{\frac{9}{32}}{n} {\fa}^{2}\right) \\
        & \leq \sum\limits_{n = 1}^{\fa} \exp\left(- \frac{9}{32} {\fa} \right) \\
        & = \fa \exp\left(- \frac{9}{32} \fa \right) \label{eq:fa} \\
        & \leq \frac{\eta}{2},
\end{align}
which unfortunately has no closed-form solution. However, we can leverage the Lambart W function to obtain an analytical solution. Denote $u = - \frac{9}{32} \fa$, then Eq. \ref{eq:fa} can be rewritten as $- \frac{32}{9} u \exp(u)$. We instead bound it as follows:
\begin{align}
  \label{eq:12}
  & \phantom{= }\ \, -\frac{32}{9} u \exp(u) \leq \frac{\eta}{2} \\
  & \Leftrightarrow u \exp(u) \geq - \frac{9}{64} \eta,
\end{align}
which matches to the Lambart W function. Since there are two branches $W_{0}(x)$ and $W_{-1}(x)$ of the Lambart W function when $x \in [-\frac{1}{e}, 0)$, and $W_{-1}(x) < W_{0}(x) < 0$. We can get $u \leq W_{-1}(- \frac{9}{64} \eta)$, therefore $\fa \geq -\frac{32}{9} W_{-1}(- \frac{9}{64} \eta) := a_{2}$.

Combining the two bounds together, as long as we choose $\fa > \max\{a_{1}, a_{2}\}$, the probability of omission will be bounded as follows:
\begin{equation}
  \label{eq:13}
  P(\texttt{Omission}) \leq S_{1} + S_{2} \leq \frac{\eta}{2} + \frac{\eta}{2} = \eta.
\end{equation}
Therefore, if we denote the event of sticking to the second arm as \texttt{Sticky}, its probability will be:
\begin{align}
  \label{eq:14}
  P(\texttt{Sticky}) & = P\left(\cap_{n=1}^{\infty} \overline{\left(\bar{r} < g(a, n)\right)}\right) \\
                     & = 1 - P\left(\cup_{n=1}^{\infty} \left(\bar{r} < g(a, n)\right)\right) \\
                     & = 1 - P(\texttt{Omission}) \\
                     & > 1 - \eta,
\end{align}
Therefore, we can conclude that with probability greater than $\delta_{0} = \frac{1}{2} \cdot (1 - \eta)$, the second arm will be always pulled, leading to suboptimality. More formally, for any $\epsilon_{0} < \mu_{1} - \mu_{2}$, we have:
\begin{equation}
  \label{eq:15}
  V^{\pi_{t}}(s_{t}) < V^{\star}(s_{t}) - \epsilon_{0},
\end{equation}
where $V^{\pi_{t}}(s_{t}) = u_{2}$ and $V^{\star}(s_{t}) = \mu_{1}$, which completes our proof.
\end{proof}
\end{document}